\newlength{\fixboxwidth}
\renewcommand{\algorithmiccomment}[1]{\bgroup\hfill//~#1\egroup}
\numberwithin{equation}{section}
\def\low{\mathrm{low}}
\def\P{\mathbb{P}}
\def\R{\mathbb{R}}
\def\cN{\mathcal{N}}
\def\S{\mathcal{S}}
\def\cP{\mathcal{P}}
\def\E{\mathbb{E}}
\def\B{\mathcal{B}}
\def\er{\mathcal{E}}
\def\I{\mathcal{I}}
\def\J{\mathcal{J}}
\def\T{\mathcal{T}}
\def\Td{\top}
\def\N{\mathcal{N}}
\def\restrict#1{\raise-.5ex\hbox{\ensuremath|}_{#1}}
\def\<{\big\langle}
\def\>{\big\rangle}
\def\Img{\operatorname{Im}}
\def\Ker{\operatorname{Ker}}
\def\Var{\operatorname{Var}}
\def\diag{{\operatorname{diag}}}
\def\dim{{\operatorname{dim}}}
\def\Span{\operatorname{span}}
\definecolor{red}{rgb}{0.9, 0, 0}
\newtheorem{Theorem}{Theorem}[section]
\newtheorem{Proposition}[Theorem]{Proposition}
\newtheorem{Lemma}[Theorem]{Lemma}
\newtheorem{Remark}[Theorem]{Remark}
\newtheorem{Example}[Theorem]{Example}
\newtheorem{Definition}[Theorem]{Definition}
\newtheorem{Condition}[Theorem]{Condition}
\newtheorem{Problem}{Problem}
\begin{document}
\title{Kernel Mode Decomposition\\
and programmable/interpretable  regression networks}

\date{\today}

\author{Houman Owhadi\thanks{Corresponding author. Caltech,  MC 9-94, Pasadena, CA 91125, USA, owhadi@caltech.edu} \and Clint Scovel\thanks{Caltech,  MC 9-94, Pasadena, CA 91125, USA, clintscovel@gmail.com} \and Gene Ryan Yoo\thanks{Caltech, MC 253-37, Pasadena, CA 91125, USA, gyoo@caltech.edu}}

\maketitle

\begin{abstract}
Mode decomposition is a prototypical pattern recognition problem that can be addressed from the (a priori distinct) perspectives of numerical approximation, statistical inference and deep learning.
Could its analysis through these combined perspectives be used as a Rosetta stone for
deciphering  mechanisms  at play in deep learning? Motivated by this question we introduce programmable and interpretable regression networks for pattern recognition and address mode decomposition as a prototypical problem. The
programming of these networks is achieved by assembling  elementary modules decomposing and recomposing kernels and data. These elementary steps are repeated across levels of abstraction and interpreted from the equivalent perspectives of  optimal recovery, game theory and Gaussian process regression (GPR).
The prototypical mode/kernel decomposition module produces an approximation  $(w_1,w_2,\cdots,w_m)$ of an element $(v_1,v_2,\ldots,v_m)\in V_1\times \cdots \times V_m$ of a product of Hilbert subspaces $(V_i,\|\cdot\|_{V_i})$ of a common Hilbert space  from the observation of the sum $v:=v_1+\cdots+v_m \in V_1+\cdots+V_m$. This approximation is minmax optimal with respect to the relative error in the product norm $\sum_{i=1}^m \|\cdot\|_{V_i}^2$ and obtained as $w_i=Q_i(\sum_j Q_j)^{-1}v=\E[\xi_i|\sum_j \xi_j=v]$ where  $Q_i$ and $\xi_i\sim \mathcal{N}(0,Q_i)$ are the covariance operator and the Gaussian process defined by the norm $\|\cdot\|_{V_i}$.  The prototypical mode/kernel recomposition module performs partial sums of the recovered modes $w_i$ and
 covariance
 operators $Q_i$ based on the alignment between each recovered mode $w_i$  and the data $v$ with respect to the inner product defined by
 $S^{-1}$ with $S:=\sum_i Q_i$ (which has a natural interpretation as model/data alignment $\<w_i,v\>_{S^{-1}}=\E[\<\xi_i,v\>^2_{S^{-1}}]$ and variance decomposition  in the GPR setting).
We illustrate the proposed framework by programming regression networks approximating
the modes $v_i= a_i(t)y_i\big(\theta_i(t)\big)$ of a (possibly noisy)   signal $\sum_i v_i$ when the amplitudes $a_i$, instantaneous phases $\theta_i$ and periodic waveforms $y_i$ may all be unknown and show near machine precision recovery under regularity and separation assumptions on the instantaneous  amplitudes $a_i$ and frequencies $\dot{\theta}_i$.
 The structure of some of these networks share intriguing similarities with convolutional neural networks while being interpretable, programmable and  amenable to theoretical analysis.

\end{abstract}

\section{Introduction}

The purpose of the \emph{Empirical Mode Decomposition} (EMD) algorithm \cite{huang1998empirical}
 can be loosely expressed as solving a (usually noiseless) version of the following problem, illustrated in Figure \ref{figemd}.
 \begin{Problem}\label{pb2}
For $m\in \mathbb{N}^*$, let $a_1,\ldots,a_m$ be piecewise smooth  functions on $[0,1]$ and let $\theta_1,\ldots,\theta_m$ be strictly increasing  functions on $[0,1]$.
Assume that $m$ and the $a_i, \theta_i$ are unknown. Given the (possibly noisy) observation of
$v(t)=\sum_{i=1}^m a_i(t)\cos\big(\theta_i(t)\big), t\in [0,1],$  recover the modes $v_i(t):=a_i(t)\cos\big(\theta_i(t)\big)$.
\end{Problem}

  \begin{figure}[h!]
	\begin{center}
			\includegraphics[width=\textwidth]{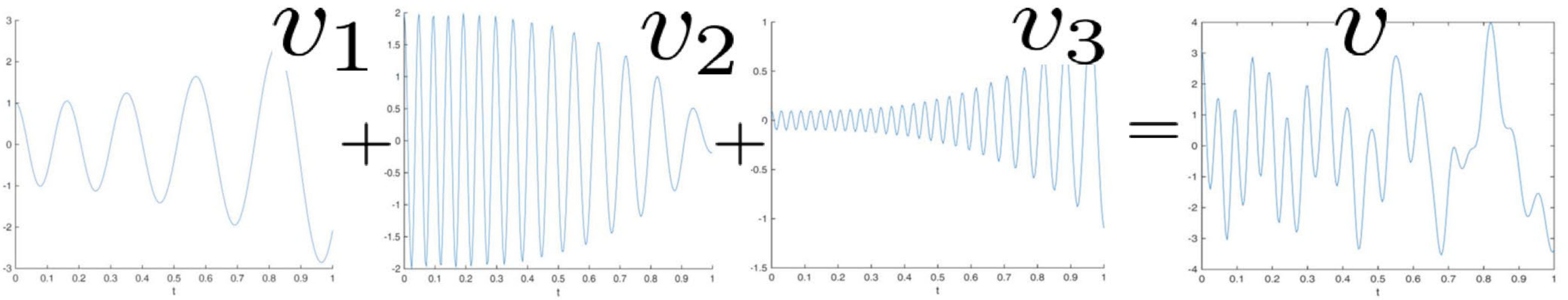}
		\caption{A prototypical mode decomposition problem: given $v= v_{1}+v_{2}+v_{3}$ recover $v_1, v_2, v_3$.}\label{figemd}
	\end{center}
\end{figure}

 In practical applications,  generally the \emph{instantaneous frequencies} $\omega_i=\frac{d \theta_i}{dt}$ are assumed to be smooth and well separated. Furthermore the $\omega_i$ and the \emph{instantaneous amplitudes} are assumed to be
varying at a slower rate than the  \emph{instantaneous phases} $\theta_i$ so that near $\tau\in [0,1]$ the \emph{intrinsic mode function} $v_i$ can be approximated by   a trigonometric function, i.e.
\begin{equation}\label{eqjkhgkjehdgd}
v_i(t)\approx a_i(\tau) \cos\big(\omega_i(\tau) (t-\tau)+\theta_i(\tau)\big)\text{ for }t\approx \tau\,.
\end{equation}
The difficulty of analyzing and generalizing the EMD approach and its popularity and success in practical applications \cite{huang2014hilbert} have stimulated the design of alternative methods aimed at solving Problem \ref{pb2}.  Methods that are amenable to a greater degree of analysis include  synchrosqueezing \cite{daubechies2011synchrosqueezed, lin2018wave}, variational mode decomposition \cite{dragomiretskiy2014variational} and non-linear $L_1$ minimization with sparse time-frequency representations \cite{hou2011adaptive, hou2015sparse}.

\paragraph{A Rosetta stone for deep learning?}
Since Problem \ref{pb2} can be seen as prototypical pattern recognition problem that can be addressed from the perspectives of numerical approximation, statistical inference and machine learning, one may wonder if its analysis, from the combined approaches of numerical approximation and statistical inference, could  be used as a Rosetta stone for deciphering deep learning.
Indeed, although successful  industrial applications \cite{lecun2015deep} have consolidated the recognition of artificial neural networks (ANNs)  as powerful pattern recognition tools, their utilization has recently been compared to ``operating on an alien technology'' \cite{hutson2018ai} due to the challenges brought by a lag  in  theoretical understanding: (1) because ANNs are not easily interpretable  the resulting models may not be interpretable (and identifying causes of success or failure may be challenging) (2) because ANNs rely on the resolution of non-convex (possibly stochastic) optimization problems, they are not easily amenable to a complete uncertainty quantification analysis (3) because the architecture design of ANNs essentially relies on trial and error, the design of  architectures  with good generalization properties may involve a significant amount  of experimentation.

Since elementary    operations performed by ANNs can be interpreted \cite{patel2016probabilistic} as stacking Gaussian process regression steps with nonlinear thresholding and pooling operations  across levels of abstractions,  it is natural to wonder whether
  interpretable  Gaussian process regression (GPR) based networks could be conceived for mode decomposition/pattern recognition. Could such networks (1) be programmable based on rational and modular (object oriented) design? (2) be amenable to analysis and convergence results? (3)   help
 our understanding of fundamental mechanisms that might be at play in pattern recognition and thereby help elaborate a rigorous theory for  Deep Learning?
This paper is an attempt to address these questions, while using mode decomposition \cite{huang1998empirical} as a prototypical pattern recognition problem.
As an application of the programmable and interpretable regression networks introduced in this paper, we will
 also address the following  generalization  of Problem \ref{pb2},  where the periodic waveforms may all be  non-trigonometric, distinct, and unknown and present an algorithm producing near machine precision
 ($10^{-7}$ to $10^{-4}$) recoveries of the modes.

\begin{Problem}\label{unk wave pb}
For $m\in \mathbb{N}^*$, let $a_1,\ldots,a_m$ be piecewise smooth  functions on $[-1,1]$, let $\theta_1,\ldots,\theta_m$ be  piecewise smooth  functions  on $[-1,1]$ such that the instantaneous frequencies $\dot{\theta}_i$ are   strictly positive and well separated,
and let $y_1, \ldots, y_m$ be square-integrable $2\pi$-periodic functions.
Assume that $m$ and the $a_i, \theta_i, y_i$ are all unknown. Given the observation
$v(t)=\sum_{i=1}^m a_i(t)y_i\big(\theta_i(t)\big)$ (for $t\in [-1,1]$) recover the modes $v_i(t):=a_i(t)y_i\big(\theta_i(t)\big)$.
\end{Problem}

One fundamental idea is that although Problems \ref{pb2} and \ref{unk wave pb} are nonlinear, they can be, to some degree, linearized by recovering the modes $v_i$ as aggregates of sufficiently fine modes living in  linear spaces (which, as suggested by the approximation \eqref{eqjkhgkjehdgd}, can be chosen as linear spans of  functions $t\rightarrow \cos(\omega (t-\tau)+\theta)$ windowed around $\tau$, i.e.~Gabor wavelets). The first part of the resulting network  recovers those finer modes through a linear optimal recovery operation. Its second part recovers the modes $v_i$ through a hierarchy of (linear) aggregation steps sandwiched between (nonlinear) ancestor/descendant identification steps. These identification steps  are obtained by composing the alignments between $v$ and the aggregates of the fine modes with simple and interpretable nonlinearities (such as thresholding, graph-cuts, etc...), as presented in Section \ref{sechgjjhjbvf}.

  \begin{figure}[h!]
	\begin{center}
			\includegraphics[width=\textwidth]{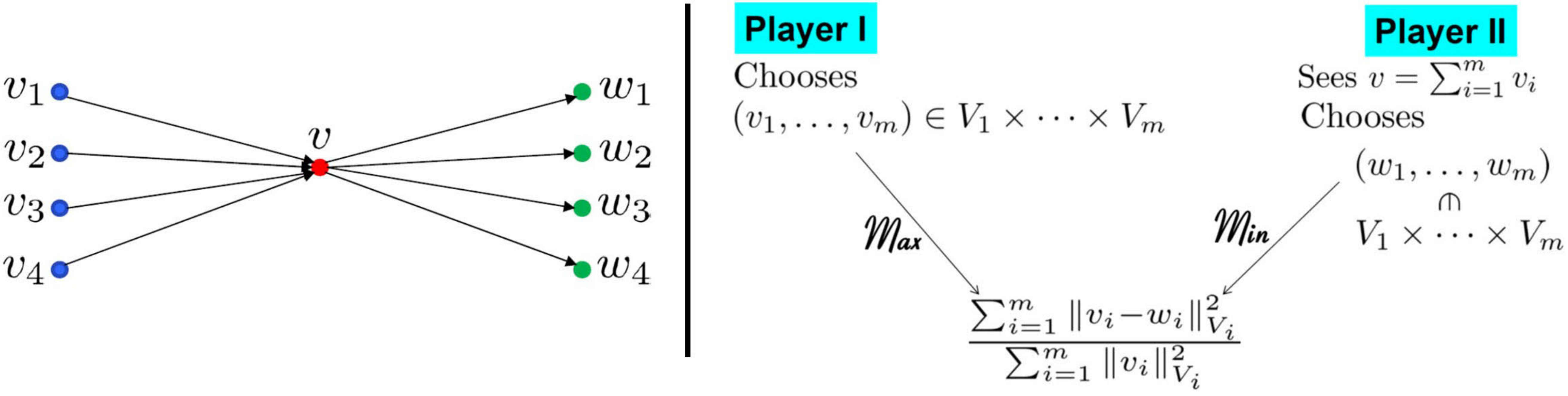}
		\caption{Left: The mode decomposition problem. Right: The game theoretic approach.}\label{figgame}
	\end{center}
\end{figure}

\section{Review of additive Gaussian process regression, empirical mode decomposition and synchrosqueezing } \label{sec_rev} 
 The kernel mode decomposition framework has relations to the fields of additive Gaussian process
 regression, empirical mode decomposition and synchrosqueezing. Consequently, here we review these subjects giving a context to our work. This section is not essential to understanding the paper and so can be skipped on first reading.

Although  simple kriging and GPR are derived differently, they can be shown to be equivalent and are often referred to as the same, 
 see e.g.~Yoo \cite[Sec.~1.1]{Yoo_thesis} for a review of kriging and its relationship with GPR.
Regarding the
 origins of kriging, paraphrasing Cressie \cite{cressie1990origins}, known for introducing kriging in
spatial statistics,
"both  Matheron   \cite{matheron1963traite} (see also \cite{matheron1963principles}) and
  Gandin  \cite{gandin1963objective} were the first to publish a definitive development of spatial kriging.
 D. G. Krige's contributions in mining engineering were considerable but he did not discover kriging, illustrating once again Stigler's Law of Eponymy (Stigler \cite{stigler1980stigler}), which states that
 "no scientific discovery is named after its original discoverer."  The eponymous title of
  Stigler's work is playfully consistent with his law,
 since in it he essentially names Merton \cite[p.~356]{merton1973sociology} as the discoverer of Stigler's law.

\subsection{Additive Gaussian processes}
\label{sec_AGP}
Following Hastie and Tibshirani \cite{hastie1990generalized,Hastie}, the generalized additive model (GAM) replaces a linear predictor
$ \sum_{j}{\beta_{j}x_{j}}$, where the $\beta_{j}$ are parameters, with
$ \sum_{j}{f_{j}(x_{j})}$ where the $f_{j}$ are unspecified functions.
  For certain types of prediction problems  such as binary target  variables, one may add a final function
$ h\bigl(\sum_{j}{f_{j}(x_{j})})\bigr)$.  To incorporate fully dependent responses we can consider models of the form  $f(x_{1},\ldots,x_{N})$. Additive models have been successfully used in regression,
 see  Stone \cite{stone1985additive} and
 Fan et al.~\cite{fan2015functional}.  Vector valued generalizations of GAMs  have been developed in
 Yee and Wild \cite{yee1996vector}
 and Yee \cite{yee2015vector}.  For vector valued additive models of large vector dimension with a  large number of dimensions in the observation data, Yee \cite{yee2015vector} develops methods for reducing the rank of the  systems used in their estimation.
 
When the underlying random variables are Gaussian and we apply to regression,   we naturally  describe the model in terms of its covariance  kernel $k(x_{1},\ldots,x_{N}, x'_{1},\ldots,x'_{N})$ or as an additive model
$\sum{k_{i}(x_{i},x'_{i})}$,  where the  kernel is an additive sum of  kernels depending on lower dimensional variables. 
It is natural to generalize  this setting to a
covariance defined by a weighted  sum over all orders $d$ of  dependency
of weighted sums of kernels depending only on $d$ $D$ dimensional  variables, where $N=Dd$.  Of course, such general kernels  are exponentially complex in the dimension $N$, so are not very useful.  
  Nearly simultaneously,
 Duvenaud et al.~\cite{duvenaud2011additive} and
 Durrande et al.~\cite{durrande2010additive, durrande2012additive},
 introducing Gaussian Additive Processes, addressed this problem.
  Duvenaud et al.~\cite{duvenaud2011additive} 
restricts  the sum at order $d$  to be symmetric in the scalar components in the vector variables, thus
 reduces this complexity in such a way that their complexity is mild and their estimation is computationally tractable. Durrande et al.~\cite{durrande2010additive, durrande2012additive} consider additive versions of vector dependent kernels  and product versions of them, and study their respective performance properties along the the performance of their sum.
 Moreover, because of the  additive nature of these methodologies,   they both  achieve strong interpretability as described by Plate  \cite{plate1999accuracy}. 
 
\subsection{Gaussian Process Regression}
\label{sec_GPR}
 Williams and Rasmussen \cite{williams1996gaussian} provide an introduction to Gaussian Process Regression (GPR). More generally, an
 excellent introduction to Gaussian processes in  machine learning, along with a description of many of its applications and its history,  can be found in
Rasmussen \cite{rasmussen2003gaussian}, and 
   Rasmussen and Williams \cite{williams2006gaussian}, see also 
Yoo \cite{Yoo_thesis}. Recent application domain developments include
source separation, which is related to subject of this book,  by  Park and Choi \cite{park2008gaussian} and  Liutkus et al.~\cite{liutkus2011gaussian} and
the  detection of periodicities by 
 Durrande  et al.~\cite{durrande2016detecting,durrande2016gaussian} and  Preo{\c{t}}iuc-Pietro and Cohn
\cite{preoctiuc2013temporal}.

When the number of dimensions of the observational data is large, computational efficiency becomes extremely important. There has been much work in this area, the so-called {\em sparse methods}, e.g.~Tresp \cite{tresp2000bayesian}, Smola and Bartlett \cite{smola2001sparse}, Williams and Seeger \cite{williams2001using}, Csat{\'o} and Opper \cite{csato2002sparse}, Csat{\'o} et al.~\cite{csato2002tap},  Csat{\'o} \cite{csato2002gaussian}, Qui{\~n}onero-Candela \cite{quinonero2004learning}, Lawrence  et al.~\cite{herbrich2003fast}, Seeger \cite{seeger2003bayesian}, Seeger et al.~\cite{seeger2003fast}, Schwaighofer and Tresp \cite{schwaighofer2003transductive}, Snelson and Ghahramani
  \cite{snelson2006sparse}.
Qui{\~n}onero-Candela and Rasmussen \cite{quinonero2005unifying} provide 
 a unifying framework for the sparse methods based on expressing them in terms of their {\em effective prior}. The majority of these methods utilize the so-called {\em inducing variable} methods, which are data points in the same  domain as the unlabeled data. Some require these to be a subset of the training data while others, such as Snelson and Ghahramani
  \cite{snelson2006sparse} allow them to inferred along with the the primary hyperparameters using optimization. However, there are notable exceptions such as 
 Hensman et al.~\cite{hensman2017variational}  who apply a  
Kullback-Liebler derived variational formulation and  utilize
 Bochner's theorem on positive definite functions to choose optimal features in Fourier space.

The majority of these methods use the Kullback-Liebler (KL) criterion to select the  induced points,
See  Rasmussen and Williams \cite[Ch.~8]{williams2006gaussian} for a review. 
In particular,  Seeger et al.~\cite{seeger2003fast}, Seeger \cite{seeger2003bayesian} among others, utilize  the KL criterion to optimize both
the model hyperparameters and the inducing variables. However, they observe that the
 approximation of the  marginal likelihood is sensitive to the choice of inducing variables and therefore convergence of the method is problematic.  
Snelson  and Ghahramani \cite{snelson2006sparse} attempt to resolve this problem by developing a KL formulation where the model hyperparameters and the inducing variables are jointly optimized. Nevertheless, since the inducing variables determine an approximate marginal likelihood, these methods can suffer from overfitting.   
Titsias' \cite{titsias2009variational} breakthrough, a development of Csat{\'o} and Opper  \cite{csato2002sparse} and Seeger \cite{seeger2003bayesian},  was the introduction of a KL variational framework where the model hyperparameters and the inducing variables are selected in such a way as to maximize a lower bound to the true marginal likelihood, and thus are selected to minimize the KL distance between the sparse model and the true one.   
When the dimensions of the observational data are very large,  
Hensman et al.~\cite{hensman2015scalable}, utilizing recent advances in {\em stochastic variational inference} of Hoffman et al.~\cite{hoffman2013stochastic}  and Hensman et al.~\cite{hensman2013gaussian},
appear to develop methods which scale well.  Adam et al.~\cite{adam2016scalable} develop these results in the context 
Additive GP applied to the source separation problem.

For vector Gaussian processes, one can proceed basically as in the scalar case, including the development of sparse methods,  however
one needs to take care that the vector covariance structure is positive definite (see the review
by  Alvarez et al.~\cite{alvarez2012kernels})
  See e.g.~Yu et al.~\cite{yu2005learning},
 Boyle and Frean
\cite{boyle2005multiple,boyle2005dependent}, Melkumyan and Ramos \cite{melkumyan2011multi},
Alvarez and Lawrence \cite{alvarez2009sparse,alvarez2011computationally}, Titsias and L{\'a}zaro-Gredilla \cite{titsias2011spike}. Raissi et al.~\cite{raissi2017machine} develop methods to learn linear differential equations using GPs.

\subsection{Empirical Mode Decomposition (EMD)}
The definition of an {\em instantaneous frequency} of a signal $x(t)$ is normally accomplished through application of the Hilbert transform $\mathcal{H}$ defined by the principle value of the singular integral
\[ \bigl(\mathcal{H}(x)\bigr)(t):=\frac{1}{\pi}PV \int_{\R}{\frac{x(\tau)}{t-\tau}d\tau},\]
which, when it is well defined,  determines the harmonic conjugate $y:=\mathcal{H}(x)$ of $x(t)$ of a function 
\[ x(t)+iy(t)=a(t)e^{i\theta(t)}\] 
which has an analytic extension to the upper complex half plane in $t$, allowing
the derivative $\omega:=\dot{\theta}$ the interpretation of an instantaneous frequency
of \[ x(t))=a(t)\cos(\theta(t))\,. \]

 However, this definition is controversial, see e.g.~Boashash \cite{boashash1992estimating} for a review,
and possesses many difficulties, and
the Empirical Mode Decomposition (EMD) algorithm was invented by Huang et al.~\cite{huang1998empirical} 
to circumvent them  by 
decomposing a signal into a sum of {\em intrinsic mode functions} (IMFs), essentially
functions whose number of local extrema and zero crossings are either equal or differ by $1$ 
and such that the mean of the envelope of the local maxima and  the local minima is $0$,
which are processed without difficulty by the  Hilbert transform.  See Huang \cite{HilbertHuang} for a more comprehensive discussion. This combination of the EMD and the Hilbert transform, called the Hilbert-Huang transform,
is used 
 decompose a signal into its fundamental AM-FM components. 
  Following 
Rilling et al.~\cite{rilling2003empirical}, the EMD appears as follows: Given a signal $x(t)$
\begin{enumerate}
\item  identify all local extrema of  $x(t)$
\item interpolate between the local minima (resp.~maxima) to obtain the envelope $e_{\min}(t)$
 (resp.~$e_{\max}(t)$)
\item compute the mean $m(t):=\frac{e_{\min}(t)+e_{\max}(t)}{2}$
\item extract the detail $d(t):=x(t)-m(t)$
\item iterate on the residual $m(t)$
\end{enumerate}
The {\em sifting} process  iterates steps (1) through (4) on the detail until it is close enough to zero mean. Then the residual is computed and step (5) is applied.

Despite its remarkable success,
see e.g.~\cite{coughlin200411,huang1998empirical,
neto2004assessment,wu2001impact,
costa2007noise,cummings2004travelling,djemili2016application} and the review on geophysical applications of
 Huang and Wu \cite{huang2008review}.  the original method is defined by an algorithm and therefore its performance is difficult to analyze.
 In particular, sifting and other iterative methods usually do not allow for backward error propagation.  Despite this, much is known about it, improvements have been made and efforts are underway to develop formulations which facilitate a performance analysis. To begin,
it appears that the EMD algorithm is sensitive to noise, so that
 Wu and Huang  \cite{wu2009ensemble} introduce and study an Ensemble EMD,
 further developed in Torres et al.~\cite{torres2011complete}, which 
 appears to resolve the noise problem while increasing the computational costs.  On the other hand, when applied to white noise
Flandrin et al.~\cite{flandrin2004empirical,flandrin2004empirical2,flandrin2005emd} and  Wu and Huang \cite{wu2004study} demonstrate that it acts as an adaptive  wavelet-like filter bank, leading to  
 Gilles' \cite{gilles2013empirical}  development
of empirical wavelets. 
 Rilling and Flandrin \cite{rilling2007one} successfully analyze
 the performance of the the algorithm on  the sum of two cosines. 
Lin et al.~\cite{lin2009iterative} consider an alternative framework for the empirical mode decomposition problem considering a moving average operator instead of the mean function of the EMD. 
This leads to a mathematically analyzable framework, and in some cases (such as the stationary case)
to the analysis of 
  Toeplitz operators, a good theory with good results. This technique has been further developed
by   Huang et al.~\cite{huang2009convergence}, with some success.  
Approaches based on variational principles, such as Feldman \cite{feldman2006time}, utilizing an iterative variational approach using the Hilbert transform, Hou and Shi \cite{hou2011adaptive},  a compressed sensing approach,    
Daubechies et al.~\cite{daubechies2011synchrosqueezed},
 the wavelet base {\em synchrosqueezing} method to be discussed in a moment,  and  Dragomiretskiy and Zosso \cite{dragomiretskiy2014variational}, a generalization of the classic Wiener filter using the
 alternate direction method of multipliers method,  see Boyd et al. \cite{boyd2011distributed},
 to solve the resulting
 bi-variate minimization problem,  appear to be good candidates for analysis.  
 However,  the variational objective function in \cite{hou2011adaptive} uses
 higher order total variational terms so appears sensitive to noise, \cite{feldman2006time} is an iterative variational approach, and the selection of the relevant modes  in \cite{dragomiretskiy2014variational}
for problems with noise is currently under investigation, see e.g.~Ma et al.~\cite{ma2017variational}
and the references therein. 
On the other hand,   Daubechies et al.~\cite{daubechies2011synchrosqueezed} 
 provide rigorous performance guarantees under certain conditions. Nevertheless,  there is still much  effort in developing their work, see e.g.~Auger
 et al.~\cite{auger2013time} for a review of synchrosqeezing and its relationship with time-frequency reassignment.

\subsection{Synchrosqueezing}
\label{sec_synchro}
 Synchrosqueezing,  introduced in Daubechies and Maes \cite{DaubechiesMaes}, was developed
  in Daubechies, Lu and Wu \cite{daubechies2011synchrosqueezed} as an alternative to the EMD algorithm  
which would allow mathematical performance analysis, and has generated much interest, see e.g.~\cite{oberlin2014fourier,thakur2013synchrosqueezing,thakur2015synchrosqueezing,auger2013time,li2012time,wang2013matching}.  Informally following \cite{daubechies2011synchrosqueezed}, for a signal $x(t)$ we let
\[W(a,b):= a^{-\frac{1}{2}}\int_{\R}{x(t)\overline{\psi\Bigl(\frac{t-b}{a}\Bigr)}dt}\]
denote the wavelet transform of the signal $x(t)$ using the wavelet $\psi$. They demonstrate that
for a wavelet such that its Fourier transform satisfies $\hat{\psi}(\xi)=0, \xi< 0$, when applied to a pure tone 
\begin{equation}\label{oekiiieii}
x(t):=A\cos(\omega t)
\end{equation}
 that 
\begin{equation}
\label{oeooijiii}
 \omega(a,b):=-i \frac{\partial \ln W(a,b)}{\partial b}
\end{equation}
satisfies
\[  \omega(a,b)=\omega\,,\]
that is, it provides a perfect estimate of the frequency of the signal \eqref{oekiiieii}.
This suggests using \eqref{oeooijiii} to define the map
\[ (a,b) \mapsto (\omega(a,b),b)\] to push the mass in the reconstruction formula 
\[ x(b)=\Re\Bigl[ C^{-1}_{\psi}\int_{0}^{\infty}{W(a,b)a^{-\frac{3}{2}}da} \Bigr]\,,\]
where $ C_{\psi}:=\int_{0}^{\infty}{\frac{\overline{\hat{\psi}(\xi)}}{\xi}d\xi}$,
to obtain  the identity
\begin{equation}
\label{eoejiiirir}
Re\Bigl[ C^{-1}_{\psi}\int_{0}^{\infty}{W(a,b)a^{-\frac{3}{2}}da} \Bigr] =
\Re\Bigl[ C^{-1}_{\psi}\int_{\R}{T(\omega,b)d\omega} \Bigr]\,,
\end{equation}
where 
\begin{equation}
\label{einjiiiriir}
 T(\omega,b)=\int_{A(b)}{W(a,b)a^{-\frac{3}{2}}\updelta\bigl(\omega(a,b)-\omega\bigr)da} 
\end{equation}
where
\[ A(b):=\{a:W(a,b)\neq 0\} \]
and $\omega(a,b)$ is defined as in \eqref{oeooijiii} for $(a,b)$ such that
$a \in A(b)$. 
We therefore obtain the reconstruction formula 
\begin{equation}
\label{eooiroior}
 x(b)=\Re\Bigl[ C^{-1}_{\psi}\int_{\R}{T(\omega,b)d\omega} \Bigr]\,
\end{equation}
 for the synchrosqueezed transform $T$.
In addition, \cite[Thm.~3.3]{daubechies2011synchrosqueezed} demonstrates that
for a signal $x$ comprised of a sum of AM-FM modes with sufficiently separated frequencies
whose amplitudes are slowly varying with respect to their phases, that
 the synchrosqueezed transform $T(\omega,b)$ is concentrated in narrow bands 
$\omega \approx \dot{\theta}_{i}(b)$ about the  instantaneous frequency of the $i$-th mode and 
restricting the integration  in \eqref{eooiroior} to these bands provides a good recovery of the modes.
\section{The mode decomposition problem}\label{secthpb}

To begin the general (abstract) formulation of the {\em mode decomposition problem},
let $V$ be a separable Hilbert space  with  inner product $\<\cdot,\cdot\>$ and corresponding norm $\|\cdot\|$. Also
let $\I$ be a finite set of indices and let
 $(V_i)_{i\in \I}$ be  linear subspaces $V_{i} \subset V$ such that
\begin{equation}
V=\sum_{i\in \I}V_i\,.
\end{equation}

The mode decomposition problem can be informally formulated as follows
\begin{Problem}\label{pb1}
Given $v\in V$ recover  $v_i \in V_i, i \in \I,$ such that
$v=\sum_{i\in \I}v_i$.
\end{Problem}

Our solution to Problem \ref{pb1} will use the interface between numerical approximation, inference and learning (as presented in \cite{OwhScobook2018,OwhScoSchNotAMS2019}), which although traditionally seen as entirely separate subjects,  are intimately connected through the common purpose of making estimations with partial information \cite{OwhScoSchNotAMS2019}. Since the study of this interface
has been shown to help automate the process of discovery in numerical analysis and the design of fast solvers \cite{owhadi2017multigrid, OwhScobook2018, schafer2017compression}, this paper is also motivated by the idea it might,  in a similar manner and to some degree,  also help the process of discovery in machine learning. Here, these interplays will be exploited to address the general formulation Problem \ref{pb1} of the mode recovery problem from the three perspectives of optimal recovery, game theory and Gaussian process regression. The corresponding minmax recovery framework (illustrated in Figure \ref{figgame} and presented below) will then be used as a building block for the proposed programmable networks.

\subsection{Optimal recovery setting}\label{secoptrec}
 Problem \ref{pb1} is ill-posed if the subspaces  $(V_i)_{i\in \I}$ are not linearly independent, in the sense that such a recovery will not be unique. Nevertheless,   optimal solutions can be defined in the optimal recovery setting of Micchelli and Rivlin \cite{micchelli1977survey}.
To this end,  let $\|\cdot\|_{\B}$ be a quadratic  norm on the product space
\begin{equation}
\B=\prod_{i\in \I} V_i,
\end{equation}
making $\B$ a Hilbert space,
and
let \[\Phi:\B \rightarrow  V\]
 be the information map defined by
\begin{equation}
\Phi u:=\sum_{i\in \I}u_i,\qquad u=(u_i)_{i\in \I}\in \B\, .
\end{equation}
An optimal recovery solution mapping
\[ \Psi:V \rightarrow \B\]
for the mode decomposition problem is defined as follows: for
given $v\in V$, we define $\Psi(v)$ to be the minimizer $w$
of
\begin{equation}\label{eqjhegdjedg}
\min_{w\in \B\mid \Phi w=v} \max_{u\in \B\mid \Phi u=v}\frac{\|u-w\|_\B}{\|u\|_\B}\, .
\end{equation}

\begin{Lemma}
\label{lem_basic}
Let $\Phi: \B \rightarrow V$ be surjective.
For $v\in V$, the solution $w$ of the convex optimization problem
\begin{equation}\label{eqmindsobfirstdeb}
\begin{cases}
\text{Minimize }\|w\|_\B\\
\text{Subject to }w\in \B\text{ and }\Phi w=v\,.
\end{cases}
\end{equation}
determines the unique optimal minmax solution $w=\Psi(v)$ to  \eqref{eqjhegdjedg}.
Moreover,
\[ \Psi(v)=\Phi^{+}v,\]
where  the  Moore-Penrose inverse $\Phi^{+}:V \rightarrow \B$ of $\Phi$ is defined by
\[ \Phi^{+}:=\Phi^{T}\bigl(\Phi \Phi^{T} \bigr)^{-1}\, . \]
\end{Lemma}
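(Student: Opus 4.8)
The plan is to reduce the minmax problem \eqref{eqjhegdjedg} to the convex norm-minimization problem \eqref{eqmindsobfirstdeb} by computing the inner maximization explicitly, and then to identify its solution with the Moore--Penrose inverse. The whole argument rests on the orthogonal splitting $\B=\Ker\Phi\oplus(\Ker\Phi)^\perp$. Writing $w=w_\parallel+w_\perp$ with $w_\parallel:=Pw\in\Ker\Phi$ (where $P$ is the orthogonal projection onto $\Ker\Phi$) and $w_\perp\in(\Ker\Phi)^\perp$, the key preliminary observation is that any two feasible points differ by an element of $\Ker\Phi$, so the component $w_\perp$ is \emph{the same} for every $w$ with $\Phi w=v$. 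Moreover $\Phi w_\perp=\Phi w=v$, so this common component is itself feasible and is the unique feasible point lying in $(\Ker\Phi)^\perp$; by Pythagoras it minimizes $\|\cdot\|_\B$ over the affine set $\{\Phi w=v\}$, hence it is the (unique) solution of \eqref{eqmindsobfirstdeb}. Call it $w^\star$, noting that surjectivity of $\Phi$ guarantees the feasible set is nonempty and that completeness gives existence of this projection of the origin.

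Next I would compute the inner maximization. Every competitor $u$ with $\Phi u=v$ has the form $u=w+k$ with $k\in\Ker\Phi$, so $\|u-w\|_\B=\|k\|_\B$ and the ratio in \eqref{eqjhegdjedg} equals $\|k\|_\B/\|w+k\|_\B$. For the candidate $w=w^\star$, orthogonality $w^\star\perp k$ gives $\|w^\star+k\|_\B^2=\|w^\star\|_\B^2+\|k\|_\B^2$, whence $\sup_k \|k\|_\B^2/(\|w^\star\|_\B^2+\|k\|_\B^2)=1$ (assuming $v\neq0$; the case $v=0$ forces $w^\star=0$ and is trivial). For any other feasible $w=w^\star+w_\parallel$ with $w_\parallel\neq0$, I would exhibit a competitor pushing the ratio strictly above $1$: using $\|u\|_\B^2-\|u-w\|_\B^2=2\langle u,w\rangle_\B-\|w\|_\B^2=\|w\|_\B^2+2\langle k,w_\parallel\rangle_\B$ (the last step because $k\perp w_\perp$), the choice $k=-t\,w_\parallel$ with $t\to\infty$ makes this quantity negative, so the supremum for such $w$ is strictly greater than $1$. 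Therefore the map $w\mapsto\sup_{\Phi u=v}\|u-w\|_\B/\|u\|_\B$ attains its minimal value $1$ \emph{uniquely} at $w=w^\star$, which simultaneously establishes existence and uniqueness of the minmax solution and its identification $\Psi(v)=w^\star$ with the solution of \eqref{eqmindsobfirstdeb}.

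Finally I would verify the closed form. Surjectivity of $\Phi$ makes $\Phi\Phi^T$ invertible, so $\Phi^+:=\Phi^T(\Phi\Phi^T)^{-1}$ is well defined. Then $\Phi(\Phi^+v)=(\Phi\Phi^T)(\Phi\Phi^T)^{-1}v=v$ shows $\Phi^+v$ is feasible, while $\Phi^+v\in\Img\Phi^T\subseteq(\Ker\Phi)^\perp$ shows it lies in the orthogonal complement of the kernel. Since $w^\star$ was characterized above as the unique feasible point in $(\Ker\Phi)^\perp$, we conclude $\Psi(v)=w^\star=\Phi^+v$.

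I expect the main obstacle to be the inner maximization: one must argue cleanly that along the unbounded directions of the affine constraint set the ratio tends to $1$, and that off the optimum it can be forced strictly above $1$, while exactly the minimum-norm representative $w^\star$ yields supremal value precisely $1$. The decomposition bookkeeping---in particular the invariance of $w_\perp$ across the feasible set---is what makes this step collapse to the elementary minimization \eqref{eqmindsobfirstdeb}; the Moore--Penrose identification is then routine once invertibility of $\Phi\Phi^T$ is granted.
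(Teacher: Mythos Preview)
Your proposal is correct and follows essentially the same architecture as the paper: both use the orthogonal splitting $\B=\Ker\Phi\oplus(\Ker\Phi)^\perp$, identify the minimum-norm feasible point with $\Phi^+v$, and show that the inner supremum equals $1$ at that point.

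The one substantive difference is the \emph{uniqueness} argument. You dispatch it directly: for any feasible $w=w^\star+w_\parallel$ with $w_\parallel\neq 0$, the choice $k=-t\,w_\parallel$ drives $\|u\|_\B^2-\|u-w\|_\B^2=\|w\|_\B^2-2t\|w_\parallel\|_\B^2$ negative, so the inner supremum strictly exceeds $1$. The paper instead takes a longer structural route: it shows that any optimal $\Psi$ makes $I-\Psi\circ\Phi$ a contractive projection with range $\Ker\Phi$, proves this projection is linear by a direct decomposition argument, and then invokes the fact that a contractive linear projection on a Hilbert space is orthogonal to conclude $\Psi\circ\Phi=P_{(\Ker\Phi)^\perp}$. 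Your argument is more elementary and self-contained for this particular lemma; the paper's argument, while heavier, yields the additional structural information that any minmax-optimal $\Psi$ must act as an orthogonal projection after composition with $\Phi$, which is conceptually aligned with the operator-theoretic viewpoint used later in the paper.
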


Now let us be more specific about the structure of $\B$ that we will assume.
Indeed,  let the subspaces $(V_i)_{i\in \I}$ be equipped with quadratic norms
$(\|\cdot\|_{V_i})_{i\in \I}$ making each \[(V_i,\|\cdot\|_{V_i})\]
 a Hilbert space, and equip  their product $\B=\prod_{i\in \I} V_i$ with the product norm
\begin{equation}\label{eqjehdjhdbdd}
\|u\|_{\B}^2:=\sum_{i\in \I} \|u_i\|_{V_i}^2, \qquad u=(u_i)_{i\in \I}\in \B\, .
\end{equation}
 We use the notation  $[\cdot,\cdot]$  for the duality product between $V^*$ on the left  and $V$ on the right, and also for the
duality product between $V_{i}^*$  and $V_{i}$ for all $i$.
The norm $\|\cdot\|_{V_i}$ makes $V_{i}$  into a Hilbert space if and only if
\begin{equation}\label{eqkkejddh}
\|v_{i}\|^2_{V_i} =[Q_i^{-1}v_{i}, v_{i}], \qquad v_i\in V_i,
\end{equation}
for some positive symmetric linear bijection
\[Q_i\,:\,V^*_i\rightarrow V_i,\]
   where  by positive and symmetric we mean
$[\phi,Q_i\phi]\geq 0$ and $[\phi,Q_i \varphi]=[\varphi,Q_i \phi]$ for all $\varphi,\phi \in V_i^*$.
For each $i \in \I$, the dual space
$V_{i}^*$ to $(V_{i}, \|\cdot\|_{V_{i}}) $ is also a Hilbert space with norm
\begin{equation}\label{eqkkejddhbis}
\|\phi_{i}\|^2_{V_i^{*}} :=[\phi_{i},Q_{i}\phi_{i}], \qquad \phi_i\in V^{*}_i\,,
\end{equation}
and therefore
the dual space $\B^*$ of $\B$ can be identified with the product of the  dual spaces
\begin{equation}
\B^*=\prod_{i\in \I}V_i^*
\end{equation}
with (product) duality product
\begin{equation}
[\phi,u]=\sum_{i\in \I} [\phi_i,u_i], \qquad  \phi=(\phi_i)_{i\in \I}\in \B^*,  \quad
u=(u_i)_{i\in \I}\in \B\, .
\end{equation}
Moreover the symmetric positive linear bijection
\begin{equation}
\label{Qform}
 Q:\B^{*} \rightarrow \B
\end{equation}
defining the quadratic norm
$\|\cdot\|_\B$ is the block-diagonal operator
\[Q:=\diag(Q_i)_{i\in \I}  \]
 defined by its action
$Q \phi=(Q_i\phi_i)_{i\in \I},\,  \phi\in \B^*.$

Let  \[e_i\,:\,V_i\rightarrow V\]
be the subset inclusion and let its adjoint
\[e_i^*\,:\,V^* \rightarrow V_i^*\]
be defined through $[e_i^* \phi, v_i]=[ \phi, e_i v_i]$ for $\phi\in V^*, v_i\in V_i$. These operations
naturally transform the family of operators
\[Q_i\,:\,V_i^*\rightarrow V_i, \quad  i \in \I,\]
into a family  of operators
\[e_i Q_i e_i^* \,:\,V^{*} \rightarrow V,  \quad  i \in \I,\]
all defined on the same space, so that
we can define their sum $S\,:\,V^{*} \rightarrow V$ by
\begin{equation}
\label{S_def}
S=\sum_{i\in \I}e_i Q_i e_i^*\,.
\end{equation}
The following proposition demonstrates that $S$ is invertible and  that $S^{-1}$ and $S$ naturally generate dual Hilbert space  norms on
$V$ and $V^{*}$ respectively.
\begin{Lemma}
\label{lem_S}
The operator
$S:V^{*} \rightarrow V$, defined in \eqref{S_def}, is invertible. Moreover,
\begin{equation}
\|v\|^2_{S^{-1}}:=[S^{-1}v,v], \quad v \in V,
\end{equation}
 defines a Hilbert space norm on $V$  and
 \begin{equation}
\|\phi\|_{S}^2:=[\phi, S\phi] =\sum_{i\in \I} \|e_i^* \phi\|_{V_i^*}^2, \quad  \phi\in V^*\,,
\end{equation}
defines a Hilbert space norm on $V^{*}$ which is dual to that on $V$.
\end{Lemma}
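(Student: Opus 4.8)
The plan is to realize $S$ through the factorization $S=\Phi Q \Phi^T$, where $\Phi:\B\to V$ is the information map $\Phi u=\sum_{i\in\I}e_i u_i$, where $Q=\diag(Q_i)_{i\in\I}:\B^*\to\B$ is the Riesz isomorphism associated with $\|\cdot\|_\B$ (so that $\langle a,u\rangle_\B=[Q^{-1}a,u]$ and $\langle Qx,u\rangle_\B=[x,u]$), and where $\Phi^T:V^*\to\B^*$ is the Banach adjoint. A one-line computation from $[e_i^*\phi,v_i]=[\phi,e_iv_i]$ gives $\Phi^T\phi=(e_i^*\phi)_{i\in\I}$, hence $\Phi Q\Phi^T\phi=\sum_{i\in\I}e_iQ_ie_i^*\phi=S\phi$, so the factorization holds. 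Since $\I$ is finite and each inclusion $e_i:(V_i,\|\cdot\|_{V_i})\to V$ is continuous, $\Phi$ is bounded, and it is surjective precisely because $V=\sum_{i\in\I}V_i$.

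First I would establish the displayed formula for $\|\phi\|_S^2$. Expanding $S$ and using $[\phi,e_iv_i]=[e_i^*\phi,v_i]$ with $v_i=Q_ie_i^*\phi$, together with \eqref{eqkkejddhbis}, yields $[\phi,S\phi]=\sum_{i\in\I}[e_i^*\phi,Q_ie_i^*\phi]=\sum_{i\in\I}\|e_i^*\phi\|_{V_i^*}^2$. This is manifestly nonnegative, and the bilinear form $(\phi,\psi)\mapsto[\phi,S\psi]$ is symmetric because each $Q_i$ is symmetric. For definiteness, $[\phi,S\phi]=0$ forces $e_i^*\phi=0$ for every $i$, i.e. $[\phi,e_iv_i]=0$ for all $v_i\in V_i$; since $V=\sum_{i\in\I}V_i$ this says $\phi$ annihilates $V$, whence $\phi=0$. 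In particular $S$ is injective.

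The main work is invertibility, since in infinite dimensions positivity and injectivity alone do not yield surjectivity. I expect this closed-range step to be the crux. Because $\Phi$ is bounded, $\Ker\Phi$ is a closed subspace of the Hilbert space $\B$; let $N:=(\Ker\Phi)^\perp$ be its orthogonal complement for $\langle\cdot,\cdot\rangle_\B$. The restriction $\Phi|_N:N\to V$ is a continuous bijection (surjective because any solution of $\Phi u=v$ projects onto a solution in $N$, injective because $N\cap\Ker\Phi=\{0\}$), hence a Banach-space isomorphism by the bounded inverse theorem. Next, for $z\in\Ker\Phi$ one computes $\langle Q\Phi^T\phi,z\rangle_\B=[\Phi^T\phi,z]=[\phi,\Phi z]=0$, so $Q\Phi^T$ maps $V^*$ into $N$; and for $u\in N$ one has $\langle Q\Phi^T\phi,u\rangle_\B=[\phi,\Phi u]$, which identifies $Q\Phi^T:V^*\to N$ with the Riesz representative of the adjoint of the isomorphism $\Phi|_N$, hence itself an isomorphism. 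Therefore $S=(\Phi|_N)\circ(Q\Phi^T)$ is a composition of isomorphisms, so $S:V^*\to V$ is invertible with bounded inverse. The only external inputs are the continuity of the inclusions $V_i\hookrightarrow V$ (making $\Phi$ bounded) and the bounded inverse theorem.

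Finally, the norm and duality claims follow formally. Since $S$ is bounded, symmetric, positive and boundedly invertible, $\langle\phi,\psi\rangle_S:=[\phi,S\psi]$ and $\langle v,w\rangle_{S^{-1}}:=[S^{-1}v,w]$ are inner products; after the Riesz identification they correspond to self-adjoint positive invertible operators, so their induced norms are coercive (with lower constant $\|S^{-1}\|^{-1}$) and thus equivalent to $\|\cdot\|_{V^*}$ and $\|\cdot\|_V$ respectively, hence complete, i.e. genuine Hilbert space norms, and the summation identity for $\|\cdot\|_S$ is the one already proved. For the stated duality, note that $S^{-1}:V\to V^*$ is exactly the Riesz map of $(V,\langle\cdot,\cdot\rangle_{S^{-1}})$, since $\langle v,w\rangle_{S^{-1}}=[S^{-1}v,w]$; by the Riesz representation theorem the dual norm on $V^*$ is then $\phi\mapsto[\phi,(S^{-1})^{-1}\phi]^{1/2}=[\phi,S\phi]^{1/2}=\|\phi\|_S$, which is precisely the asserted duality.
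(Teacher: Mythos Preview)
Your proof is correct and follows essentially the same approach as the paper: both factor $S$ through the information map $\Phi$ and its adjoint (the paper obtains $\Phi\Phi^{T}=S\iota$ with the Hilbert adjoint and Riesz map $\iota$, which is the same identity as your $S=\Phi Q\Phi^{*}$ after unwinding $\Phi^{T}=Q\Phi^{*}\iota$), and both deduce invertibility from surjectivity of $\Phi$ via the open mapping/closed range machinery. Your argument is somewhat more self-contained, supplying explicitly the step that $\Phi|_{(\Ker\Phi)^{\perp}}$ is an isomorphism, whereas the paper imports the invertibility of $\Phi\Phi^{T}$ from the proof of Lemma~\ref{lem_basic}; note also that what you call $\Phi^{T}$ is the Banach adjoint, which the paper denotes $\Phi^{*}$, reserving $\Phi^{T}$ for the Hilbert adjoint.
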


The following theorem
 determines the optimal recovery map $\Psi$.

\begin{Theorem}\label{thmkjhgjhgyuy}
 For $v \in V$,  the minimizer of \eqref{eqmindsobfirstdeb} and therefore the minmax solution of  \eqref{eqjhegdjedg} is
\begin{equation}\label{eqjhhejdkejd}
\Psi(v)= \big(Q_i e_i^* S^{-1} v\big)_{i\in \I}\, .
\end{equation}
Furthermore \[\Phi\big(\Psi(v)\big)=v, \qquad v \in V, \]
where
 \[\Psi: (V,\|\cdot\|_{S^{-1}})\rightarrow  (\B,\|\cdot\|_\B)\]
and  \[\Phi^{*}:(V^{*},\|\cdot\|_{S})\rightarrow (\B^{*},\|\cdot\|_{\B^{*}}) \]
are isometries.
In particular,
  writing
 $\Psi_i(v):=Q_i e_i^* S^{-1} v$, we have
\begin{equation}\label{eqjehdejhdbdhs}
\|v\|_{S^{-1}}^2=\|\Psi(v)\|_{\B}^2=\sum_{i\in \I} \|\Psi_i(v)\|_{V_i}^2\, \qquad v \in V\, .
\end{equation}
\end{Theorem}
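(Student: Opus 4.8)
The plan is to reduce the entire statement to one algebraic identity relating the information map, the covariance operator of $\B$, and the operator $S$. Concretely, I would first compute the adjoint $\Phi^*:V^*\to\B^*$ of the information map. Writing $\Phi u=\sum_{i\in\I}e_iu_i$ and using the defining relation $[e_i^*\phi,u_i]=[\phi,e_iu_i]$, the pairing $[\phi,\Phi u]=\sum_{i\in\I}[e_i^*\phi,u_i]$ identifies $\Phi^*\phi=(e_i^*\phi)_{i\in\I}$. With $Q=\diag(Q_i)_{i\in\I}$ the covariance of $\B$ from \eqref{Qform}, the crux is then the identity
\[\Phi Q\Phi^*=S,\]
obtained by the direct computation $\Phi Q\Phi^*\phi=\sum_{i\in\I}e_iQ_ie_i^*\phi=S\phi$ and the definition \eqref{S_def}. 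Essentially all the content of the theorem flows from this one identity.

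Next I would produce and verify the minimizer. Since $V=\sum_{i\in\I}V_i$ makes $\Phi$ surjective, Lemma \ref{lem_basic} applies and characterizes the minimizer of \eqref{eqmindsobfirstdeb} as the feasible element lying in the $\B$-orthogonal complement of $\Ker\Phi$; an element $u$ is $\B$-orthogonal to $\Ker\Phi$ exactly when $Q^{-1}u$ annihilates $\Ker\Phi$, i.e.\ when $u=Q\Phi^*\lambda$ for some $\lambda\in V^*$. Imposing feasibility on this form gives $\Phi Q\Phi^*\lambda=S\lambda=v$, and since $S$ is invertible by Lemma \ref{lem_S} we get $\lambda=S^{-1}v$, whence
\[w=Q\Phi^*S^{-1}v=\big(Q_ie_i^*S^{-1}v\big)_{i\in\I}=\Psi(v),\]
which is \eqref{eqjhhejdkejd}. (Equivalently, inserting the Riesz maps one checks that the Moore-Penrose inverse of Lemma \ref{lem_basic} is $\Phi^{+}=Q\Phi^*S^{-1}$.) Feasibility $\Phi\Psi(v)=\Phi Q\Phi^*S^{-1}v=SS^{-1}v=v$ is then immediate from the identity above.

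The two isometry claims follow by the same pairing manipulation. Using $Q^{-1}\Psi(v)=\Phi^*S^{-1}v$, the adjoint relation, and $\Phi Q\Phi^*=S$,
\[\|\Psi(v)\|_\B^2=[Q^{-1}\Psi(v),\Psi(v)]=[\Phi^*S^{-1}v,\,Q\Phi^*S^{-1}v]=[S^{-1}v,\,\Phi Q\Phi^*S^{-1}v]=[S^{-1}v,v]=\|v\|_{S^{-1}}^2,\]
which is precisely \eqref{eqjehdejhdbdhs} and shows $\Psi$ is norm-preserving, hence (by polarization) inner-product preserving. Dually, recalling $\|\phi\|_{\B^*}^2=[\phi,Q\phi]$ from \eqref{eqkkejddhbis},
\[\|\Phi^*\phi\|_{\B^*}^2=[\Phi^*\phi,Q\Phi^*\phi]=[\phi,\Phi Q\Phi^*\phi]=[\phi,S\phi]=\|\phi\|_S^2,\]
so $\Phi^*$ is an isometry as well, consistent with the duality of $\|\cdot\|_{S^{-1}}$ and $\|\cdot\|_S$ recorded in Lemma \ref{lem_S}.

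The only genuine obstacle is bookkeeping: correctly identifying $\Phi^*$ through the $V_i^*$–$V_i$ and $V^*$–$V$ dualities without conflating primal and dual spaces, and pinning down the identity $\Phi Q\Phi^*=S$. A minor subtlety is the interpretation of the transpose $\Phi^{T}$ in Lemma \ref{lem_basic}; working intrinsically with $\Phi^*$ and the covariance $Q$ (rather than with an inner-product adjoint) removes any ambiguity about which inner products are in play. Once $\Phi Q\Phi^*=S$ is established, the recovery formula, feasibility, and both isometries are each one-line consequences.
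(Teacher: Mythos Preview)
Your proposal is correct and follows essentially the same computation as the paper. The only organizational difference is that the paper works first with the Hilbert space adjoint $\Phi^{T}$ and the Riesz isomorphism $\iota$, computing $\Phi\Phi^{T}=S\iota$ to evaluate the Moore--Penrose inverse $\Phi^{+}=\Phi^{T}(\Phi\Phi^{T})^{-1}$ explicitly, and only afterwards establishes $\Phi Q\Phi^{*}=S$ for the second isometry; you instead put the identity $\Phi Q\Phi^{*}=S$ front and center and read off everything, including the minimizer via the characterization $(\Ker\Phi)^{\perp_{\B}}=Q\Img\Phi^{*}$, directly from it. The mathematical content is the same, and your packaging is arguably tidier since it avoids carrying the Riesz map through the computation.
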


Observe that the adjoint \[\Phi^*:V^{*} \rightarrow \B^{*}\] of $\Phi:\B \rightarrow V$, defined by $[\varphi,\Phi u]=[\Phi^*(\varphi),u]$ for $\varphi\in V^*$ and $u\in \B$, is computed to be
\begin{equation}
\Phi^*(\varphi)=(e_i^* \varphi)_{i\in \I}, \qquad \varphi\in V^*\,.
\end{equation}
The following theorem presents optimality results in terms of $\Phi^{*}$.
\begin{Theorem}
\label{thm_eieiei}
We have
\begin{equation}
\|u-\Psi(\Phi u)\|_{\B}^2=\inf_{\phi \in V^*} \|u-Q \Phi^*(\phi) \|_{\B}^2=\inf_{\phi \in V^*}\sum_{i\in \I} \|u_i-Q_i e_i^* \phi\|_{V_i}^2\,.
\end{equation}
\end{Theorem}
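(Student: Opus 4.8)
The plan is to read the claimed equalities from right to left. The rightmost equality is immediate: by the definition \eqref{eqjehdjhdbdd} of the product norm together with the formula $\Phi^*(\phi)=(e_i^*\phi)_{i\in\I}$ and $Q=\diag(Q_i)_{i\in\I}$, one has $Q\Phi^*(\phi)=(Q_ie_i^*\phi)_{i\in\I}$, so that $\|u-Q\Phi^*(\phi)\|_\B^2=\sum_{i\in\I}\|u_i-Q_ie_i^*\phi\|_{V_i}^2$ componentwise. Thus the entire content lies in the first equality, which I would recast as a projection statement: the right-hand side $\inf_{\phi\in V^*}\|u-Q\Phi^*(\phi)\|_\B^2$ is the squared $\B$-distance from $u$ to the range $\mathcal{R}:=\{Q\Phi^*(\phi):\phi\in V^*\}$, and the goal is to identify $\Psi(\Phi u)$ as the $\B$-orthogonal projection of $u$ onto $\mathcal{R}$.

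First I would check that the candidate minimizer actually lies in $\mathcal{R}$. By \eqref{eqjhhejdkejd}, $\Psi(\Phi u)=(Q_ie_i^*S^{-1}\Phi u)_{i\in\I}=Q\Phi^*(S^{-1}\Phi u)$, so $\Psi(\Phi u)\in\mathcal{R}$ and the natural guess for the optimal $\phi$ is $S^{-1}\Phi u$. The key step is then an inner-product identity for the $\B$-geometry. Writing $\langle a,b\rangle_\B=[Q^{-1}a,b]$ for the inner product associated with the norm \eqref{eqjehdjhdbdd} (so that $Q^{-1}$ is symmetric because $Q$ is), I would compute, for any $a\in\B$ and $\psi\in V^*$,
\begin{equation*}
\langle a,Q\Phi^*(\psi)\rangle_\B=[Q^{-1}a,Q\Phi^*(\psi)]=[\Phi^*(\psi),a]=[\psi,\Phi a],
\end{equation*}
where the second equality uses the symmetry of $Q$ and the third uses the defining adjoint relation $[\Phi^*(\psi),a]=[\psi,\Phi a]$.

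With this identity in hand, set $a:=u-\Psi(\Phi u)$. By Theorem \ref{thmkjhgjhgyuy} we have $\Phi\big(\Psi(\Phi u)\big)=\Phi u$, hence $\Phi a=\Phi u-\Phi u=0$, and the identity gives $\langle u-\Psi(\Phi u),Q\Phi^*(\psi)\rangle_\B=[\psi,0]=0$ for every $\psi\in V^*$. Therefore $u-\Psi(\Phi u)$ is $\B$-orthogonal to all of $\mathcal{R}$. Since for an arbitrary $\phi\in V^*$ the difference $\Psi(\Phi u)-Q\Phi^*(\phi)=Q\Phi^*(S^{-1}\Phi u-\phi)$ lies in $\mathcal{R}$, Pythagoras yields
\begin{equation*}
\|u-Q\Phi^*(\phi)\|_\B^2=\|u-\Psi(\Phi u)\|_\B^2+\|\Psi(\Phi u)-Q\Phi^*(\phi)\|_\B^2\ge\|u-\Psi(\Phi u)\|_\B^2,
\end{equation*}
with equality at $\phi=S^{-1}\Phi u$; taking the infimum over $\phi$ closes the first equality.

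I expect the only real obstacle to be bookkeeping with the duality pairing, namely confirming that $\langle\cdot,\cdot\rangle_\B=[Q^{-1}\cdot,\cdot]$ is genuinely the inner product associated with \eqref{eqjehdjhdbdd} and that the symmetry of $Q$ transfers correctly through $Q^{-1}$. Note that the explicit minimizer $\phi=S^{-1}\Phi u$ lets the Pythagorean computation sidestep any question about whether $\mathcal{R}$ is closed: orthogonality together with the membership $\Psi(\Phi u)\in\mathcal{R}$ is all that is needed, so the infimum is attained and no density or completeness argument on $\mathcal{R}$ is required.
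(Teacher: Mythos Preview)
Your proof is correct and follows essentially the same approach as the paper. Both hinge on the identity $\langle a,Q\Phi^*(\psi)\rangle_\B=[\psi,\Phi a]$ (equivalently, $\Phi^T=Q\Phi^*$ for the Hilbert-space adjoint $\Phi^T$), which identifies the range $\mathcal{R}=Q\Phi^*(V^*)$ with $\operatorname{Img}(\Phi^T)$; the paper then quotes the Moore--Penrose property that $\Psi\circ\Phi$ is the orthogonal projection onto $\operatorname{Img}(\Phi^T)$, whereas you verify the orthogonality by hand from $\Phi\big(\Psi(\Phi u)\big)=\Phi u$ and finish with Pythagoras, which is a slightly more self-contained presentation of the same argument.
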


\subsection{Game/decision theoretic setting}
\label{sec_game}
Optimal solutions to Problem \ref{pb1} can also be defined in the setting of the game/decision theoretic approach to numerical approximation presented in \cite{OwhScobook2018}. In this setting the
 minmax problem  \eqref{eqjhegdjedg} is interpreted as an adversarial zero sum game (illustrated in Figure \ref{figgame}) between two players  and lifted to mixed strategies to identify a saddle point. Let $\cP_2(\B)$ be the set of Borel probability measures
 $\mu$ on $\B$ such that $\E_{u\sim \mu}\big[\|u\|_\B^2\big]<\infty$, and let $L(V,\B)$ be the set of Borel
  measurable functions $\psi:V\rightarrow  \B.$ Let $\er:\cP_2(\B)\times L(V,\B) \rightarrow \R$ be the loss
 function
  defined by
\begin{equation}\label{eqkjehdjedhd}
\er(\mu,\psi)=\frac{\E_{u\sim \mu}\big[\|u-\psi(\Phi u)\|_\B^2\big]}{\E_{u\sim \mu}\big[\|u\|_\B^2\big]},
\qquad  \mu \in \cP_2(\B), \psi \in L(V,\B)\, .
\end{equation}
Let us also recall the more general notion of a Gaussian field as described in
 \cite[Chap.~17]{OwhScobook2018}. To that end, a Gaussian space $\mathbf{H}$ is a linear subspace
 $\mathbf{H}\subset L^{2}(\Omega,\Sigma, \P)$ of the $L^{2}$  space of a probability space consisting of centered Gaussian random variables.
A centered  Gaussian field $\xi$ on $\B$ with covariance operator $Q:\B^{*} \rightarrow \B$,
 written  $\xi\sim \cN(0,Q)$, is
 an isometry
\[ \xi:\B^{*} \rightarrow \mathbf{H}\]
 from $\B^*$ to a Gaussian space $\mathbf{H}$, in that
\[[\phi,\xi]\sim \cN\big(0, [\phi,Q \phi]\big), \qquad  \phi\in \B^*,\]
where we use the notation $[\phi,\xi]$ to denote the action $\xi(\phi)$ of $\xi$ on the element $\phi\in \B^*$, thus
indicating
  that $\xi$ is a  weak $\B$-valued Gaussian random variable.
  \interfootnotelinepenalty=10000
  As discussed in \cite[Chap.~17]{OwhScobook2018}, there is a one to one correspondence between  Gaussian cylinder measures and Gaussian fields\footnote{
The {\em cylinder sets} of $\B$ consists of all sets of the form
$F^{-1}(B)$ where $B \in \R^{n}$ is a Borel set and $F:\B \rightarrow \R^{n}$ is a continuous linear map, over all integers $n$.
A {\em cylinder measure } $\mu$, see also \cite[Chap.~17]{OwhScobook2018}, on $B$,
is a  collection of measures $\mu_{F}$ indexed by $F:\B \rightarrow \R^{n}$  over all $n$ such that
each $\mu_{F}$ is a Borel measure on $\R^{n}$ and such that
for $F_{1}:\B \rightarrow \R^{n_{1}}$ and $F_{2}:\B \rightarrow \R^{n_{2}}$
and $G:\R^{n_{1}}\rightarrow \R^{n_{2}}$ linear and continuous with
$F_{2}=GF_{1}$, we have
$G_{*}\mu_{F_{1}}=\mu_{F_{2}}$, where $G_{*}$ is the pushforward operator on  measures corresponding to the map $G$, defined by
$(G_{*}\nu)(B):=\nu(G^{-1}B)$.  When each measure $\mu_{F}$ is Gaussian, the cylinder measure is said to be a Gaussian cylinder measure.  A sequence $\mu_{n}$ of cylinder measures such that the sequence
$(\mu_{n})_{F}$ converges in the weak topology for each $F$, is said to converge
 in the {\em weak cylinder measure topology}.
}.
Let $\xi$ denote the Gaussian field
\[\xi\sim \cN(0,Q)\]
 on $\B$ where  $Q:\B^{*} \rightarrow \B$ is the  block diagonal operator
$Q:=\diag(Q_i)_{i\in \I}$, and
let $\mu^\dagger$ denote  the  cylinder measure defined by the Gaussian field $\xi-\E[\xi|\Phi \xi]$,
or the corresponding Gaussian measure in finite dimensions.

We say that a tuple $(\mu', \psi') $ is a saddle point of the loss function
$\er:  \cP_2(\B) \times L(V,\B)  \rightarrow \R$  if
\[\er(\mu,\psi') \leq \er(\mu',\psi') \leq \er(\mu', \psi), \quad \mu  \in \cP_2(\B) ,\,\, \psi \in  L(V,\B)
  \, .\]
Theorem \ref{thmkjjdhkejd} shows that the optimal strategy of Player I is the Gaussian field $\xi-\E[\xi|\Phi \xi]$,  the optimal strategy of Player II is the conditional expectation
\begin{equation}\label{eqjjghgjhgy}
\Psi(v)=\E\big[\xi\big|\Phi \xi=v\big]\,,
\end{equation}
and  \eqref{eqjjghgjhgy} is equal to \eqref{eqjhhejdkejd}.

\begin{Theorem}\label{thmkjjdhkejd}
Let $\er$ be defined as in \eqref{eqkjehdjedhd}. It holds true that
\begin{equation}\label{eqkjehdsdejedhd}
\max_{\mu \in \cP_2(\B)} \min_{\psi \in L(V,\B)}\er(\mu,\psi)=\min_{\psi \in L(V,\B)} \max_{\mu \in \cP_2(\B)} \er(\mu,\psi)\,.
\end{equation}
Furthermore,
\begin{itemize}
\item If $\dim(V)<\infty$ then $\big(\mu^\dagger, \Psi\big)$
 is a saddle point for the loss \eqref{eqkjehdjedhd},
 where $\Psi$ is as in \eqref{eqjhhejdkejd} and  \eqref{eqjjghgjhgy}.
\item If $\dim(V)=\infty$, then the loss \eqref{eqkjehdjedhd} admits a sequence of saddle points $(\mu_n,\Psi) \in \cP_2(\B)\times L(V,\B)$ where $\Psi$ is  as in \eqref{eqjhhejdkejd} and \eqref{eqjjghgjhgy},
and
the $\mu_n$ are
Gaussian measures, with finite dimensional support, converging towards
$\mu^\dagger$ in the weak cylinder measure topology.
\end{itemize}
\end{Theorem}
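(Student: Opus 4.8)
The plan is to exhibit an explicit saddle point (when $\dim V<\infty$) or an approximating sequence of saddle points (when $\dim V=\infty$) whose common value is $1$, and then to deduce the identity \eqref{eqkjehdsdejedhd} from weak duality. The cornerstone is the observation, available from Theorems \ref{thmkjhgjhgyuy} and \ref{thm_eieiei}, that the composition $\Psi\circ\Phi:\B\to\B$ is the $\|\cdot\|_\B$-orthogonal projection onto the range of $Q\Phi^*$. Indeed, $\Phi\Psi=\mathrm{id}_V$ gives $(\Psi\Phi)^2=\Psi\Phi$, while Theorem \ref{thm_eieiei} identifies $\|u-\Psi(\Phi u)\|_\B^2=\inf_{\phi\in V^*}\|u-Q\Phi^*\phi\|_\B^2$ as the squared distance from $u$ to the linear subspace $Q\Phi^*(V^*)$, so $\Psi\Phi$ is simultaneously idempotent and self-adjoint, i.e.\ an orthogonal projection.

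From this projection property the bound $\|u-\Psi(\Phi u)\|_\B^2\le\|u\|_\B^2$ holds pointwise for every $u\in\B$, hence $\E_{u\sim\mu}[\|u-\Psi(\Phi u)\|_\B^2]\le\E_{u\sim\mu}[\|u\|_\B^2]$ and therefore $\er(\mu,\Psi)\le 1$ for every $\mu\in\cP_2(\B)$. This already yields $\min_\psi\max_\mu\er(\mu,\psi)\le\max_\mu\er(\mu,\Psi)\le 1$ in any dimension, and it furnishes the right half $\er(\mu,\Psi)\le\er(\mu^\dagger,\Psi)$ of the saddle inequalities once the value at $\mu^\dagger$ is shown to equal $1$.

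Next I would analyze $\mu^\dagger$, the law of $\xi-\E[\xi|\Phi\xi]$. Since $\Phi\E[\xi|\Phi\xi]=\Phi\xi$, this field lies in $\Ker\Phi$, so under $\mu^\dagger$ one has $\Phi u=0$ almost surely and $\psi(\Phi u)=\psi(0)$ is a constant; as $\mu^\dagger$ is centered, the optimal constant is $0$, giving $\er(\mu^\dagger,\psi)\ge 1=\er(\mu^\dagger,\Psi)$ for every $\psi$ with $\psi(0)=0$, and in particular for the linear map $\Psi$. Combined with the previous paragraph, this makes $(\mu^\dagger,\Psi)$ a saddle point of value $1$ whenever $\mu^\dagger\in\cP_2(\B)$, which is the case when $\dim V<\infty$; weak duality $\max_\mu\min_\psi\le\min_\psi\max_\mu$ then pins both sides to $1$ and proves \eqref{eqkjehdsdejedhd}.

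The genuinely delicate part is $\dim V=\infty$, where $\mu^\dagger$ is only a cylinder measure and need not lie in $\cP_2(\B)$, so it cannot serve directly as Player I's strategy. The plan is to replace it by a sequence of honest Gaussian measures of finite-dimensional support: choose finite-rank, trace-class truncations $Q_n$ of $Q$ along a spectral basis, let $\xi_n\sim\cN(0,Q_n)$, and let $\mu_n$ be the law of $\xi_n-\E[\xi_n|\Phi\xi_n]$. Each $\mu_n$ is centered and supported in $\Ker\Phi$, so the argument above applies verbatim to show that $(\mu_n,\Psi)$ is a saddle point of value $1$; since $\er(\mu,\Psi)\le 1=\er(\mu_n,\Psi)$ for all $\mu$ and all $n$, the identity \eqref{eqkjehdsdejedhd} follows exactly as before. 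I expect the main obstacle to be the bookkeeping for the convergence claim, namely checking that $\mu_n\to\mu^\dagger$ in the weak cylinder measure topology, i.e.\ that for each continuous linear $F:\B\to\R^k$ the pushforwards $F_*\mu_n$ converge weakly to $F_*\mu^\dagger$. This reduces to convergence of finite-dimensional Gaussian laws and follows from $Q_n\to Q$ together with the continuity of the conditioning map $\xi\mapsto\xi-\E[\xi|\Phi\xi]$ on the relevant finite-dimensional marginals.
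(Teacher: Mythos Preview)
Your argument is correct. The paper's own proof consists of a single sentence deferring to \cite[Thm.~18.2]{OwhScobook2018}, so there is nothing in the paper itself to compare against; your write-up is in fact more explicit and self-contained, and the mechanism you isolate---$\Psi\circ\Phi$ is the $\|\cdot\|_\B$-orthogonal projection, so $\er(\mu,\Psi)\le 1$ for all $\mu$, while any centered measure supported on $\Ker\Phi$ forces $\er(\cdot,\psi)\ge 1$---is precisely the structure underlying that external reference.

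Two small clarifications. First, your restriction to ``$\psi$ with $\psi(0)=0$'' is unnecessary and slightly obscures the point: since $\mu^\dagger$ is centered and $\Phi u=0$ almost surely under $\mu^\dagger$, one has $\E_{\mu^\dagger}\|u-\psi(0)\|_\B^2=\E_{\mu^\dagger}\|u\|_\B^2+\|\psi(0)\|_\B^2$ for \emph{every} $\psi\in L(V,\B)$, so $\er(\mu^\dagger,\psi)\ge 1$ holds without restriction and the saddle inequality follows directly. Second, in the infinite-dimensional case your phrase ``along a spectral basis'' is loose (here $Q$ is a bijection $\B^*\to\B$, hence not compact), but this is harmless: any increasing sequence of finite-rank truncations of $Q$ will do, since---as you correctly observe---every centered finite-dimensional Gaussian supported on $\Ker\Phi$ already yields a saddle point with $\Psi$, and only the convergence $\mu_n\to\mu^\dagger$ in the weak cylinder topology requires a specific choice of truncation.
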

\begin{proof}
The proof is essentially that of \cite[Thm.~18.2]{OwhScobook2018}
\end{proof}

\subsection{Gaussian process regression setting}\label{subsec64d}

    \begin{figure}[h!]
        \begin{center}
                        \includegraphics[width=0.75\textwidth]{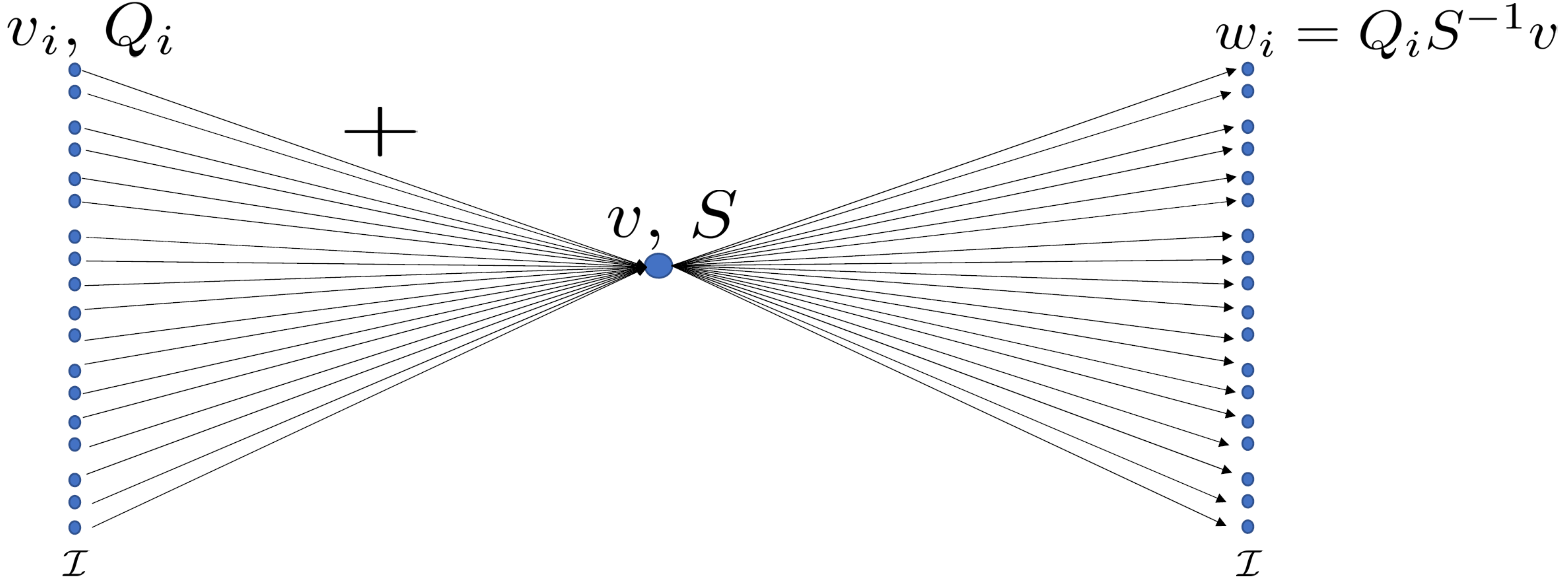}
                \caption{The minmax solution of the mode decomposition problem.}\label{figmultilevel1}
        \end{center}
\end{figure}

Let us demonstrate that Theorem \ref{thmkjjdhkejd} implies that the minmax optimal solution
to Problem \ref{pb1} with loss measured as the relative error in the norm \eqref{eqjehdjhdbdd} can be obtained via Gaussian process regression. To that end,
 let  $\xi_i \sim \cN(0,Q_i),\, i\in \I,$ be independent
$V_{i}$-valued Gaussian fields defined by the norms $\|\cdot\|_{V_i}$. Recall that $Q_i$ is defined in \eqref{eqkkejddh} and that $\xi_i$ is an isometry from $(V_i^*,\|\cdot\|_{V_i^*})$ onto a Gaussian space, mapping $\phi \in V_i^*$ to $[\phi,\xi_i]\sim \cN(0, [\phi,Q_i \phi])$.
Theorem  \ref{thmkjjdhkejd} asserts that the minmax
estimator is \eqref{eqjjghgjhgy}, which,  written componentwise,
 determines the  optimal reconstruction of each mode $v_j$ of $v=\sum_{i \in \I}v_i$ to be
\begin{equation}\label{eqkjgdjhed}
\E\big[\xi_j\big|\sum_{i\in \I}\xi_i=v\big]=Q_j (\sum_{i\in \I} Q_i)^{-1} v\,.
\end{equation}
where the right hand side of \eqref{eqkjgdjhed} is obtained from \eqref{eqjhhejdkejd}, and
$\sum_{i\in \I} Q_i$ is a shorthand notation for
$\sum_{i} e_{i}Q_{i}e_{i}^*$ obtained by dropping  the indications of the injections $e_i$ and their adjoint projections $ e_i^*$. From now on, we will use such simplified notations  whenever there is no risk of confusion.
In summary,  the minmax solution of the abstract mode decomposition problem, illustrated in Figure \ref{figmultilevel1},  is obtained  based on the specification of the operators $Q_i\,:\, V_i^* \rightarrow V_i$ and the injections $e_{i}:V_{i}\rightarrow V$, of  which the former can be interpreted as quadratic norm defining operators or as covariance operators.
Table \ref{table_slide4} illustrates the  three equivalent interpretations
 -optimal recovery/operator kernel/Gaussian process regression
of our methodology.

\begin{table*}[h]
\centering
\begin{tabular}{ |p{5.cm}|p{4.cm}|p{4.cm}|  }
\hline
  Norm  & Operator/Kernel  &  GP\\
 \hline
 \vspace{.01cm} $ \|v_{i}\|^{2}_{V_{i}}:=\langle Q_{i}^{-1}v_{i},v_{i} \rangle $
&\vspace{.01cm} $Q_{i}:V_{i}^{*}\rightarrow V_{i}$ &\vspace{.01cm}
 $\xi_{i} \sim \mathcal{N}(0,Q_{i})$  \\
 \vspace{.02cm} $\arg \min
 \begin{cases}
\text{minimize} \,\, \sum_{i}{\|w_{i}\|^{2}_{V_{i}}}\\
\sum_{i}{w_{i}}=v
\end{cases}$
 &\vspace{.24cm} $ Q_{i}\bigl(\sum_{j}{Q_{j}}\bigr)^{-1}v$ &\vspace{.24cm} $\E[\xi_{i}\mid \sum_{j}{\xi_{j}}=v]$
\vspace{.6cm} \\
 \hline
\end{tabular}
\caption{Three equivalent interpretations
 -optimal recovery/operator kernel/Gaussian process regression
of our methodology.
}
\label{table_slide4}
\end{table*}

  \begin{figure}[h!]
	\begin{center}
			\includegraphics[width=\textwidth]{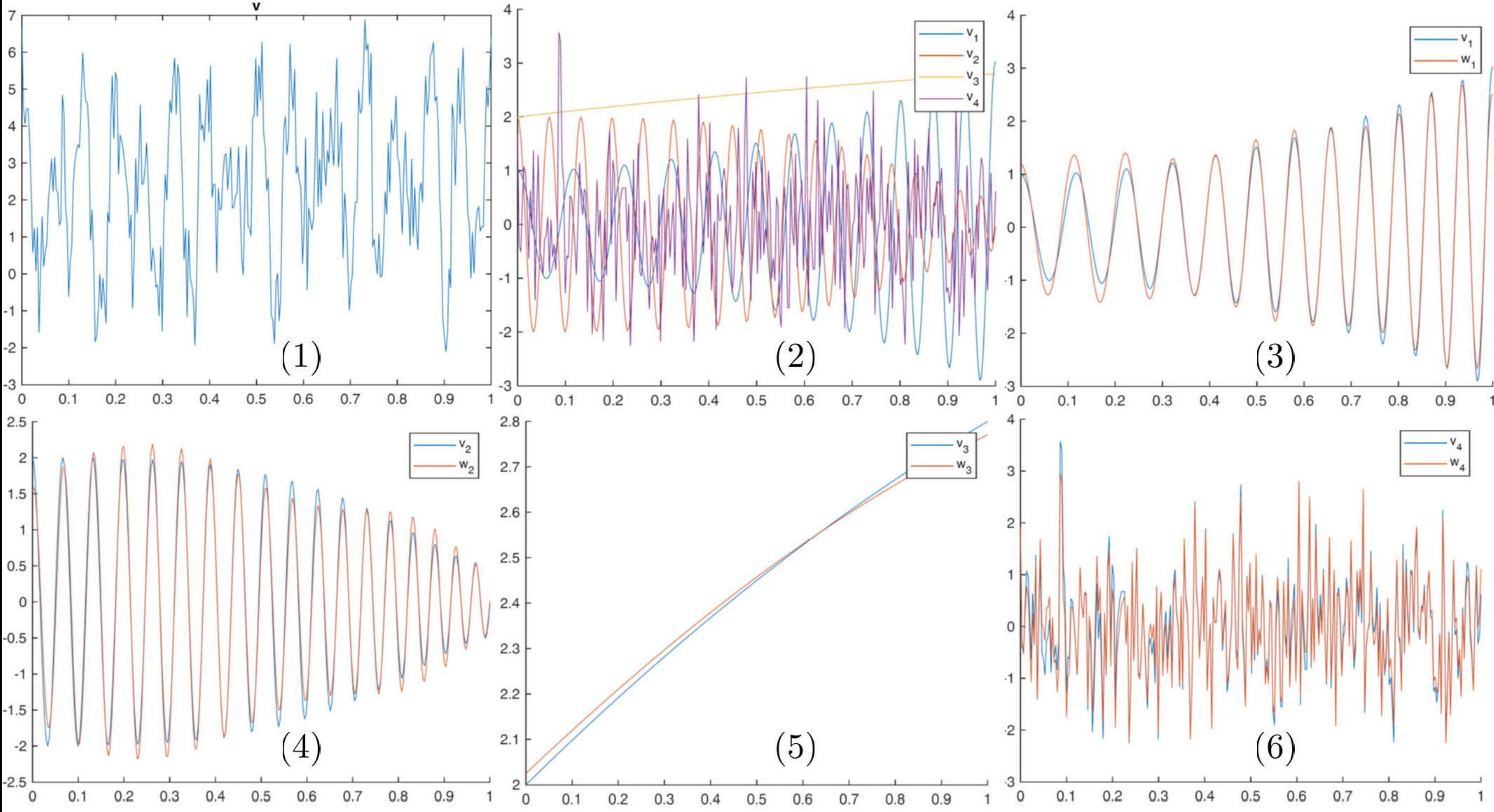}
		\caption{(1) The signal $v=v_1+v_2+v_3+v_4$ (2) The modes $v_1, v_2, v_3, v_4$ (3) $v_1$ and its approximation $w_1$ (4) $v_2$ and its approximation $w_2$ (5) $v_3$ and its approximation $w_3$ (6) $v_4$ and its approximation $w_4$.}\label{figknownfw1}
	\end{center}
\end{figure}
\begin{Example}\label{eglsdljdlekd}
Consider the problem of recovering the modes $v_1, v_2, v_3, v_4$ from the observation of the signal $v=v_1+v_2+v_3+v_4$ illustrated in
Figure \ref{figknownfw1}. In this example all modes are defined on the interval $[0,1]$, $v_1(t)=(1+2t^2)\cos(\theta_1(t))-0.5t\sin(\theta_1(t)) $, $v_2(t)=2(1-t^3)\cos(\theta_2(t))+(-t+0.5t^2)\sin(\theta_2(t)) $, $v_3(t)=2+t-0.2 t^2$,  and $v_4$ is white-noise (the instantiation of a centered GP with covariance function $\updelta(s-t)$).
 $\theta_1(t)=\int_0^t \omega_1(s)\,ds $ and $\theta_2(t)=\int_0^t \omega_2(s)\,ds $ are defined by the instantaneous frequencies
 $\omega_1(t)=16\pi (1+t)$ and $\omega_2(t)=30\pi (1+t^2/2)$.
In this recovery problem $\omega_1(t)$ and $\omega_2(t)$ are known,  $v_3$ and the amplitudes of the oscillations of $v_1$ and $v_2$ are unknown smooth functions of time, only the distribution of $v_4$ is known.
To define optimal recovery solutions one can either define the normed subspaces $(V_i,\|\cdot\|_{V_i})$ or (equivalently via \eqref{eqkkejddh})  the covariance functions/operators of the Gaussian processes $\xi_i$. In this example it is simpler to use the latter. To define the covariance function of the GP $\xi_1$ we assume that
 $\xi_1(t)=\zeta_{1,c}(t) \cos(\theta_1(t))+\zeta_{1,s}(t) \sin(\theta_1(t))$, where $\zeta_{1,c}$ and $\zeta_{1,s}$ are independent identically distributed centered Gaussian processes with covariance function
  $\E[\zeta_{1,c}(s)\zeta_{1,c}(t) ]=\E[\zeta_{1,s}(s)\zeta_{1,s}(t) ]=e^{-\frac{(s-t)^2}{\gamma^2}}$ (chosen with $\gamma=0.2$ as a prior regularity assumption). Under this choice $\xi_1$ is a centered GP with covariance function
$K_1(s,t)= e^{-\frac{(s-t)^2}{\gamma^2}}\big(\cos(\theta_1(s))\cos(\theta_1(t))+
\sin(\theta_1(s))\sin(\theta_1(t))\big)$.
Note that the cosine and sine summation formulas imply that
translating $\theta_1$ by an arbitrary phase $b$ leaves $K_{1}$ invariant  (knowing $\theta_1$ up to a phase shift is sufficient to construct that kernel).
Similarly we select the covariance function of the independent centered GP $\xi_2$ to be
$K_2(s,t)= e^{-\frac{(s-t)^2}{\gamma^2}}\big(\cos(\theta_2(s))\cos(\theta_2(t))+\sin(\theta_2(s))
\sin(\theta_2(t))\big)$.
To enforce the regularity of $\xi_3$ we select its covariance function to be
$K_3(s,t)= 1+st+e^{-\frac{(s-t)^2}{4}}$. Finally since $v_4$ is white noise we represent it with a centered GP with covariance function
$K_4(s,t)= \updelta(s-t)$. Figure \ref{figknownfw1} shows the recovered modes using \eqref{eqkjgdjhed} (or equivalently defined as \eqref{eqjhhejdkejd} and the minimizer of  \eqref{eqmindsobfirstdeb}). In this numerical implementation the interval $[0,1]$ is discretized with $302$ points (with uniform time steps between points), $\xi_4$ is a discretized  centered Gaussian vector of dimension $302$ and of identity covariance matrix and $\xi_1,\xi_2,\xi_3$ are discretized as centered Gaussian vectors with covariance matrices corresponding to
 the kernel matrices  $\bigl(K(t_{i},t_{j})\bigr)_{i,j=1}^{302}$ corresponding to
  $K_1, K_2$ and  $K_3$ determined by the sample points $t_{i}, i=1,\ldots, 302$.
\end{Example}

Table  \ref{table_slide6} provides a summary of the approach of  Example \ref{eglsdljdlekd}, illustrating
the connection between the assumed mode structure and corresponding Gaussian process structure and its corresponding reproducing kernel structure.
\begin{table*}[h]
\centering
\begin{tabular}{ |p{3.8cm}|p{3.9cm}|p{6.4cm}|  }
\hline
 \hspace{2cm}Mode  &\hspace{2cm} GP  & \hspace{2cm} Kernel\\
 \hline
\vspace{-1.5em}
\[v_{1}(t)=a_{1}(t)\cos\bigl(\theta_{1}(t)\bigr)\]
\[ \theta_{1}\, \text{known}\]
\[a_{1}\, \text{unknown smooth} \vspace{-1em}\]
\vspace{-1em}
&\vspace{-1.5em}\[\xi_{1}(t)=\zeta_{1}(t)\cos\bigl(\theta_{1}(t)\bigr)\]
\[\E[\zeta_{1}(s)\zeta_{1}(t)]=e^{-\frac{|s-t|^{2}}{\gamma^{2}}}\]
&
 \[K_{1}(s,t)=e^{-\frac{|s-t|^{2}}{\gamma^{2}}}\cos\bigl(\theta_{1}(s)\bigr)\cos\bigl(\theta_{1}(t)\bigr)\]
\\[-1ex]
\hline
\vspace{-1.5em}
\[v_{2}(t)=a_{2}(t)\cos\bigl(\theta_{2}(t)\bigr)\]
\[ \theta_{2}\, \text{known}\]
\[a_{2}\, \text{unknown smooth} \vspace{-1.5em}\]
& \vspace{-1.5em} \[
\xi_{2}(t)=\zeta_{2}(t)\cos\bigl(\theta_{2}(t)\bigr)\]
\[\E[\zeta_{2}(s)\zeta_{2}(t)]=e^{-\frac{|s-t|^{2}}{\gamma^{2}}}\]
&
 \[K_{2}(s,t)=e^{-\frac{|s-t|^{2}}{\gamma^{2}}}\cos\bigl(\theta_{2}(s)\bigr)\cos\bigl(\theta_{2}(t)\bigr)\]
 \\ [-1ex]
\hline
\hspace{.8cm}\vspace{-1em}
\[v_{3}\, \text{unknown smooth} \]  &\hspace{.8cm} \vspace{-1.3em}\[\E[\xi_{3}(s)\xi_{3}(t)]=e^{-\frac{|s-t|^{2}}{\gamma^{2}}}\]
& \vspace{-.2em} \[K_{3}(s,t)=e^{-\frac{|s-t|^{2}}{\gamma^{2}}}\] \\[-2ex]
\hline \vspace{-1em}
\hspace{.6cm} \[v_{4}\, \text{unknown white noise}\]  &   \vspace{-1em} \[\hspace{-.1cm} \E[\xi_{4}(s)\xi_{4}(t)]=\sigma^{2}\updelta(s-t)\]
&  \[K_{4}(s,t)= \sigma^{2}\updelta(s-t)\] \vspace{.6cm}\\[-5ex]
\hline
\vspace{.1cm}
\hspace{.2cm}$v=v_{1}+v_{2}+v_{3}+v_{4}$
&\vspace{.1cm}\hspace{.2cm}$\xi=\xi_{1}+\xi_{2}+\xi_{3}+\xi_{4}$
& \vspace{.1cm}$\hspace{.8cm} K=K_{1}+K_{2}+K_{3}+K_{4}$
\vspace{.3cm}\\
 \hline
\end{tabular}
\caption{A summary of the approach of  Example \ref{eglsdljdlekd}, illustrating
the connection between the assumed mode structure and corresponding Gaussian process structure and its corresponding reproducing kernel structure. Note that, for clarity of presentation,
 this summary does not exactly match that of  Example \ref{eglsdljdlekd}.     }
\label{table_slide6}
\end{table*}

\paragraph{On additive models.}
The recovery approach of Example \ref{eglsdljdlekd} is based on the design of an appropriate additive regression model.
Additive regression models are not new. They were introduced in \cite{stone1985additive}
 for approximating  multivariate functions with sums of univariate functions. Generalized additive models (GAMs) \cite{hastie1990generalized} replace a linear regression  model $\sum_i \alpha_i X_i$ with an additive regression model  $\sum_i f_i(X_i)$ where the $f_i$ are unspecified (smooth) functions estimated from the data. Since their inception GAMs have become increasingly popular because they are both easy to interpret and easy to fit \cite{plate1999accuracy}.
 This popularity has motivated the introduction of additive Gaussian processes  \cite{duvenaud2011additive, durrande2012additive}
 defined as Gaussian processes whose high dimensional  covariance kernels are obtained from
  sums of low dimensional ones. Such kernels are expected
   to overcome the curse of dimensionality by exploiting additive non-local effects when such effects are present \cite{duvenaud2011additive}. See Section \ref{sec_AGP}.
 Of course, performing regression or mode decomposition with Gaussian processes (GPs) obtained as sums of independent GPs (i.e.~performing kriging with kernels obtained as sums of simpler kernels) is much older since  Tikhonov regularization (for signal/noise separation)  has a natural interpretation as a conditional expectation $\E[\xi_{s}|\xi_s+\xi_\sigma]$ where $\xi_s$ is a GP with a smooth prior (for the signal) and $\xi_\sigma$ is a white noise GP independent from $\xi_s$.
 More recent applications include classification \cite{Maji2013EfficientCF},  source separation \cite{park2008gaussian, liutkus2011gaussian}, and the
  detection of  the periodic part of a function from partial point evaluations
 \cite{durrande2016gaussian, adam2016scalable}. For that latter application, the approach of \cite{durrande2016gaussian} is to (1) consider the RKHS $H_{K}$ defined by a Mat\'{e}rn kernel $K$ (2)
   interpolate the data with the kernel $K$ and (3)  recover the periodic part by projecting the interpolator
  (using a projection that is orthogonal with respect to  the RKHS  scalar product  onto $H_p:=\Span\{\cos(2\pi k t/\lambda), \sin(2\pi k t/\lambda)\mid 1\leq k \leq q \}$ (the parameters of the Mat\'{e}rn kernel and the period $\lambda$ are obtained via maximum likelihood estimation). Defining $K_p$ and $K_{np}$ as the kernels induced on $H_p$ and its orthogonal complement in $H_{K}$, we have $K=K_{p}+K_{np}$ and the recovery (after MLE estimation of the parameters) can also be identified as the conditional expectation of the GP induced by $K_p$ conditioned on the GP induced by $K_{p}+K_{np}$.

\section{Kernel mode decomposition networks (KMDNets)}\label{sechgjjhjbvf}
\begin{figure}[h]
        \begin{center}
                        \includegraphics[width=\textwidth]{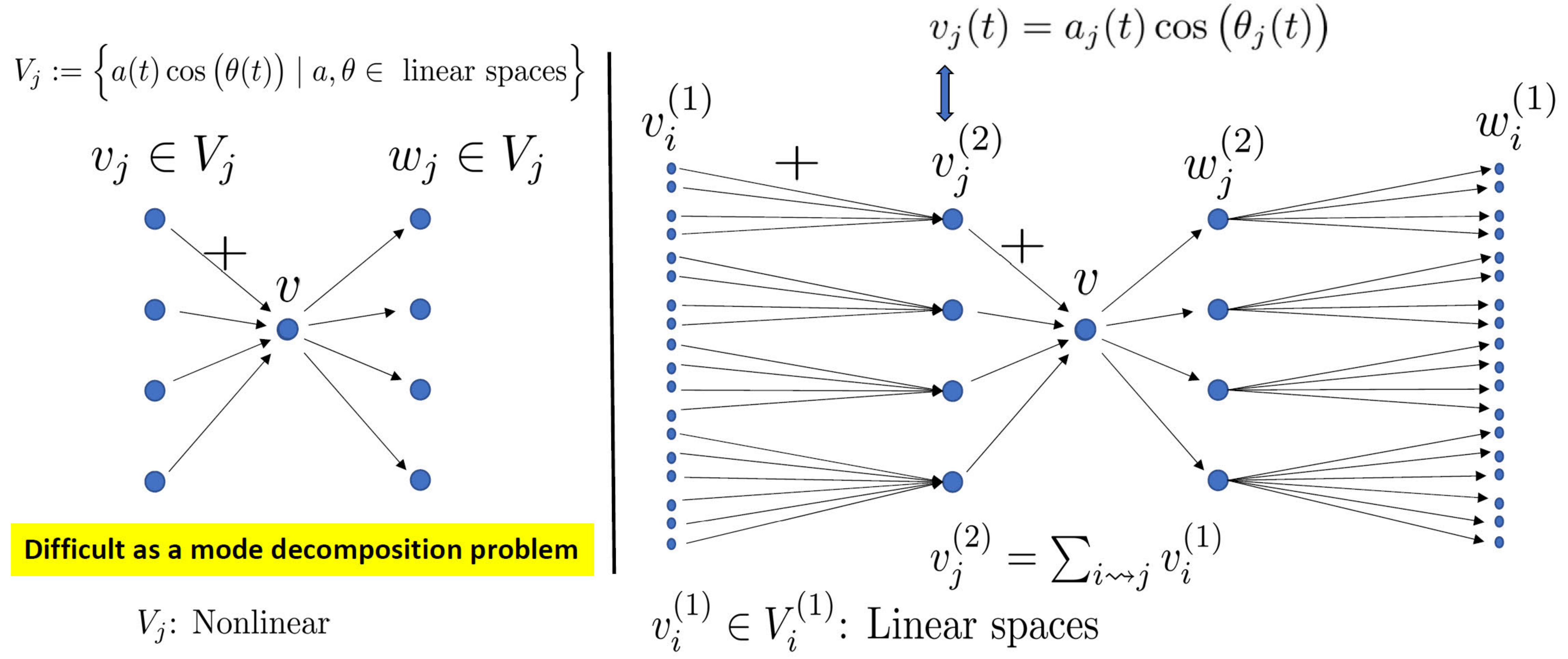}
                \caption{Left: Problem \ref{pb2} is hard as a mode decomposition problem  because
                the modes $v_j=a_j(t)\cos(\theta_j(t))$ live in non-linear functional spaces. Right: One fundamental idea is to recover those modes as aggregates of finer modes  $v_i$ living in linear spaces.
}\label{figlinearize}
        \end{center}
\end{figure}
   The recovery approach described in Example \ref{eglsdljdlekd} is based on the prior knowledge of (1) the number of quasi-periodic modes (2) their phase functions $\theta_{i}$  and (3) their base periodic waveform (which need not be a cosine function). In most applications (1) and (2) are not available and the base waveform may not be trigonometric and may not be known.
   Even when the base waveforms are known and trigonometric (as in Problem \ref{pb2}),
 when the modes' phase functions are unknown, the recovery of the modes is still significantly harder than when they are known because, as illustrated in Figure \ref{figlinearize}, the functional spaces defined by the modes $a_j(t) \cos\big(\theta_j(t)\big)$ (under regularity assumptions on the $a_j$ and $\theta_j$) are no longer linear spaces and the simple calculus of Section \ref{secthpb} requires the spaces $V_j$ to be linear.

To address the full Problem \ref{pb2},
   one fundamental idea is to recover those modes $v_j$ as aggregates of finer modes  $v_i$ living in linear spaces $V_i$ (see Figure \ref{figlinearize}). In particular, we will identify $i$ with time-frequency-phase triples $(\tau,\omega,\theta)$ and the spaces $V_i$ with one dimensional spaces spanned  by functions that are maximally localized in the time-frequency-phase domain
   (i.e.~by Gabor wavelets as suggested by the approximation \eqref{eqjkhgkjehdgd}) and recover the modes $a_j(t) \cos\big(\theta_j(t)\big)$ by aggregating the finer recovered modes.
    \begin{figure}[h]
        \begin{center}
                        \includegraphics[width=\textwidth]{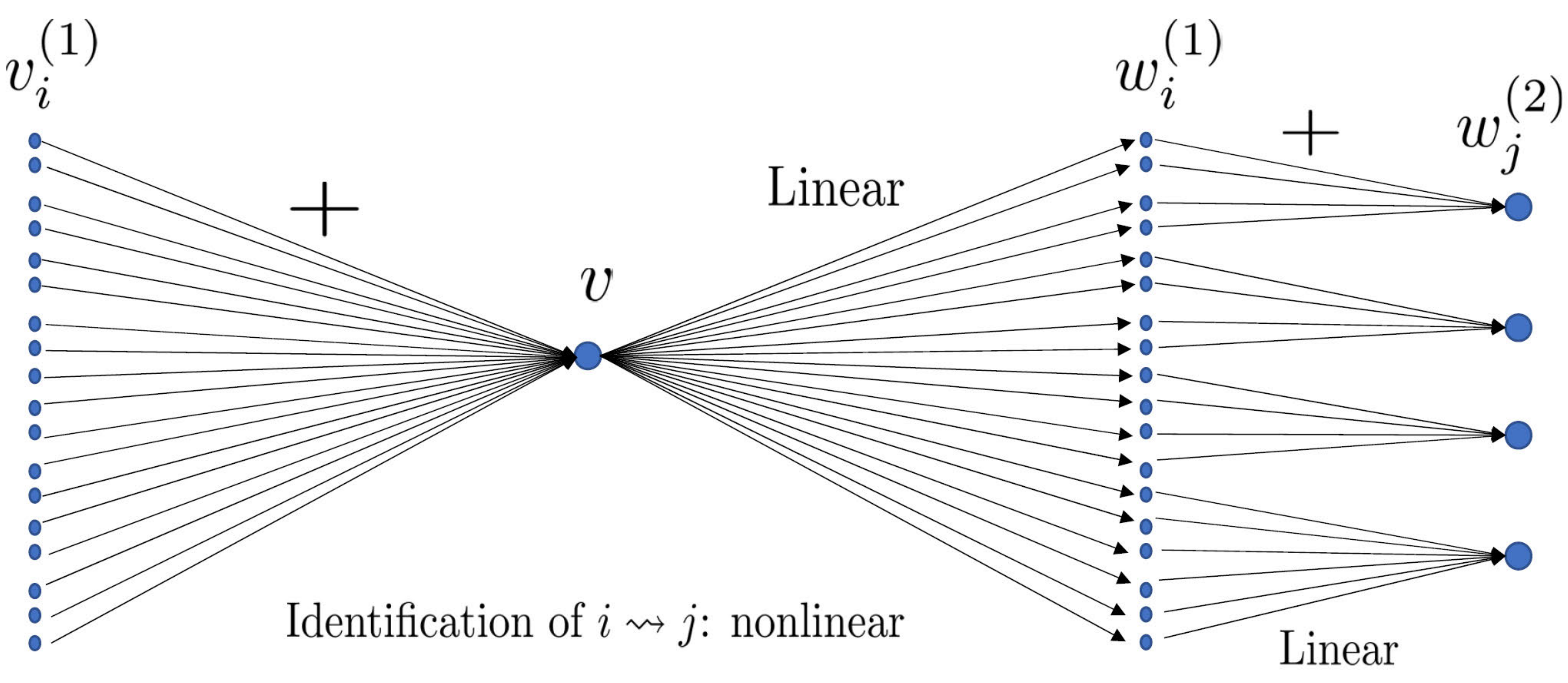}
                \caption{Mode decomposition/recomposition problem.  Note that the nonlinearity of this model is fully represented in the identification
of the relation $i\leadsto j$;  once this identification is determined all other operations are linear. 
}\label{figmdecrec}
        \end{center}
\end{figure}
The implementation of this idea will therefore transform the nonlinear mode decomposition problem illustrated on the left hand side of Figure
\ref{figlinearize} into the mode decomposition/recomposition problem illustrated in Figure \ref{figmdecrec} and transfer its nonlinearity to the identification of ancestor/descendant relationships $i\leadsto j$.

To identify these ancestor/descendant relations  we will compute the
energy $E(i):=\|w_i\|_{V_i}^2$  for each recovered mode $w_i$, which as illustrated in Figure \ref{figenergynonlinear} and discussed in Section \ref{secenvar}, can also be identified as
the alignment $\<w_i,v\>_{S^{-1}}$ between recovered mode $w_i$ and the signal $v$ or as the alignment
$\E[\Var[\<\xi_i,v\>_{S^{-1}}]$ between the model $\xi_i$ and the data $v$. Furthermore $E$ satisfy an energy preservation identity
$\sum_{i} E(i)=\|v\|_{S^{-1}}^2$ which leads to its variance decomposition interpretation.
Although  alignment calculations are linear, the calculations of the resulting child-ancestor relations may involve a nonlinearity (such as thresholding, graph-cut, computation of a maximizer) and the resulting network can be seen
 as a sequence of sandwiched linear operations and simple non-linear steps having striking similarities with artificial neural networks.

  \begin{figure}[h!]
	\begin{center}
			\includegraphics[width=\textwidth]{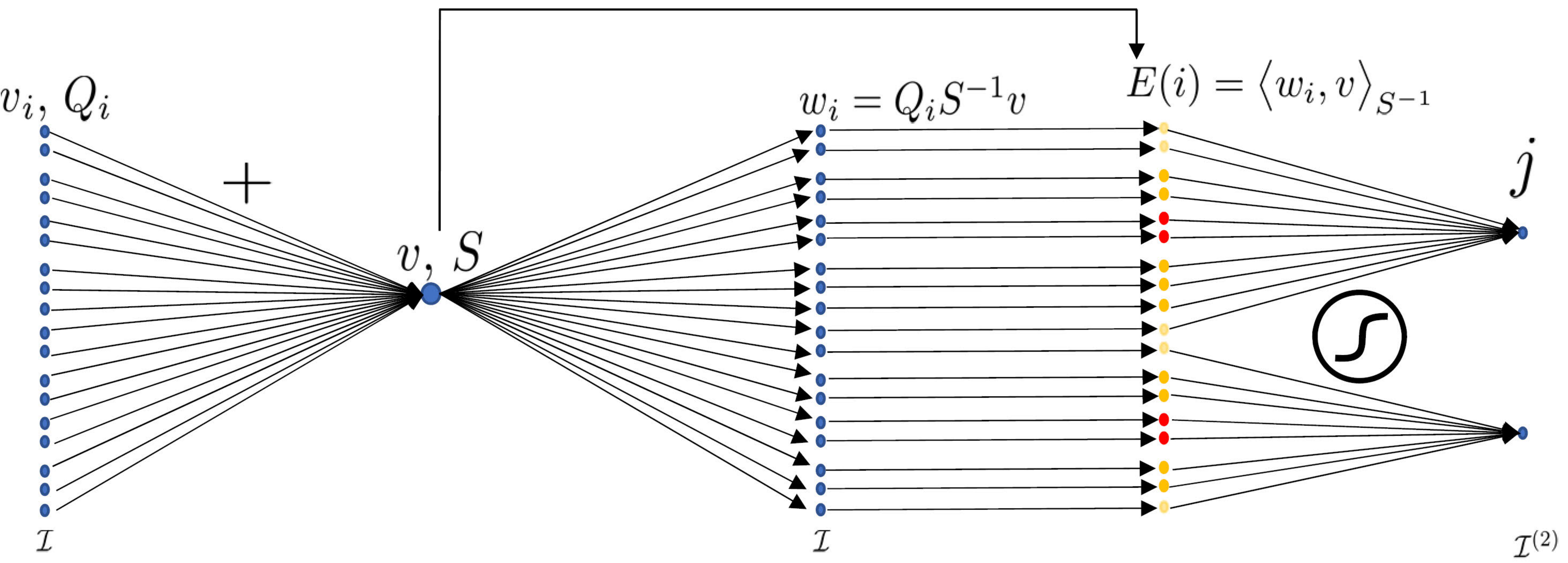}
		\caption{Derivation of ancestor/descendant relations from energy calculations.}\label{figenergynonlinear}
	\end{center}
\end{figure}

Of course this strategy can be repeated across levels of abstractions and its complete deployment
will also require the generalization of the setting of Section \ref{secthpb} (illustrated in Figure \ref{figmultilevel1}) to a hierarchical setting (illustrated in Figure \ref{figmultilevel2} and described in Section \ref{secjhgjhg7ggt}).

\subsection{Model/data alignment and energy/variance decomposition}\label{secenvar}
Using the setting and notations of Section \ref{secthpb} and
fixing the observed data $v\in V$,  let $E\,:\,\I\rightarrow \R_+$ be the function defined by
\begin{equation}
\label{ieieieihhgggg}
E(i):=\|\Psi_i(v)\|_{V_i}^2,\quad i \in \I,
\end{equation}
where $\Psi_i$ are the components of the optimal recovery map $\Psi$ evaluated in 
Theorem \ref{thmkjhgjhgyuy}. 
We will refer to $E(i)$ as the energy of the mode $i$ in reference to its numerical analysis interpretation (motivated by the "energy" representation of  $E(i)=[Q_i^{-1}\Psi_i(v),\Psi_i(v)]$ determined by \eqref{eqkkejddh}, and the interpretation of $Q_i^{-1}$ as an elliptic operator) and our general approach will be based on using its local and/or global maximizers to decompose/recompose kernels.

Writing $E_{\rm{tot}}:=\|v\|_{S^{-1}}^2$, note that \eqref{eqjehdejhdbdhs} implies that
\begin{equation}\label{eqjehdejhdbdjkjhss}
E_{\rm{tot}}=\sum_{i\in \I }E(i)\, .
\end{equation}
Let $\<\cdot,\cdot\>_{S^{-1}}$ be the scalar product on $V$ defined by the norm $\|\cdot\|_{S^{-1}}$.

\begin{Proposition}\label{propeqejdkejdde}
Let $\xi \sim \cN(0,Q)$ and $\phi:=S^{-1} v$.
It holds true that for $i\in \I$,
\begin{equation}\label{eqkejdlnedkjds}
E(i)=\<\Psi_i(v),v\>_{S^{-1}}=\Var\big([\phi,\xi_i]\big)=\Var\big(\<\xi_i,v\>_{S^{-1}}\big)\,.
\end{equation}
\end{Proposition}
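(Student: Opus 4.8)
The plan is to establish the three claimed identities by exploiting the isometry properties from Theorem \ref{thmkjhgjhgyuy} together with the definition of the Gaussian field $\xi \sim \cN(0,Q)$. I would prove the chain of equalities in \eqref{eqkejdlnedkjds} by splitting it into three pieces: first the numerical-analysis identity $E(i)=\langle \Psi_i(v),v\rangle_{S^{-1}}$, then the variance identity $\Var([\phi,\xi_i])=E(i)$, and finally the reconciliation $\Var([\phi,\xi_i])=\Var(\langle \xi_i,v\rangle_{S^{-1}})$ which is essentially a matter of unwinding the definition $\phi := S^{-1}v$.

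First I would prove $E(i)=\langle \Psi_i(v),v\rangle_{S^{-1}}$. Recall from \eqref{eqjhhejdkejd} that $\Psi_i(v)=Q_i e_i^* S^{-1} v$, and by \eqref{eqkkejddh} we may write the energy as $E(i)=\|\Psi_i(v)\|_{V_i}^2 = [Q_i^{-1}\Psi_i(v), \Psi_i(v)]$. Substituting $\Psi_i(v)=Q_i e_i^* \phi$ with $\phi=S^{-1}v$, the factor $Q_i^{-1}$ cancels $Q_i$, leaving $E(i)=[e_i^*\phi, Q_i e_i^*\phi]=[e_i^*\phi, \Psi_i(v)]$. Using the adjoint relation $[e_i^*\phi, v_i]=[\phi, e_i v_i]$ and then summing structure, this equals $[\phi, e_i \Psi_i(v)]$; recognizing that $\langle e_i\Psi_i(v), v\rangle_{S^{-1}}=[S^{-1}v, e_i\Psi_i(v)]=[\phi, e_i\Psi_i(v)]$ by definition of the $S^{-1}$-inner product gives the claim (with the injection $e_i$ suppressed as in the paper's convention).

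Next I would compute $\Var([\phi,\xi_i])$. Since $\xi_i\sim\cN(0,Q_i)$ is centered, $[\phi,\xi_i]$ is a centered Gaussian random variable, so its variance is $\E\big[[\phi,\xi_i]^2\big]=[e_i^*\phi, Q_i e_i^*\phi]$ by the defining isometry property $[\psi,\xi_i]\sim\cN(0,[\psi,Q_i\psi])$ applied to $\psi=e_i^*\phi\in V_i^*$. But this is exactly the expression $[e_i^*\phi,\Psi_i(v)]$ obtained above, hence equals $E(i)$. Finally, $\langle \xi_i, v\rangle_{S^{-1}}=[S^{-1}v,\xi_i]=[\phi,\xi_i]$ directly from $\phi=S^{-1}v$ and the definition of the $S^{-1}$-scalar product, so the last two variances coincide trivially.

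The main obstacle I anticipate is purely bookkeeping rather than conceptual: one must be careful with the suppressed injections $e_i$ and their adjoints $e_i^*$, since the field $\xi_i$ is genuinely $V_i$-valued while the inner product $\langle\cdot,\cdot\rangle_{S^{-1}}$ lives on $V$. Writing $\langle\xi_i,v\rangle_{S^{-1}}$ implicitly means $\langle e_i\xi_i, v\rangle_{S^{-1}}=[S^{-1}v, e_i\xi_i]=[e_i^*S^{-1}v,\xi_i]=[e_i^*\phi,\xi_i]$, and keeping these identifications consistent throughout is the only place where an error could creep in. Once the injections are tracked carefully, every step reduces to the defining formula \eqref{eqkkejddh} for the norm, the adjoint relation for $e_i$, and the Gaussian isometry $[\psi,\xi_i]\sim\cN(0,[\psi,Q_i\psi])$.
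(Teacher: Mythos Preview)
Your proposal is correct and follows essentially the same approach as the paper's proof: both compute $E(i)=[e_i^*\phi,Q_ie_i^*\phi]$ directly from the definitions $E(i)=[Q_i^{-1}\Psi_i(v),\Psi_i(v)]$ and $\Psi_i(v)=Q_ie_i^*S^{-1}v$, identify this same quantity as $\Var([e_i^*\phi,\xi_i])$ via the Gaussian isometry, and then observe $\langle e_i\xi_i,v\rangle_{S^{-1}}=[\phi,e_i\xi_i]=[e_i^*\phi,\xi_i]$. Your explicit isolation of the intermediate equality $E(i)=\langle\Psi_i(v),v\rangle_{S^{-1}}$ and your careful remarks on tracking the injections $e_i$ are slightly more thorough than the paper's write-up, but the underlying argument is identical.
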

Observe that $E(i)=\Var\big(\<\xi_i,v\>_{S^{-1}}\big)$ implies that $E(i)$
 is a measure of the alignment between the  Gaussian process (GP) model  $\xi_i$ and the data $v$ in $V$
 and \eqref{eqjehdejhdbdjkjhss} corresponds to the variance decomposition
 \begin{equation}\label{eqkjguyguygy}
 \Var\big( \<\sum_{i\in \I }\xi_i,v\>_{S^{-1}}\big) =\sum_{i\in \I }\Var\big(\<\xi_i,v\>_{S^{-1}}\big)\, .
 \end{equation}
 Therefore, the stronger this alignment $E(i)$ is, the better the model $\xi_i$ is at explaining/representing the data. Consequently, we refer to the energy  $E(i)$ as the {\em alignment energy}.
 Observe also that the identity $E(i)=\<w_i,v\>_{S^{-1}}$ with $w_i=\Psi_i(v)$ implies that $E(i)$ is also a measure of the alignment between the optimal approximation $w_i$ of $v_i$ and the signal $v$.
Table \ref{table_slide5} illustrates the relations between the conservation of alignment energies
  and the variance decomposition derived from Theorem \ref{thmkjhgjhgyuy} and Proposition \ref{propeqejdkejdde}.
\begin{table*}[h]
\centering
\begin{tabular}{ |p{1,3cm}|p{4.4cm}|p{3.3cm}|p{3cm}|  }
\hline
 & Norm  & Operator/Kernel  &  GP\\
 \hline
 $ E(i)$ & $\|\Psi_{i}(v)\|^{2}_{V_{i}} =\<\Psi_i(v),v\>_{S^{-1}}$
&$[S^{-1}v,Q_{i}S^{-1}v]$ &
 $\Var\bigl(\langle \xi_{i}, v\rangle_{S^{-1}}\bigr)$
\\
\hline
$ \sum_{i}{E(i)}$ & $ \|v\|^{2}_{S^{-1}} $
 & $  [S^{-1}v,v]$ & $
\Var\bigl(\langle\sum_{i}{ \xi_{i}}, v\rangle_{S^{-1}}\bigr)$
 \\
 \hline
\end{tabular}
\caption{Identities for $E(i)$ and $\sum_i E(i)$}
\label{table_slide5}
\end{table*}

\subsection{Programming modules and feedforward network}\label{seckejhdkjdh}
\begin{figure}[h]
        \begin{center}
                        \includegraphics[width=\textwidth]{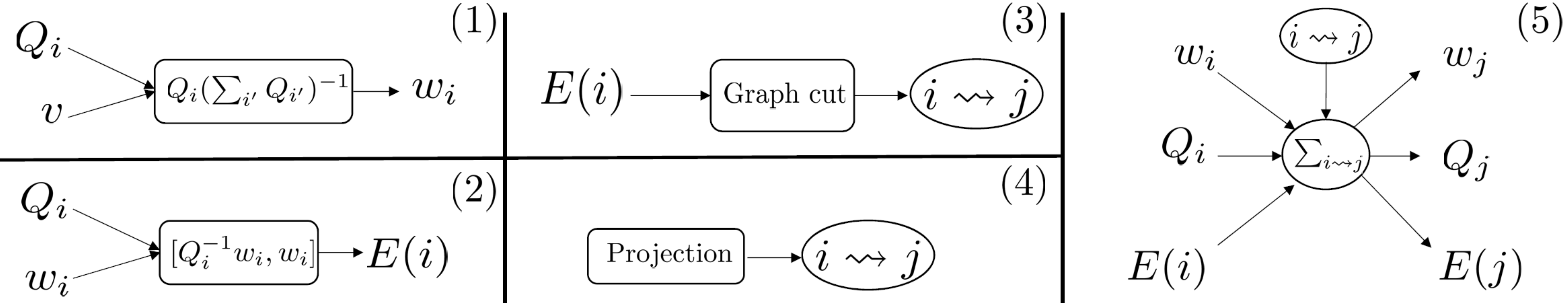}
                \caption{Elementary programming modules for Kernel Mode Decomposition.
}\label{figblock1}
        \end{center}
\end{figure}

We will now  combine the alignment energies of Section \ref{secenvar} with the mode decomposition approach of Section
\ref{secthpb} to design  elementary programming modules (illustrated in Figure  \ref{figblock1}) for kernel mode decomposition  networks (KMDNets). These  will be introduced in this section and developed in the following ones.
Per Section \ref{secthpb} and Theorem \ref{thmkjhgjhgyuy},  the optimal recoveries of the modes $(v_i)_{i\in \I}$ given the covariance operators $(Q_i)_{i\in \I}$ and
the observation of
$\sum_{i\in \I}v_i$ are the elements  $Q_i (\sum_{i'} Q_{i'})^{-1} v$ in $V_i$. This operation is illustrated in module (1) of Figure  \ref{figblock1}.
An important quantity derived from this recovery is the energy function  $E:\I\rightarrow  \R_+$,
 defined in \eqref{ieieieihhgggg} by $E(i):=[Q_i^{-1}w_i,w_i]$ with $w_{i}:=\Psi_{i}(v)$, and illustrated in module (2). Since, per \eqref{eqjehdejhdbdjkjhss},
$E_{tot}=
 \sum_{i\in \I} E(i)$, where $ E_{tot}:=\|v\|^{2}_{S^{-1}}$ is the total energy
\eqref{ieieieihhgggg},
the function $E$ can be interpreted as performing a decomposition of the total energy  over the set of labels $\I$.
When $\I$ can be identified with the set of vertices of a graph, the values of the $E(i)$ can be used to cut that graph into subgraphs indexed by labels $j\in \J$
and define a relation $i\leadsto j$ mapping $i\in \I$ to its subgraph $j$. This graph-cut operation is illustrated in module (3).
Since, per Section \ref{secenvar}, $E(i)$ is also the mean squared alignment between the model $\xi_i$ and the data $v$, and \eqref{eqkjguyguygy} is a variance decomposition, this clustering operation combines variance/model alignment information (as done with PCA) with the geometric information (as done with mixture models \cite{mclachlan2019finite}) provided by the graph to assign a class $j\in \J$ to each element $i\in \I$. However,
the  relation $i\leadsto j$ may also be obtained through a projection step, possibly ignoring the values of $E(i)$,  as illustrated in module (4) (e.g.~when $i$ is an $r$-tuple $(i_1,i_2,\ldots,i_r)$
 then the truncation/projection map $(i_1,\ldots,i_r)\leadsto (i_1,\ldots,i_{r-1})$
  naturally defines a relation $\leadsto$). As illustrated in module (5), combining the relation
 $\leadsto$  with a sum $\sum_{i\leadsto j}$ produces aggregated covariance operators $Q_j:=\sum_{i\leadsto j}Q_i$, modes $w_j:=\sum_{i\leadsto j} w_i$ and energies $E(j):=\sum_{i\leadsto j} E(i)$ such that for $V_j:=\sum_{i\leadsto j}V_i$, the modes $(w_i)_{i\leadsto j}$ are
 (which can be proven directly or as an elementary application of Theorem \ref{thmakdjhgjhgyuy} in the next section) to be  optimal recovery modes in $\prod_{i\leadsto j} V_i$ given the covariance operators $(Q_i)_{i\leadsto j}$ and the observation of
 $w_j=\sum_{i\leadsto j}w_i$ in $ V_j$. Furthermore, we have $E(j)=[Q_{j}^{-1}w_j,w_j]$. Naturally, combining these elementary modules leads to more complex secondary modules (illustrated in  Figure \ref{figblock2}) whose nesting  produces a
 network  aggregating the fine modes $w_i$ into increasingly coarse modes with the last node corresponding to $v$.

\begin{figure}[h]
        \begin{center}
                        \includegraphics[width=0.75\textwidth]{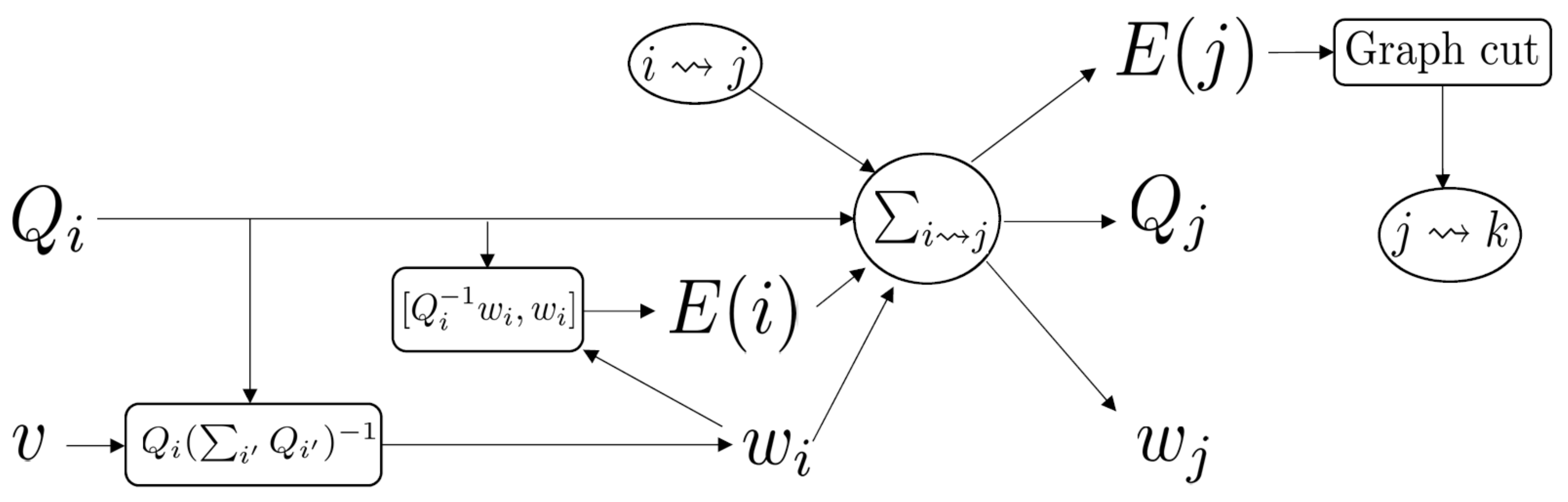}
                \caption{Programming modules derived from the elementary modules of Figure \ref{figblock1}.
}\label{figblock2}
        \end{center}
\end{figure}

\subsection{Hierarchical mode decomposition}\label{secjhgjhg7ggt}
\begin{figure}[h!]
        \begin{center}
                        \includegraphics[width=\textwidth]{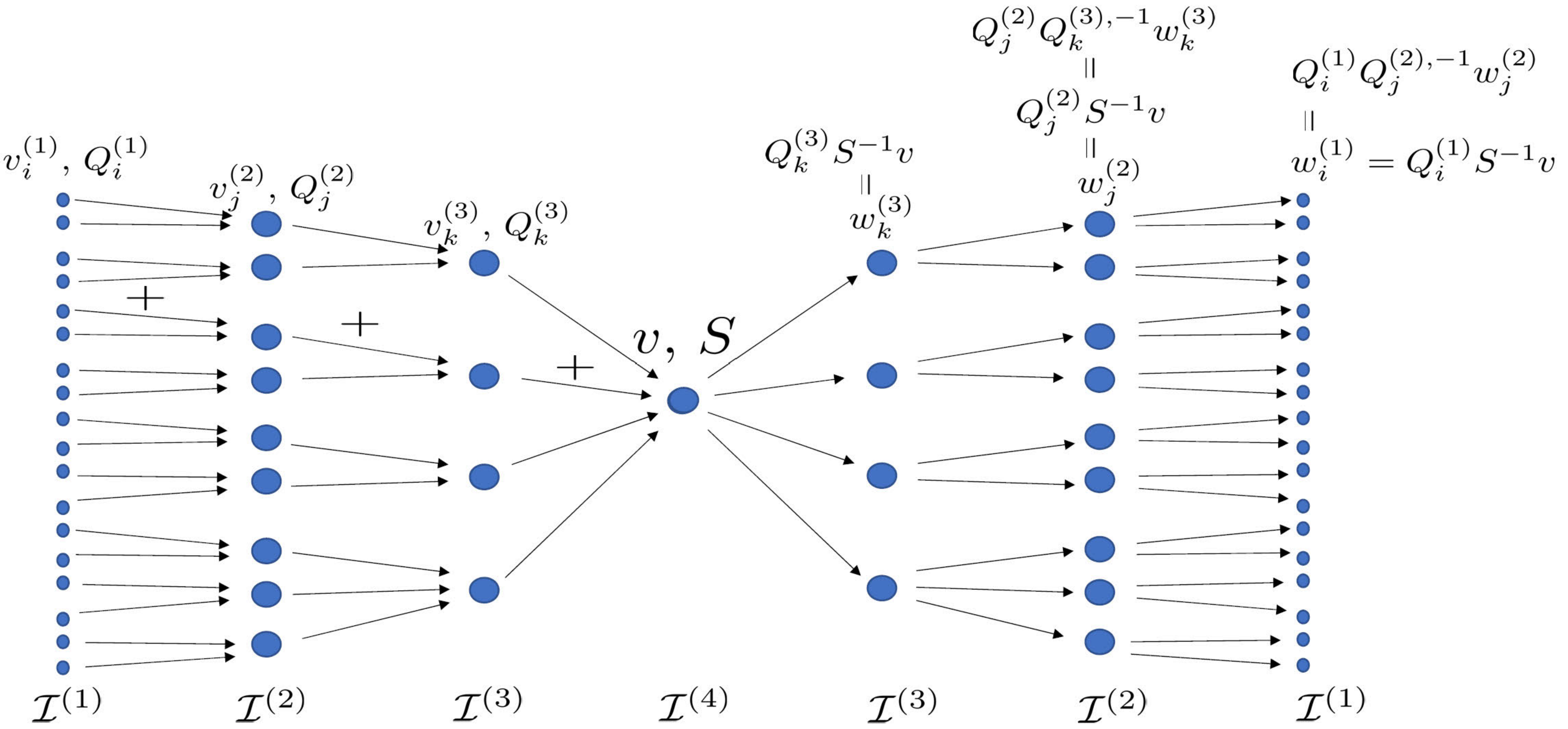}
                \caption{The  generalization of abstract mode decomposition problem of Figure \ref{figmultilevel1}
 to a hierarchy  as described in Section \ref{secjhgjhg7ggt}.}\label{figmultilevel2}
        \end{center}
\end{figure}
We now describe how 
 a hierarchy of mode decomposition/recomposition steps discussed in Section \ref{seckejhdkjdh} naturally produces a hierarchy of labels, covariance operators, subspaces and recoveries (illustrated in Figure \ref{figmultilevel2}) along with important geometries and inter-relationships.  This description will lead to the
 meta-algorithm  Algorithm \ref{alggrseg},  presented in  Section \ref{secljlkjkhuh}, aimed at the production of a KMDNet such as the one illustrated in Figure \ref{figmultilevel2}.
  Section \ref{sectfdecom} will present a practical application to Problem \ref{pb2}.

 Our first step   is to
  generalize the recovery approach of Section \ref{secthpb}
  to the case where
$V$ is the sum of a hierarchy of  linear nested subspaces labeled by a hierarchy of indices, as  defined below.

 \begin{Definition}
\label{defindextree}
 For $q\in \mathbb{N}^*$, let $\I^{(1)},\ldots,\I^{(q)}$ be  finite sets of indices such that $\I^{(q)}=\{1\}$ has only one element. Let $\cup_{l=1}^q \I^{(l)}$ be endowed with a relation $\leadsto$ that is
  (1)  transitive, i.e., $i\leadsto j$ and $j \leadsto k$ implies $i\leadsto k$ (2) directed, i.e.,  $i\in \I^{(s)}$ and $j\in \I^{(r)}$ with $s\geq r$ implies $i\not\leadsto j$ (that is, $i$ does not lead to $j$) and
  (3)  locally surjective, i.e.,  any element
   $j\in \I^{(r)}$ with  $r>1$ has at least one  $i\in \I^{(r-1)}$  such that
    $i\leadsto j$.
 For
 $ 1 \leq k< r \leq q$ and an element   $i \in \I^{(r)}$,
 write $i^{(k)}:=\{j\in \I^{(k)}\mid j\leadsto i\}$ for the level $k$ ancestors of $i$.
\end{Definition}

Let $V^{(k)}_i, \, i\in \I^{(k)}, k\in \{1,\ldots,q\}$, be a hierarchy
 of nested linear subspaces of a separable Hilbert space $V$  such that
 \[V^{(q)}_1=V\]
 and, for each level in the hierarchy $ k\in\{1,\ldots,q-1\}$,
  \begin{equation}\label{eqjkhgkhhj}
 V^{(k+1)}_i=\sum_{j\in i^{(k)}} V^{(k)}_j, \qquad i\in \I^{(k+1)}\, .
 \end{equation}
 Let $\B^{(q)}=V$ and for $k\in \{1,\ldots,q-1\}$, let $\B^{(k)}$ be the product space
  \begin{equation}
\label{eiemjiuriiur0}
\B^{(k)}:=\prod_{i\in \I^{(k)}} V_i^{(k)}\,.
 \end{equation}
For $k < r$ and $j \in \I^{(r)}$, let
\begin{equation}
\label{eiemjiuriiur}
\B_j^{(k)}:=\prod_{i\in j^{(k)}} V_i^{(k)}\,
\end{equation}
and let
\[\Phi^{(r,k)}_{j}:\B_j^{(k)} \rightarrow
 V^{(r)}_j\]
be defined by
\begin{equation}
\label{Phi_j}
\Phi^{(r,k)}_{j}(u):=\sum_{i\in j^{(k)} } u_i, \qquad  u\in \B^{(k)}_j\, .
\end{equation}
Putting these components together as  $\Phi^{(r,k)}=(\Phi^{(r,k)}_{j})_{j\in \I^{(r)}}$,  we  obtain
the multi-linear map
\[\Phi^{(r,k)}:\B^{(k)}\rightarrow  \B^{(r)},\quad 1 \leq k < r \leq q, \]
 defined by
 \begin{equation}
\label{def_Phi}
 \Phi^{(r,k)}(u):=\big(\sum_{i\in j^{(k)}} u_i \big)_{j\in \I^{(r)}}, \qquad   u=(u_i)_{i\in \I^{(k)}}\in \B^{(k)}\,.
 \end{equation}

To put hierarchical metric structure on these spaces,  for $k\in \{1,\ldots,q\}$ and $i\in \I^{(k)}$,
 let
\[Q_i^{(k)}\,:\,V^{(k),*}_i\rightarrow V^{(k)}_i\]
be positive symmetric linear bijections determining
the quadratic norms
\begin{equation}
\label{def_norms}
\|v\|_{V_i^{(k)}}^2=[Q_i^{(k),-1}v,v], \qquad v\in V_i^{(k)},
\end{equation}
 on the $V_i^{(k)}$. Then
for $k\in \{1,\ldots,q\}$,
 let  $\B^{(k)}$ be endowed with the quadratic norm  defined by
 \begin{equation}
\|u\|_{\B^{(k)}}^2=\sum_{i\in \I^{(k)}} \|u_i\|_{ V_i^{(k)}}^2, \qquad  u\in \B^{(k)}\,,
\end{equation}
and, for
 $ k < r \leq q$ and $j\in \I^{(r)}$, let $\B_j^{(k)}:=\prod_{i\in j^{(k)}} V_i^{(k)}$ be
 endowed with the quadratic norm  defined by
\[\|u\|_{\B^{(k)}_{j}}^2=\sum_{i\in j^{(k)}} \|u_i\|_{ V_i^{(k)}}^2, \quad u\in \B^{(k)}_{j}.\]

For $1\leq k < r \leq q$,  the nesting relations
\eqref{eqjkhgkhhj}  imply that
 \[V_i^{(k)} \subset V_j^{(r)},\quad i\in j^{(k)},\,\, j\in \I^{(r)},\]
so that the subset injection
 \begin{equation}
\label{erk}
e_{j,i}^{(r,k)}\,:\,V_i^{(k)}\rightarrow V_j^{(r)}
\end{equation}  is well defined for all
$i\in j^{(k)},\,\, j\in \I^{(r)}$, and since all spaces are complete, they have well-defined adjoints,
 which we  write  \begin{equation}
\label{ekr}
e_{i,j}^{(k,r)}\,:\,V_j^{(r),*} \rightarrow V_i^{(k),*} .
\end{equation}
For $1\leq k < r \leq q$,  $i \in \I^{(k)}$  and $j \in \I^{(r)}$, let
\[\Psi^{(k,r)}_{i,j}:V_{j}^{(r)} \rightarrow V_{i}^{(k)} \]
be defined by
\begin{equation}
\label{Psi_ij}
 \Psi^{(k,r)}_{i,j}(v_{j})= Q_i^{(k)} e_{i,j}^{(k,r)} Q^{(r),-1}_j v_j,\qquad  v_j\in V^{(r)}_j\, ,
\end{equation}
so that, when putting the components together as
\begin{equation}
\label{Psi_j}
\Psi^{(k,r)}_{j}:=(\Psi^{(k,r)}_{i,j})_{i\in j^{(k)}},
\end{equation}
\eqref{eiemjiuriiur} determines the multi-linear  map
\[ \Psi^{(k,r)}_{j}:V^{(r)}_j\rightarrow  \B^{(k)}_j\, .\]

Further collecting components simultaneously over the range and domain as
\[\Psi^{(k,r)}=(\Psi^{(k,r)}_{j})_{j\in \I^{(r)}}\]
we obtain from \eqref{eiemjiuriiur0} the multi-linear map
\[ \Psi^{(k,r)}:\B^{(r)}\rightarrow \prod_{j\in \I^{(r)}}\B^{(k)}_j\]
 defined by
\begin{equation}
\label{def_Psi}
\Psi^{(k,r)}(v)= \big(Q_i^{(k)} e_{i,j}^{(k,r)} Q^{(r),-1}_j v_j\big)_{i\in j^{(k)}}, \qquad   v=(v_j)_{j\in \I^{(r)}}\in \B^{(r)}\,.
\end{equation}

The following condition assumes that the relation $\leadsto$ determines a mapping
$\leadsto:\I^{(k)}\rightarrow \I^{(k+1)}$ for all $k=1,\ldots, q-1$.
\begin{Condition}\label{condpart}
For $k\in \{1,\ldots,q-1\}$, every $i\in \I^{(k)}$ has a unique descendant in $ \I^{(k+1)}$. That is,
 there exists a $j \in \I^{(k+1)}$ with
$i\leadsto j$ and there is no other $j'\in \I^{(k+1)}$ such that $i\leadsto j'$.
\end{Condition}

 Condition \ref{condpart} simplifies the previous results as follows: the subsets
 $(\{i\in j^{(k)}\})_{j\in \I^{(k+1)}}$ form a partition of $\I^{(k)}$,
so that, for $k<r$,  we obtain the simultaneous product structure
\begin{eqnarray}
\label{prod}
\B^{(k)}&=&\prod_{j\in \I^{(r)}}\B^{(k)}_j\nonumber\\
\B^{(r)}&=&\prod_{j\in \I^{(r)}}V^{(r)}_{i}
\end{eqnarray}
so that both
\[\Phi^{(k,r)}:\B^{(k)} \rightarrow \B^{(r)}\]
and
\[\Psi^{(k,r)}:\B^{(r)} \rightarrow \B^{(k)}\]
are diagonal
   multi-linear maps with components
\[\Phi^{(r,k)}_{j}:\B_j^{(k)} \rightarrow
 V^{(r)}_j\]
and \[\Psi^{(k,r)}_{j}:V^{(r)}_j\rightarrow \B_j^{(k)}
 \]
respectively.
Moreover, both maps are {\em linear}
 under the isomorphism between products and external direct sums of vector spaces. For $r>k$, we have the  following
 connections between $\B^{(k)}, \B^{(r)}, V_i^{(k)}$ and $V_j^{(r)}$.
\begin{equation}\label{eqcaseredop}
\text{\xymatrixcolsep{10pc}\xymatrix{
\B^{(k)} \ar@<1ex>[d]^{\Phi^{(r,k)}}     & {V^{(k)}_i}\ar[l]^{\prod_{i\in \I^{(k)}}}\ar[d]^{\sum_{i\in j^{(k)}}}\\
\B^{(r)} \ar@<1ex>[u]^{\Psi^{(k,r)}}          &V_j^{(r)}\ar[l]_{\prod_{j\in \I^{(r)}}}}}
\end{equation}

The following theorem is a consequence of Theorem \ref{thmkjhgjhgyuy}.
 \begin{Theorem}\label{thmakdjhgjhgyuy}
 Assume that Condition \ref{condpart} holds and that
  the $Q^{(k)}_i:V_{i}^{(k),*} \rightarrow :V_{i}^{(k)}$ satisfy the nesting relations
\begin{equation}\label{equyuyg76g6}
Q^{(k+1)}_j=\sum_{i\in j^{(k)}}  e_{j,i}^{(k+1,k)} Q_i^{(k)} e_{i,j}^{(k,k+1)},\quad
j\in \I^{(k+1)},
\end{equation}
for $k\in\{1,\ldots,q-1\}$. Then for
$1\leq k < r \leq q$,
\begin{itemize}
  \item $\Psi^{(k,r)}\circ \Phi^{(r,k)}(u)$ is the minmax recovery of $u\in \B^{(k)}$ given the observation of $\Phi^{(r,k)}(u)\in \B^{(r)}$ using the relative error in $\|\cdot\|_{\B^{(k)}}$ norm as a loss.
 \item  $\Phi^{(r,k)}\circ \Psi^{(k,r)}$ is the identity map on $\B^{(r)}$
\item $\Psi^{(k,r)}:
 (\B^{(r)},\|\cdot\|_{\B^{(r)}})\rightarrow (\B^{(k)},\|\cdot\|_{\B^{(k)}})$ is an isometry.
\item $\Phi^{(k,r), *}:
 (\B^{(r), *},\|\cdot\|_{\B^{(r), *}})\rightarrow (\B^{(k),*},\|\cdot\|_{\B^{(k),*}})$ is an isometry.
\end{itemize}
Moreover we have the following semigroup properties for $1 \leq k < r < s\leq q$:
\begin{itemize}
  \item $\Phi^{(s,k)}=\Phi^{(s,r)}\circ \Phi^{(r,k)}$
\item $\Psi^{(k,s)}=\Psi^{(k,r)}\circ \Psi^{(r,s)}$
\item   $\Psi^{(r,s)}=\Phi^{(r,k)}\circ \Psi^{(k,s)}$
\end{itemize}
\end{Theorem}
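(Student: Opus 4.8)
The plan is to exploit Condition \ref{condpart} to reduce the hierarchical problem to a parallel family of single-level mode decomposition problems, each governed by Theorem \ref{thmkjhgjhgyuy}, and then to recover the global statements by assembling over blocks. Under Condition \ref{condpart} the ancestor sets $\{i\in j^{(k)}\}_{j\in \I^{(r)}}$ partition $\I^{(k)}$, so that (via \eqref{prod}) both $\Phi^{(r,k)}$ and $\Psi^{(k,r)}$ are block-diagonal over $j\in \I^{(r)}$, with components $\Phi^{(r,k)}_j:\B^{(k)}_j\to V^{(r)}_j$ and $\Psi^{(k,r)}_j:V^{(r)}_j\to \B^{(k)}_j$. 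The crucial preliminary step is to propagate the one-step nesting \eqref{equyuyg76g6} to all scales: by induction on $r-k$, using transitivity of $\leadsto$ together with the composition of injections $e_{j,i}^{(r,k)}=e_{j,i'}^{(r,k+1)}e_{i',i}^{(k+1,k)}$ (where $i'\in \I^{(k+1)}$ is the unique descendant of $i$ furnished by Condition \ref{condpart}, which by transitivity lies in $j^{(k+1)}$), I would establish $Q^{(r)}_j=\sum_{i\in j^{(k)}} e_{j,i}^{(r,k)}Q_i^{(k)}e_{i,j}^{(k,r)}$ for every $1\le k<r\le q$ and $j\in \I^{(r)}$.

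With this in hand, each block $\Phi^{(r,k)}_j:\B^{(k)}_j\to V^{(r)}_j$ is exactly an instance of the single-level information map of Section \ref{secthpb}, with index set $j^{(k)}$, subspaces $(V_i^{(k)})_{i\in j^{(k)}}$, covariance operators $(Q_i^{(k)})_{i\in j^{(k)}}$, and associated operator $S=\sum_{i\in j^{(k)}} e_{j,i}^{(r,k)}Q_i^{(k)}e_{i,j}^{(k,r)}=Q^{(r)}_j$ (invertible by Lemma \ref{lem_S}). Applying Theorem \ref{thmkjhgjhgyuy} to this block identifies the optimal recovery with $\Psi^{(k,r)}_j$ as defined in \eqref{Psi_ij}, and yields $\Phi^{(r,k)}_j\circ\Psi^{(k,r)}_j=\mathrm{id}$ on $V^{(r)}_j$ together with the isometry of $\Psi^{(k,r)}_j$ and of the block adjoint of $\Phi^{(r,k)}_j$ in the relevant norms. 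Assembling over $j$ gives $\Phi^{(r,k)}\circ\Psi^{(k,r)}=\mathrm{id}$ on $\B^{(r)}$; and since $\|v\|_{\B^{(r)}}^2=\sum_j\|v_j\|_{V_j^{(r)}}^2$ (the sum of the block norms induced by $(Q^{(r)}_j)^{-1}$) and $\|u\|_{\B^{(k)}}^2=\sum_j\|u\|_{\B^{(k)}_j}^2$ decompose additively over blocks, the block isometries combine into the global isometry statements. For the minmax claim I would invoke Lemma \ref{lem_basic} to replace the relative-error minmax \eqref{eqjhegdjedg} by the minimal $\|\cdot\|_{\B^{(k)}}$-norm solution \eqref{eqmindsobfirstdeb} of $\Phi^{(r,k)}w=v$; because both this objective and the constraint decouple across the blocks $j$, the minimizer is obtained block-wise and coincides with $\Psi^{(k,r)}\Phi^{(r,k)}(u)$, establishing optimality.

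For the semigroup identities I would argue by direct computation rather than abstract uniqueness. The relation $\Phi^{(s,k)}=\Phi^{(s,r)}\circ\Phi^{(r,k)}$ is immediate from transitivity of $\leadsto$ and the partition structure (summing level-$k$ ancestors in two stages). For $\Psi^{(k,s)}=\Psi^{(k,r)}\circ\Psi^{(r,s)}$, evaluating the explicit formula \eqref{def_Psi} on $v\in\B^{(s)}$ and using the cancellation $(Q^{(r)}_{i'})^{-1}Q^{(r)}_{i'}=\mathrm{id}$ together with $e_{i,i'}^{(k,r)}e_{i',l}^{(r,s)}=e_{i,l}^{(k,s)}$ collapses the composition to $\Psi^{(k,s)}$. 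Finally, $\Psi^{(r,s)}=\Phi^{(r,k)}\circ\Psi^{(k,s)}$ follows by summing the components of $\Psi^{(k,s)}(v)$ over $i\in j^{(k)}$ through the injections $e_{j,i}^{(r,k)}$, rewriting $e_{i,l}^{(k,s)}=e_{i,j}^{(k,r)}e_{j,l}^{(r,s)}$, and recognizing the propagated nesting $\sum_{i\in j^{(k)}}e_{j,i}^{(r,k)}Q_i^{(k)}e_{i,j}^{(k,r)}=Q^{(r)}_j$, which reassembles $\Psi^{(r,s)}_j$.

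The main obstacle I anticipate is the propagated nesting lemma: carrying the inductive step requires the bookkeeping of composed injections and, most importantly, the observation that Condition \ref{condpart} forces each $i\in j^{(k)}$ to route through a \emph{unique} intermediate ancestor $i'\in j^{(k+1)}$, so that the double sum $\sum_{i'\in j^{(k+1)}}\sum_{i\in i'^{(k)}}$ correctly reorganizes into $\sum_{i\in j^{(k)}}$. A secondary delicate point is that the minmax loss \eqref{eqjhegdjedg} is a \emph{ratio} and does not visibly decouple across blocks; the decoupling only becomes transparent after passing, via Lemma \ref{lem_basic}, to the equivalent minimal-norm formulation \eqref{eqmindsobfirstdeb}.
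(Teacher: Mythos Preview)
Your proposal is correct and follows essentially the same approach as the paper: reduce via Condition \ref{condpart} to block-diagonal components, propagate the nesting relation \eqref{equyuyg76g6} by induction to obtain $Q^{(r)}_j=\sum_{i\in j^{(k)}} e_{j,i}^{(r,k)}Q_i^{(k)}e_{i,j}^{(k,r)}$, apply Theorem \ref{thmkjhgjhgyuy} blockwise (invoking Lemma \ref{lem_basic} to decouple the minmax problem across blocks), and verify the semigroup identities by direct computation with the explicit formulas. The only minor difference is in the last identity $\Psi^{(r,s)}=\Phi^{(r,k)}\circ\Psi^{(k,s)}$: you unpack it directly via the propagated nesting, whereas the paper obtains it in one line by composing the already-established identities $\Psi^{(k,s)}=\Psi^{(k,r)}\circ\Psi^{(r,s)}$ and $\Phi^{(r,k)}\circ\Psi^{(k,r)}=\mathrm{id}$, which is slightly cleaner.
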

\begin{Remark}
The proof of Theorem  \ref{thmakdjhgjhgyuy} also demonstrates that, under its assumptions,  for $1 \leq k < r \leq q$ and $j\in \I^{(r)}$, $\Psi^{(k,r)}_j\circ \Phi^{(r,k)}_j(u)$ is the minmax recovery of $u\in \B^{(k)}_j$ given the observation of $\Phi^{(r,k)}_j(u)\in V^{(r)}_j$ using the relative error in $\|\cdot\|_{\B^{(k)}_j}$ norm as a loss. Furthermore, $\Phi^{(r,k)}_j\circ \Psi^{(k,r)}_j$ is the identity map on $V_j^{(r)}$ and
 $\Psi^{(k,r)}_j:(V^{(r)}_j,\|\cdot\|_{V^{(r)}_j})\rightarrow (\B^{(k)}_j,\|\cdot\|_{\B^{(k)}_j})$ and
$\Phi^{(k,r),*}_j:(V^{(r),*}_j,\|\cdot\|_{V^{(r),*}_j})\rightarrow (\B^{(k),*}_j,\|\cdot\|_{\B^{(k),*}_j})$
 are isometries.
\end{Remark}

\paragraph{Gaussian process regression interpretation}

As in the setting of Section \ref{secjhgjhg7ggt}, for $k\in \{1,\ldots,q\}$,  let
 \[ Q^{(k)}:\B^{(k),*}\rightarrow \B^{(k)}\]
be the
 block-diagonal operator
\[Q^{(k)}:=\diag{(Q_i^{(k)})}_{i\in \I^{(k)}}  \]
 defined by its action
$
Q^{(k)} \phi:=(Q_i^{(k)}\phi_i)_{i\in \I^{(k)}}, \,  \phi\in \B^{(k),*}\,,$
and, as discussed in  Section \ref{sec_game}, write
\[\xi^{(k)}\sim \cN(0,Q^{(k)})\]
 for the centered Gaussian field on $\B^{(k)}$ with covariance operator $Q^{(k)}$.
\begin{Theorem}
\label{thm_uurirn}
Under the assumptions of Theorem \ref{thmakdjhgjhgyuy},
for $1 < k  \leq q$,  the distribution of
 $\xi^{(k)}$ is that of $\Phi^{(k,1)}(\xi^{(1)})$. Furthermore $\xi^{(1)}$ conditioned on $\Phi^{(k,1)}(\xi^{(1)})$ is a time reverse  martingale\footnote{If $\mathcal{F}_{n}$ is a decreasing sequence of sub-$\sigma$ fields of a $\sigma$-field  $\mathcal{F}$ and $Y$ is a $\mathcal{F}$ measurable random variable, then $(X_{n},\mathcal{F}_{n})$, where $E_{n}:=\E[Y|\mathcal{F}_{n}]$ is a reverse martingale, in that 
$\E[X_{n}|\mathcal{F}_{n+1}]=X_{n+1}$} in $k$  and, for $1\leq k < r \leq q$, we have
 \begin{equation}
 \Psi^{(k,r)}(v)=\E\big[\xi^{(k)}\mid \Phi^{(r,k)}(\xi^{(k)})=v\big],\qquad  v\in \B^{(r)}\,.
 \end{equation}
\end{Theorem}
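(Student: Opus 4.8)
The plan is to reduce everything to a single-step analysis (from level $k$ to level $k+1$) and then propagate using the semigroup properties of Theorem \ref{thmakdjhgjhgyuy} together with the tower property of conditional expectation. First I would establish the distributional claim by showing that the pushforward of $\xi^{(k)}$ under $\Phi^{(k+1,k)}$ is $\cN(0,Q^{(k+1)})$. Since a centered Gaussian field is determined by its covariance operator, and the pushforward of $\cN(0,Q^{(k)})$ under the linear map $\Phi^{(k+1,k)}$ has covariance $\Phi^{(k+1,k)} Q^{(k)} (\Phi^{(k+1,k)})^{*}$, the key computation is to identify this operator with $Q^{(k+1)}$. Writing $\Phi^{(k+1,k)}$ componentwise, the $j$-th component of $\Phi^{(k+1,k)}(\xi^{(k)})$ is $\sum_{i\in j^{(k)}} e_{j,i}^{(k+1,k)}\xi_i^{(k)}$; because the $\xi_i^{(k)}$ are independent ($Q^{(k)}$ being block-diagonal) and, under Condition \ref{condpart}, the blocks $(j^{(k)})_{j\in \I^{(k+1)}}$ partition $\I^{(k)}$, distinct components involve disjoint families of independent fields and are hence independent. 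The covariance of the $j$-th component is $\sum_{i\in j^{(k)}} e_{j,i}^{(k+1,k)} Q_i^{(k)} e_{i,j}^{(k,k+1)}$, which by the nesting hypothesis \eqref{equyuyg76g6} equals $Q_j^{(k+1)}$. Thus the pushforward is block-diagonal with blocks $Q_j^{(k+1)}$, i.e.\ $\cN(0,Q^{(k+1)})$; iterating via $\Phi^{(k,1)}=\Phi^{(k,k-1)}\circ\cdots\circ\Phi^{(2,1)}$ shows $\Phi^{(k,1)}(\xi^{(1)})$ is distributed as $\xi^{(k)}$.

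Next I would prove the single-step identity $\Psi^{(k,k+1)}(v)=\E[\xi^{(k)}\mid \Phi^{(k+1,k)}(\xi^{(k)})=v]$ by invoking Theorems \ref{thmkjhgjhgyuy} and \ref{thmkjjdhkejd} at level $k$. Under Condition \ref{condpart} these maps are diagonal, so it suffices to work componentwise with $\Phi^{(k+1,k)}_j:\B^{(k)}_j\to V^{(k+1)}_j$. The crucial observation is that the operator $S$ of Section \ref{secthpb} attached to this block is exactly $S_j=\sum_{i\in j^{(k)}} e_{j,i}^{(k+1,k)} Q_i^{(k)} e_{i,j}^{(k,k+1)}=Q_j^{(k+1)}$ by \eqref{equyuyg76g6}; substituting $S_j^{-1}=(Q_j^{(k+1)})^{-1}$ into the optimal recovery formula \eqref{eqjhhejdkejd} reproduces the definition \eqref{Psi_ij} of $\Psi^{(k,k+1)}_{i,j}$, and Theorem \ref{thmkjjdhkejd} identifies this with the conditional expectation. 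The general case $r>k+1$ then follows by induction on $r-k$ using the tower property: since $\Phi^{(r,k)}=\Phi^{(r,k+1)}\circ\Phi^{(k+1,k)}$, conditioning on $\Phi^{(r,k)}(\xi^{(k)})$ factors through the finer $\sigma$-field generated by $\eta^{(k+1)}:=\Phi^{(k+1,k)}(\xi^{(k)})$, giving
\[
\E[\xi^{(k)}\mid \Phi^{(r,k)}(\xi^{(k)})=v]=\E\big[\Psi^{(k,k+1)}(\eta^{(k+1)})\mid \Phi^{(r,k+1)}(\eta^{(k+1)})=v\big].
\]
By linearity of $\Psi^{(k,k+1)}$ (again from Condition \ref{condpart}) and the distributional identity $\eta^{(k+1)}\sim\xi^{(k+1)}$ just established, the right-hand side equals $\Psi^{(k,k+1)}\big(\E[\xi^{(k+1)}\mid \Phi^{(r,k+1)}(\xi^{(k+1)})=v]\big)=\Psi^{(k,k+1)}\big(\Psi^{(k+1,r)}(v)\big)$, which is $\Psi^{(k,r)}(v)$ by the semigroup property.

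Finally, I would observe that the reverse-martingale claim is then essentially automatic. Setting $\mathcal{F}_k:=\sigma\big(\Phi^{(k,1)}(\xi^{(1)})\big)$, the factorization $\Phi^{(k+1,1)}=\Phi^{(k+1,k)}\circ\Phi^{(k,1)}$ shows the filtration is decreasing, $\mathcal{F}_{k+1}\subset\mathcal{F}_k$; with $X_k:=\E[\xi^{(1)}\mid\mathcal{F}_k]$ the tower property immediately gives $\E[X_k\mid\mathcal{F}_{k+1}]=\E[\xi^{(1)}\mid\mathcal{F}_{k+1}]=X_{k+1}$, which is precisely the reverse-martingale identity of the footnote.

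I expect the main obstacle to be the infinite-dimensional case, where $\Phi^{(k,1)}(\xi^{(1)})$ and the conditioning must be interpreted in the cylinder-measure sense as in Theorem \ref{thmkjjdhkejd}, so that one has to verify that the pushforward and tower manipulations descend correctly to the weak cylinder measure topology rather than being literal finite-dimensional conditional expectations. The finite-dimensional linear-algebra core, by contrast, is routine once \eqref{equyuyg76g6} is available, since it is exactly the nesting relation that forces the aggregation operator $S$ at each level to coincide with the coarser covariance $Q^{(k+1)}$.
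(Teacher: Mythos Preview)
Your proposal is correct and covers all three claims. The approach, however, differs from the paper's in a genuine way.

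For the distributional claim, you proceed inductively from level $k$ to $k+1$, explicitly computing the block-diagonal covariance via the partition structure of Condition \ref{condpart} and the nesting relation \eqref{equyuyg76g6}. The paper instead uses the isometry of $\Phi^{(k,1),*}$ established in Theorem \ref{thmakdjhgjhgyuy} to obtain $\Phi^{(k,1)}Q^{(1)}\Phi^{(k,1),*}=Q^{(k)}$ in one shot. The two routes are equivalent in content, but the paper's is shorter once the isometry is available.

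For the conditional-expectation identity, the difference is more substantive. You reduce to the single-step case by invoking Theorems \ref{thmkjhgjhgyuy} and \ref{thmkjjdhkejd} as black boxes, and then propagate via the tower property and the semigroup identity $\Psi^{(k,r)}=\Psi^{(k,k+1)}\circ\Psi^{(k+1,r)}$. The paper instead argues directly for arbitrary $k<r$: it writes the conditional expectation as $A^{*}\Phi^{(r,k)}(\xi^{(k)})$ for some linear $A$, uses the $L^{2}$ orthogonal-projection characterization of conditional expectation to deduce $A^{*}\Phi^{(r,k)}=P_{\Img(\Phi^{(r,k),T})}$, and then identifies $A^{*}$ with the Moore--Penrose inverse $(\Phi^{(r,k)})^{+}=\Psi^{(k,r)}$. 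Your argument is more modular and reuses the game-theoretic identification already proved; the paper's argument is more self-contained and avoids re-invoking Theorem \ref{thmkjjdhkejd}, at the cost of redoing the projection calculation.

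For the reverse martingale, your argument (decreasing filtration plus tower) is exactly what the footnote calls for and is cleaner than the paper's, which verifies the same identity by a longer chain of semigroup manipulations on $\hat{\xi}^{(k)}:=\Psi^{(1,k)}\circ\Phi^{(k,1)}(\xi^{(1)})$.

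Your caveat about the infinite-dimensional interpretation is apt and matches the spirit of Theorem \ref{thmkjjdhkejd}; the paper does not dwell on this either and works formally in the Gaussian-field sense throughout.
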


\subsection{Mode decomposition through partitioning and integration}\label{secljlkjkhuh}
  \begin{figure}[h!]
        \begin{center}
                        \includegraphics[width=\textwidth]{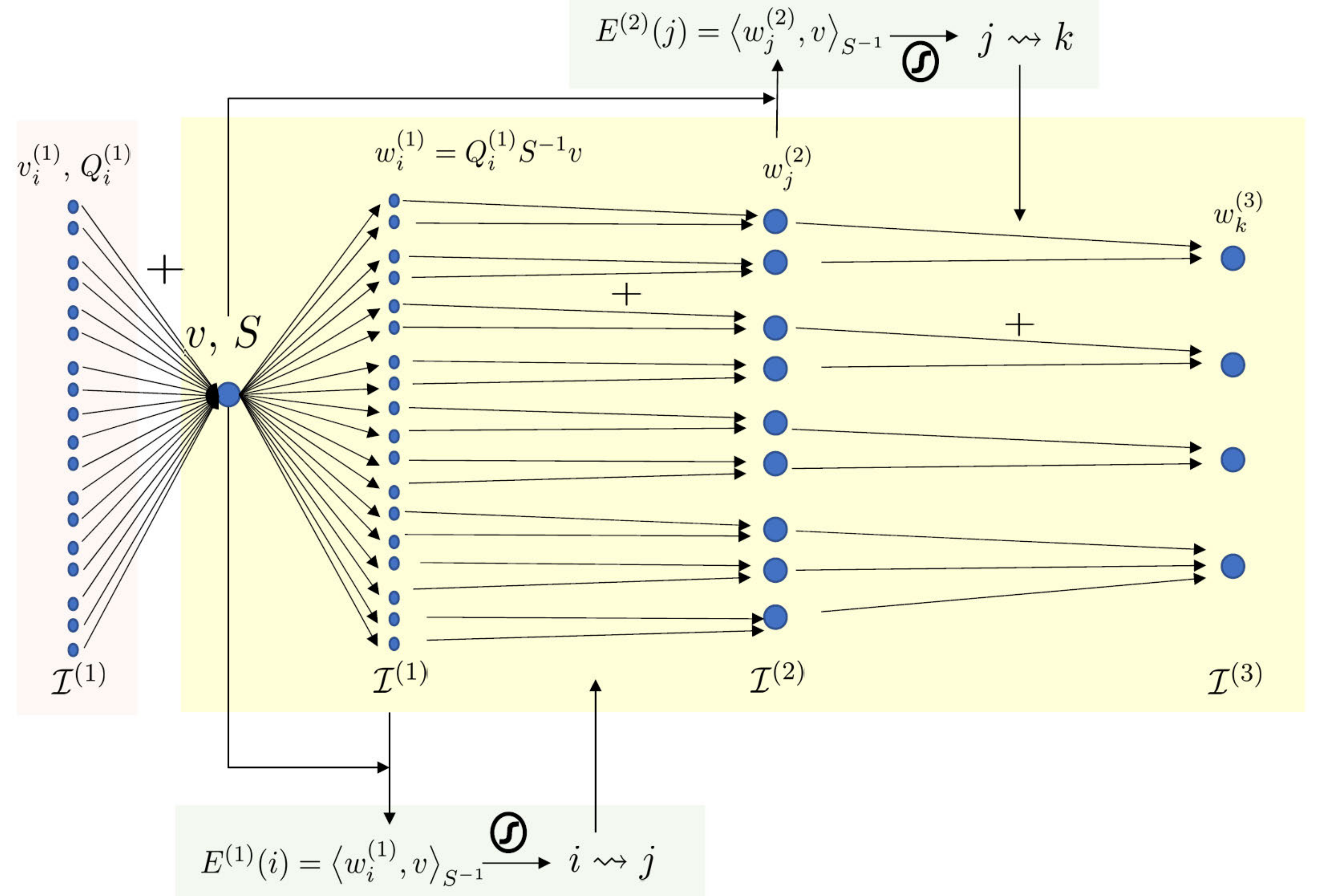}
                \caption{Derivation of the hierarchy from alignments.}\label{figactualcomputation}
        \end{center}
\end{figure}
In the setting of Section \ref{secjhgjhg7ggt},
recall that $\I^{(q)}=\{1\}$ and $V_{1}^{(q)}=V$ so that the index $j$ in
$
\Psi^{(k,q)}_{i,j}$ defined in \eqref{Psi_ij} only has one value $j=1$ and $1^{(k)}=\I^{(k)}$,  and therefore
\begin{equation}
\label{eiejjjguuuuiui}
\Psi^{(k,q)}_{i,1}(v):=Q_i^{(k)} e_{i,1}^{(k,q)} Q^{(q),-1}_1 v,
\qquad
   v\in V,\, i \in \I^{(k)}\, .
\end{equation}
Fix
  a  $v\in V$ and  for $k\in \{1,\ldots,q\}$,  let
\[ E^{(k)}:\I^{(k)} \rightarrow \R,\]
defined  by
\begin{equation}\label{eqhjghjgfjhgfy}
E^{(k)}(i):=\big\|\Psi^{(k,q)}_{i,1}(v)\big\|_{V^{(k)}_i}^2,\quad i\in \I^{(k)},
\end{equation}
 be the alignment energy of the mode $i\in \I^{(k)}$.
Under
the nesting relations
\eqref{equyuyg76g6},  the definition \eqref{def_norms} of the norms and the semigroup properties
of the subspace embeddings \eqref{erk}
  imply
 that
\begin{equation}
\label{Esum}
E^{(k+1)}(i)=\sum_{i'\in i^{(k)}}E^{(k)}(i'), \quad i\in \I^{(k+1)},\, k\in \{1,\ldots,q-1\}\,.
\end{equation}

We will now consider applications where the space $(V,\|\cdot\|_V)$ is known,  and  the spaces
  $(V_i^{(1)},\|\cdot\|_{V_i^{(1)}})$, including their index set $\I^{(1)}$,
are known, but the spaces $(V_j^{(k)},\|\cdot\|_{V_j^{(k)}})$ and  their indices $\I^{(k)}$,
are unknown  for $1<k<q$, as is any relation $\leadsto$ connecting them.
 Instead,  they  will be  constructed  by induction from model/data alignments as illustrated in Figures \ref{figenergynonlinear} and \ref{figactualcomputation} and explained below. In these applications
\[(V,\|\cdot\|_V)=(V^{(q)}_1,\|\cdot\|_{V^{(q)}_1}),\]
 $V=\sum_{i\in \I^{(1)}} V_i^{(1)}$ and the operator
$Q^{(q)}_1:V^{*} \rightarrow V$ associated with the norm  $\|\cdot\|_{V^{(q)}_1}$ is the sum
\begin{equation}\label{equyhuhg6}
Q^{(q)}_1=\sum_{i\in \I^{(1)}}  e_{1,i}^{(q,1)} Q_i^{(1)} e_{i,1}^{(1,q)}\,.
\end{equation}

In this construction we assume that the set of indices  $\I^{(1)}$
 are vertices of a graph $G^{(1)}$, whose  edges provide neighbor relations among the indices. The following
meta-algorithm,
 Algorithm \ref{alggrseg},  forms a general algorithmic framework for
the adaptive determination of the intermediate spaces $(V_j^{(k)},\|\cdot\|_{V_j^{(k)}})$,  their indices $\I^{(k)}$, and a relation $\leadsto$, in such a way that
 Theorem \ref{thmakdjhgjhgyuy} applies. Observe that this meta-algorithm is obtained by combining   the elementary programming modules illustrated in Figures \ref{figblock1} and \ref{figblock2} and discussed in Section \ref{seckejhdkjdh}.
 In the following Section \ref{sectfdecom}, it is  demonstrated
 on a problem in
time-frequency mode decomposition.

\begin{algorithm}[h]
\caption{Mode decomposition through partitioning and integration.}\label{alggrseg}
\begin{algorithmic}[1]
\FOR{$k=1$ to $q-2$}
\STATE\label{step7g} Compute the function $E^{(k)}\,:\,\I^{(k)}\rightarrow \R_+$ defined by \eqref{eiejjjguuuuiui} and
 \eqref{eqhjghjgfjhgfy}.
\STATE\label{step9g}  Use the function $E^{(k)}$ to segment/partition the graph $G^{(k)}$ into subgraphs
$(G^{(k+1)}_j)_{j\in \I^{(k+1)}}$, thereby determining the indices $\I^{(k+1)}$.
  Define the ancestors $j^{(k)}$ of $j\in \I^{(k+1)}$ as the vertices $i\in \I^{(k)}$ of the sub-graph  $G^{(k+1)}_j$.
\STATE\label{step4g} Identify the subspaces $V^{(k+1)}_j$ and the operators $Q^{(k+1)}_j$ through \eqref{eqjkhgkhhj} and \eqref{equyuyg76g6}.
\ENDFOR
\STATE\label{step3g} Recover the modes $(\Psi_i^{(q-1,q)}(v))_{i\in \I^{(q-1)}}$ of $v$.
\end{algorithmic}
\end{algorithm}

\subsection{Application to time-frequency decomposition}\label{sectfdecom}
We will now propose a solution to Problem \ref{pb2}  based on the hierarchical segmentation approach described in Section \ref{secljlkjkhuh}.
 We will employ the  GPR interpretation of Section \ref{subsec64d} and assume
 that the  noisy signal $v=u+v_\sigma$, where $v_\sigma$ is the noise, is the realization of a Gaussian process $\xi$ obtained by integrating Gabor wavelets \cite{gabor1946theory} against white noise.  To that end, for $\tau, \theta\in \R$ and $\omega, \alpha>0$, let
 \begin{equation}\label{eqkjdkjdjedhjks}
\chi_{\tau,\omega,\theta}(t):= \Bigl(\frac{2}{\pi^{3}}\Bigr)^\frac{1}{4} \sqrt{\frac{\omega}{\alpha}}
\cos\bigl(\omega (t-\tau)+\theta\bigr)e^{-\frac{ \omega^2 (t-\tau)^2}{\alpha^2}}, \qquad t \in \R\,,
\end{equation}
be the shifted/scaled Gabor wavelet,
 whose scaling is motivated by the normalization $\int_{-\pi}^{\pi}\int_{\R}\chi_{\tau,\omega,\theta}^2(t)\,dt\,d\theta=1$. See Figure \ref{figgabor} for an illustration of the Gabor wavelets.
Recall  \cite{gabor1946theory} that each $\chi$ is minimally localized in the time-frequency domain (it minimizes the product of standard deviations in the time and frequency domains) and the parameter $\alpha$ is proportional to the ratio between localization in frequency and localization in space.
\begin{figure}[h]
        \begin{center}
                        \includegraphics[width=\textwidth]{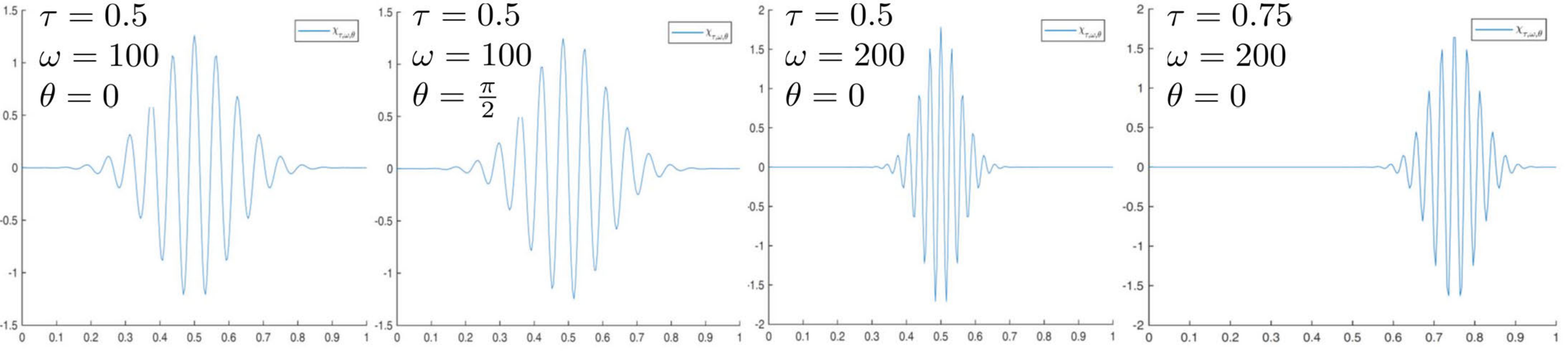}
                \caption{Gabor wavelets $\chi_{\tau,\omega,\theta}$ \eqref{eqkjdkjdjedhjks}
for various parameter values with $\alpha=16$.
}\label{figgabor}
        \end{center}
\end{figure}

Let $\zeta(\tau,\omega,\theta)$ be a
 white noise process on $\R^3$ (a centered GP with covariance function $\E\big[\zeta(\tau,\omega,\theta)\zeta(\tau', \omega',\theta')\big]=\updelta(\tau-\tau')\updelta(\omega-\omega')\updelta(\theta-\theta')$) and let
\begin{equation}
\xi_u(t):=\int_{-\pi}^{\pi} \int_{\omega_{\min}}^{\omega_{\max}} \int_0^1 \zeta(\tau, \omega,\theta)
 \chi_{\tau,\omega,\theta}(t)  d\tau\, d\omega\, d\theta, \quad t\in \R\, .
\end{equation}
Letting,
for each $\tau, \omega$ and $\theta$,
\begin{equation}
\label{eienutnnhgdghd}
K_{\tau,\omega,\theta}(s,t):=\chi_{\tau,\omega,\theta}(s) \chi_{\tau,\omega,\theta}(t),\quad s,t \in \R,
\end{equation}
be the reproducing kernel
associated with the wavelet $\chi_{\tau,\omega,\theta}$, it follows
 that $\xi_u$ is a centered GP with covariance function
\begin{equation}\label{eqjhdjbdehdds}
K_u(s,t)=\int_{-\pi}^{\pi} \int_{\omega_{\min}}^{\omega_{\max}} \int_0^1  K_{\tau,\omega,\theta}(s,t) d\tau \,d\omega\, d\theta,\quad s,t \in \R\, .
\end{equation}

Given $\sigma>0$, let $\xi_\sigma(t)$ be a white noise process on  $\R$ (independent from $\zeta$) of variance $\sigma^2$ (a centered GP with covariance function $\E\big[\xi_\sigma(s)\xi_\sigma(t)\big]=\sigma^2 \updelta(s-t)$) and
let $\xi$, the GP defined by
\begin{equation}
\xi:=\xi_u+\xi_\sigma\, ,
\end{equation}
be used to generate the observed signal $v=u+v_{\sigma}$. 
 $\xi$ is a centered GP with covariance function defined by the kernel
\begin{equation}
\label{K_def}
K:=K_u+K_\sigma
\end{equation}
 with
\begin{equation}\label{eqlkiejdheidnde}
K_\sigma(s,t)=\sigma^2 \updelta(s-t)\,.
\end{equation}

Hence, compared to the setting of  Section \ref{secthpb}, and apart from the mode corresponding to the noise $\xi_\sigma$, the finite number of modes indexed  by $\I$ has been turned into a continuum of modes indexed by
\[
\I:=\bigl \{(\tau,\omega,\theta)\in [0,1]\times [\omega_{\min},\omega_{\max}]\times (-\pi,\pi]\bigr\}\]
  with corresponding one dimensional subspaces
\[V_{(\tau,\omega,\theta)}^{(1)}=\Span\{ \chi_{\tau,\omega,\theta} \},\]  positive operators $Q_{\tau,\omega,\theta}$ defined by the kernels $K_{\tau,\omega,\theta}(s,t)$ and the integral
\[ K_u(s,t)=\int_{-\pi}^{\pi} \int_{\omega_{\min}}^{\omega_{\max}} \int_0^1  K_{\tau,\omega,\theta}(s,t)
d\tau\, d\omega\, d\theta,\quad s,t \in \R\,, \]
 of these kernels \eqref{eqjhdjbdehdds} to obtain a master kernel $K_u$ instead of a sum
\[S=\sum_{i\in \I}e_i Q_i e_i^*\,\]
as in \eqref{S_def}.  Table \ref{table_slide10}  illustrates the time-frequency version of
Table \ref{table_slide6} we have just developed and
the following remark explains the connection between kernels and operators in more detail.

\begin{table*}[h]
\centering
\makebox[1 \textwidth][c]{
\begin{tabular}{ |p{4.5cm}|p{5cm}|p{5.6cm}|  }
\hline
 \hspace{2cm}Mode  &\hspace{2cm} GP  & \hspace{2cm} Kernel\\
 \hline
\vspace{-.6cm}
\[v_{\tau,\omega,\theta}(t)=a_{\tau, \omega, \theta}(t) \chi_{\tau,\omega,\theta}(t)\]
\[a_{\tau, \omega, \theta} \,\,\text{unknown in} \, \, L^{2}\]
&  \vspace{-.6cm}
\[\xi_{\tau, \omega, \theta}(t)=\zeta(\tau, \omega, \theta)\chi_{\tau,\omega,\theta}(t)\]
\[\E[\zeta(\tau, \omega, \theta)\zeta_(\tau', \omega', \theta')]\]
\[=\updelta(\tau-\tau')
\updelta(\omega-\omega')\updelta(\theta-\theta')\]
& \vspace{.5cm}
 $K_{\tau, \omega, \theta}(s,t)=\chi_{\tau,\omega,\theta}(s)\chi_{\tau,\omega,\theta}(t)$\\[-2.6ex]
\hline
\vspace{-.1cm}\hspace{.65cm}$v_{\tau, \omega}=\int_{-\pi}^{\pi}v_{\tau, \omega, \theta} d\theta$
&\vspace{-.1cm}\hspace{.3cm}
$\xi_{\tau, \omega}(t)=\int_{-\pi}^{\pi}\xi_{\tau, \omega, \theta}(t) d\theta$
& \vspace{-.1cm}
 $K_{\tau,\omega}(s,t)=\int_{-\pi}^{\pi}K_{\tau, \omega, \theta}(s,t) d\theta$
 \\[1.4ex]
\hline
\vspace{.1cm}
\hspace{.5cm}$v_{u}=\int\int\int v_{\tau, \omega, \theta}d\tau d\omega d\theta$
&\vspace{.1cm}
$\xi_{u}(t)=\int\int\int \xi_{\tau, \omega, \theta}(t)d\tau d\omega d\theta$
& \vspace{.1cm}
 $K_{u}(s,t)=\int\int\int K_{\tau, \omega, \theta}(s,t)
d\tau d\omega
d\theta$
\vspace{.5cm}
 \\[-3.5ex]
\hline
\vspace{-.3ex} \hspace{1ex}$v_{\sigma}$\,\, \text{unknown white noise}   & \vspace{-.3ex}\hspace{.5cm}  $\E[\xi_{\sigma}(s)\xi_{\sigma}(t)]=\sigma^{2}\updelta(s-t)$
&\vspace{-.3ex} \hspace{.5cm} $K_{\sigma}(s,t)=\sigma^{2}\updelta(s-t)$ \vspace{.6cm}\\[-2ex]
\hline \vspace{-.1cm}
\hspace{1.2cm} $v=v_{u}+v_{\sigma}$   &\vspace{-.1cm} \hspace{1.2cm}  $\xi=\xi_{u}+\xi_{\sigma}$
& \vspace{-.1cm} \hspace{1.2cm} $K=K_{u}+K_{\sigma}$ \\[1ex]
\hline
 \vspace{.1ex} \hspace{3ex}
 $ v_{i}=\int_{A(i)}{v_{\tau,\omega}d\tau d\omega} $
& \vspace{.1ex} \hspace{3ex} $\xi_{i}=\int_{A(i)}{\xi_{\tau,\omega}d\tau d\omega}$
 & \vspace{.1ex} \hspace{3ex}
 $K_{i}=\int_{A(i)}{K_{\tau,\omega}d\tau d\omega}$
\\[2ex]
\hline
\end{tabular}
}
\caption{The time-frequency version of
Table \ref{table_slide6}}
\label{table_slide10}
\end{table*}
\begin{Remark}[{\bf Kernels, operators, and discretizations}]
\label{rm_kernelsoperators}
This kernel mode decomposition framework constructs reproducing kernels $K$
 through the integration of elementary reproducing kernels, but the recovery formula of Theorem  \ref{thmkjhgjhgyuy}
requires the  application of operators, and their inverses,  corresponding to these kernels.
In general, there is no canonical connection between kernels and operators, but here we consider restricting
to the unit interval $[0,1] \subset \R$ in the time variable $t$. Then,  each kernel
$K$ under consideration other than $K_{\sigma}$
 corresponds to the symmetric  positive integral operator
\[ \bar{K}:L^{2}[0,1] \rightarrow L^{2}[0,1]\]
defined by
\[ \bigl(\bar{K}f\bigr)(s):=\int_{0}^{1}{K(s,t)f(t)dt}, \quad s \in [0,1],\,  f  \in L^{2}[0,1]\,.\]
Moreover, these  kernels all have sufficient regularity that
$\bar{K}$ is {\em compact} and therefore not invertible,
 see e.g.~Steinwart and Christmann \cite[Thm.~4.27]{steinwart2008support}.
On the other hand, the operator
\[\bar{K}_{\sigma}:L^{2}[0,1] \rightarrow L^{2}[0,1]\]
 corresponding to the white noise kernel
$K_{\sigma}$ \eqref{eqlkiejdheidnde} is
\[ \bar{K}_{\sigma}=\sigma^{2}I\]
where
\[I:L^{2}[0,1] \rightarrow L^{2}[0,1]\]
is the identity map.
 Since $K=K_{u}+K_{\sigma}$ \eqref{K_def},
the operator $\bar{K}=\bar{K}_{u}+\bar{K}_{\sigma}$
 is a symmetric positive compact operator plus a positive multiple of the identity and therefore it is Fredholm and invertible. Consequently,  we can apply
Theorem \ref{thmkjhgjhgyuy} for the optimal recovery.

In addition,
in numerical applications, $\tau$ and $\omega$ are discretized (using $N+1$ discretization steps) and the integrals in \eqref{eqjehdjehddsd} are replaced by sums over $\tau_k:=k/N$  and $\omega_k:=\omega_{\min}+ \frac{k}{N}(\omega_{\max}-\omega_{\min})$ ($k\in \{0,1,\ldots,N\}$). Moreover, as in Example
\ref{eglsdljdlekd}, the time interval $[0,1]$  is discretized into $M$ points and the corresponding operators
on $\R^{M}$ are
 $\sigma^{2}I$, where $I:\R^{M} \rightarrow \R^{M}$ is the identity,  plus the kernel matrix
$(K_{u}(t_{i},t_{j})\bigr)_{i,j=1}^{M}$ corresponding to
 the  sample points $t_{i}, i=1,\ldots, M$.

For simplicity and conciseness, henceforth we will keep describing the proposed approach in the continuous setting. Moreover, except in Section \ref{sec_con}, we will overload notation and not use the $\bar{K}$ notation, but instead use the same symbol $K$ for a kernel and its corresponding operator.
\end{Remark}

\begin{figure}[h]
        \begin{center}
                        \includegraphics[width=\textwidth]{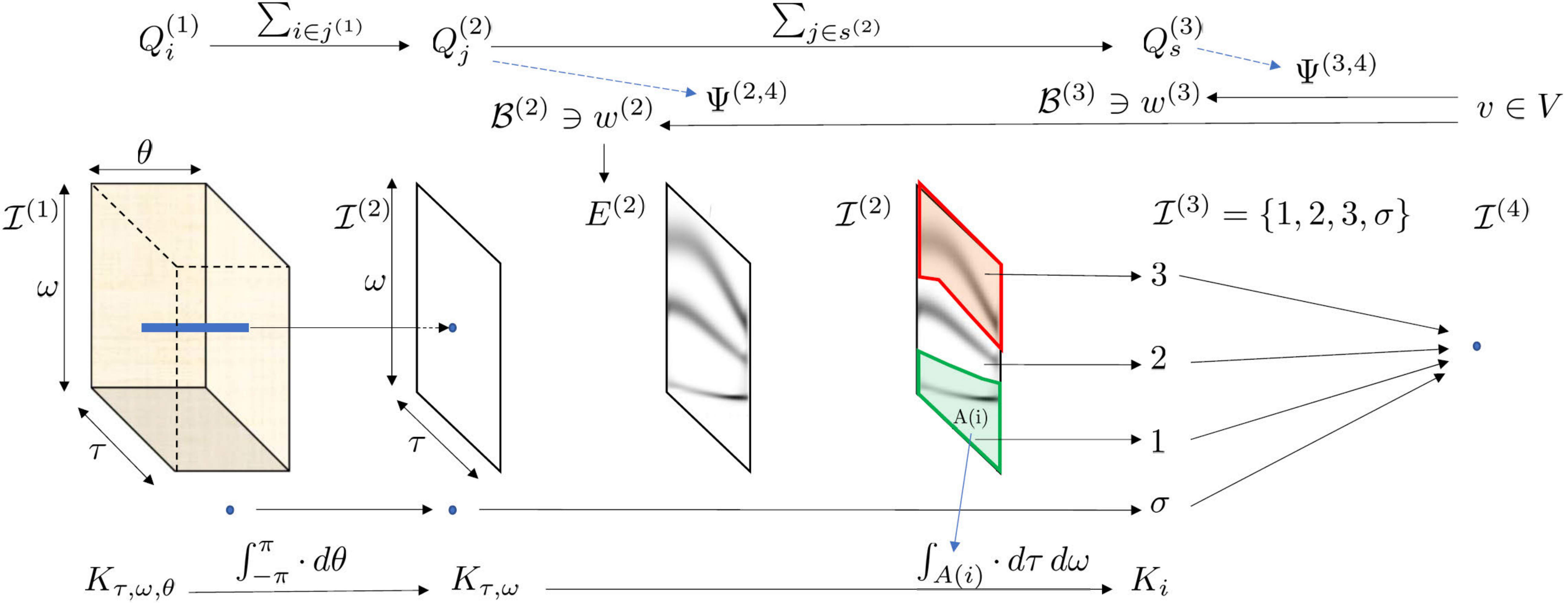}
                \caption{Mode decomposition through partitioning and integration. $q=4$,
 $w^{(3)}:=\Psi^{(3,4)}v$,  $w^{(2)}:=\Psi^{(2,4)}v$, and $\sigma$ corresponds to the noise component.
}\label{fighmd2}
        \end{center}
\end{figure}

We now describe the hierarchical  approach of Section \ref{secljlkjkhuh} to this
time-frequency setting and illustrate it in Figure \ref{fighmd2}.
To that end, we identify
$\I$ with $\I^{(1)}$ so that  \[
\I^{(1)}=\bigl\{(\tau,\omega,\theta)\in [0,1]\times [\omega_{\min},\omega_{\max}]\times (-\pi,\pi]\bigr\}\cup \{\sigma\},\]
where the noise mode has been illustrated in Figure \ref{fighmd2} by adding an isolated point with label  $\sigma$  to each set $\I^{(k)}$ with $k<q=4$.

Although Line \ref{step9g} of  Algorithm  \ref{alggrseg} uses the energy $E^{(1)}$ at level $k=1$ to partition the index set $\I^{(1)}$, the algorithm is flexible with regards to if or how  we use it. In this  particular application we first ignore  the computation of $E^{(1)}$ and straightforward partition
$\I^{(1)}$ into a family of subsets
\[
\I^{(1)}_{\tau, \omega}:=\bigl\{(\tau,\omega,\theta): \theta \in (-\pi,\pi]\bigr\}\cup \bigl\{\sigma\bigr\}, \qquad  (\tau, \omega) \in
[0,1]\times [\omega_{\min},\omega_{\max}],\]
indexed by $\tau$ and $\omega$,
so that the corresponding index set  at level $k=2$ is
\[\I^{(2)}=\bigl\{(\tau,\omega)\in [0,1]\times [\omega_{\min},\omega_{\max}]\bigr\}\cup \bigl\{\sigma\bigr\},\]
  and the ancestors  of $(\tau,\omega, \sigma)$ are
\[(\tau,\omega,\sigma)^{(2)}=\bigl\{(\tau,\omega,\theta):\theta\in (-\pi, \pi]\bigr\} \cup \bigl\{\sigma\bigr\}.\]
 The subspace corresponding to the label $(\tau,\omega)$ is then
\[V_{(\tau,\omega)}^{(2)}=\Span\bigl\{ \chi_{\tau,\omega,\theta} \mid \theta \in (-\pi, \pi]\bigr\}\]
 and, as in  \eqref{equyuyg76g6}, its associated positive operator is characterized by the kernel
\begin{equation}
\label{eqjhedehdius}
K_{\tau,\omega}:=\int_{-\pi}^{ \pi}{K_{\tau,\omega,\theta}d\theta}\, .
\end{equation}
We can evaluate $K_{\tau,\omega}$ using \eqref{eienutnnhgdghd} and \eqref{eqkjdkjdjedhjks} by
defining
\begin{eqnarray}\label{eqjhgjguyggt}
\chi_{\tau,\omega,c}(t)&:=&\Bigl(\frac{2}{\pi}\Bigr)^\frac{1}{4} \sqrt{\frac{\omega}{\alpha}}\cos(\omega (t-\tau))e^{-\frac{ \omega^2 (t-\tau)^2}{\alpha^2}}\,, \quad t \in \R, \nonumber\\
\chi_{\tau,\omega,s}(t)&:=&\Bigl(\frac{2}{\pi}\Bigr)^\frac{1}{4} \sqrt{\frac{\omega}{\alpha}}\sin(\omega (t-\tau))e^{-\frac{ \omega^2 (t-\tau)^2}{\alpha^2}}\,, \quad t \in \R,
\end{eqnarray}
and using the cosine summation formula
to obtain
\begin{equation}
K_{\tau,\omega}(s,t):=\chi_{\tau,\omega,c}(s) \chi_{\tau,\omega,c}(t)+\chi_{\tau,\omega,s}(s) \chi_{\tau,\omega,s}(t)\,.
\end{equation}
  Therefore $V_{(\tau,\omega)}^{(2)}=\Span\{ \chi_{\tau,\omega,c},\chi_{\tau,\omega,s}\}$
and
 \eqref{eqjhdjbdehdds} reduces to
\begin{equation}\label{eqjehdjehddsd}
K_u(s,t)=  \int_{\omega_{\min}}^{\omega_{\max}} \int_0^1  K_{\tau,\omega}(s,t) d\tau \, d\omega\, .
\end{equation}

Using $K:=K_u+K_\sigma$ \eqref{K_def},
let $f$ be the solution of the linear system $\int_0^1 K(s,t)f(t)\,dt =v(s)$, i.e.
\begin{equation}\label{eqbdjehdbdshd}
K f = v\,,
\end{equation}
and let $E(\tau,\omega)$ be the energy of the recovered mode indexed by $(\tau,\omega)$, i.e.
\begin{equation}\label{eqknddnkjednd}
E(\tau,\omega)=\int_0^1 \int_0^1  f(s) K_{\tau,\omega}(s,t) f(t)\,ds\,dt,\qquad (\tau,\omega) \in
[0,1]\times [\omega_{\min},\omega_{\max}]\, .
\end{equation}
Since  $Kf=v$ implies that
\[ v^T K^{-1} v =f^{T}Kf, \]
 it follows that
\begin{equation}
v^T K^{-1} v=   \int_{\omega_{\min}}^{\omega_{\max}} \int_0^1  E(\tau,\omega)\,d\tau\,d\omega +f^T K_\sigma f\,  .
\end{equation}
For  the recovery of the $m$ (which is unknown)  modes using
Algorithm \ref{alggrseg}, at the second level $k=2$ we  use $E(\tau,\omega)$ to partition the time-frequency domain of $(\tau,\omega)$ into $n$ disjoint subsets $A(1), A(2), \ldots  A(n)$.
 As illustrated in Figure \ref{fighmd2},  $n=3$ is determined   from $E(\tau,\omega)$,
 and $\I^{(3)}$ is defined as $\{1,2,\ldots, n, \sigma\}$, the subspace corresponding to the mode $i\not=\sigma$ as $V^{(3)}_i=\Span\{ \chi_{\tau,\omega,c},\chi_{\tau,\omega,s} \mid (\tau,\omega)\in A(i)\}$ and
 the kernel associated with the mode $i\not=\sigma$ as
\begin{equation}\label{eqjehdjehddsdbis}
K_i(s,t)=\int_{(\tau,\omega)\in A(i)} K_{\tau,\omega}(s,t) d\tau\, d\omega, \qquad s,t \in \R\,,
\end{equation}
as displayed in the bottom row in Table \ref{table_slide10},
so that
\[ K_{u}=\sum_{i=1}^{n}{K_{i}}\, .\]
We then apply the optimal recovery formula of Theorem \ref{thmkjhgjhgyuy}
to approximate   the modes of $v_1,\ldots,v_n$ of $u$ from the noisy observation of $v=u+v_\sigma$
(where $v_\sigma$ is a realization of $\xi_\sigma$) with the elements $w_1,\ldots,w_n$ obtained via
\[w_i=K_i K^{-1}v=K_i f,\]
that is, the integration
\begin{equation}\label{eqkeldheiudhiud}
w_i=K_i f\,.
\end{equation}

  \begin{figure}[h]
        \begin{center}
                        \includegraphics[width=\textwidth]{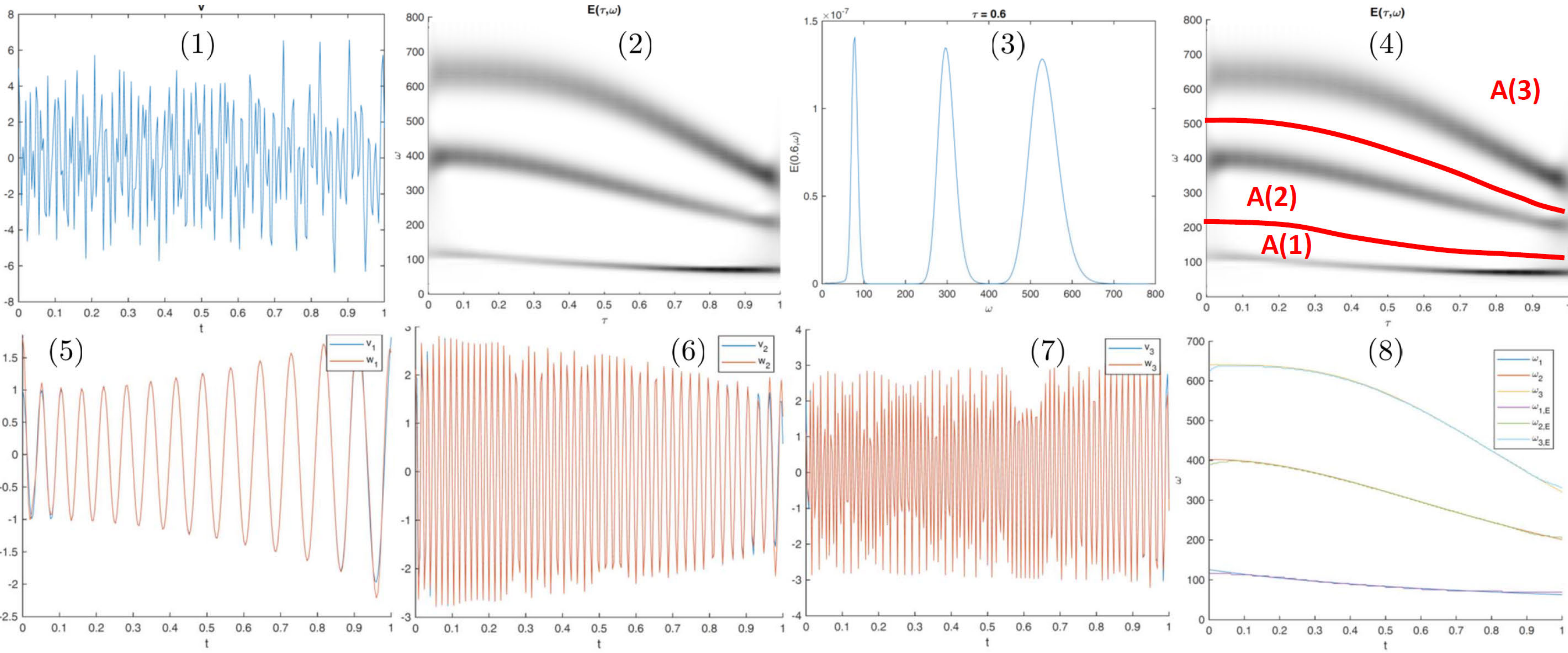}
                \caption{(1) The signal $v=u+v_\sigma$ where $u=v_1+v_2+v_3$, $v_\sigma\sim \cN(0, \sigma^2 \updelta(t-s))$  and $\sigma=0.01$ (2) $(\tau,\omega)\rightarrow E(\tau,\omega)$ defined by \eqref{eqknddnkjednd} (one can identify three stripes) (3) $\omega\rightarrow E(0.6,\omega)$ (4)
Partitioning $[0,1]\times [\omega_{\min},\omega_{\max}]=\cup_{i=1}^3 A(i)$ of the time frequency domain into three disjoint subsets identified from $E$ (5) $v_1$ and its approximation $w_1$ (6) $v_2$ and its approximation $w_2$ (7)
$v_3$ and its approximation $w_3$ (8) $\omega_1,\omega_2,\omega_3$ and their approximations $\omega_{1,E},\omega_{2,E},\omega_{3,E}$.}\label{figunknownf2}
        \end{center}
\end{figure}

Figure \ref{figunknownf2} illustrates a three mode $m=3$ noisy signal, the correct determination of
$n=m=3$,  and the recovery of its modes.
Figure \ref{figunknownf2}.1  displays the total observed signal $v=u+v_{\sigma}$ and the three modes
$v_{1},v_{2}, v_{3} $ constituting $u=v_{1}+v_{2}+v_{3}$
 are displayed in Figures \ref{figunknownf2}.5, 6  and 7,  along with their recoveries
$w_{1},w_{2}$  and $w_{3} $\footnote{The recoveries $w_{i}$ in Figure \ref{figunknownf2}.5,6 and 7,
 are indicated in red and the modes $v_{i}$ of the signal are in blue. When the recovery is accurate, the red recovery blocks the blue and appears red\label{redblue}.}.
 Figure \ref{figunknownf2}.8 also shows  approximations of the instantaneous frequencies obtained as
\begin{equation}\label{eqkjhjhedbdjhdb}
\omega_{i,E}(t):=\operatorname{argmax}_{\omega: (t,\omega)\in A(i)} E(t,\omega)\,.
\end{equation}

\subsection{Convergence of the numerical methods}
\label{sec_con}
This section, which can be skipped on the first reading,
provides  a rough overview of how the empirical approach describe in Remark \ref{rm_kernelsoperators}
 generates convergence results.
 To keep this discussion simple,  we assume that
 the reproducing kernel $K$ is continuous and  its corresponding integral operator
$\bar{K}$
 is injective (the more general case
 is handled by  quotienting with respect to its nullspace).
 Then the
RKHS $H_{K}$ can be described as the image $\bar{K}^{\frac{1}{2}}(L^{2}[0,1]) \subset L^{2}[0,1]$
of the unique positive symmetric square root of  $\bar{K}$ and the map
$\bar{K}^{\frac{1}{2}}:L^{2}[0,1] \rightarrow H_{K}$ is an isometric isomorphism,
 see e.g. \cite[Thm.~17.12]{kress1989linear}. Moreover, by the zero-one law of Luki{\'c} and Beder
 \cite[Thm.~7.2]{lukic2001stochastic}, the Gaussian stochastic process with covariance $K$ has
 its sample paths in $H_{K}$ with probability $1$. Consequently, the  Gaussian stochastic process with covariance
  $K$ will have some approximation error when the observation $v$ is not an element of $H_{K}$. This is the classical situation justifying the employment of Tikhonov regularization, motivating our introduction of the additive
 white noise component to the stochastic model.  However, before we discuss Tikhonov regularization, let us   begin
with the case when $v$ is an element of $H_{K}$.
Then, according to
 Engl, Hanke and Neubauer's \cite[Ex.~3.25]{engl1996regularization}  analysis of
 the least-squares collocation method in \cite[Ex.~3.25]{engl1996regularization}
applied to solving
the operator equation $\bar{K}^{\frac{1}{2}}f=v$,  where $\bar{K}^{\frac{1}{2}}$ is considered as
 $\bar{K}^{\frac{1}{2}}:L^{2}[0,1] \rightarrow H_{K}$,  application of the
 {\em dual least-squares} method of regularization, described in
Engl, Hanke and Neubauer \cite[Ch.~3.3]{engl1996regularization},
reveals that our collocation discretization produces the least-squares collocation approximation $f_{m}$ of the solution $f$ of
 $\bar{K}^{\frac{1}{2}}f=v$,
 i.e.~the minimal norm solution $f_{m}$ of
$Q_{m}\bar{K}^{\frac{1}{2}}f_{m}=Q_{m}v$, where
$Q_{m}: H_{K} \rightarrow  H_{K} $ denotes the $H_{K}$-orthogonal projection
onto the span $\mathcal{Y}_{m}$ of the representers $\Phi_{x_{j}} \in H_{K}$ of the point evaluations at
the collocation points
$x_{j}$ (i.e.~we have $\langle w, \Phi_{x_{j}} \rangle_{H_{K}} =w(x_{j})$,
$w \in H_{K}$,  $j=1,\ldots m$).
Moreover,  \cite[Thm.~3.24]{engl1996regularization} asserts that
the resulting solution $f_{m}$ satisfies $f_{m}=P_{m}f$ where
$P_{m}:L^{2}[0,1] \rightarrow L^{2}[0,1]$ is the orthogonal projection onto
$\bar{K}^{\frac{1}{2}, *}\mathcal{Y}_{m} $.
 Quantitative analysis of the convergence  of $f_{m}$ to $f$ is then a function of the strong convergence of $P_{m}$ to the identity operator and can be assessed in terms of
 the expressivity of the set of representers $\Phi_{x_{j}}$.
  For $v$ not an element of $H_{K}$, Tikhonov regularization is applied together with  least-squares collocation  as in  \cite[Ch.~5.2]{engl1996regularization}.

\section{Additional programming modules and squeezing}
The KMDNets described in Section \ref{sechgjjhjbvf} not only introduce hierarchical structures
to implement nonlinear estimations using linear techniques, but can also be thought of as
 a sparsification technique whose goal is to reduce the computational complexity of solving the corresponding GPR problem, much like the sparse methods have been invented for GPR discussed in Section \ref{sec_GPR}. The primary difference is that, whereas those methods generally use a set of inducing points determining a low rank approximation and then choose the location of those points to optimize its approximation,
here we utilize the  the landscape of the energy  function $E:\mathcal{I}\rightarrow \R_{+}$, defined in \eqref{ieieieihhgggg}
 and analyzed in Proposition \ref{propeqejdkejdde},  interpreted as  
{\em alignment energies}  near \eqref{eqkjguyguygy}. In this section, 
 this analogue of sparse methods will be further developed for the KMDNets using the energy alignment landscape to  further develop programming modules which improve the efficacy and accuracy of the reconstruction. For another application of the alignment energies in model construction,
 see  Hamzi and Owhadi \cite[Sec.~3.3.2]{hamzi2020learning} where it is used to estimate
the optimal  time lag of a ARMA-like time series model.

In the  approach described in  Section \ref{secljlkjkhuh}, $\I^{(k)}$ was partitioned into subsets $(j^{(k)})_{j\in \I^{(k+1)}}$ and the $Q_i^{(k)}$ were integrated
(that is, summed over or  average-pooled)
  using \eqref{eqjkhgkhhj} and \eqref{equyuyg76g6}
in Line \ref{step4g}  of Algorithm \ref{alggrseg}, over each subset to obtain the $Q_j^{(k+1)}$.
 This partitioning approach
 can naturally be generalized to a {\em domain decomposition} approach by letting the subsets be non-disjoint and such that, for some $k$,  $\cup_{j\in \I^{(k+1)}} j^{(k)}$ forms a strict subset\footnote{Although  the results of Theorem \ref{thmakdjhgjhgyuy} do not hold true under this general domain-decomposition, those of Theorem \ref{thmkjhgjhgyuy} remain true between levels $k$ and $q$ (in particular, at each level $k$ the $v_i^{(k)}$ are optimal recovered modes given the $Q_i^{(k)}$ and the observation $v$).}
 of $\I^{(k)}$ (i.e. some $i\in \I^{(k)}$ may not have descendants).
We will now generalize the relation
 $\leadsto$ so as to  (1) not satisfy Condition \ref{condpart}, that is, it does not define a map (a label $i$ may have multiple descendants) 
 (2) be non directed, that is, not satisfy Definition \ref{defindextree} (some $j\in \I^{(k+1)}$ may have descendants in $\I^{(k)}$)  and (3) enable loops.

With this generalization the proposed framework is closer (in spirit) to an object oriented programming language than to a meta-algorithm. This is consistent with
what Yann LeCun in his recent lecture at the SIAM Conference on Mathematics of Data Science (MDS20) \cite{LeCun} has stated;  paraphrasing him: ''The types of architectures people use
nowdays are not just chains of alternating  linear and  pointwise nonlinearities, they are more like
 programs now."
We will therefore describe it as such via the introduction of additional elementary programming modules and illustrate the proposed language by programming increasingly efficient networks for mode decomposition.

\subsection{Elementary programming modules}
\begin{figure}[hbt!]
        \begin{center}
                        \includegraphics[width=\textwidth]{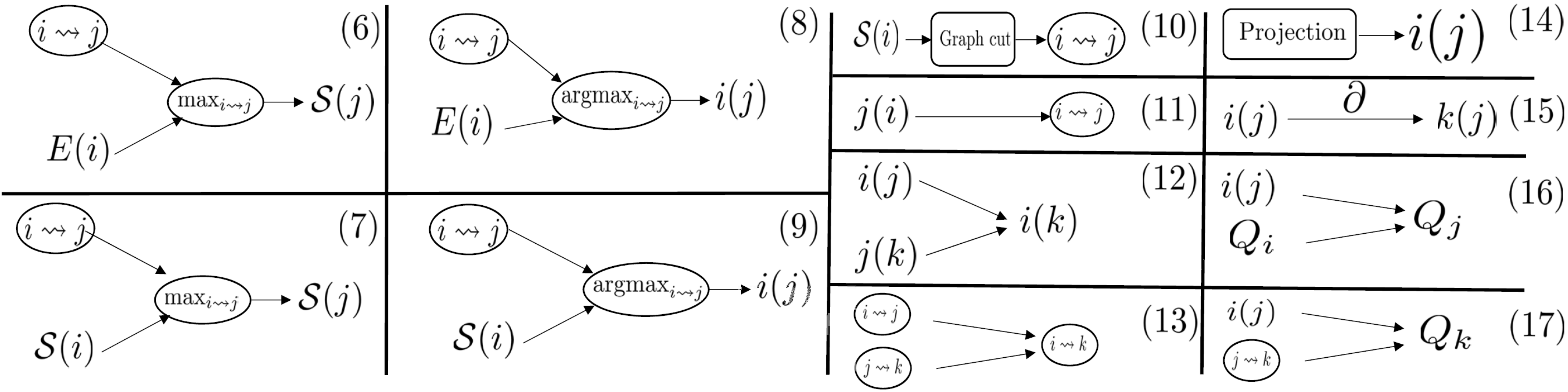}
                \caption{Elementary programming modules.
}\label{figblock3}
        \end{center}
\end{figure}
We will now introduce new  elementary programming modules in addition to the five illustrated in Figure \ref{figblock1} and discussed in Section \ref{seckejhdkjdh}. These new modules
 are illustrated in Figure \ref{figblock3},  beginning with module (6). Here they will be discussed abstractly but forward reference to specific examples..
The first module (module (6)) of Figure \ref{figblock3}  replaces the average-pooling operation to the define the energy $E$ by a max-pool operation.
More precisely module (6) combines a  relation $i \leadsto j$  with an energy $E$ to produce a  {\em max-pool energy} via
\begin{equation}
\mathcal{S}(j)=\max_{i\leadsto j} E(i)\,,
\end{equation}
where $i\leadsto j$ here is over  $i$ from the previous level to that of $j$. In what follows we will adhere to this semantic convention.
As shown in module (7), this combination can also be performed starting with a  {\em max-pool energy}, i.e.~module (7) combines a relation $i \leadsto j$  with a max-pool energy $\mathcal{S}$ at one level to produce a  {\em max-pool energy} at the next level via
\begin{equation}
\mathcal{S}(j)=\max_{i\leadsto j} \mathcal{S}(i)\,.
\end{equation}
Maximizers can naturally be derived from this max-pooling operation and modules (8) and (9)  define $i(j)$ as the maximizer (or the set of maximizers if non-unique) of the energy or the max-pool energy. More precisely module (8) combines a relation $i\leadsto j$  with an energy function $E(i)$ to produce
\begin{equation}
i(j)=\operatorname{argmax}_{i\leadsto j} E(i)\,,
\end{equation}
and module (9)\footnotemark combines a relation $i\leadsto j$  with a max-pool energy function $\mathcal{S}(i)$ to produce
\begin{equation}
i(j)=\operatorname{argmax}_{i\leadsto j} \mathcal{S}(i)\,.
\end{equation}
\footnotetext{The description of the remaining modules (10)-(17), which can be skipped on first reading, is as follows. Similarly to module (3) of Figure \ref{figblock1}, module (10) of Figure \ref{figblock3}  combines the max-pool energy $\mathcal{S}$
with a graph operation to produce the ancestor-descendant relation $i\leadsto j$. We will show that module (10) leads to a  more robust domain decomposition than module (3) due  to its insensitivity to domain discretization.
Module (11) uses the functional dependence $j(i)$ to define the relation $i \leadsto j$.
Module (12) expresses the transitivity of function dependence, i.e. it combines  $j(i)$  and $k(j)$ to produce $k(i)$.
Similarly, module (13) expresses the transitivity of the  relation $\leadsto$, i.e.~$i\leadsto j$ and $j\leadsto k$ can be combined to produce $i\leadsto k$. Module (14) (analogously to module (4)) uses an injection
 step to define a functional dependence $i(j)$ (e.g.~for the time-frequency application in
 Figure \ref{figcross1}, if  $\J$ is the set
 of $(\tau,\omega')$
    and $\I$ is that of $(\tau,\omega)$ the injection $\iota:\I\cap \J\rightarrow  \I$
defines a  functional dependence $i(j)$).
 Module (15) uses a functional dependence $i(j)$ to produce another functional dependence $k(j)$
 (e.g.~for the time-frequency-phase application in Figures \ref{figcross2} and \ref{figblock5}, we can define the functional dependence $(\tau,\omega')(\tau,\omega)$ from the functional dependence $(\tau,\omega,\theta)(\tau,\omega)$ via $\omega'(\tau,\omega)=\partial_\tau \theta(\tau,\omega)$).
Module (16) utilizes the functional dependence $i(j)$ to produce a pullback covariance operator
 $Q_j:=Q_{i(j)}$ ($:=\sum_{i\in i(j)}Q_i$ if $i(j)$ is a set-valued rather than a single-valued mapping).
Module (17) combines a functional dependence $i(j)$ with a relation $j\leadsto k$ to produce a covariance operator $Q_k$ (e.g. for the time-frequency-phase application of Figures \ref{figcross2} and \ref{figblock5},
 for $i=(\tau,\omega,\theta)\in \I^{(1)}$ and $j=(\tau,k)\in \I^{(4)}$ where the index $k$ is the mode index, the functional dependence $i(j)$ defines through \eqref{eieburimg}
estimated phases $\theta_{k,e}(\cdot)$ which can then be substituted  for $\theta(\cdot)$
  in the  kernel  $K(s,t)=e^{-|s-t|^2/\gamma^2}\big(\cos(\theta(s))\cos(\theta(t)+ \sin(\theta(s))\sin(\theta(t)\bigr)$, producing for each mode index $k$ a kernel with corresponding operator $Q_{k}$).
}
\subsection{Programming the network}\label{seckdljhdjkd}

Programming of the network is  achieved by assembling the modules of Figures \ref{figblock1} and \ref{figblock3} in a manner  that (1) $v$ is one of the inputs of the network and (if the network is used for mode decomposition/pattern recognition) (2) the  modes $v_m$ are one of the outputs of the network. As with any interpretable programming language
avoiding inefficient coding and bugs remains important. We will now use  this language to program KMDNets.

\subsection{Squeezing}

We will now present an interpretation and a variant (illustrated in Figure \ref{shmd}) of the synchrosqueezing transform due Daubechies et al.~\cite{DaubechiesMaes,daubechies2011synchrosqueezed} (see Section \ref{sec_synchro} for a description), in 
the setting of KMDNets, and thereby initiate its GP regression version.
We will demonstrate that this version generalizes to the case where the basic waveform is non-periodic and/or unknown.
 We  use the setting and  notations of Section \ref{sectfdecom}.

Let $f$ be the solution of $Kf=v$  \eqref{eqbdjehdbdshd} and let
\begin{equation}\label{eqksweddnd}
E(\tau,\omega,\theta):=\int_0^1 \int_0^1  f(s) K_{\tau,\omega,\theta}(s,t) f(t)\,ds\,dt
\end{equation}
 be the energy of the mode indexed by $(\tau,\omega,\theta)$.
For $(\tau,\omega)\in [0,1]\times [\omega_{\min},\omega_{\max}]$, write
\begin{equation}\label{eqthetate}
\theta_e(\tau,\omega):=\operatorname{argmax}_{\theta \in (-\pi, \pi] }E(\tau,\omega,\theta)  \,.
\end{equation}
Since the definitions \eqref{eqkjdkjdjedhjks} of
$\chi_{\tau,\omega,\theta} $ and \eqref{eqjhgjguyggt} of $\chi_{\tau,\omega,c}$ and $\chi_{\tau,\omega,s}$,
together with the cosine summation formula,
 imply that
\[\chi_{\tau,\omega,\theta}(t) =\frac{1}{\sqrt{\pi}}\bigl( \chi_{\tau,\omega,c}(t) \cos(\theta)-
\chi_{\tau,\omega,s}(t) \sin(\theta) \bigr), \quad t \in \R,\,  \]
it follows that, if we define
\begin{eqnarray}
\label{oeoeiiiie}
W_c(\tau,\omega)&:=&\int_0^1 \chi_{\tau,\omega,c}(t) f(t) \,dt\nonumber\\
 W_s(\tau,\omega)&:=&\int_0^1 \chi_{\tau,\omega,s}(t) f(t) \,dt\,,
\end{eqnarray}
 we obtain
\begin{equation}
\label{eoeijhurung}
\int_0^{1}
\chi_{\tau,\omega,\theta}(t) f(t)\,dt=\frac{1}{\sqrt{\pi}}
\bigl(\cos(\theta)W_c(\tau,\omega)-\sin(\theta) W_s(\tau,\omega)\bigr)\,.
\end{equation}
 Consequently, we deduce from \eqref{eqksweddnd}  and \eqref{eienutnnhgdghd} that
\begin{equation}
\label{eoeniuir}
E(\tau,\omega,\theta)= \frac{1}{\pi}\big(\cos(\theta)W_c(\tau,\omega)-\sin(\theta) W_s(\tau,\omega)\big)^2 \,.
\end{equation}
It follows that, when either $W_c(\tau,\omega)\neq 0$ or  $W_s(\tau,\omega) \neq 0$, that
\begin{equation}\label{eqjhdbejhdbs}
\theta_e(\tau,\omega)=\operatorname{phase}\big(W_c(\tau,\omega)-i W_s(\tau,\omega)\big)\,,
\end{equation}
where, for a complex number $z$, 
\begin{equation}
\label{def_phase}
\operatorname{phase}(z) :=\theta \in (-\pi, \pi] :\,  z=re^{i\theta}, \,r >0\,.
\end{equation}
Moreover, it follows from \eqref{eqjhedehdius},  \eqref{eqknddnkjednd}  and \eqref{eqksweddnd}  that
\[ E(\tau,\omega)=\int_{-\pi}^{\pi}{ E(\tau,\omega,\theta)d\theta}\, , \]
so that it follows from  \eqref{eoeniuir}  that
\begin{equation}
E(\tau,\omega)= W^{2}_c(\tau,\omega)+W^{2}_s(\tau,\omega) \,.
\end{equation}

Now  consider the mode decomposition problem with observation
$v =\sum{v_{i}}$ under the assumption that the phases vary much faster than the amplitudes. It follows that
for the determination of frequencies (not the determination of the phases) we can, without 
 loss of generality, assume each mode is of the  form
\begin{equation}
\label{eoioinriir}
v_i(t)=a_i(t)\cos(\theta_i(t))
\end{equation}
 where $a_i$  is slowly varying compared to $\theta_i$.
We will  use the symbol $\approx$ to denote an informal approximation analysis. 
Theorem \ref{thm_Iden} asserts that $K$ is approximately a multiple of the identity operator, so we conclude 
that the solution $f$ to $Kf=v$ in \eqref{eqbdjehdbdshd} is
$f\approx c v$ for some constant $c$. Because we will be performing a phase calculation the constant $c$ is irrelevant and so can be set to $1$, that is we have $f \approx v$ and therefore we can write \eqref{oeoeiiiie} as
\begin{eqnarray}
\label{oooomkiinriir}
W_c(\tau,\omega)&\approx &\int_0^1 \chi_{\tau,\omega,c}(t) v(t) \,dt\nonumber\\
 W_s(\tau,\omega)&\approx&\int_0^1 \chi_{\tau,\omega,s}(t) v(t) \,dt\,.
\end{eqnarray}
For fixed $\tau$, for $t$ near $\tau$,
  \begin{equation}
\label{eoioinriira}
v_i(t)\approx a_i(\tau)\cos((t-\tau)\dot{\theta}_i(\tau)+\theta_i(\tau))
\end{equation}
 so that, 
since the frequencies $\dot{\theta}_{i}$ are relatively large and well separated, it follows
from the nullification effect of integrating cosines of high frequencies,
that for $\omega \approx \dot{\theta}_{i}(\tau)$, \eqref{oooomkiinriir} holds true with
$v_{i}$ instead of $v$ in the right-hand side. 
Because the amplitudes of $v_{i}$ in \eqref{eoioinriir} are slowly varying compared to their frequencies,
it again follows from the nullification effect of integrating cosines of high frequencies, the approximation formula \eqref{eoioinriira}, the representation \eqref{eqjhgjguyggt} of $\chi_{\tau,\omega,c}$ and $\chi_{\tau,\omega,s}$  and the sine and cosine summation formulas,  that
\begin{eqnarray*}
W_c(\tau,\omega)&\approx &a_{i}(\tau)\cos(\theta_i(\tau)) \int_0^1 \chi_{\tau,\omega,c}(t)\cos((t-\tau)\omega)  \,dt\nonumber\\
 W_s(\tau,\omega)&\approx&  -a_{i}(\tau)\sin(\theta_i(\tau))\int_0^1 \chi_{\tau,\omega,s}(t)\sin((t-\tau)\omega)  \,dt\,.
\end{eqnarray*}
Since the representation \eqref{eqjhgjguyggt} of $\chi_{\tau,\omega,c}$ and $\chi_{\tau,\omega,s}$,  and the sine and cosine summation formulas, also imply that  
$\int_0^1 \chi_{\tau,\omega,c}(t)\cos((t-\tau)\omega)  \,dt \approx \int_0^1 \chi_{\tau,\omega,s}(t)\sin((t-\tau)\omega)  \,dt  >0\,,$
it follows that
\[W_c(\tau,\omega)-i W_s(\tau,\omega) \approx a_{i}(\tau) e^{i\theta_i(\tau)} \int_0^1 \chi_{\tau,\omega,c}(t)\cos((t-\tau)\omega)  \,dt\,, \]
so that 
 $\theta_e(\tau,\omega)$, defined in \eqref{eqjhdbejhdbs},  
is an approximation of $\theta_i(\tau)$,  and
\begin{equation}\label{eqkjhedkjhed}
\omega_e(\tau,\omega)=\frac{\partial \theta_e}{\partial \tau}(\tau,\omega)\,
\end{equation}
is an approximation of the instantaneous frequency $\dot{\theta}_i(\tau)$.
\begin{Remark}
In the discrete case, on a set  $\{\tau_{k}\}$ of points, we proceed differently
than in \eqref{eqkjhedkjhed}.
Ignoring for the moment the requirement  \eqref{def_phase} 
that the phase 
$\theta_e(\tau,\omega)$ defined in \eqref{eqjhdbejhdbs}  lies in $(-\pi, \pi]$,
an accurate finite difference approximation $\omega_e(\tau_k,\omega) $ to the frequency
   is determined by
\[\theta_e(\tau_k,\omega)+ \omega_e(\tau_k,\omega) (\tau_{k+1}-\tau_k) = \theta_e(\tau_{k+1},\omega)  .\]
To incorporating the requirement, it is natural to instead
define $\omega_e(\tau_k,\omega)$ as solving
\[e^{i\,\omega_e(\tau_k,\omega) (\tau_{k+1}-\tau_k)} e^{i\theta_e(\tau_k,\omega)} =e^{i\theta_e(\tau_{k+1},\omega)},\] which using
\eqref{eqjhdbejhdbs} becomes
  \[e^{i\,\omega_e(\tau_k,\omega) (\tau_{k+1}-\tau_k)} e^{i \,\operatorname{phase}(W_c(\tau_k,\omega)-i W_s(\tau_k,\omega))}= e^{i \,\operatorname{phase}(W_c(\tau_{k+1},\omega)-i W_s(\tau_{k+1},\omega))}\, ,\]
and has the solution
\begin{equation}\label{eqjedhdhedgjhdg}
\omega_e(\tau_k,\omega)=\frac{1}{\tau_{k+1}-\tau_k}\operatorname{atan2}
\biggl(\frac{W_c(\tau_{k+1},\omega)W_s(\tau_{k},\omega)-W_s(\tau_{k+1},\omega)W_c(\tau_{k},\omega)}{W_c(\tau_{k+1},\omega)W_c(\tau_{k},\omega)+W_s(\tau_{k+1},\omega)W_s(\tau_{k},\omega)}\biggr)\,,
\end{equation}
where $\operatorname{atan2}$  is Fortran's four-quadrant inverse tangent.
\end{Remark}

\begin{figure}[h]
        \begin{center}
                        \includegraphics[width=\textwidth]{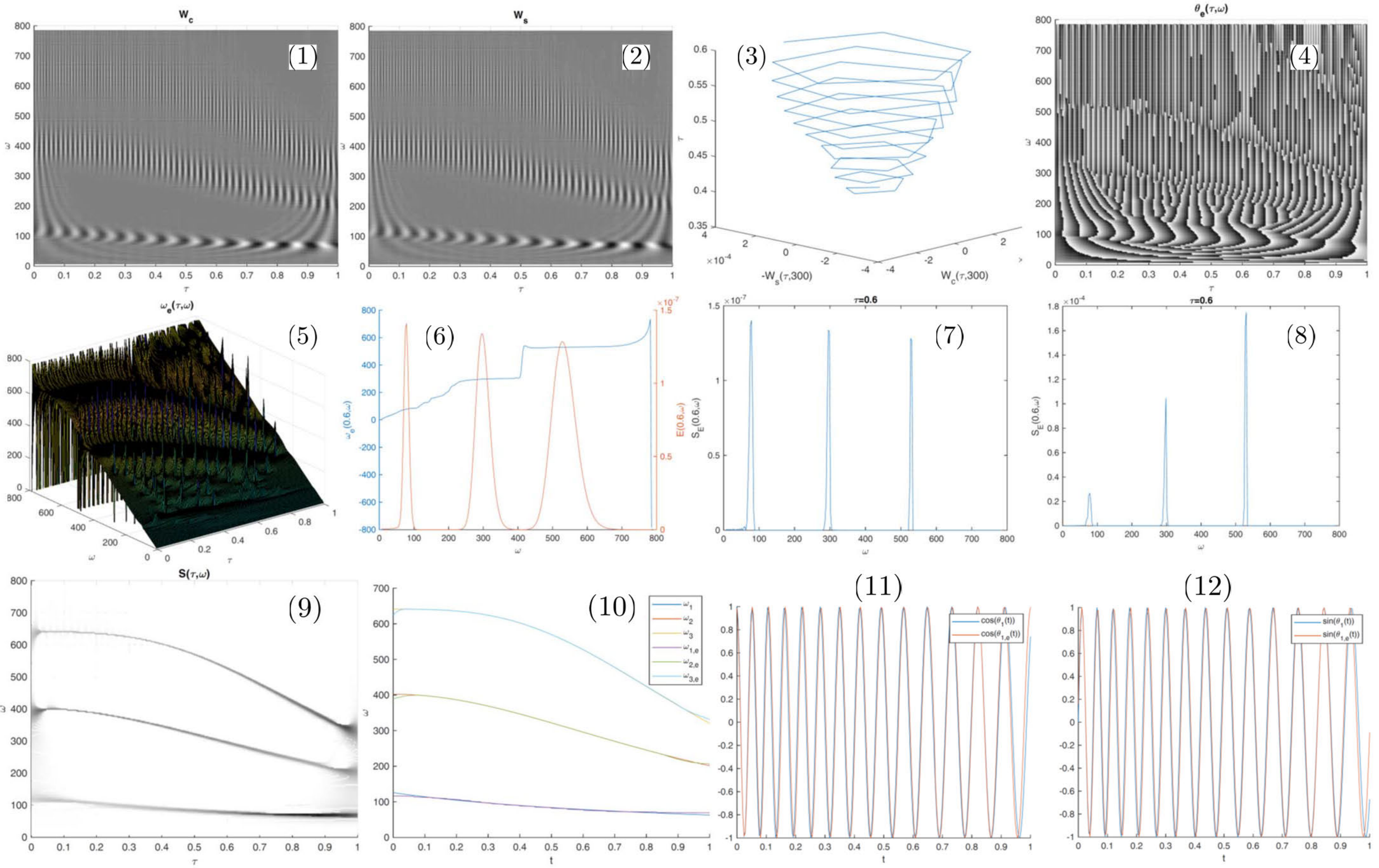}
                \caption{(1) $W_c(\tau,\omega)$ (2) $W_s(\tau,\omega)$ (3) $\tau \rightarrow (W_c(\tau,300),W_s(\tau,300),\tau)$ (4)
$(\tau,\omega)\rightarrow \theta_e(\tau,\omega)$ (5) $(\tau,\omega)\rightarrow \omega_e(\tau,\omega)$ (6) $\omega\rightarrow \omega_e(0.6,\omega)$ and $\omega\rightarrow E(0.6,\omega)$ (7)
$\omega\rightarrow \mathcal{S}(0.6,\omega)$  (8)$\omega\rightarrow \mathcal{S}_E(0.6,\omega)$ (9) $(\tau,\omega)\rightarrow \mathcal{S}(\tau,\omega)$ (10) $t\rightarrow \omega_i(t)$ and $t\rightarrow \omega_{i,e}(t)$ for $i\in \{1,2,3\}$ (11) $t\rightarrow \cos(\theta_1(t))$ and $t\rightarrow \cos(\theta_{1,e}(t))$ (12)  $t\rightarrow \sin(\theta_1(t))$ and $t\rightarrow \sin(\theta_{1,e}(t))$.}\label{figsqueeze}
        \end{center}
\end{figure}
In preparation for illustrating the application of the programming of KMDNets,
 as  a synchrosqueezing algorithm, to the decomposition
problem when $v$ and its modes are as in Figure \ref{figunknownf2},
Figure  \ref{figsqueeze} illustrates
the basic quantities we have just been developing. In particular,
\begin{itemize}
\item The functions
$W_c$ and $W_s$  are shown in Figures \ref{figsqueeze}.1 and \ref{figsqueeze}.2.

\item  The function $\tau \rightarrow (W_c(\tau,300),-W_s(\tau,300))$ is shown in
Figure \ref{figsqueeze}.3 with  $\tau$ the vertical axis. The functions  $\theta_e(\tau,300)$, $E(\tau,300)$
 and $\omega_e(\tau,300)$  are the phase, square modulus and angular velocity of this function.

\item The functions $(\tau,\omega)\rightarrow \theta_e(\tau,\omega)$, $\tau \rightarrow \theta_e(\tau,\omega_{i,E}(\tau) )$ (with $\omega_{i,E}$ defined in \eqref{eqkjhjhedbdjhdb}) and $t\rightarrow \theta_i(t)$
are shown in Figures \ref{figsqueeze}.4, 11 and 12.
Observe that $\tau \rightarrow \theta_e(\tau,\omega_{i,E}(\tau) )$  is an approximation of
 $\tau\rightarrow \theta_i(\tau)$.

\item The functions $(\tau,\omega)\rightarrow \omega_e(\tau,\omega)$, $\omega \rightarrow \omega_e(0.6,\omega)$ and $\tau \rightarrow \omega_e(\tau,\omega_{i,E}(\tau) )$
are shown in Figures  \ref{figsqueeze}.5, 6 and 10. Observe that $\tau \rightarrow \omega_e(\tau,\omega_{i,E}(\tau) )$ is an approximation of the instantaneous frequency $\tau\rightarrow \omega_{i}(\tau)=\dot{\theta}_i(\tau)$ of the mode $v_i$.

\end{itemize}

To describe the remaining components of Figure \ref{figsqueeze} and simultaneously complete
the application of the programming of KMDNets
 as  a synchrosqueezing algorithm and introduce  a {\em max-pool} version of synchrosqueezing, we now introduce
the synchrosqueezed energy $\mathcal{S}_E(\tau,\omega)$ and the  max-pool energy  $\mathcal{S}(\tau,\omega)$: 
Motivated by the synchrosqueezed transform introduced in Daubechies et al.~\cite{daubechies2011synchrosqueezed},
   the synchrosqueezed energy $\mathcal{S}_E(\tau,\omega)$ 
  is obtained by transporting the energy $E(\tau,\omega)$ via the map $(\tau,\omega)\rightarrow (\tau,\omega_{e}(\tau,\omega))$ (as discussed in Section \ref{sec_synchro},  especially near \eqref{eoejiiirir}
), and therefore satisfies
\[\int_{\omega_{\min}}^{\omega_{\max}} \varphi(\omega) \mathcal{S}_E(\tau,\omega)\,d\omega=
\int_{\omega_{\min}}^{\omega_{\max}} \varphi(\omega_e(\tau,\omega')) E(\tau,\omega')\,d\omega'\]
 for all regular test function $\varphi$, i.e.
\begin{equation}
\label{eqjehdgjhdjfh}
\mathcal{S}_E(\tau,\omega) =\lim_{\delta \rightarrow 0} \frac{1}{\delta}\int_{\omega':\omega \leq \omega_e(\tau,\omega')\leq \omega+ \delta } E(\tau,\omega')\,d\omega'\,,
\end{equation}
where numerically  approximate  \eqref{eqjehdgjhdjfh}
by taking $\delta $ small.

Returning to the  application,
the transport of the energy $E(\tau,\omega)$ via the map $(\tau,\omega)\rightarrow (\tau,\omega_{e}(\tau,\omega))$ is illustrated for $\tau=0.6$ by comparing the plots of
 the functions $\omega\rightarrow \omega_{e}(0.6,\omega)$ and  $\omega \rightarrow E(0.6,\omega)$
  in Figure \ref{figsqueeze}.6 with the function
 $\omega \rightarrow \mathcal{S}_E(0.6,\omega)$  shown in Figure  \ref{figsqueeze}.8.
 As in \cite{daubechies2011synchrosqueezed}, the value of $\mathcal{S}_E(\tau,\omega)$ (and  thereby the height of the peaks in Figure \ref{figsqueeze}.8) depends on the discretization and the measure $d\omega'$ used in the integration  \eqref{eqjehdgjhdjfh}.
For example, using a logarithmic discretization or replacing  the  Lebesgue measure $d\omega'$ by
 $\omega' d\omega'$ in \eqref{eqjehdgjhdjfh} will impact the height of those peaks.
 To avoid this dependence on the choice of  measure,
 we define the max-pool energy
\begin{equation}\label{eqklelhdekjdhud}
\mathcal{S}(\tau,\omega)=\max_{\omega':\omega_e(\tau,\omega')=\omega} E(\tau,\omega')\,,
\end{equation}
 illustrated in Figure \ref{figsqueeze}.9.
Comparing  Figures \ref{figsqueeze}.6, 7 and 8, observe that, although both
 synchrosqueezing and max-pooling decrease the width of the peaks of the energy plot $\omega \rightarrow E(0.6,\omega)$,
 only max-squeezing preserves their heights (as noted in \cite[Sec.~2]{daubechies2011synchrosqueezed} a discretization dependent weighting of $d\omega'$ would have to be introduced to avoid this dependence).

  \begin{figure}[h]
        \begin{center}
                        \includegraphics[width=\textwidth]{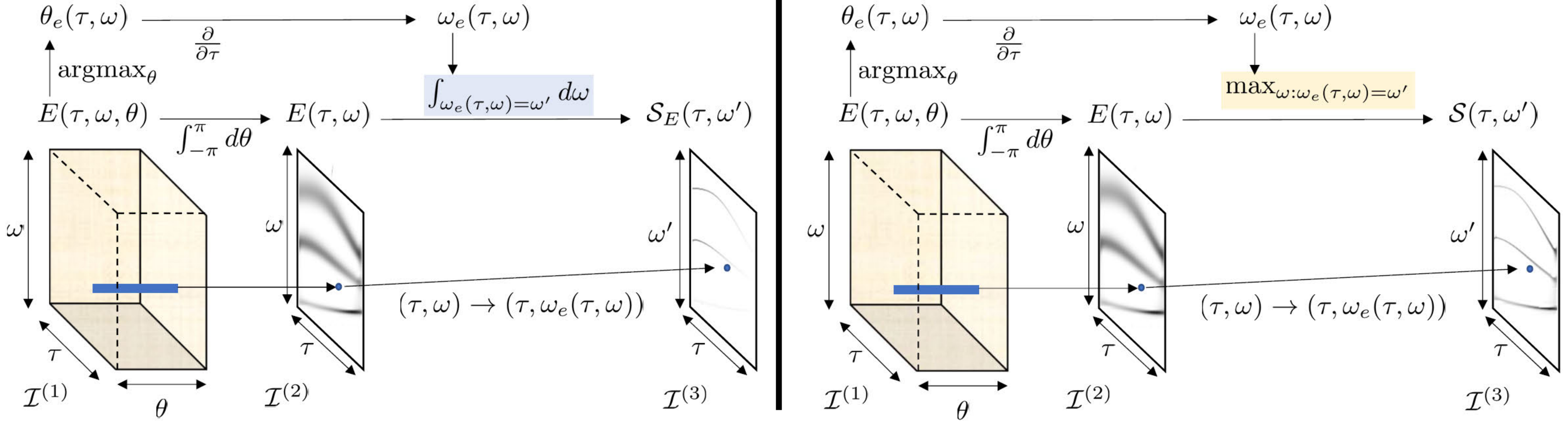}
                \caption{Synchrosqueezed (left) and max-pool (right) energies.}\label{shmd}
        \end{center}
\end{figure}

 Figure \ref{shmd} provides an interpretation of the synchrosqueezed and max-pool energies $\mathcal{S}_E(\tau,\omega)$ and $\mathcal{S}(\tau,\omega)$ in the setting of KMDNet programming, where we note that the left (synchrosqueezed)
and right (max-pool) sub-figures are identical except for the highlighted portions near their top center. In that interpretation $\I^{(1)}$ and $\I^{(2)}$ are, as in Section \ref{sectfdecom} and modulo the noise mode $\sigma$, respectively, the set of time-frequency-phase labels $(\tau,\omega,\theta)\in [0,1]\times
 [\omega_{\min},\omega_{\max}]\times (-\pi, \pi]$ and the set of time-frequency
 labels $(\tau,\omega)\in [0,1]\times [\omega_{\min},\omega_{\max}]$.
 Modulo the noise label $\sigma$, $\I^{(3)}$ is the range of $(\tau,\omega)\rightarrow (\tau,\omega_e(\tau,\omega))$ and the ancestors of $(\tau,\omega')\in \I^{(3)}$ are the  $(\tau,\omega)$ such that $\omega'=\omega_e(\tau,\omega)$.
 Then, in that interpretation, the synchrosqueezed energy is simply the level $3$ energy $E^{(3)}$, whereas $\mathcal{S}(\tau,\omega)$ is the level $3$
  max-pool energy  $\mathcal{S}^{(3)}$.
  Note that the proposed approach naturally generalizes to the case where the periodic waveform $y$ is known and non-trigonometric by simply replacing the cosine function in \eqref{eqkjdkjdjedhjks} by $y$.

\begin{figure}[h!]
        \begin{center}
                        \includegraphics[width=\textwidth]{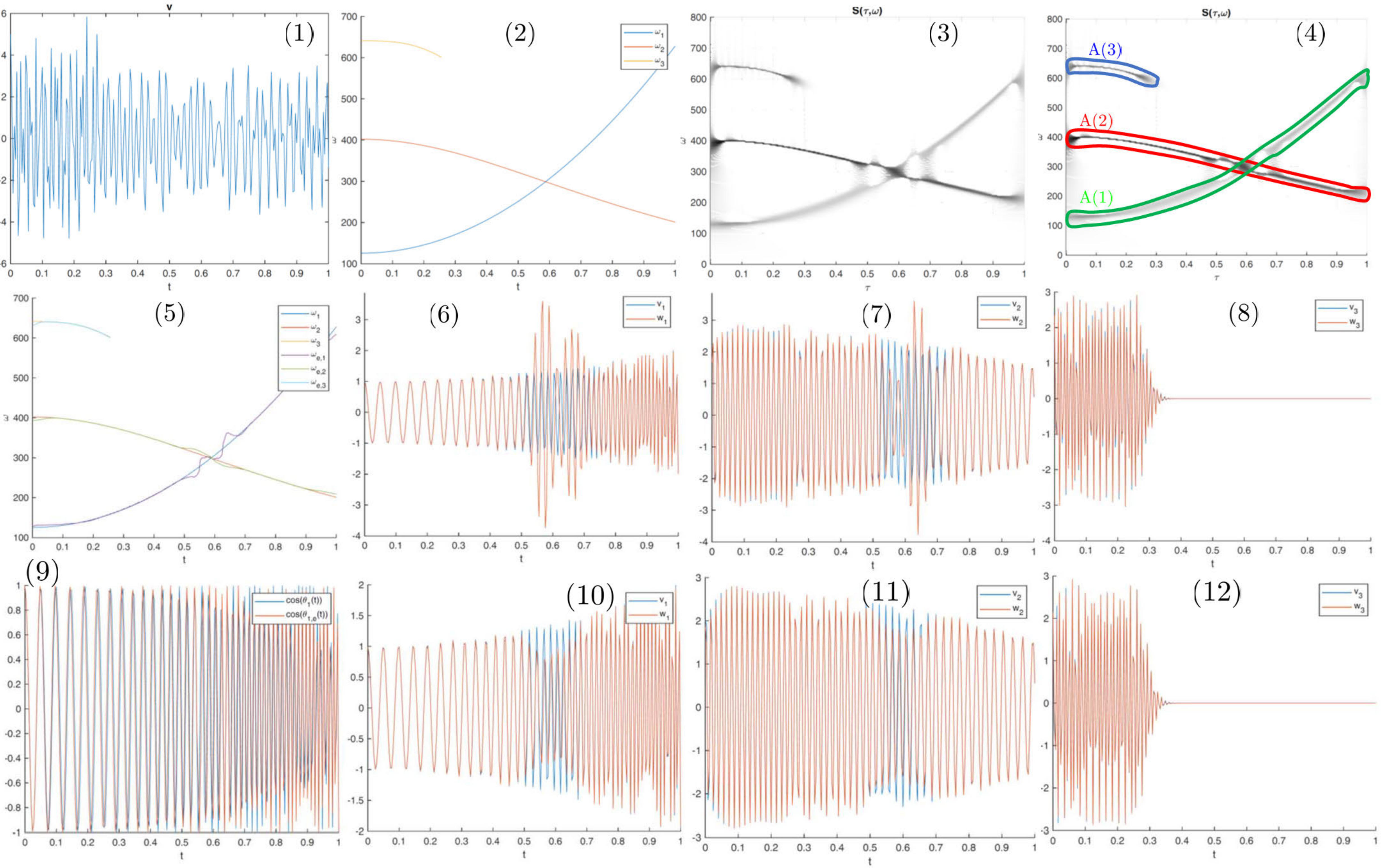}
                \caption{(1) The signal $v=v_1+v_2+v_3+v_\sigma$ where  $v_\sigma\sim \cN(0, \sigma^2 \updelta(s-t))$  and $\sigma=0.01$ (2)  instantaneous frequencies $t\rightarrow \omega_i(t)$ of the modes $i=1,2,3$ (3) $(\tau,\omega)\rightarrow \mathcal{S}(\tau,\omega)$ (4) Sub-domains $A(1), A(2)$ and $A(3)$ of the time-frequency domain (5) approximated instantaneous frequencies $t\rightarrow \omega_{i,e}(t)$ of the modes $i=1,2,3$ (6, 7, 8) $v_1, v_2, v_3$ and their approximations $w_1,w_2,w_3$ obtained from the network shown in Figure \ref{figcross1}  (9) phase $\theta_1$ and its approximation $\theta_{1,e}$ (10, 11, 12) $v_1, v_2, v_3$ and their approximations $w_1,w_2,w_3$ obtained from the network shown in Figure \ref{figcross2}.}\label{hmd3}
        \end{center}
\end{figure}

\subsection{Crossing instantaneous frequencies}
Let us now demonstrate the effectiveness of the max-pooling technique  in its ability to
perform mode recovery when the instantaneous frequencies of the modes cross.
Consider the noisy signal $v$ illustrated in  Figure \ref{hmd3}.1. This signal is composed of $4$ modes, $v=v_1+v_2+v_3+v_\sigma$, where $v_\sigma\sim \cN(0, \sigma^2 \updelta(s-t))$  is a white noise realization with  $\sigma=0.01$.
The modes $v_1,v_2,v_3$ are shown in Figures  \ref{hmd3}.6, 7 and 8, and their instantaneous frequencies $\omega_1, \omega_2, \omega_3$ are shown in Figure \ref{hmd3}.2 (see Footnote \ref{redblue}). Note that $\omega_1$ and $\omega_2$ cross each other around $t\approx 0.6$ and $v_3$ vanishes around $t\approx 0.3$. We  now program two KMDNets  and describe their accuracy in  recovering those modes.

 \begin{figure}[h]
        \begin{center}
                        \includegraphics[width=\textwidth]{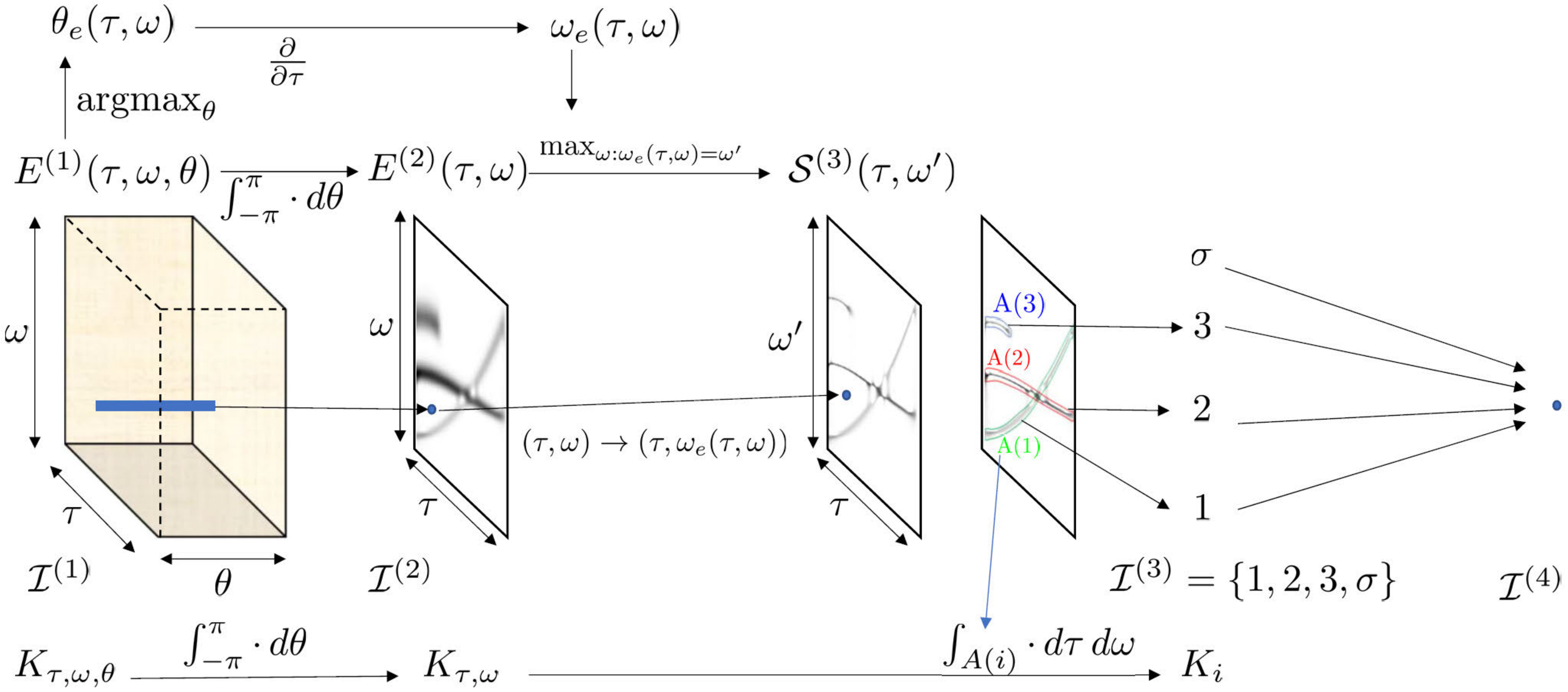}
                \caption{Recovery from domain decomposition. The left-hand side of the figure is
 that of the right-hand side (corresponding to max-pooling)  of Figure \ref{shmd}. The remaining part  is obtained by identifying three subsets $A(1), A(2), A(3)$ of the time-frequency domain $(\tau,\omega)$ and integrating the kernel $K_{\tau,\omega}$ (defined as in \eqref{eqjhedehdius}) over those subsets (as in \eqref{eqjehdjehddsdbis}).} \label{figcross1}
        \end{center}
\end{figure}

The first network, illustrated in Figures \ref{figcross1} and \ref{figblock4} recovers approximations to $v_1,v_2,v_3$ by identifying three subsets $A(1), A(2), A(3)$ of the time-frequency domain $(\tau,\omega)$ and integrating the kernel $K_{\tau,\omega}$ (defined as in \eqref{eqjhedehdius}) over those subsets (as in \eqref{eqjehdjehddsdbis}). For this example, the  subsets $A(1), A(2), A(3)$ are shown in Figure  \ref{hmd3}.4 and  identified as narrow sausages defined by the peaks  of the max-pool energy $\mathcal{S}^{(3)}(\tau,\omega')$ (computed as in \eqref{eqklelhdekjdhud}) shown in  \ref{hmd3}.3). The corresponding approximations  $w_1,w_2,w_3$ (obtained as in \eqref{eqkeldheiudhiud}) of the modes $v_1,v_2,v_3$ are shown in Figures \ref{hmd3}.6, 7 and 8. Note the increased approximation error around $t\approx 0.6$ corresponding to the crossing point between $\omega_1$ and $\omega_2$ and $A(1)$ and $A(2)$.
The estimated instantaneous frequencies $\omega_{i,e}(\tau)=\omega_e\big(\tau,\operatorname{argmax}_{\omega: (\tau,\omega)  \in A(i)}\mathcal{S}^{(3)}(\tau,\omega)\big)$ illustrated in  Figure \ref{hmd3}.5 also show an increased estimation error around that crossing point.

\begin{figure}[h]
        \begin{center}
                        \includegraphics[width=\textwidth]{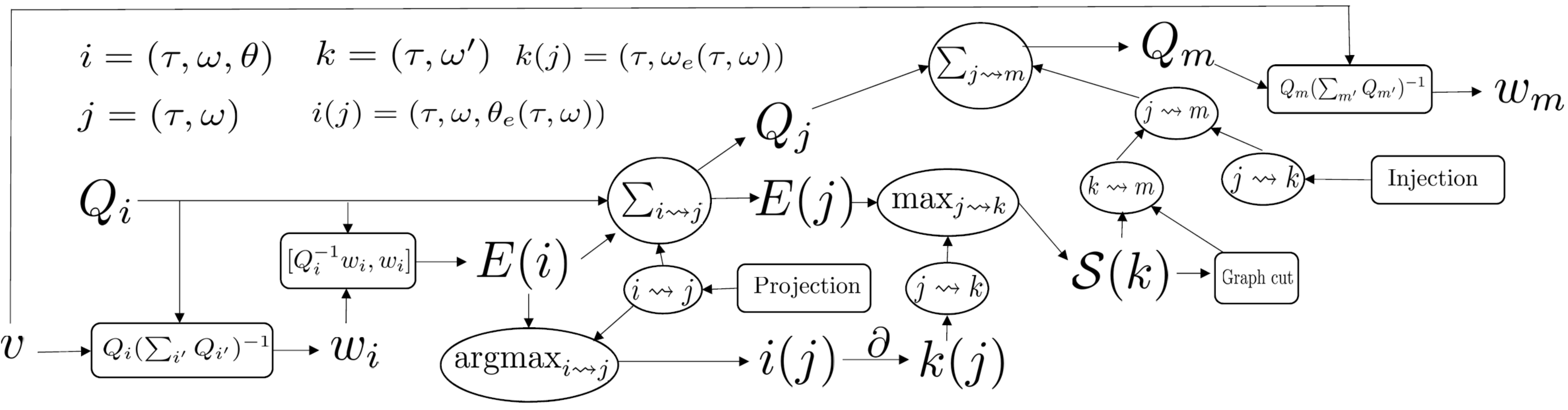}
                \caption{The KMDNet program corresponding to Figure \ref{figcross1}. Upper left  provides the symbolic connections between the indices $i,j,k$ and the time-frequency parameters along with the functional dependencies $i(j)$ and $k(j)$. Beginning with the input $v$ in the lower left, the operators $Q_{i}$ corresponding to the baby kernels $K_{\tau,\omega,\theta}$ are  used to produce optimal recovery estimates $w_{i}$ and the
 corresponding alignment energies $E(i)$. The projection function
$j(i)$ taking $(\tau,\omega, \theta)$ to $(\tau,\omega) $ is the relation $i \leadsto j$ which determines the
integration operation $\int{d\theta}$ indicated as $\sum_{i\leadsto j}$ which then determines summed energies
$E(j):=\sum_{i\leadsto j}{E(i)}$ and covariances $Q_{j}:=\sum_{i\leadsto j}{Q_{i}}$. Moreover, the projection $i \leadsto j$ also determines a max operation
$\arg \max_{\theta }$ which we denote by $\arg \max_{i \leadsto  j}$ and the resulting function
$\theta_{e}(\tau,\omega):=\arg \max_{\theta }{E_{\tau, \omega,\theta}}$, which determines the functional dependency
$i(j)=(\tau,\omega, \theta_{e}(\tau,\omega))$. This function is then differentiated to obtain the functional relation $k(j)=(\tau, \omega_{e}(\tau,\omega))$ where $\omega_{e}(\tau,\omega):=\frac{\partial}
{\partial_{\tau}}\theta_{e}(\tau,\omega)$.  This determines the relation $j \leadsto k$ which determines
the maximization operation $\max_{j \leadsto k}$ that, when applied to the alignment energies $E(j)$,
produces the max-pooled energies $\mathcal{S}(k)$. These energies are then used to determine a graph cut establishing  a relation $k \leadsto m$ where $m$ is a mode index. Combining this relation with the
injection $j \leadsto k$ determines the relation $j \leadsto m$, that then determines the summation
$\sum_{j \leadsto m}$ over the preimages of the relation, thus determining operators
$Q_{m}$ indexed by the mode $m$ by $Q_{m}:=\sum_{j \leadsto m}{Q_{j}}$. Optimal recovery is then applied
to obtain the estimates $w_{m}:=Q_{m}(\sum_{m'}{Q_{m'}})^{-1}$.}
\label{figblock4}
        \end{center}
\end{figure}
\vspace{1in}

 \begin{figure}[h]
        \begin{center}
                        \includegraphics[width=\textwidth]{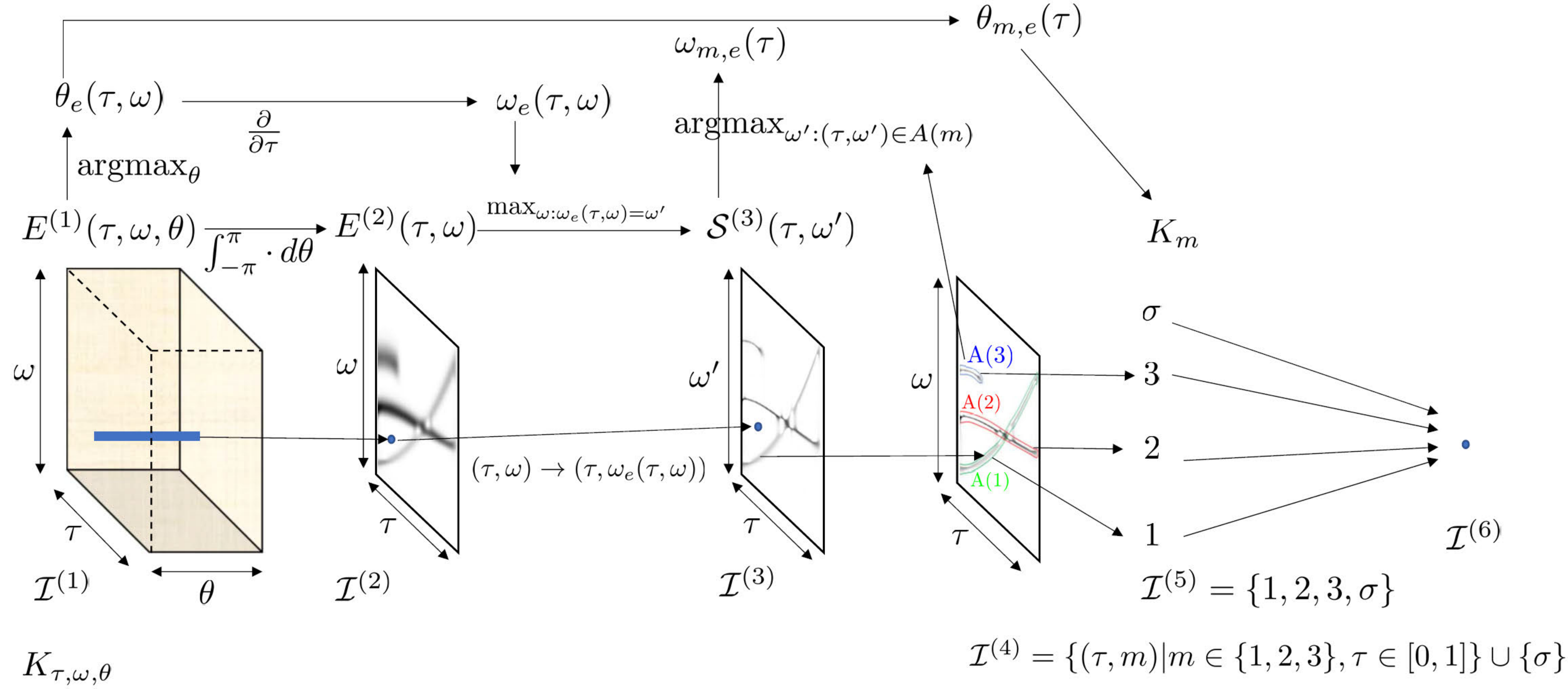}
                \caption{Recovery from instantaneous phases approximations.
The left-hand side of the figure is
 that of the right-hand side (corresponding to max-pooling)  of Figure \ref{shmd} and therefore also that of
Figure \ref{figcross1}, and proceeding to the right as in Figure \ref{figcross1},
 the three subsets $A(1), A(2), A(3)$ of the time-frequency domain $(\tau,\omega)$ and integrating the kernel $K_{\tau,\omega}$ (defined as in \eqref{eqjhedehdius}) over those subsets (as in \eqref{eqjehdjehddsdbis}).
 However, to define the kernels $K_{m}$ for the final optimal recovery, we define $\omega_{m,e}(\tau):=\arg \max_{\omega':(\tau, \omega') \in A(i)}
{\mathcal{S}^{3}(\tau, \omega')}$ to produce the $\theta$ function for each mode $m$ through
$\theta_{m,e}(\tau)=\theta_{e}(\tau,\omega_{m,e}(\tau))$. These functions are inserted into
\eqref{K_sdef} to produce $K_{m}$ and their associated operators $Q_{m}$  which are then used in the finally recovery
$w_{m}=Q_{m}(\sum_{m'}{Q_{m'}})^{-1}v$. }
\label{figcross2}
        \end{center}
\end{figure}
\begin{figure}[h!]
        \begin{center}
                        \includegraphics[width=\textwidth]{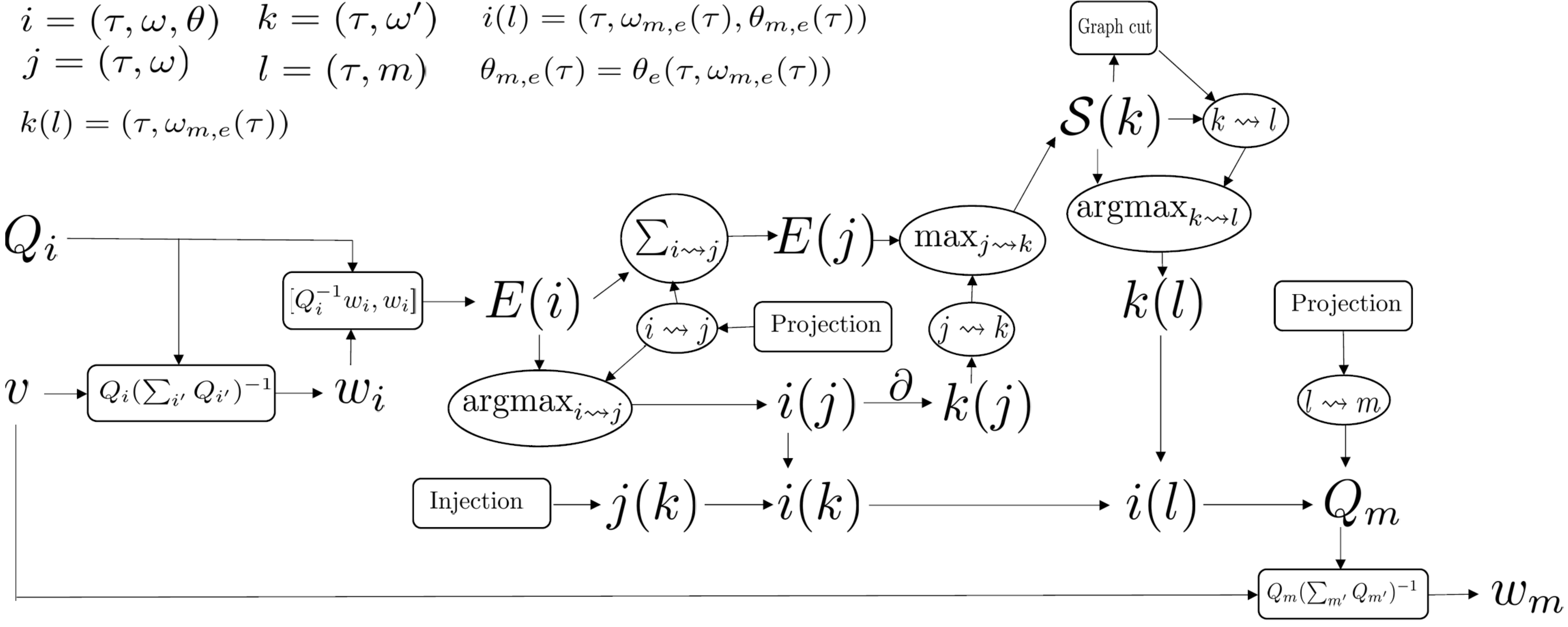}
                \caption{The KMDNet program corresponding to Figure \ref{figcross2}. Upper left  provides the symbolic connections between the indices $i,j,k, l$ and the time-frequency parameters along with the functional dependencies $i(l)$ and $k(l)$ and the definition of $\theta_{m,e}$.
 Beginning with the input $v$ in the lower left, ignoring the bottom two rows for the moment, we begin very much as in Figure \ref{figblock4} moving to the right until the determination of the energies $\mathcal{S}(k)$, the determination of a graph cut and its resulting  $k \leadsto l$, and the resulting $\arg\max$  relation
 $k(l):= \arg \max_{k \leadsto l}{\S(k)}$ which amounts to $k(l)=(\tau, \omega_{m,e}(\tau))$. Returning to the second row from the bottom,
we compose the functional relations of the injection $j(k)$ and the $\arg\max$ function $i(j)$ determined by the relation $i \leadsto j$ and
the energy $E(i)$, to obtain $i(k)$ and then compose this with the argmax function $k(l)$ to produce the functional dependence $i(l)$ defined by
$i(l)=(\tau,\omega_{m,e}(\tau),\theta_{m,e}(\tau))$.
 Using the projection $l \leadsto m$, this determines the function $\theta_{m,e}(\cdot)$ corresponding to the mode label $m$. These functions are inserted into
\eqref{K_sdef} to produce $K_{m}$ and their associated operators $Q_{m}$  which are then used in the finally recovery
$w_{m}=Q_{m}(\sum_{m'}{Q_{m'}})^{-1}v$.
}
\label{figblock5}
        \end{center}
\end{figure}

The second network, illustrated in Figures \ref{figcross2} and \ref{figblock5}, proposes a more robust approach based on the estimates $\theta_{i,e}$ of instantaneous phases $\theta_{i}$ obtained as
\begin{equation}
\label{eieburimg}
\theta_{i,e}(\tau)=\theta_e\big(\tau,\operatorname{argmax}_{\omega: (\tau, \omega)  \in A(i)}\mathcal{S}^{(3)}(\tau,\omega)\big)\,,
\end{equation}
where the $A(i)$ are obtained as in the first network, illustrated in Figure \ref{figcross1}, and
 $\theta_{e}(\tau,\omega)$, used in the definition   \eqref{eieburimg} of
 $\theta_{e,i}(\tau)$, is identified as in \eqref{eqjhdbejhdbs}.
To recover the modes $v_i$, the proposed network proceeds as in Example \ref{eglsdljdlekd} by introducing the kernels
\begin{equation}
\label{K_sdef}
K_i(s,t)=e^{-\frac{(t-s)^2}{\gamma^2}}\big(\cos(\theta_{i,e}(t))\cos(\theta_{i,e}(s))+\sin(\theta_{i,e}(t))\sin(\theta_{i,e}(s))\big)\,,
\end{equation}
with $\gamma =0.2$.
Defining $K_{\sigma}$ as in
\eqref{eqlkiejdheidnde}, the approximations  $w_1,w_2,w_3$  of the modes $v_1,v_2,v_3$, shown
 in  Figures \ref{hmd3}.10, 11 and 12, are obtained as in \eqref{eqkeldheiudhiud} with $f$ defined as the solution of $(K_1+K_2+K_3+K_\sigma) f=v$.
Note that the network illustrated in Figure \ref{figcross2} can be interpreted as the concatenation of $2$ networks. One aimed at estimating the instantaneous phases and the other aimed at recovering the modes based on those phases. This principle of network concatenation is evidently generic.

\clearpage
\section{Alignments calculated in $L^2$}\label{secall2}

The calculation of the energies for our prototypical application was done with respect to the inner product defined by the inverse of the operator associated with $K$ defined in \eqref{K_def}, i.e. the energy of the mode $(\tau,\omega,\theta)$ was defined as $E(\tau,\omega,\theta)=v^T K^{-1} K_{\tau,\omega,\theta} K^{-1}v$  with $K_{\tau,\omega,\theta}$ defined in \eqref{eienutnnhgdghd}. The computational complexity of the method can be accelerated by (1) using the $L^2$ inner product instead of the one defined by $K^{-1}$ (i.e. defining  the energy of the mode $(\tau,\omega,\theta)$ by $E_2(\tau,\omega,\theta)=v^T  K_{\tau,\omega,\theta} v$
 (2) localizing this calculation in a time-window centered around $\tau$ and of width proportional to $1/\omega$.

Our experiments show that this simplification lowers the computational complexity of the proposed approach without impacting its accuracy. Three points justify this observation:
(1) Replacing $E$ by $E_2$ is equivalent to calculating mean-squared alignments with respect to the $L^2$-scalar product instead of the one induced by the inverse of the operator defined by $K$ (2) In the limit where $\sigma \rightarrow \infty$ we have $E\approx \sigma^{-4} E_2$, therefore $E$ and $E_2$ are proportional to each other in the high noise regime (3) If $\omega_{\min}=0$ and $\omega_{\max}=\infty$ then $K_u$ defined by \eqref{eqjhdjbdehdds} is the identity operator on $L^2$.
We will now rigorously show that point (3)
 holds true when we extend the  $\tau$ domain from $[0,1]$ to $\R$ and when the base waveform is trigonometric, and then show in Section \ref{secunivagker} that this results holds true
 independently of the base waveform being used.

Let us recall   the Schwartz class of test functions
\[\mathcal{S}:= \{ f \in C^{\infty}(\R): \sup_{x \in \R}{|x^{m_{1}}D^{m_{2}}f(x)|}< \infty, m_{1}, m_{2} \in
 \mathbf{N} \}\] and the
 confluent hypergeometric function ${}_{1}F_{1}$,
defined by
\[ {}_{1}F_{1}(\alpha, \gamma; z)=1+\frac{\alpha}{\gamma}\frac{z}{1!}+\frac{\alpha(\alpha+1)}{\gamma(\gamma+1)}
\frac{z^{2}}{2!}+\frac{\alpha(\alpha+1)(\alpha+2)}{\gamma(\gamma+1)(\gamma+2)}
\frac{z^{3}}{3!} +  \ldots, \]
see e.g.~see Gradshteyn and Ryzhik \cite[Sec.~9.21]{Gradshteyn2000}.
\begin{Theorem}
\label{thm_Iden}
Consider   extending the definition  \eqref{eqjhdjbdehdds} of the kernel $K_{u}$ so that  the  range of  $\omega$ is extended from $[\omega_{\min},\omega_{\max}]$ to $\R_{+}$ and  that of  $\tau$ is
extended from $[0,1]$ to $\R$, so that
\[
K_{\beta}(s,t)=\int_{-\pi}^{\pi} \int_{\R_{+}} \int_{\R}  K_{\tau,\omega,\theta}(s,t) d\tau \,d\omega\, d\theta,\quad s,t \in \R\, ,
\]
where, as before,
\[
K_{\tau,\omega,\theta}(s,t):=\chi_{\tau,\omega,\theta}(s) \chi_{\tau,\omega,\theta}(t),\quad s,t \in \R,
\]
but where we have introduced a perturbation parameter $0 \leq \beta \leq 1$ defining the Gabor wavelets
 \begin{equation}\label{eqkjdkjdjedhjks2}
\chi_{\tau,\omega,\theta}(t):= \Bigl(\frac{2}{\alpha^{2}\pi^{3}}\Bigr)^\frac{1}{4} \omega^{\frac{1-\beta}{2}}
\cos\bigl(\omega (t-\tau)+\theta\bigr)e^{-\frac{ \omega^2 (t-\tau)^2}{\alpha^2}}, \qquad t \in \R\,,
\end{equation}
defining the elementary kernels.
Defining the scaling constant
\[H(\beta):=2^{\beta-1}\sqrt{\pi}(\sqrt{2}\alpha)^{1-\beta}\Gamma(\frac{\beta}{2}))e^{-\frac{\alpha^{2}}{2}} {}_{1}F_{1}\Bigl(\frac{\beta}{2},\frac{1}{2};\frac{\alpha^{2}}{2}\Bigr),
\]
let
$\mathcal{K}_{\beta}$ denote the integral operator
\[ \bigl(\mathcal{K}_{\beta}f\bigr)(s):=\frac{1}{H(\beta)}\int_{\R}{K_{\beta}(s,t)f(t)dt}\]
associated to the kernel $K_{\beta}$ scaled by $H(\beta)$.
 Then
we have the  semigroup property
 \[\mathcal{K}_{\beta_{1}}\mathcal{K}_{\beta_{2}}f =\mathcal{K}_{\beta_1+\beta_2}f, \quad f \in \mathcal{S}, \quad \beta_1,\beta_2 > 0,  \beta_{1}+\beta_{2} < 1\,,
\]
and
\[
 \lim_{\beta \rightarrow 0}{\bigl(\mathcal{K}_{\beta}f\bigr)(x)}=f(x), \quad x \in \R, \quad f \in \mathcal{S}
\]
where the limit is taken from above.

\end{Theorem}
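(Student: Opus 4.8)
The plan is to show that, once the compact $\theta$-integration and the Gaussian $\tau$-integration are carried out, the kernel $K_\beta$ is \emph{translation invariant}, so that $\mathcal{K}_\beta$ is a convolution operator, and then to identify its Fourier multiplier as exactly $|\xi|^{-\beta}$; both the semigroup law and the limit then reduce to the trivial facts $|\xi|^{-\beta_1}|\xi|^{-\beta_2}=|\xi|^{-(\beta_1+\beta_2)}$ and $|\xi|^{-\beta}\to 1$. First I would reduce $K_\beta$ to a one–dimensional convolution kernel. Using the product-to-sum identity on $\cos(\omega(s-\tau)+\theta)\cos(\omega(t-\tau)+\theta)$, the term carrying $2\theta$ averages to zero over $(-\pi,\pi]$ and only $\pi\cos(\omega(s-t))$ survives. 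Completing the square via $(s-\tau)^2+(t-\tau)^2=2\big(\tau-\tfrac{s+t}{2}\big)^2+\tfrac{(s-t)^2}{2}$ and performing the Gaussian integral over $\tau\in\R$ collapses the $\omega$- and $\alpha$-prefactors, leaving
\[
K_\beta(s,t)=\int_0^\infty \omega^{-\beta}\cos\!\big(\omega(s-t)\big)\,e^{-\omega^2(s-t)^2/(2\alpha^2)}\,d\omega .
\]
The substitution $u=\omega|s-t|$ then exhibits $K_\beta$ as a pure power law,
\[
K_\beta(s,t)=D(\beta)\,|s-t|^{\beta-1},\qquad D(\beta):=\int_0^\infty u^{-\beta}\cos u\;e^{-u^2/(2\alpha^2)}\,du ,
\]
with $D(\beta)$ finite for $0<\beta<1$; thus $\mathcal{K}_\beta f=\tfrac{D(\beta)}{H(\beta)}\big(|\cdot|^{\beta-1}*f\big)$ is convolution against a locally integrable Riesz-type kernel.

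Next I would pass to Fourier space. Rotating the contour in $\int_0^\infty x^{\beta-1}e^{-i\xi x}\,dx$ yields the classical transform $\mathcal{F}[|x|^{\beta-1}](\xi)=2\Gamma(\beta)\cos(\tfrac{\pi\beta}{2})\,|\xi|^{-\beta}$, so that $\widehat{\mathcal{K}_\beta f}(\xi)=\tfrac{D(\beta)}{H(\beta)}\,2\Gamma(\beta)\cos(\tfrac{\pi\beta}{2})\,|\xi|^{-\beta}\hat f(\xi)$. The crux is to verify that the prefactor is $1$, i.e.\ that $H(\beta)=2D(\beta)\Gamma(\beta)\cos(\tfrac{\pi\beta}{2})$. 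Evaluating $D(\beta)$ by the Gradshteyn--Ryzhik formula $\int_0^\infty u^{s-1}e^{-bu^2}\cos u\,du=\tfrac12 b^{-s/2}\Gamma(\tfrac s2)\,{}_1F_1(\tfrac s2;\tfrac12;-\tfrac1{4b})$ with $s=1-\beta$ and $b=1/(2\alpha^2)$ gives $-\tfrac1{4b}=-\tfrac{\alpha^2}{2}$, and Kummer's transformation ${}_1F_1(a;c;z)=e^z\,{}_1F_1(c-a;c;-z)$ converts this into the factor $e^{-\alpha^2/2}\,{}_1F_1(\tfrac{\beta}{2};\tfrac12;\tfrac{\alpha^2}{2})$ appearing in $H(\beta)$. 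The remaining gamma factors match through the Legendre duplication formula $\Gamma(\beta)=\pi^{-1/2}2^{\beta-1}\Gamma(\tfrac{\beta}{2})\Gamma(\tfrac{\beta+1}{2})$ and the reflection formula $\Gamma(\tfrac{\beta+1}{2})\Gamma(\tfrac{1-\beta}{2})=\pi/\cos(\tfrac{\pi\beta}{2})$, which combine to give $\Gamma(\beta)\Gamma(\tfrac{1-\beta}{2})\cos(\tfrac{\pi\beta}{2})=2^{\beta-1}\sqrt\pi\,\Gamma(\tfrac{\beta}{2})$. This pins down $\widehat{\mathcal{K}_\beta f}(\xi)=|\xi|^{-\beta}\hat f(\xi)$, so the normalized operator is precisely the Fourier multiplier $|\xi|^{-\beta}$.

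With the symbol identified, both conclusions follow. For the semigroup property, for Schwartz $f$ and $0<\beta_2<1$ the function $|\xi|^{-\beta_2}\hat f(\xi)$ is integrable (its origin singularity is integrable since $\beta_2<1$, and $\hat f$ decays at infinity), so $\mathcal{K}_{\beta_2}f$ is a bounded continuous function with transform $|\xi|^{-\beta_2}\hat f$; applying $\mathcal{K}_{\beta_1}$ multiplies the transform by $|\xi|^{-\beta_1}$, and the hypothesis $\beta_1+\beta_2<1$ keeps $|\xi|^{-(\beta_1+\beta_2)}\hat f$ integrable, so Fourier inversion gives $\mathcal{K}_{\beta_1}\mathcal{K}_{\beta_2}f=\mathcal{K}_{\beta_1+\beta_2}f$. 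For the limit, $|\xi|^{-\beta}\to1$ pointwise as $\beta\downarrow0$, and for $0<\beta\le\beta_0<1$ the integrand in the inversion formula is dominated by $|\hat f(\xi)|\big(\mathbf{1}_{|\xi|\ge1}+|\xi|^{-\beta_0}\mathbf{1}_{|\xi|<1}\big)\in L^1$, so dominated convergence yields $\lim_{\beta\downarrow0}(\mathcal{K}_\beta f)(x)=f(x)$ for every $x\in\R$.

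The main obstacle is the normalization identity $H(\beta)=2D(\beta)\Gamma(\beta)\cos(\tfrac{\pi\beta}{2})$: the exact value of $D(\beta)$ must be secured through the correct ${}_1F_1$-integral and then reshaped by Kummer's transformation so that both the sign of its hypergeometric argument and its gamma factors coincide with the stated $H(\beta)$, where the duplication and reflection identities are indispensable. A secondary point requiring care is that the intermediate function $\mathcal{K}_{\beta_2}f$ is \emph{not} Schwartz (its transform has a mild singularity at $\xi=0$ and it decays only like $|x|^{-(1-\beta_2)}$), so the Fourier-multiplier step must be justified in the class of functions with $L^1$ Fourier transform rather than on $\mathcal{S}$ itself.
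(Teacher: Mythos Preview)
Your proposal is correct and follows essentially the same route as the paper: both carry out the $\theta$-integration via the product-to-sum identity and the Gaussian $\tau$-integration (the paper does this without completing the square but arrives at the same expression $K_\beta(s,t)=\text{const}\cdot\int_0^\infty\omega^{-\beta}\cos(\omega(s-t))e^{-\omega^2(s-t)^2/(2\alpha^2)}d\omega$), then identify $K_\beta(s,t)$ as a constant multiple of the Riesz kernel $|s-t|^{\beta-1}$ via the Gradshteyn--Ryzhik ${}_1F_1$ integral, and finally match the normalizing constant $H(\beta)$ to the standard Riesz-potential normalization. The only difference is in the last step: the paper, having recognized $\mathcal{K}_\beta$ as the Riesz potential $I^\beta$ with Helgason's normalization $\bar H(\beta)=2^\beta\pi^{1/2}\Gamma(\beta/2)/\Gamma((1-\beta)/2)$, simply invokes Helgason's Lemma~5.4 and Proposition~5.5 for the semigroup property and the $\beta\downarrow0$ limit, whereas you unpack that citation by computing the Fourier multiplier $|\xi|^{-\beta}$ explicitly and running dominated convergence. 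Your route is self-contained and makes the role of the constraint $\beta_1+\beta_2<1$ transparent (integrability at $\xi=0$); the paper's route is shorter but externalizes exactly the Fourier argument you wrote out. Your use of Kummer's transformation is an extra step only because you quote the GR formula with negative hypergeometric argument; the paper quotes the variant with positive argument directly, so no Kummer step is needed there.
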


\section{Universality of  the aggregated kernel}\label{secunivagker}

Let
\[ y(t):=\sum_{-N}^{N}{c_{n}e^{int}}\]
be the Fourier expansion of a general  $2\pi$ periodic complex-valued  waveform, which we will refer to as the {\em base waveform}, and use it to define  wavelets
\[\chi_{\tau,\omega,\theta}(t):=\omega^{\frac{1-\beta}{2}} y\bigl(\omega(t-\tau)+\theta\bigr) e^{-\frac{\omega^{2}}{\alpha^{2}}|t-\tau|^{2}}\]
as in  the $\beta$-parameterized wavelet versions of \eqref{eqkjdkjdjedhjks} in Theorem \ref{thm_Iden},
using the waveform $y$ instead of the cosine. The following lemma evaluates the aggregated kernel
\begin{equation}
\label{K_unknown}
K_{\beta}(s,t):=\Re \int_{-\pi}^{\pi}\int_{\R_{+}}\int_{\R}{\chi_{\tau,\omega,\theta}(s)\chi^{*}_{\tau,\omega,\theta}(t)
 d\tau d\omega d\theta}\, .
\end{equation}
\begin{Lemma}
\label{lem_unknown}
Define the norm
 \begin{equation}
\label{def_norm}
\|y\|^{2}:=\sum_{n=-N}^{N}{e^{-\frac{|n|\alpha^{2}}{2}}|c_{n}|^{2}}
\end{equation}
of the base waveform $y$.
We have
\[ K_{\beta}(s,t)=2\pi|s-t|^{\beta-1}\sum_{n=-N}^{N}{a_{n}(s,t)|c_{n}|^{2}}\]
where
\[a_{n}(s,t)=   \frac{\alpha\sqrt{\pi}}{2\sqrt{2}}(\sqrt{2}\alpha)^{1-\beta}\Gamma(\frac{1-\beta}{2})
e^{-\frac{|n|\alpha^{2}}{2}} {}_{1}F_{1}\Bigl(\frac{\beta}{2};\frac{1}{2};\frac{|n|\alpha^{2}}{2}\Bigr)\,. \]
In particular, at $\beta=0$ we have
\[ K_{0}(s,t)=\alpha^{2}\pi^{2}|s-t|^{-1}\|y\|^{2}\, .
\]
\end{Lemma}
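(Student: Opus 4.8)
The plan is to evaluate the triple integral in \eqref{K_unknown} one variable at a time, in the order $\theta$, then $\tau$, then $\omega$. Because the Fourier sum defining $y$ is finite and the integrand decays like a Gaussian in both $\tau$ and $\omega$ (with $\omega^{-\beta}$ integrable at the origin precisely because $\beta<1$), everything is absolutely convergent, so Fubini's theorem and the interchanges of summation with integration below are justified. First I would insert the Fourier expansions $y(\omega(s-\tau)+\theta)=\sum_{m}c_{m}e^{im(\omega(s-\tau)+\theta)}$ and $\overline{y(\omega(t-\tau)+\theta)}=\sum_{n}\overline{c_{n}}e^{-in(\omega(t-\tau)+\theta)}$ into $\chi_{\tau,\omega,\theta}(s)\chi^{*}_{\tau,\omega,\theta}(t)$, factoring out the common envelope $\omega^{1-\beta}e^{-\frac{\omega^{2}}{\alpha^{2}}(|s-\tau|^{2}+|t-\tau|^{2})}$, which depends on neither $\theta$ nor the mode indices. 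The inner $\theta$-integration then reduces to the orthogonality relation $\int_{-\pi}^{\pi}e^{i(m-n)\theta}\,d\theta=2\pi\delta_{mn}$, collapsing the double sum onto its diagonal and leaving $2\pi\sum_{n}|c_{n}|^{2}e^{i\omega n(s-t)}$ multiplied by the envelope.

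For the $\tau$-integration I would complete the square using $|s-\tau|^{2}+|t-\tau|^{2}=2\bigl(\tau-\tfrac{s+t}{2}\bigr)^{2}+\tfrac12(s-t)^{2}$, which reduces it to a single Gaussian integral and produces $\tfrac{\alpha}{\omega}\sqrt{\tfrac{\pi}{2}}\,e^{-\frac{\omega^{2}(s-t)^{2}}{2\alpha^{2}}}$. Writing $\rho:=s-t$, collecting the powers of $\omega$ (note $\omega^{1-\beta}\cdot\omega^{-1}=\omega^{-\beta}$), and taking the real part required by the $\Re$ in \eqref{K_unknown}, the problem is reduced to the single scalar integral $\tfrac{\alpha\sqrt{\pi}}{\sqrt{2}}\int_{0}^{\infty}\omega^{-\beta}e^{-\frac{\rho^{2}}{2\alpha^{2}}\omega^{2}}\cos(n\rho\,\omega)\,d\omega$ for each $n$.

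The crux is this last integral, which I would compute by expanding the cosine in its Taylor series and integrating term by term against the Gaussian via $\int_{0}^{\infty}\omega^{2k-\beta}e^{-a\omega^{2}}\,d\omega=\tfrac12 a^{-(k+\frac{1-\beta}{2})}\Gamma\!\bigl(k+\tfrac{1-\beta}{2}\bigr)$ with $a=\tfrac{\rho^{2}}{2\alpha^{2}}$. Using $\Gamma(k+\tfrac{1-\beta}{2})=\Gamma(\tfrac{1-\beta}{2})(\tfrac{1-\beta}{2})_{k}$ and the Pochhammer identity $(2k)!=4^{k}k!\,(\tfrac12)_{k}$, the series sums to $\tfrac12 a^{-\frac{1-\beta}{2}}\Gamma(\tfrac{1-\beta}{2})\,{}_{1}F_{1}\bigl(\tfrac{1-\beta}{2};\tfrac12;-\tfrac{n^{2}\alpha^{2}}{2}\bigr)$, since $-\tfrac{(n\rho)^{2}}{4a}=-\tfrac{n^{2}\alpha^{2}}{2}$. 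Kummer's transformation ${}_{1}F_{1}(a;c;z)=e^{z}\,{}_{1}F_{1}(c-a;c;-z)$ recasts this as $e^{-\frac{n^{2}\alpha^{2}}{2}}\,{}_{1}F_{1}\bigl(\tfrac{\beta}{2};\tfrac12;\tfrac{n^{2}\alpha^{2}}{2}\bigr)$, while $a^{-\frac{1-\beta}{2}}=(\sqrt{2}\alpha)^{1-\beta}|s-t|^{\beta-1}$ supplies the announced $|s-t|^{\beta-1}$ prefactor. Gathering the overall constant $2\pi\cdot\tfrac{\alpha\sqrt{\pi}}{\sqrt{2}}\cdot\tfrac12=\tfrac{\alpha\sqrt{\pi}}{2\sqrt{2}}\cdot 2\pi$ identifies $a_{n}(s,t)$ exactly, and the case $\beta=0$ follows at once from ${}_{1}F_{1}(0;\tfrac12;\cdot)=1$ and $\Gamma(\tfrac12)=\sqrt{\pi}$, giving $K_{0}(s,t)=\alpha^{2}\pi^{2}|s-t|^{-1}\sum_{n}|c_{n}|^{2}e^{-n^{2}\alpha^{2}/2}$, i.e.\ $\alpha^{2}\pi^{2}|s-t|^{-1}\|y\|^{2}$.

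I expect the only real difficulty to be bookkeeping: tracking the normalizing constants cleanly through the three successive integrations and matching the term-by-term series to ${}_{1}F_{1}$ (the Pochhammer identity for $(2k)!$ is the one place an error is easy to make), while the analytic justifications are routine consequences of absolute convergence once $\beta<1$. A minor point worth flagging is that the argument emerging from the computation is $\tfrac{n^{2}\alpha^{2}}{2}$; this coincides with the $\tfrac{|n|\alpha^{2}}{2}$ appearing in the statement exactly for the modes $n=\pm1$ (hence for the trigonometric waveform underlying Theorem \ref{thm_Iden}), so the two forms agree in that case.
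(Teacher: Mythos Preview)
Your approach is essentially identical to the paper's: integrate in the order $\theta$, $\tau$, $\omega$, using Fourier orthogonality for $\theta$, the Gaussian integral (completing the square) for $\tau$, and the confluent-hypergeometric evaluation for $\omega$. The only cosmetic difference is that the paper quotes the Gradshteyn--Ryzhik identity \eqref{id_Grad} for the $\omega$-integral, whereas you rederive it via the cosine Taylor series plus Kummer's transformation; both routes give the same thing.

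Your closing remark is well taken: the computation genuinely yields $\tfrac{n^{2}\alpha^{2}}{2}$, and indeed the paper's own substitution $a=|n||s-t|$, $p^{2}=\tfrac{|s-t|^{2}}{2\alpha^{2}}$ gives $\tfrac{a^{2}}{4p^{2}}=\tfrac{n^{2}\alpha^{2}}{2}$, not $\tfrac{|n|\alpha^{2}}{2}$ as written in the statement and in \eqref{def_norm}. This appears to be a slip in the paper; as you observe, the two expressions coincide for the trigonometric case $n=\pm 1$ underlying Theorem~\ref{thm_Iden}.
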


\subsection{Characterizing the norm
$\sum_{n=-N}^{N}{e^{-\frac{|n|\alpha^{2}}{2}}|c_{n}|^{2}}$}
The norm \eqref{def_norm}
of the function
$ y(t):=\sum_{-N}^{N}{c_{n}e^{int}}$  is expressed in terms of its Fourier coefficients
$c_{n}$.  The following lemma evaluates it directly in terms of the function $y$.
\begin{Lemma}
\label{lem_norm}
The norm  \eqref{def_norm} of the function $ y(t):=\sum_{-N}^{N}{c_{n}e^{int}}$
satisfies
\[\|y\|^{2}=\int_{-\pi}^{\pi}\int_{-\pi}^{\pi}{G(t,t')y(t)y^{*}(t')dtdt'}
\]
where
\[G(t,t')= 2\pi
\frac{\sinh(\frac{\alpha^{2}}{2})}{\cosh(\frac{\alpha^{2}}{2})-\cos(t-t')},\quad t,t' \in [-\pi,\pi]\, .\]
\end{Lemma}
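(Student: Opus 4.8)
The plan is to recognize that the kernel $G(t,t')$ is, up to a multiplicative constant, the classical Poisson kernel evaluated at radius $\rho := e^{-\alpha^{2}/2}\in(0,1)$, and then to read off the weighted norm by Fourier orthogonality. The entire argument rests on the identity
\begin{equation}
\sum_{n=-\infty}^{\infty} \rho^{|n|} e^{in\phi} = \frac{1-\rho^{2}}{1 - 2\rho\cos\phi + \rho^{2}}, \qquad \phi\in\R,
\end{equation}
which I would establish by splitting the sum into its $n\geq 0$ and $n<0$ geometric tails (both convergent since $\rho<1$), summing each as $\rho e^{\pm i\phi}/(1-\rho e^{\pm i\phi})$, and combining over a common denominator.

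First I would substitute $\rho = e^{-\alpha^{2}/2}$ and divide the numerator and denominator on the right by $2\rho$. Since $(1/\rho-\rho)/2 = \sinh(\alpha^{2}/2)$ and $(1/\rho+\rho)/2 = \cosh(\alpha^{2}/2)$, the right-hand side becomes exactly $\sinh(\alpha^{2}/2)\big/\big(\cosh(\alpha^{2}/2)-\cos\phi\big)$. Taking $\phi = t-t'$ and inserting the prefactor from the statement, I obtain the Fourier expansion
\begin{equation}
G(t,t') = 2\pi \sum_{n=-\infty}^{\infty} e^{-|n|\alpha^{2}/2}\, e^{in(t-t')}, \qquad t,t'\in[-\pi,\pi],
\end{equation}
which exhibits $G$ as a convolution kernel diagonalized by the characters $e^{int}$, with eigenvalues precisely the weights $e^{-|n|\alpha^{2}/2}$ defining the norm.

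Next I would plug the finite expansions $y(t) = \sum_{m} c_{m} e^{imt}$ and $y^{*}(t') = \sum_{k} c_{k}^{*} e^{-ikt'}$ into the double integral. Because $y$ is a trigonometric polynomial the sums are finite and every interchange with the integral is immediate, so the computation reduces to the elementary orthogonality integrals $\int_{-\pi}^{\pi} e^{i(n+m)t}\,dt$ and $\int_{-\pi}^{\pi} e^{-i(n+k)t'}\,dt'$. These force $m=-n$ and $k=-n$, collapsing the triple sum to a single sum over $n$ of $e^{-|n|\alpha^{2}/2} c_{-n} c_{-n}^{*}$; relabeling $n\mapsto -n$ then recovers $\sum_{n} e^{-|n|\alpha^{2}/2}|c_{n}|^{2} = \|y\|^{2}$.

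I do not expect any analytic obstacle here: the finiteness of the spectrum of $y$ trivializes all convergence and interchange questions, and the geometric series for the Poisson sum converges absolutely for $\rho<1$. The only point demanding care is the bookkeeping of the $2\pi$ normalization factors — the prefactor carried by $G$ together with the two factors of $2\pi$ generated by the orthogonality integrals must be tracked explicitly so that they combine into the overall constant appearing in the statement. Thus the genuine content of the lemma is entirely concentrated in the Poisson-kernel identity of the first display, with the remainder being routine Fourier orthogonality and constant accounting.
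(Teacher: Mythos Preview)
Your proposal is correct and reaches the result by a more direct route than the paper. The paper factors the weight $e^{-|n|\alpha^{2}/2}$ as a square: it introduces the auxiliary function $\phi(s)=\sum_{n}e^{-|n|\alpha^{2}/4}e^{ins}$ (note the exponent $\alpha^{2}/4$, half of yours), uses Parseval and the convolution theorem to write $\|y\|^{2}=\|\phi\star y\|_{L^{2}}^{2}$, expands that $L^{2}$ norm as a double integral against the autocorrelation kernel $G(t,t')=\int\phi(s-t)\phi(s-t')\,ds$, and only then evaluates $G$ in closed form via the same geometric-series identity you invoke. You skip the square-root factorization entirely and diagonalize the quadratic form in one stroke by recognizing $G$ as a Poisson kernel with known Fourier series; this is shorter and equally rigorous. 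The paper's detour has the incidental virtue of exhibiting $G$ as an autocorrelation (hence manifestly positive semidefinite), which is not needed for the lemma but ties into the operator-theoretic viewpoint of the surrounding section.

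One warning on the bookkeeping you flag in your last paragraph: if you actually carry it through, the two orthogonality integrals each contribute a factor $2\pi$ and $G$ already carries a $2\pi$, so the double integral evaluates to $(2\pi)^{3}\|y\|^{2}$ rather than $\|y\|^{2}$. The paper's own proof is loose at exactly the corresponding steps (it silently drops the $1/(2\pi)$ factors from its convolution and Parseval conventions), so this is a normalization slip in the stated lemma rather than a defect in your strategy; your argument correctly identifies the kernel shape and the eigenvalues.
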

\begin{Remark}
The norm \eqref{def_norm} is clearly insensitive to the size of the high frequency (large $n$) components
$c_{n}e^{int}$ of $y$.  On the other hand, the alternative representation
of this norm in Lemma \ref{def_norm} combined with the fact that
 the kernel $G$
 satisfies
\[\frac{\sinh(\frac{\alpha^{2}}{2})}{\cosh(\frac{\alpha^{2}}{2})+1} \leq G(t,t')\leq 2\pi
\frac{\sinh(\frac{\alpha^{2}}{2})}{\cosh(\frac{\alpha^{2}}{2})-1}, \quad t,t' \in [-\pi,\pi]\, ,\]
which, for $\alpha \geq 10$, implies
\[1-10^{-21} \leq G(t,t')\leq 1+10^{-21},
\quad t,t' \in [-\pi,\pi]\, ,\]
implies that
\[ \Bigl|  \|y\|^{2} - \bigl|\int_{-\pi}^{\pi}{y(t)dt}\bigr|^{2}\Bigr| \leq
 10^{-21}\bigl|\int_{-\pi}^{\pi}{|y(t)|dt}\bigr|^{2}
\]
that is, $\|y\|^{2}$ is exponentially close to the square of its integral.
\end{Remark}

\section{Non-trigonometric waveform and iterated  KMD}
\begin{figure}[h!]
	\begin{center}
			\includegraphics[width=\textwidth]{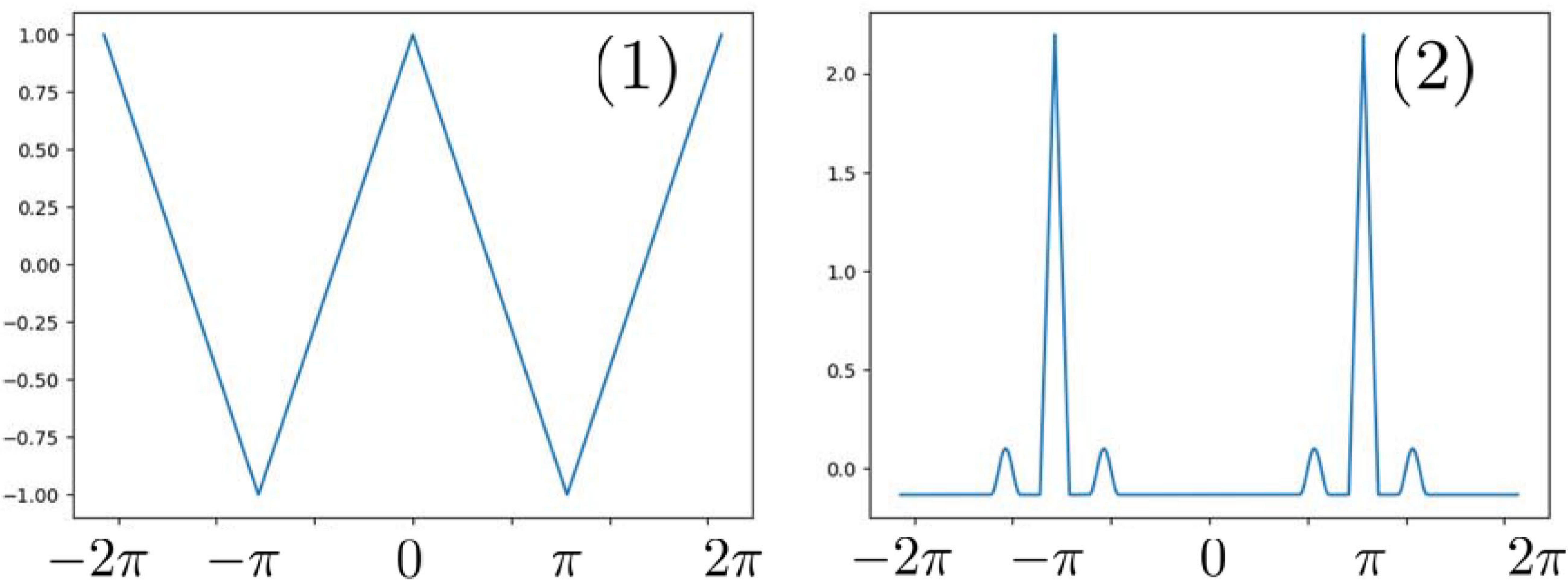}
		\caption{(1) Triangle base waveform (2) EKG base waveform.}\label{waves}
	\end{center}
\end{figure}

\begin{figure}[h]
	\begin{center}
			\includegraphics[width=\textwidth]{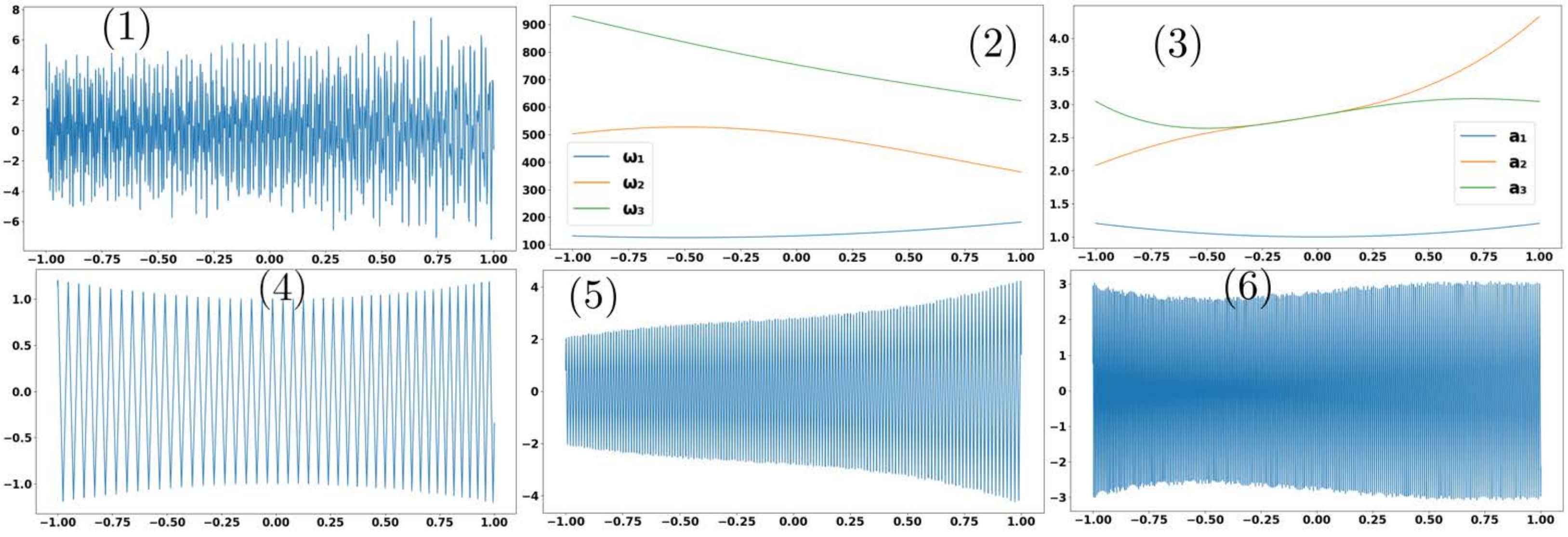}
		\caption{ Triangle base waveform: (1)  Signal $v$ (2) Instantaneous frequencies $\omega_i := \dot{\theta}_i$ (3) Amplitudes $a_i$ (4, 5, 6) Modes $v_1$, $v_2$, $v_3$.}
\label{iter example}
	\end{center}
\end{figure}

\begin{figure}[h]
	\begin{center}
			\includegraphics[width=\textwidth]{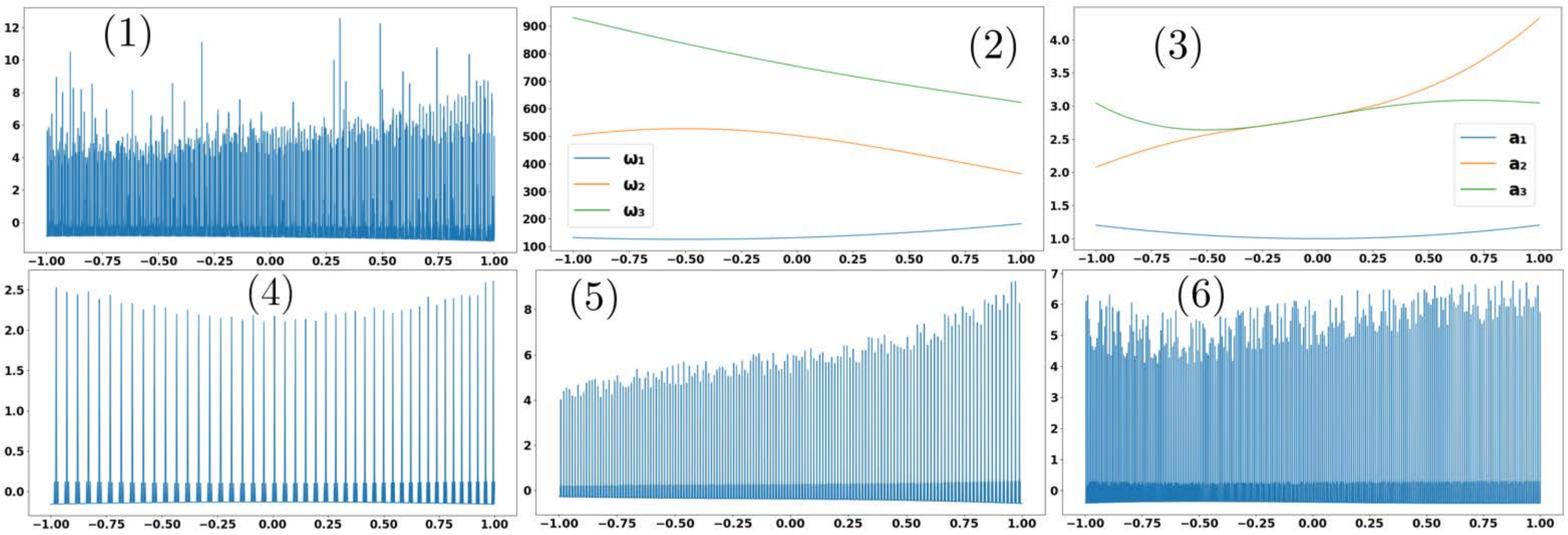}
		\caption{EKG base waveform: (1) Signal $v$ (2)  Instantaneous frequencies $\omega_i := \dot{\theta}_i$ (3) Amplitudes $a_i$ (4, 5, 6) Modes $v_1$, $v_2$, $v_3$.}
\label{iter example bio}
	\end{center}
\end{figure}

We will now consider the mode recovery Problem \ref{pb2} generalized  to the case where the
base waveform of each mode is the same known, possibly non-trigonometric, square-integrable $2\pi$-periodic function $t \rightarrow y(t)$.
The objective of this problem  can be loosely expressed as solving  the following generalization of
Problem \ref{pb2}  towards the resolution of the more general Problem \ref{unk wave pb}.
We now switch  the time domain from $[0,1]$ to $[-1,1]$.
 \begin{Problem}\label{pb3}
For $m\in \mathbb{N}^*$, let $a_1,\ldots,a_m$ be piecewise smooth  functions on $[-1,1]$,  let $\theta_1,\ldots,\theta_m$ be strictly  increasing  functions on $[-1,1]$, and let $y$
 be a square-integrable $2\pi$-periodic function.
Assume that $m$ and the $a_i, \theta_i$ are unknown and the base waveform $y$ is known. 
 We further assume that, for some
$\epsilon>0$, $a_i(t) > \epsilon $ and that $\dot{\theta}_i(t) / \dot{\theta}_j(t) \not\in [1-\epsilon, 1+\epsilon]$ for all $i,
j, t$. Given the observation
$v(t)=\sum_{i=1}^m a_i(t)y\big(\theta_i(t)\big)$ (for $t\in [-1,1]$) recover the modes $v_i:=a_i(t)y\big(\theta_i(t)\big)$.
\end{Problem}

\begin{Example}
Figure \ref{waves} shows two full periods of two  $2\pi$-periodic base waveforms (triangle and EKG) which we will use in our numerical experiments/illustrations.
The EKG (-like) waveform  is  $\big(y_{EKG}(t)-(2\pi)^{-1}\int_0^{2\pi} y_{EKG}(s)\,ds\big)/\|y_{EKG}\|_{L^2([0,2\pi))}$
with $y_{EKG}(t)$ defined on $[0,2\pi)$ as (1) $0.3 - |t - \pi|$ for $|t - \pi| < 0.3 $
 (2) $0.03 \cos^{2}(\frac{\pi}{0.6}(t - \pi + 1)) $ for $|t - \pi + 1| < 0.3$
(3) $0.03 \cos^{2}(\frac{\pi}{0.6}(t - \pi - 1)) $ for $ |t - \pi - 1| < 0.3$ and (4) $0$ otherwise.
\end{Example}

Our approach, summarized in Algorithm \ref{iteralgn} and explained in the following sections,  will be to (1)
 use the max-pool energy $\mathcal{S}$ \eqref{eqklelhdekjdhud}
to obtain, using \eqref{eieburimg}, an estimate of the phase $\theta_\low(t)$ associated with the lowest instantaneous frequency   $\omega_\low=\dot{\theta}_\low$ (as described in Section \ref{subseclowest}) (2) iterate a
{\em  micro-local}  KMD (presented in Section \ref{secmicrolocalkmd})
of the signal $v$ to obtain a highly accurate estimate of the phase/amplitude $\theta_i, a_i$ of their corresponding mode $v_i$ (this iteration can achieve near machine-precision accuracies  when the instantaneous frequencies are separated) (3) Peel off the mode $v_i$ from $v$ (4) iterate to obtain all the modes (5) perform a last micro-local KMD of the signal for higher accuracy.
To illustrate this approach, in the next two sections  we will apply it to the signals $v$
 displayed in Figures \ref{iter example} and \ref{iter example bio},
where the modes of Figure \ref{iter example} are triangular and those of Figure \ref{iter example bio} are EKG.

\subsection{The Micro-local KMD module}\label{secmicrolocalkmd}
We will now describe the micro-local KMD module, which will form the basis for the iterated micro-local KMD algorithm described in Section \ref{sec_microlocalKMDalg}.  It takes a time $\tau$, an estimated phase function of $i$-th mode $\theta_{i,e}$, and a signal $f$, not necessarily equal to $v$.  Suppose the $i$-th mode is of form $v_i(t) = a_i(t) y(\theta_i(t))$ and is indeed a mode within $f$.  The module outputs, (1) an estimate $a(\tau,\theta_{i, e},f)$ of the amplitude $a_i(\tau)$ of the mode $v_i$  and (2) a correction $\delta \theta (\tau,\theta_{i,e},f)$ determining an updated estimate $ \theta_{i,e}(\tau) +\delta \theta (\tau,\theta_{i,e},f)$
 of the estimated mode phase function $\theta_{i,e}$.
  We assume that $a_i$ is strictly positive, that is, 
$a_i(t)\geq a_{0}, t \in [-1,1],$ for some $a_{0}>0$.

Indeed, given $\alpha>0$, $\tau\in [-1,1]$,  differentiable strictly increasing functions
  $ \theta_{0}$ and $ \theta_{e}$ on $[-1,1]$,
 and $n\in \{0,\ldots, d\}$ (we set $d=2$ in applications in this section), let
$ \chi^{\tau, \theta_{e}}_{n, c}$ and  $\chi^{\tau, \theta_{e}}_{n, s}$ be the wavelets defined by
\begin{eqnarray}\label{chi_def}
    \chi^{\tau, \theta_{e}}_{n, c}(t)& := &\cos(\theta_{e}(t))(t - \tau)^n e^{-\big(\frac{\dot{\theta_{0}}(\tau)(t - \tau)}{\alpha}\big)^2} \nonumber \\
    \chi^{\tau, \theta_{e}}_{n, s}(t)& :=& \sin(\theta_{e}(t)) (t - \tau)^n e^{-\big(\frac{\dot{\theta_{0}}(\tau)(t - \tau)}{\alpha}\big)^2}\,,
\end{eqnarray}
and
  let $\xi_{\tau, \theta_{e}}$ be the Gaussian process defined by
\begin{equation}
\label{def_xi}
    \xi_{\tau, \theta_{e}}(t) := \sum_{n = 0}^d \big( X_{n, c} \chi^{\tau, \theta_{e}}_{n, c}(t) + X_{n, s} \chi^{\tau, \theta_{e}}_{n, s}(t) \big) \,,
\end{equation}
where  $X_{n, c}, X_{n, s}$ are independent  $\N(0,1)$ random variables. The function
$\theta_{0}$ will be fixed throughout the iterations whereas the function $\theta_{e}$ will be updated.
Let $f_{\tau}$ be the Gaussian windowed signal defined by
\begin{equation}
\label{def_vwindow}
f_{\tau}(t) = e^{-\big(\frac{\dot{\theta_{0}}(\tau)(t - \tau)}{\alpha}\big)^2} f(t), \quad t \in [-1,1]\,,
\end{equation}
and,
for $(n,j)\in \{0,\ldots,d\}\times \{c,s\}$, let
\begin{equation}
\label{def_Z}
Z_{n,j}(\tau,\theta_{e},f):=\lim_{\sigma \downarrow 0}\E\big[X_{n,j} \big|\xi_{\tau, \theta_{e}} + \xi_\sigma=
f_{\tau}\big]\,,
\end{equation}
where $\xi_\sigma$ is white noise, independent of $\xi_{\tau, \theta_{e}}$, with variance $\sigma^2$.
To compute $Z_{n,j}$, observe that
since both $\xi_{\tau, \theta_{e}}$ and $\xi_{\sigma}$ are Gaussian fields, it follows from
 \eqref{eqkjgdjhed} that
\[\E\big[ \xi_{\tau, \theta_{e}} \big|\xi_{\tau, \theta_{e}} + \xi_\sigma\big] =
 A_{\sigma}(\xi_{\tau, \theta_{e}} + \xi_\sigma)\]
for  the linear mapping
\[ A_{\sigma}= Q_{\tau, \theta_{e}}\bigl(Q_{\tau, \theta_{e}} +\sigma^{2}I\bigr)^{-1}\, ,\]
where $Q_{\tau, \theta_{e}}:L^{2} \rightarrow L^{2}$ is the covariance operator of the Gaussian field
$\xi_{\tau, \theta_{e}}$ and $\sigma^{2}I$ is the covariance operator of $\xi_\sigma$.
Using the characterization of the limit of Tikhonov regularization as the Moore-Penrose inverse,
see e.g.~Barata and Hussein
\cite[Thm.~4.3]{barata2012moore},
along with the orthogonal projections connected with the Moore-Penrose inverse,
we conclude that $\lim_{\sigma \rightarrow 0}{A_{\sigma}}=P_{\chi^{\tau, \theta_{e}}}$,
where $P_{\chi^{\tau, \theta_{e}}}$ is the $L^{2}$-orthogonal projection onto the span
 $\chi^{\tau, \theta_{e}}:=\Span \{\chi^{\tau, \theta_{e}}_{n, c},  \chi^{\tau, \theta_{e}}_{n, s}, n=0,\ldots,d\}$, 
 and therefore
\begin{equation}
\label{Alimit}
 \lim_{\sigma \rightarrow 0}{\E\big[ \xi_{\tau, \theta_{e}} \big|\xi_{\tau, \theta_{e}} + \xi_\sigma\big] }=
P_{\chi^{\tau, \theta_{e}}}(\xi_{\tau, \theta_{e}} + \xi_\sigma)\, .
\end{equation}

Since the definition \eqref{def_xi} can be written
$\xi_{\tau, \theta_{e}}=\sum_{n,j}X_{n,j} \chi^{\tau, \theta_{e}}_{n, j} ,$
summing  \eqref{def_Z}  and using \eqref{Alimit},
  we obtain
\begin{equation}
\label{eieienhugirt}
\sum_{n,j}Z_{n,j}(\tau,\theta_{e},f) \chi^{\tau, \theta_{e}}_{n, j}=P_{\chi^{\tau, \theta_{e}}}f_{\tau}\, .
\end{equation}
Consider the vector function $Z(\tau,\theta_{e},f) \in \R^{2d+2}$ with components $Z_{n,j}(\tau,\theta_{e},f)$,
the $2d+2$ dimensional  Gaussian random vector $X$ with components $X_{n,j}, (n,j)\in \{0,\ldots,d\}\times \{c,s\}$,
and the $(2d+2)\times (2d+2)$ matrix $A^{\tau, \theta_{e}}$ defined by
\begin{equation}
\label{Adef0}
A^{\tau, \theta_{e}}_{(n, j), (n', j')} := \langle \chi^{\tau, \theta_{e}}_{n, j}, \chi^{\tau, \theta_{e}}_{n', j'} \rangle_{L^2[-1,1]}\, .
\end{equation}
 Straightforward linear algebra along with \eqref{eieienhugirt} establish that
the vector  $Z(\tau,\theta_{e},f)$
 can be computed as the solution of the linear system
\begin{equation}
   A^{\tau, \theta_{e}} Z(\tau,\theta_{e},f) =   b^{\tau, \theta_{e}}f,
\end{equation}
where $b^{\tau, \theta_{e}}(f)$ is the $\R^{2d+2}$ vector with components $b^{\tau, \theta_{e}}_{n, j}(f) := \langle \chi^{\tau, \theta_{e}}_{n, j}, f_{\tau} \rangle_{L^2}$.
See  sub-figures (1) and (2) of both the  top  and bottom  of Figure \ref{fig_micloc_window}
  for illustrations of
the windowed signal $f_{\tau}(t)$ and of its projection
$\lim_{\sigma \downarrow 0}\E\big[\xi_{\tau, \theta_{e}} \big|\xi_{\tau, \theta_{e}} + \xi_\sigma=
f_{\tau}\big]$ in \eqref{Alimit} corresponding to the signals
$f$
 displayed in Figures \ref{iter example} and \ref{iter example bio}.

To apply these formulations to construct the module, 
 suppose that $f$  is a single mode \[f(t)=a(t)\cos(\theta(t)),\]
 so that 
\begin{equation}
\label{eqn_appr00}
f_{\tau}(t) =  e^{-\big(\frac{\dot{\theta_{0}}(\tau)(t - \tau)}{\alpha}\big)^2}
 a(t)\cos(\theta(t)) \, ,
\end{equation}
and consider the modified function
\begin{equation}
\label{eqn_appr0}
\bar{f}_{\tau}(t) =  e^{-\big(\frac{\dot{\theta_{0}}(\tau)(t - \tau)}{\alpha}\big)^2}
 \Bigg(\sum_{n=0}^d \frac{a^{(n)}(\tau)}{n!}  (t - \tau)^n\Bigg)\cos(\theta(t)) \,
\end{equation}
obtained by replacing the function $a$ with the first $d+1$ terms of its Taylor series about $\tau$.
In what follows, we will use the expression $\approx $ to articulate an informal approximation analysis.
It is clear that $\bar{f}_{\tau} \in \chi^{\tau, \theta_{e}}$ and,
 since $\frac{\alpha}{\dot{\theta}_{0}(\tau)}$ is small, that
$
\langle \chi^{\tau, \theta_{e}}_{n,j}, f_{\tau}-\bar{f}_{\tau} \rangle_{L^{2}} \approx 0, \forall (n,j)
$ and
therefore
$
 P_{\chi^{\tau, \theta_{e}}} f_{\tau}\approx \bar{f}_{\tau} \, ,
$
and therefore \eqref{eieienhugirt} implies that
\begin{equation}
\label{eieienhugirt5}
\sum_{j'}Z_{0,j'}(\tau,\theta_{e},f) \chi^{\tau, \theta_{e}}_{0, j'}(t)\approx  \bar{f}_{\tau}(t) ,
\quad t \in [-1,1]\, ,
\end{equation}
which by \eqref{eqn_appr0} implies that
\begin{equation}
\label{eieienhugirt2}
\sum_{j'}Z_{0,j'}(\tau,\theta_{e},f) \chi^{\tau, \theta_{e}}_{0, j'}(t)\approx  e^{-\big(\frac{\dot{\theta_{0}}(\tau)(t - \tau)}{\alpha}\big)^2}  a(\tau)\cos(\theta(t)),
\quad t \approx \tau\,,
\end{equation}
which implies that
\begin{equation}
\label{eieienhugirt3}
Z_{0,c}(\tau,\theta_{e},f) \cos(\theta_{e}(t)) +Z_{0,s}(\tau,\theta_{e},f) \sin(\theta_{e}(t))\approx
   a(\tau)\cos(\theta(t)),
\quad t \approx \tau\, .
\end{equation}
Setting $\theta_{\delta}:=\theta-\theta_{e}$ as the approximation error, using the cosine summation
formula,
 we obtain
\[ Z_{0,c}(\tau,\theta_{e},f) \cos(\theta_{e}(t)) +Z_{0,s}(\tau,\theta_{e},f) \sin(\theta_{e}(t))\approx a(\tau)
\bigl(\cos( \theta_{\delta}(t))\cos(\theta_{e}(t)) -\sin( \theta_{\delta}(t))\sin(\theta_{e}(t)\bigr). \]
However,  $t \approx \tau$ implies that $\theta_{\delta}(t) \approx \theta_{\delta}(\tau) $, so that
we obtain
\[ Z_{0,c}(\tau,\theta_{e},f) \cos(\theta_{e}(t)) +Z_{0,s}(\tau,\theta_{e},f) \sin(\theta_{e}(t))\approx a(\tau)
\bigl(\cos( \theta_{\delta}(\tau))\cos(\theta_{e}(t)) -\sin( \theta_{\delta}(\tau))\sin(\theta_{e}(t)\bigr),  \]
which, since $\dot{\theta}_{e}(t)$  positive and bounded away from  $0$, implies that
\begin{eqnarray*}
 Z_{0,c}(\tau,\theta_{e},f)&\approx&a(\tau)\cos( \theta_{\delta}(\tau))\\
Z_{0,s}(\tau,\theta_{e},f)&\approx& -a(\tau)\sin( \theta_{\delta}(\tau))\, .
\end{eqnarray*}
Consequently, writing
\begin{eqnarray}
\label{eienuriig}
a(\tau,\theta_{e},f)&:=&\sqrt{Z^{2}_{0,c}(\tau,\theta_{e},f)+Z^{2}_{0,s}(\tau,\theta_{e},f)}\,\nonumber\\
 \delta\theta(\tau,\theta_{e},f)&: = &\operatorname{atan2}\big(-Z_{0,s}(\tau,\theta_{e},f), Z_{0,c}(\tau,\theta_{e},f)\big)\,,
\end{eqnarray}
we obtain that
$a(\tau,\theta_{e},f)\approx a(\tau)$ and
 $\delta \theta (\tau,\theta_{e},f)\approx
\theta_{\delta}(\tau)$. We will therefore use $a(\tau,\theta_{e},f)$ to estimate the amplitude $a(\tau)$
 of the mode $f$ using the estimate $\theta_{e}$  and
 $\delta \theta(\tau,\theta,f)$ to estimate the mode phase $\theta$ through
$\theta(\tau)=\theta_{e}(\tau)+\theta_{\delta}(\tau)\approx \theta_{e}(\tau)+ \delta \theta(\tau,\theta_{e},f)$.
 Unless otherwise
specified, Equation \eqref{eienuriig} will take $d = 2$.
Experimental evidence indicates that $d=2$ is a sweet spot in the sense that $d=0$ or $d=1$ yields less fitting power, while larger
$d$ entails less stability.   
Iterating this refinement process will allow us to achieve near machine-precision accuracies in our phase/amplitude estimates.
See sub-figures (1) and (2) of  the top and bottom  of Figure \ref{fig_micloc_GPest}  for illustrations of $a(t)$, $a(\tau, \theta_{e}, v)(t)$, $\theta(t) - \theta_{e}(t)$ and  $\delta \theta(\tau, \theta_{e}, v)(t)$
corresponding to the first mode $v_{1}$   of
the  signals $v$
 displayed in Figures \ref{iter example}.4 and \ref{iter example bio}.4.

\subsection{The lowest instantaneous frequency }\label{subseclowest}
\begin{figure}[hbt!]
        \begin{center}
                        \includegraphics[width=\textwidth]{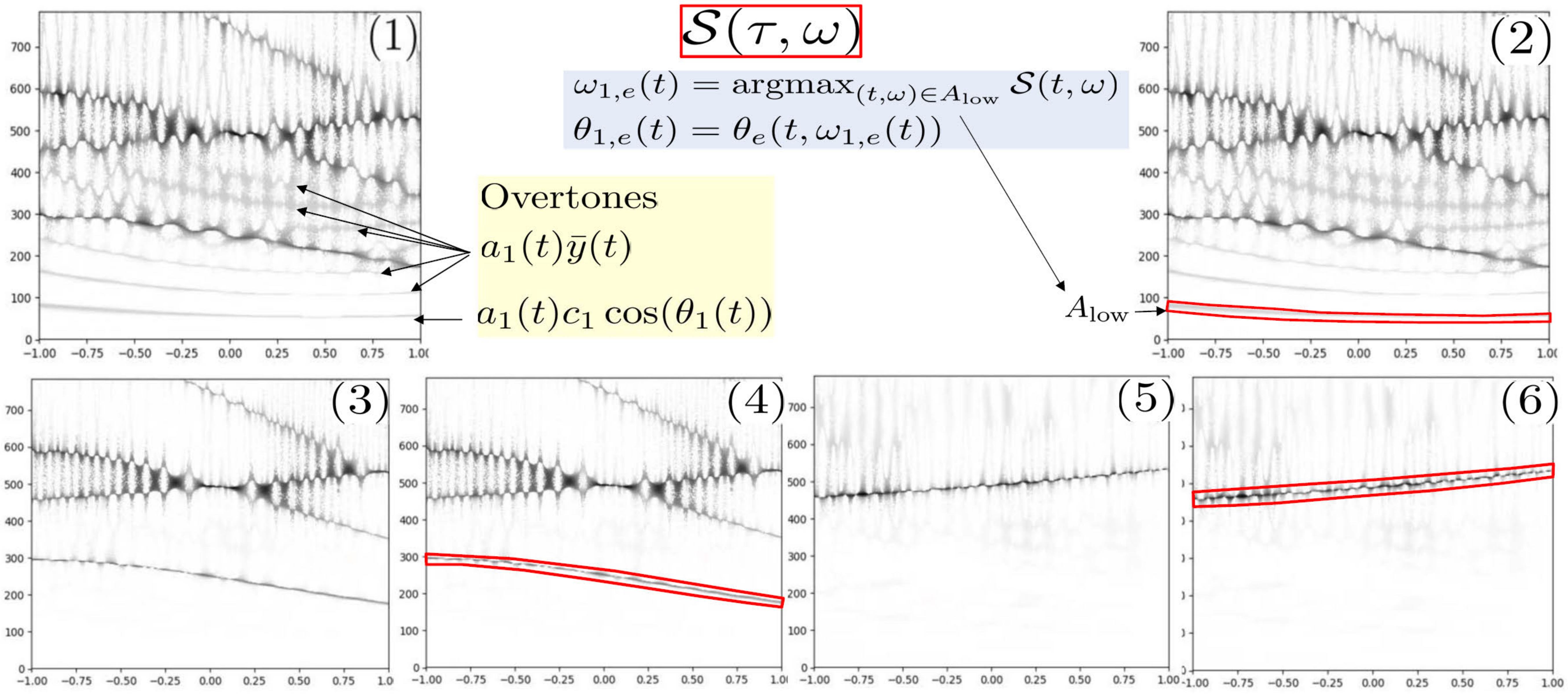}
                \caption{ Max-squeezing with the  EKG base waveform and derivation of the instantaneous phase estimates $\theta_{i,e}$.
(1,2) $(\tau,\omega)\rightarrow \mathcal{S}(\tau,\omega, v)$ and identification of $A_\low$ (3, 4) $(\tau,\omega)\rightarrow \mathcal{S}(\tau,\omega, v - v_{1, e})$ and identification of its $A_\low$  (5,6) $(\tau,\omega)\rightarrow \mathcal{S}(\tau,\omega, v - v_{1, e} - v_{2, e})$ and identification of  its $A_\low$.
}
\label{fig_Alow}
        \end{center}
\end{figure}
We will use  the  max-pool network  illustrated in the right-hand side of Figure \ref{shmd}
and the module of Section \ref{secmicrolocalkmd} to design a module
taking a signal  $v$ as input and producing, as output,   an estimate of the instantaneous phase $\theta_{\low}(v)$  of the mode of $v$ having the lowest instantaneous frequency.  
We restrict our presentation to the situation where the instantaneous frequencies
 $\dot{\theta}_i$ do not cross each other. The main steps of the computation performed by this module are as follows.
Let $\mathcal{S}(\tau,\omega,v)$ be the max-pool energy defined as in \eqref{eqklelhdekjdhud}, where now it is useful to indicate its dependence on $v$.

Let $A_\low$ be a subset of the time-frequency domain $(\tau,\omega)$ identified
 (as in  Figure \ref{fig_Alow}.2) as a narrow sausage around the lowest instantaneous frequency  defined by the local maxima of the  $\mathcal{S}(\tau,\omega,v)$. If no modes can be detected (above a given threshold) in $\mathcal{S}(\tau,\omega,v)$ then we set $\theta_{\low}(v)=\emptyset$.
 Otherwise we 
let \begin{equation}
\label{omegalowe}
\omega_{\low}(\tau):=\omega_e\big(\tau,\operatorname{argmax}_{\omega: (\tau,\omega)  \in A_\low}\mathcal{S}(\tau,\omega)\big)
\end{equation}
  be the estimated instantaneous frequency of the  mode having the lowest instantaneous frequency and, with $ \theta_e$ defined as in \eqref{eqthetate}, let
\begin{equation}\label{thetaie 1}
    \theta_{\low}(\tau) := \theta_e(\tau, \omega_{\low}(\tau))\,
\end{equation}
be the corresponding estimated instantaneous phase (obtained as in \eqref{eieburimg}).

\subsection{The iterated micro-local KMD algorithm. }
\label{sec_microlocalKMDalg}
\begin{figure}[hbt!]
        \begin{center}
                        \includegraphics[width=\textwidth]{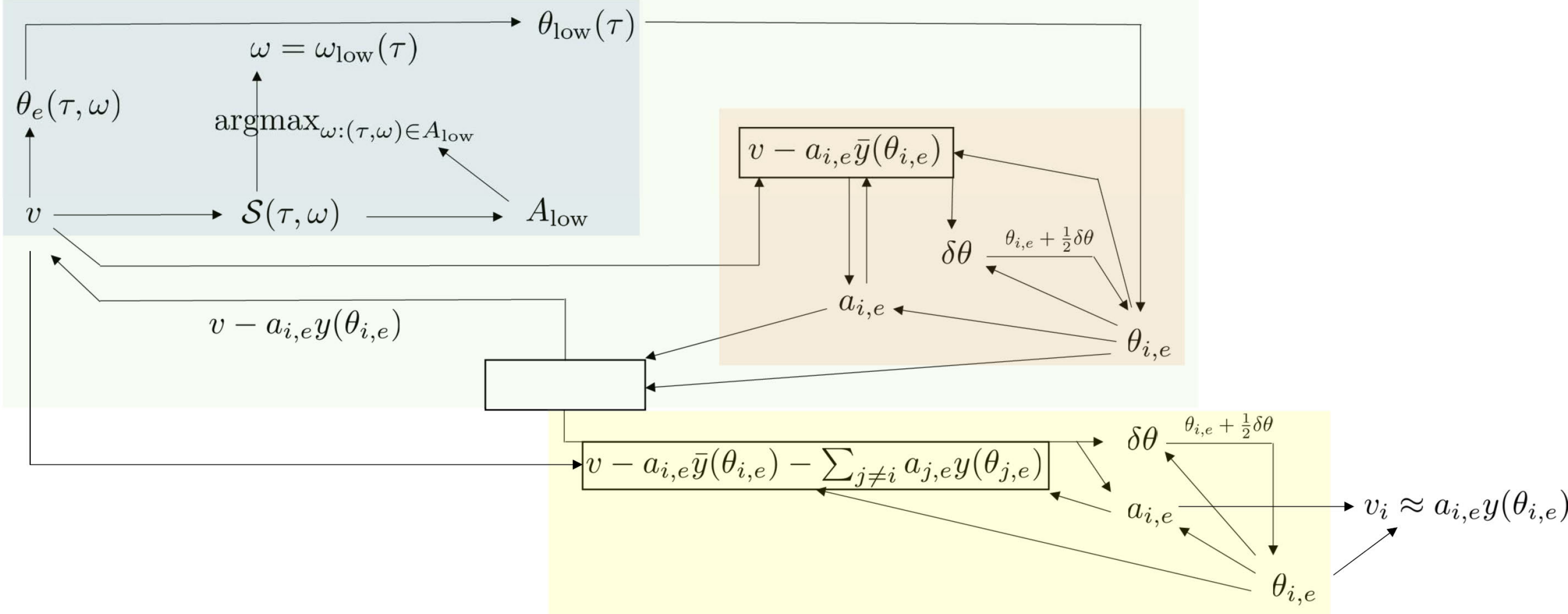}
                \caption{Modular representation of Algorithm \ref{iteralgn},  described in this section. The blue module represents the estimation of the lowest frequency as illustrated in Figure \ref{fig_Alow}.  The brown module represents the iterative estimation of the mode with lowest instantaneous frequency of lines \ref{stit0} through
\ref{stit3} of Algorithm \ref{iteralgn}.  The  yellow module represents the 
 iterative refinement of  all the modes in
  lines \ref{al2stfr1} through
\ref{al2stfr1e}. The brown and yellow modules used to refine phase/amplitude estimates  use the same code.  }
\label{fignetiter}
        \end{center}
\end{figure}

\begin{algorithm}[h!]
\caption{Iterated micro-local KMD.}\label{iteralgn}
\begin{algorithmic}[1]
\STATE\label{al2st1} $i \leftarrow 1$
\STATE  $v^{(1)} \leftarrow v$
\WHILE{true}
\IF {$\theta_\low(v^{(i)})=\emptyset$}
\STATE {break loop}
\ELSE
\STATE {$\theta_{i,e}\leftarrow \theta_\low(v^{(i)})$}
\ENDIF
\STATE   $a_{i, e}(\tau) \leftarrow  0$
\REPEAT  
   \label{stit0}
\FOR{$j$ in $\{1, ..., i\}$}
\STATE\label{stit1}  $v_{j,\mathrm{res}}\leftarrow v - a_{j, e}\bar{y}(\theta_{j, e}) -  \sum_{k \neq j, k \leq i} a_{k, e} y(\theta_{k,
e})$
\STATE\label{stit2}   $a_{j,e}(\tau) \leftarrow a\big(\tau, \theta_{j,e}, v_{j,\mathrm{res}}\big) / c_1$
\STATE\label{stit3}   $\theta_{j,e}(\tau) \leftarrow  \theta_{j,e}(\tau)+\frac{1}{2}\delta \theta \big(\tau, \theta_{j,e},
v_{j,\mathrm{res}}\big)$
\ENDFOR
\UNTIL {$\sup_{i,\tau}\big|\delta \theta \big(\tau, \theta_{i,e},
v_{i,\mathrm{res}}\big)
\big| < \epsilon_1$} \label{stit4}

\STATE\label{stpo}  $v^{(i+1)}  \leftarrow  v - \sum_{j \leq i} a_{j, e} y(\theta_{j, e})$
\STATE   $i \leftarrow i + 1$
\ENDWHILE\label{al2st1e}
\STATE $m\leftarrow i-1$
\REPEAT \label{al2stfr1}
\FOR{$i$ in $\{1, ..., m\}$\footnotemark}
\STATE  $v_{i,\mathrm{res}}\leftarrow v - a_{i, e}\bar{y}(\theta_{i, e}) -  \sum_{j \neq i} a_{j, e} y(\theta_{j, e})$
\STATE   $a_{i,e}(\tau) \leftarrow a\big(\tau, \theta_{i,e}, v_{i,\mathrm{res}}\big)$
\STATE   $\theta_{i,e}(\tau) \leftarrow  \theta_{i,e}(\tau)+\frac{1}{2}\delta \theta\big(\tau, \theta_{i,e}, v_{i,\mathrm{res}}\big)$
\ENDFOR
\UNTIL {$\sup_{j,\tau}\big|\delta \theta \big(\tau, \theta_{j,e},
v_{j,\mathrm{res}}\big)
\big|<\epsilon_2$ } \label{al2stfr1e}
\STATE Return the modes $v_{i, e}(t) \leftarrow a_{i, e}(t) y(\theta_{i, e}(t))$ for $i = 1, ..., m$
\end{algorithmic}
\end{algorithm}
\footnotetext{\label{dummy} This repeat loop,  used to refine the estimates, is optional.  Also,
 all statements in Algorithms with dummy variable $\tau$   imply a loop over all values of $\tau$ in the mesh $\T$. }

The method of  estimating  the lowest instantaneous frequency, described in
Section \ref{subseclowest}, provides a foundation for the iterated micro-local KMD algorithm, Algorithm \ref{iteralgn}.
We now describe  Algorithm \ref{iteralgn}, presented in
 its modular representation in Figure \ref{fignetiter},  using  Figures \ref{fig_Alow},
  \ref{fig_micloc_window} and  \ref{fig_micloc_GPest}.
To that end, let
\begin{equation}
\label{eoeiiie}
    y(t) = c_1\cos(t)+\sum_{n=2}^{\infty} c_n \cos(nt+d_n)
\end{equation}
be the   Fourier representation of the base waveform $y$ (which, without loss of generality, has been shifted so that the first sine coefficient is zero) and
write
\begin{equation}
    \bar{y}(t) := y(t)- c_1 \cos(t)
\end{equation}
for its overtones.

Let us describe how
lines \ref{al2st1} to \ref{al2st1e} provide refined estimates for the amplitude and the phase of each mode
 $v_{i},i\in \{1,\ldots,m\}$ of the signal $v$. Although the overtones of $y$ prevent us from simultaneously approximating all the instantaneous frequencies $\dot{\theta}_i$   from
the max-pool energy of the signal $v$, since
the  lowest mode
$v_{\low}=a_{\low}y(\theta_{\low})$ can be decomposed into the sum
$v_{\low}= a_{\low}c_{1}\cos(\theta_{\low})+a_{\low}\bar{y}(\theta_{\low})$
of a signal $a_{\low}c_{1}\cos(\theta_{\low})$ with a cosine waveform plus the signal
$a_{\low}\bar{y}(\theta_{\low})$ containing its higher frequency overtones, the method of Section \ref{subseclowest} can be applied
 to obtain  an estimate $\theta_{\low,e}$ of $\theta_{\low}$ and \eqref{eienuriig} can be applied to
obtain an estimate $a_{\low,e}c_{1}$ of $a_{\low}c_{1}$ producing an estimate
   $a_{\low,e}c_{1}\cos(\theta_{\low,e})$
 of  the primary component $a_{\low}c_{1}\cos(\theta_{\low})$ of the first mode.  Since
$c_1$ is known, this estimate produces the estimate
$a_{\low,e}\bar{y}(\theta_{\low,e})$ for the overtones of the lowest mode.  Recall that we calculate all quantities over the interval $[-1, 1]$ in this setting.  Estimates near the borders, $-1$ and $1$, will be less precise, but will be refined in the following loops.
To improve the accuracy of this estimate,
in lines \ref{stit2} and \ref{stit3} the micro local KMD of Section \ref{secmicrolocalkmd} is iteratively applied to the residual signal of every previously identified mode $v_{j, \mathrm{res}}
\leftarrow v - a_{j, e}\bar{y}(\theta_{j, e}) -  \sum_{k \neq j, k \leq i} a_{k, e} y(\theta_{k, e})$,
 consisting of  the signal $v$ with  the 
estimated modes $k \neq j$ as well as the overtones of estimated mode $j$ removed.  This residual is the sum of the estimation of the isolated base frequency component of $v_j$ and $\sum_{j > i} v_j$.
The rate parameter $1/2$  in line \ref{stit3} is to avoid overcorrecting the phase estimates, while  the parameters $\epsilon_1$ and $\epsilon_2$ in lines \ref{stit4} and \ref{al2stfr1e} are pre-specified accuracy thresholds.
The  resulting estimated lower modes are then removed from the signal to determine
the  residual $v^{(i+1)} :=  v - \sum_{j \leq i} a_{j, e} y(\theta_{j, e})$  in line  \ref{stpo}.

\begin{figure}[hbt!]
        \begin{center}
                        \includegraphics[width=\textwidth]{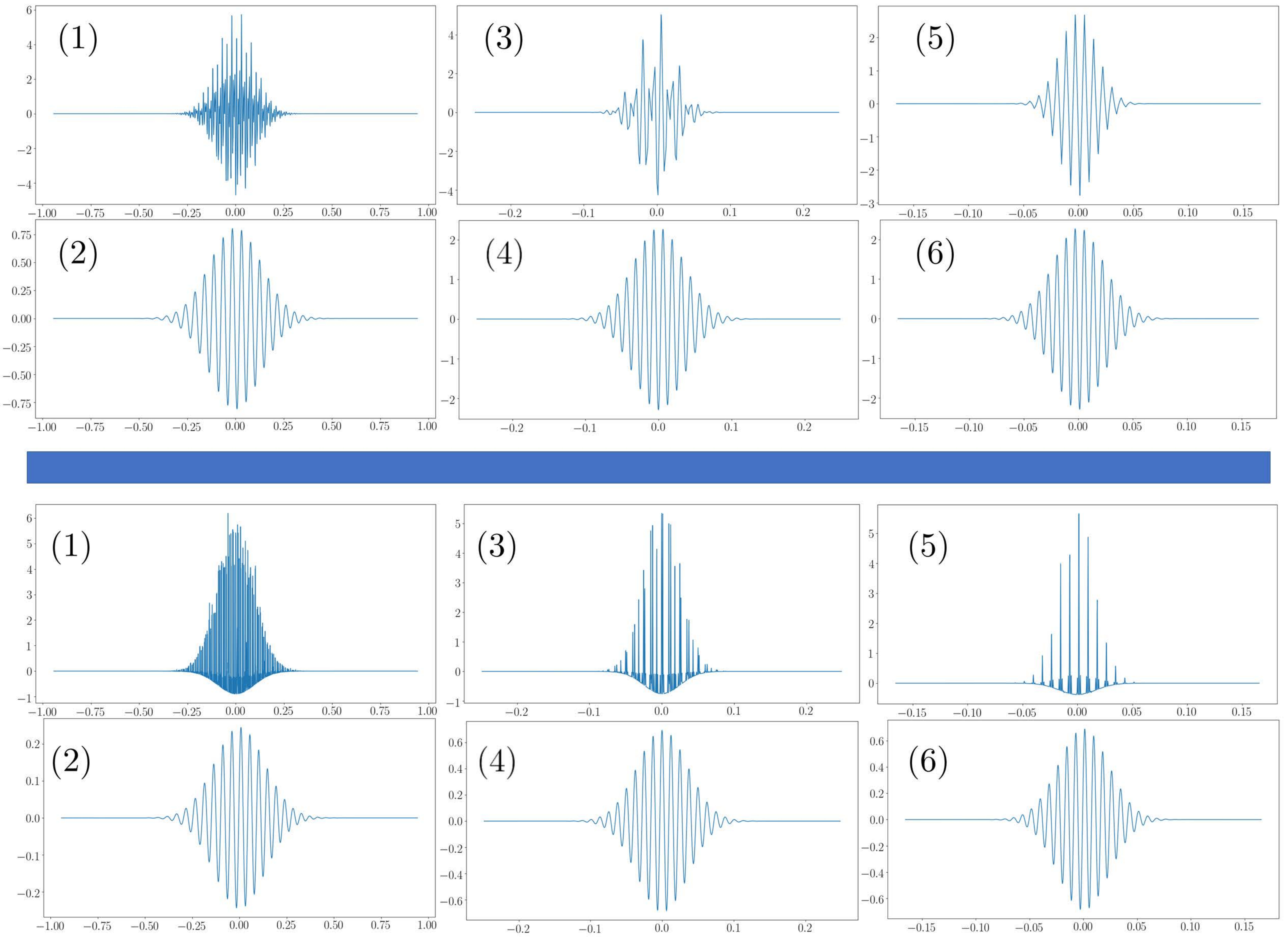}
                \caption{Top: $v$ is as in Figure \ref{iter example} (the base waveform is triangular). Bottom:
 $v$ is as in Figure \ref{iter example bio} (the base  waveform is EKG).  Both top and bottom:  $d=2$, (1) The windowed signal $v_{\tau}$ (2)  $\lim_{\sigma \downarrow 0}\E\big[\xi_{\tau, \theta_{1, e}} \big|\xi_{\tau, \theta_{1, e}} + \xi_\sigma=v_{\tau}\big]$
                (3) $(v-v_{1,e})_{\tau}$ (4) $\lim_{\sigma \downarrow 0}\E\big[\xi_{\tau, \theta_{2, e}} \big|\xi_{\tau, \theta_{2, e}} + \xi_\sigma=(v-v_{1,e})_{\tau}\big]$
                (5) $(v-v_{1,e}-v_{2,e})_{\tau}$ (6) $\lim_{\sigma \downarrow 0}\E\big[\xi_{\tau, \theta_{3, e}} \big|\xi_{\tau, \theta_{3, e}} + \xi_\sigma=(v-v_{1,e}-v_{2,e})_{\tau}\big]$.
                }\label{fig_micloc_window}
        \end{center}
\end{figure}

\begin{figure}[hbt!]
        \begin{center}
                        \includegraphics[width=\textwidth]{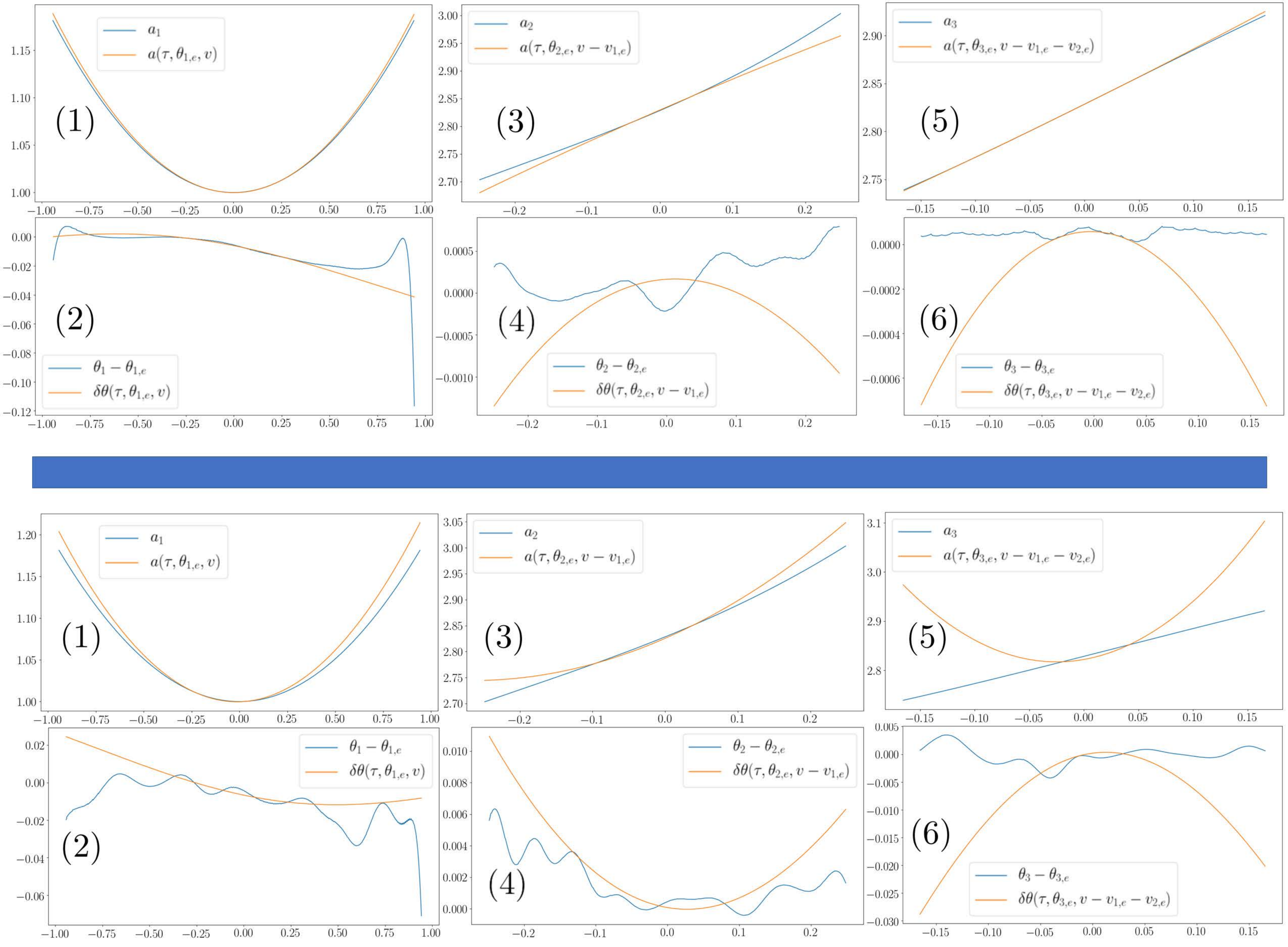}
                \caption{ Top: $v$ is as in Figure \ref{iter example} (the base waveform is triangular). Bottom: $v$ is as in Figure \ref{iter example bio} (the base  waveform is EKG).
   Both top and bottom:  $\tau=0$.         (1) the amplitude of the first mode $a_1(t)$ and its local Gaussian regression estimation $a(\tau, \theta_{1,e}, v)(t)$ (2) the error in estimated phase of the first mode $\theta_1(t) - \theta_{1, e}(t)$ and its local Gaussian regression $\delta \theta (\tau, \theta_{1,e}, v)(t)$ (3, 4) are as (1,2) with $v$ and $\theta_{1,e}$ replaced by $v-v_{1,e}$ and $\theta_{2,e}$
        (5,6)  are as (1,2) with $v$ and $\theta_{1,e}$ replaced by $v-v_{1,e}-v_{2,e}$ and $\theta_{3,e}$.
}
\label{fig_micloc_GPest}
        \end{center}
\end{figure}

Iterating this process, we  peel off
 an estimate   $a_{i, e} y(\theta_{i, e})$ of the mode corresponding to the lowest instantaneous frequency
 of the residual $v^{(i)}:=v-\sum_{j\leq i-1} a_{j,e}y(\theta_{j,e})$ of the signal $v$ obtained   in line  \ref{stpo}, removing the   interference of the first $i-1$ modes, including their  overtones, in our estimate of the instantaneous frequency and phase of the  $i$-th mode.
See Figure \ref{fig_Alow} for the evolution of the $A_{low}$ sausage as these modes are peeled off.  See sub-figures  (3) and (5) of the top and bottom of Figure
\ref{fig_micloc_window} for the results of peeling off the first two estimated modes
of the signal $v$  corresponding to both Figures \ref{iter example} and  \ref{iter example bio} and
sub-figures (4) and (6) for the results of the corresponding projections in \eqref{Alimit}.
 See sub-figures  (3) and (4) of the top and bottom of Figure
\ref{fig_micloc_GPest} for   amplitude and its estimate of  the results of peeling off the first
  estimated mode and sub-figures  (5) and (6) corresponding to peeling off the first two estimated modes
of the signal $v$  corresponding to both Figures \ref{iter example} and  \ref{iter example bio}.

After the amplitude/phase estimates $a_{i, e}, \theta_{i, e}, i \in \{1,\ldots, m\}$,
 have been obtained in lines \ref{al2st1} to \ref{al2st1e}, we have the option to further improve our estimates in a final optimization loop in lines \ref{al2stfr1} to \ref{al2stfr1e}.  This option enables us to achieve even higher accuracies by iterating the micro local KMD of Section \ref{secmicrolocalkmd} on the residual signals $v_{i,\mathrm{res}}\leftarrow v - a_{i, e}\bar{y}(\theta_{i, e}) -  \sum_{j \neq i} a_{j, e} y(\theta_{j, e})$, consisting of the signal $v$ with  all the estimated modes $j\not=i$  and
estimated  overtones of the mode $i$ removed. 

The proposed algorithm can be further improved by (1) applying a Savitsky-Golay filter to locally smooth (de-noise) the curves corresponding to each estimate   $\theta_{i,e}$ (which corresponds to refining our phase estimates through GPR filtering) (2)
starting with a larger  $\alpha$ (to decrease interference from other modes/overtones) and slowly reducing its value in the optional final refinement loop (to further localize our estimates after other components, and hence interference, have been mostly eliminated).

\subsection{Numerical experiments}\label{iter num examp}

Here we present results for both the triangle and EKG base waveform examples. As discussed in the previous section,
 these results are visually displayed in Figures
\ref{fig_micloc_window} and
\ref{fig_micloc_GPest}.

\subsubsection{Triangle wave example}\label{tri wave examp}
The base waveform is the triangle wave displayed in Figure \ref{waves}.
We observe the signal $v$ on a mesh spanning $[-1, 1]$ spaced at intervals of $\frac{1}{5000}$ and aim to recover each mode $v_i$ over this time mesh. We take $\alpha = 25$ within the first refinement loop corresponding to lines  \ref{al2st1} to \ref{al2st1e} and slowly decreased it to $6$ in the final loop corresponding to lines \ref{al2stfr1} to \ref{al2stfr1e}. 
The amplitudes and frequencies of each of the modes are shown in Figure \ref{iter example}.
 The recovery  errors of each mode as well as their amplitude and phase functions over the whole interval $[-1, 1]$ and the interior third $[-\frac{1}{3}, \frac{1}{3}]$ are displayed in Table \ref{iter results} and \ref{iter results0.33} respectively.  In the interior third of the interval, errors were found to be on the order of $10^{-9}$ for the first signal component and approximately $10^{-7}$ for the higher two.  However, over the full interval, the corresponding figures are in the $10^{-4}$ and $10^{-3}$ ranges due to recovery errors near the boundaries, $-1$ and $1$, of the interval.  Still, a plot superimposing $v_i$ and $v_{i, e}$ would visually appear to be one curve over $[-1, 1]$ due to the negligible recovery errors.

\begin{table*}[h]
\centering
\begin{tabular}{ |p{3.25cm}||p{1.85cm}|p{1.85cm}|p{1.85cm}|p{1.85cm}|  }
\hline
 Mode & $\frac{\|v_{i, e} - v_i\|_{L^2}}{\|v_i\|_{L^2}}$ & $\frac{\|v_{i, e} - v_i\|_{L^\infty}}{\|v_i\|_{L^\infty}}$ & $\frac{\|a_{i, e} - a_i\|_{L^2}}{\|a_i\|_{L^2}}$ & $\|\theta_{i, e} - \theta_i\|_{L^2}$ \\
 \hline
 $i=1$ & $5.47 \times 10^{-4}$ & $3.85 \times 10^{-3}$ & $2.80 \times 10^{-4}$ & $4.14 \times 10^{-5}$ \\
  $i=2$ & $6.42 \times 10^{-4}$ & $2.58 \times 10^{-3}$ & $3.80 \times 10^{-5}$ & $1.85 \times 10^{-4}$\\
  $i=3$ & $5.83 \times 10^{-4}$ & $6.29 \times 10^{-3}$ & $2.19 \times 10^{-4}$ & $6.30 \times 10^{-5}$ \\
 \hline
\end{tabular}
\caption{Signal component recovery errors in the triangle base waveform example over $[-1, 1]$.}
\label{iter results}
\end{table*}

\begin{table*}[h]
\centering
\begin{tabular}{ |p{3.25cm}||p{1.85cm}|p{1.85cm}|p{1.85cm}|p{1.85cm}|  }
\hline
 Mode & $\frac{\|v_{i, e} - v_i\|_{L^2}}{\|v_i\|_{L^2}}$ & $\frac{\|v_{i, e} - v_i\|_{L^\infty}}{\|v_i\|_{L^\infty}}$ & $\frac{\|a_{i, e} - a_i\|_{L^2}}{\|a_i\|_{L^2}}$ & $\|\theta_{i, e} - \theta_i\|_{L^2}$ \\
 \hline
 $i=1$ & $1.00 \times 10^{-8}$ & $2.40 \times 10^{-8}$ & $7.08 \times 10^{-9}$ & $6.52 \times 10^{-9}$ \\
  $i=2$ & $2.74 \times 10^{-7}$ & $2.55 \times 10^{-7}$ & $1.87 \times 10^{-8}$ & $2.43 \times 10^{-7}$\\
  $i=3$ & $2.37 \times 10^{-7}$ & $3.67 \times 10^{-7}$ & $1.48 \times 10^{-7}$ & $1.48 \times 10^{-7}$ \\
 \hline
\end{tabular}
\caption{Signal component recovery errors in the triangle base waveform example over $[-\frac{1}{3}, \frac{1}{3}]$.}
\label{iter results0.33}
\end{table*}

\subsubsection{EKG wave example}
The base waveform is the EKG wave displayed in Figure \ref{waves}. We use the same discrete mesh as in the triangle case.  Here, we took $\alpha = 25$ in the loop corresponding to lines  \ref{al2st1} to \ref{al2st1e} and slowly decreased it to $15$ in the final loop corresponding to lines \ref{al2stfr1} to \ref{al2stfr1e}.
  The amplitudes and frequencies of each of the modes are shown in Figure \ref{iter example bio}, while  the recovery error of each mode as well as their amplitude and phase functions are shown both over the whole interval $[-1, 1]$ and the interior third $[-\frac{1}{3}, \frac{1}{3}]$ in Tables \ref{iter results bio} and \ref{iter results bio 0.33} respectively.  Within the interior third of the interval, amplitude and phase relative errors are found to be on the order of $10^{-4}$ to $10^{-5}$ in this setting.  However, over $[-1, 1]$, the mean errors are more substantial, with amplitude and phase estimates in the $10^{-1}$ to $10^{-3}$ range.  Note the high error rates in $L^{\infty}$ stemming from errors in placement of the tallest peak (the region around which is known as the R wave in the EKG community). 
   In the center third of the interval, $v_{i, e}$ and $v_{i}$ are visually indistinguishable due to the small recovery errors.

\begin{table*}[h]
\centering
\begin{tabular}{ |p{3.25cm}||p{1.85cm}|p{1.85cm}|p{1.85cm}|p{1.85cm}|  }
\hline
 Mode & $\frac{\|v_{i, e} - v_i\|_{L^2}}{\|v_i\|_{L^2}}$ & $\frac{\|v_{i, e} - v_i\|_{L^\infty}}{\|v_i\|_{L^\infty}}$ & $\frac{\|a_{i, e} - a_i\|_{L^2}}{\|a_i\|_{L^2}}$ & $\|\theta_{i, e} - \theta_i\|_{L^2}$ \\
 \hline
 $i=1$ & $5.66 \times 10^{-2}$ & $1.45 \times 10^{-1}$ & $4.96 \times 10^{-3}$ & $8.43 \times 10^{-3}$ \\
 $i=2$ & $4.61 \times 10^{-2}$ & $2.39 \times 10^{-1}$ & $2.35 \times 10^{-2}$ & $1.15 \times 10^{-2}$\\
 $i=3$ & $1.34 \times 10^{-1}$ & $9.39 \times 10^{-1}$ & $9.31 \times 10^{-3}$ & $2.69 \times 10^{-2}$ \\
 \hline
\end{tabular}
\caption{Signal component recovery errors on $[-1, 1]$ in the EKG base waveform example.}
 \label{iter results bio}
\end{table*}

\begin{table*}[h]
\centering
\begin{tabular}{ |p{3.25cm}||p{1.85cm}|p{1.85cm}|p{1.85cm}|p{1.85cm}|  }
\hline
 Mode & $\frac{\|v_{i, e} - v_i\|_{L^2}}{\|v_i\|_{L^2}}$ & $\frac{\|v_{i, e} - v_i\|_{L^\infty}}{\|v_i\|_{L^\infty}}$ & $\frac{\|a_{i, e} - a_i\|_{L^2}}{\|a_i\|_{L^2}}$ & $\|\theta_{i, e} - \theta_i\|_{L^2}$ \\
 \hline
 $i=1$ & $1.80 \times 10^{-4}$ & $3.32 \times 10^{-4}$ & $3.52 \times 10^{-5}$ & $2.85 \times 10^{-5}$ \\
 $i=2$ & $4.35 \times 10^{-4}$ & $5.09 \times 10^{-4}$ & $3.35 \times 10^{-5}$ & $7.18 \times 10^{-5}$\\
 $i=3$ & $3.63 \times 10^{-4}$ & $1.08 \times 10^{-3}$ & $7.23 \times 10^{-5}$ & $6.26 \times 10^{-5}$ \\
 \hline
\end{tabular}
\caption{Signal component recovery errors on $[-\frac{1}{3}, \frac{1}{3}]$ in the EKG base waveform example.}
 \label{iter results bio 0.33}
\end{table*}

\section{Unknown base waveforms}\label{secunwav}

Here we consider the extension, Problem \ref{unk wave pb},
 of the mode recovery problem, Problem \ref{pb2}, to the case where the periodic base
 waveform of each mode is unknown and  may be different across modes.  That is, given the observation
\begin{equation}
\label{eienriromgi}
v(t)=\sum_{i=1}^m a_i(t)y_i\big(\theta_i(t)\big), \quad t\in [-1,1],
\end{equation}
  recover the modes
 $v_i:=a_i(t)y_i\big(\theta_i(t)\big)$.
To avoid ambiguities caused by overtones when the waveforms $y_i$ are not only non-trigonometric but also unknown,  we will assume that  the corresponding functions $(k \dot{\theta}_i)_{t\in [-1,1]}$ and $(k' \dot{\theta}_{i'})_{t\in [-1,1]}$  are distinct for $i\not=i'$ and $k,k'\in \mathbb{N}^*$, that is,
  they may be equal for some $t$ but not for all $t$.
We represent the $i$-th base waveform $y_{i}$ through its Fourier series
\begin{equation}
\label{wavefourier}
    y_i(t) = \cos(t)+\sum_{k = 2}^{k_{\max}} \bigl(c_{i, (k,c)} \cos(kt) + c_{i, (k,s)} \sin(kt)\bigr),
\end{equation}
 that, without loss of generality has been scaled and translated. Moreover, 
since we operate in a discrete setting, without loss of generality we can also truncate the series at a finite level $k_{\max}$, which is naturally bounded by the inverse of the resolution of the discretization in time.
\begin{figure}[hbt!]
        \begin{center}
                        \includegraphics[width=\textwidth]{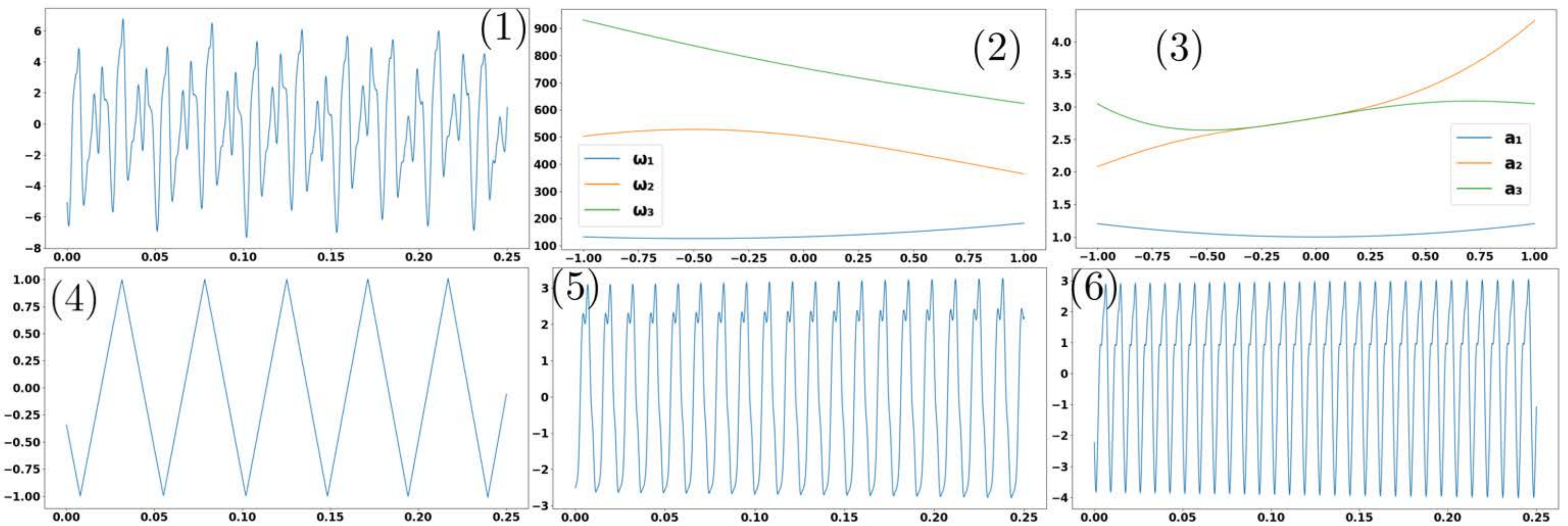}
                \caption{(1) Signal $v$  (the signal is defined over $[-1,1]$ but displayed over $[0, 0.4]$ for visibility) (2)  Instantaneous frequencies $\omega_i := \dot{\theta}_i$ (3) Amplitudes $a_i$ (4, 5, 6) Modes $v_1$, $v_2$, $v_3$ over $[0, 0.4]$ (mode plots have also been zoomed in for visibility).}\label{wavelearn plots}
        \end{center}
\end{figure}
\begin{figure}[hbt!]
        \begin{center}
                        \includegraphics[width=\textwidth]{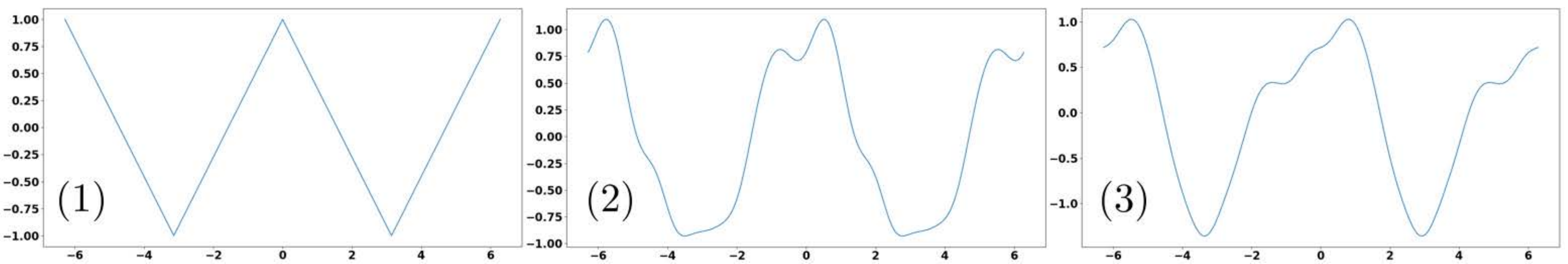}
                \caption{(1) $y_1$ (2) $y_2$ (3) $y_3$}\label{y123}
        \end{center}
\end{figure}
To illustrate our approach, we consider the signal $v=v_{1}+v_{1}+v_{3}$
 and its corresponding modes
$v_{i}:=a_i(t)y_i\big(\theta_i(t)\big)$
  displayed in Figure \ref{wavelearn plots}, where the  corresponding base waveforms $y_1, y_2$ and $y_3$  are shown in Figure \ref{y123} and described in Section \ref{secnumunwav}.

\subsection{Micro-local waveform KMD}\label{waveform kmd sec}
\begin{figure}[hbt!]
        \begin{center}
                        \includegraphics[width=\textwidth]{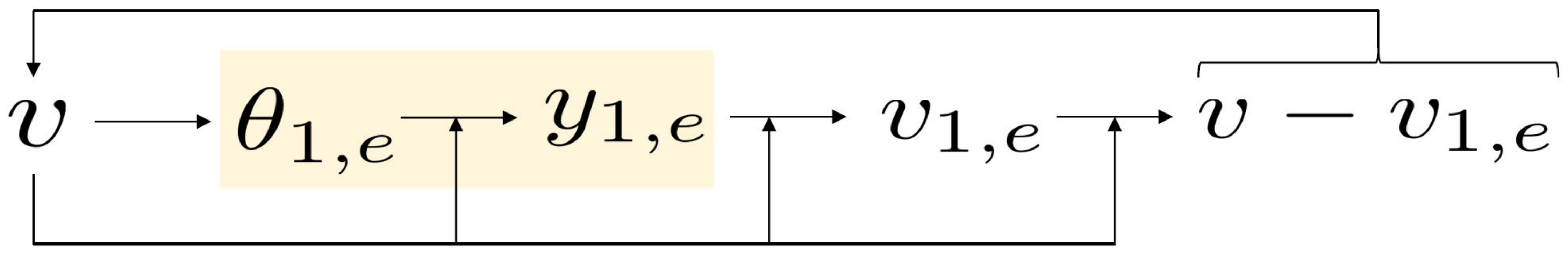}
                \caption{High level structure of  Algorithm \ref{unkiteralgn} for the case when the waveforms are unknown. }
\label{figunknownsame}
        \end{center}
\end{figure}

We now describe the micro-local {\em waveform} KMD, Algorithm \ref{unkiteralgn}, which takes as inputs a time $\tau$,  estimated instantaneous amplitude  and phase functions $t \rightarrow a(t), \theta(t)$, and a signal $v$, and outputs an estimate of the waveform $y(t)$ associated with the phase function $\theta$.  The proposed approach is a direct extension of the one presented in Section \ref{secmicrolocalkmd} and  the shaded part of Figure \ref{figunknownsame} shows the new block which will be added  to Algorithm \ref{iteralgn},  the algorithm designed for the case when waveforms are non-trigonometric and known. As described below this new block produces an estimator $y_{i,e}$ of the waveform $y_i$ from an estimate $\theta_{i,e}$ of the phase $\theta_i$.

Given $\alpha > 0$, $\tau \in [-1,1]$, and differentiable function $t \rightarrow \theta(t)$, define the  Gaussian process
\begin{equation}\label{unk xi def}
    \xi_{\tau, \theta}^{y}(t) = e^{-\big(\frac{\dot{\theta}_{0}(\tau)(t - \tau)}{\alpha}\big)^2}  \Big(X^y_{1, c} \cos\big(\theta(t)\big) + \sum_{k=2}^{k_{\max}} \big(X^y_{k, c}\cos\big(k\theta(t)\big) + X^y_{k, s} \sin\big(k\theta(t)\big)\big)\Big),
\end{equation}
where  $X^y_{1, c}, X^y_{k, c}$, and $X^y_{k, s}$ are independent $\mathcal{N}(0,1)$ random variables.  Let
\begin{equation}
    v_{\tau}(t) := e^{-\big(\frac{\dot{\theta}_{0}(\tau)(t - \tau)}{\alpha}\big)^2} v(t), \quad \tau\in[-1, 1],\,
\end{equation}
 be the windowed signal, and
define
\begin{equation}
    Z^y_{k, j}(\tau, \theta, v) := \lim_{\sigma \downarrow 0}\E\big[X^y_{k,j} \big|\xi^y_{\tau, \theta} + \xi_\sigma=v_{\tau}\big],
\end{equation}
and, for $k \in \{2, \ldots, k_{\max}\}$, $j\in \{c, s\}$, let
\begin{equation}
    c_{k, j}(\tau, \theta, v) := \frac{Z^y_{k, j}(\tau, \theta, v)}{Z^y_{1, c}(\tau, \theta, v)} \, .
\end{equation}

When the assumed phase function $\theta:=\theta_{i,e}$ is close to the phase function $\theta_{i}$ of  the $i$-th
mode of the signal $v$ in the expansion
\eqref{eienriromgi},
$c_{k, j}(\tau,\theta_{i,e} , v)$
 yields an estimate of the Fourier coefficient $c_{i,(k, j)}$ \eqref{wavefourier} of the $i$-th base waveform
$y_{i}$ at time $t = \tau$.   This  waveform recovery is susceptible to error when
 there is interference in the overtone frequencies (that is for the values of $\tau$ at which  $j_1\dot{\theta}_{i_1} \approx j_2 \dot{\theta}_{i_2}$ for $i_1 < i_2$). However,
 since the coefficient $c_{i,(k, j)}$ is independent of time,
we can  overcome this by  computing $c_{k, j}(\tau, \theta_{i,e}, v)$ at each time $\tau$ and take the most common approximate value over all $\tau$ as follows. Let $T \subset [-1,1]$ be the finite set of values of $\tau$ used in the numerical discretization of the time axis with $N:=|T|$ elements.  For an interval $I\subset \R$, let
  \begin{equation}
T_I:=\{\tau\in T | c_{k, j}(\tau, \theta_{i,e}, v) \in I\}\,,
 \end{equation}
and let  $N_I:=|T_{I}|$ denote  the number of elements of $T_I$.
Let $I_{\text{max}}$ be a maximizer of the function $ I \rightarrow N_I$ over intervals of fixed width $L$, and
      define the estimate
\begin{equation}
    c_{k, j}(\theta_{i,e}, v) :=
    \begin{cases}
        \frac{1}{N_{I_\text{max}}} \sum_{\tau \in T_{I_\text{max}}} c_{k, j}(\tau,\theta_{i,e} , v) &,\quad \frac{N_{I_\text{max}}}{N} \geq 0.05 \\
        0 &, \quad  \frac{N_{I_\text{max}}}{N} < 0.05
    \end{cases} \, ,
\end{equation}
 of  the Fourier coefficient $c_{i,(k, j)}$ to be the average of the values of
      $c_{k, j}(\tau, \theta_{i,e}, v)$ over $\tau\in  T_{I_{\text{max}}}$.
The interpretation of the selection of the cutoff  $0.05$ is as follows: if $\frac{N_{I_\text{max}}}{N}$ is small then there is interference in the overtones at all time $[-1, 1]$ and no information may be obtained about the corresponding Fourier coefficient. When the assumed phase function is near that of the lowest frequency mode $v_{1}$, which we write $\theta:=\theta_{1,e}$,
Figures \ref{wavelearn hist}.2 and 4 shows  zoomed-in histograms
 of the functions $\tau \rightarrow c_{(3, c)}(\tau, \theta_{1,e}, v)$ and $\tau \rightarrow  c_{(3, s)}(\tau, \theta_{1,e}, v)$ displayed in Figures  \ref{wavelearn hist}.1 and 3.

\begin{figure}[h]
	\begin{center}
			\includegraphics[width=\textwidth]{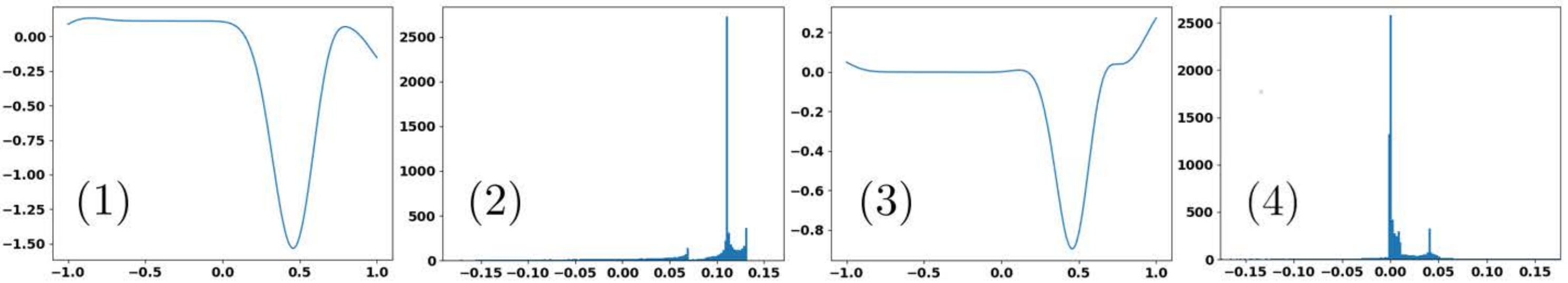}
		\caption{(1) A plot of the function $\tau \rightarrow c_{(3, c)}(\tau, \theta_{1,e}, v)$ (2)
 A histogram (cropping outliers) with bin width $0.002$ of $c_{(3, c)}(\tau, \theta_{1,e}, v)$ values.
 The true value  $c_{1, (3, c)}$ is $1/9$ since $y_1$ is a triangle wave.
 (3) A plot of the function $\tau \rightarrow c_{(3, s)}(\tau, \theta_{1,e}, v)$ (2)
 A histogram (cropping outliers) with bin width $0.002$ of $c_{(3, s)}(\tau, \theta_{1,e}, v)$ values.
  The true value $c_{1, (3, s)}$
 of this overtone  is $0$. }
\label{wavelearn hist}
	\end{center}
\end{figure}

\paragraph{On the interval width $L$.}  In our numerical experiments,
 the recovered modes and waveforms  show little sensitivity to
 the choice of  $L$.  In particular,  we
 set $L$ to be $0.002$, whereas widths between  $0.001$ and $0.01$ yield similar results.
The rationale for the rough selection of the
 value of $L$ is as follows.  Suppose $v = \cos(\omega t)$ and $v' = v + \cos(1.5\omega t)$.  Define the  quantity
\begin{equation}
    \max_\tau \bigl(c_{2, c}(\tau, \theta, v') - c_{2, c}(\tau, \theta, v)\bigr)\,,
\end{equation}
with the intuition of approximating the maximum corruption by the $\cos(1.5\omega t)$ term in the estimated first overtone.  This quantity provides a good choice for $L$ and is mainly dependent on the selection of $\alpha$ and marginally on $\omega$.  For our selection of $\alpha = 10$, we numerically found its value to be approximately $0.002$.

\subsection{Iterated micro-local KMD with unknown waveforms algorithm}
\begin{algorithm}[hbt!]
\caption{Iterated micro-local KMD with unknown waveforms.}\label{unkiteralgn}
\begin{algorithmic}[1]
\STATE\label{al2st1bb} $i \leftarrow 1$ and $v^{(1)} \leftarrow v$
\WHILE{true}
\IF {$\theta_\low(v^{(i)})=\emptyset$}
\STATE {break loop}
\ELSE
\STATE {$\theta_{i,e}\leftarrow \theta_\low(v^{(i)})$} \label{theta est 1}
\STATE {$y_{i,e}\leftarrow \cos(t)$} \label{y est 1}
\ENDIF
\STATE   $a_{i, e}(\tau) \leftarrow  0$
\REPEAT\label{refloop1bb}
\FOR{$l$ in $\{1, ..., i\}$}
\STATE\label{stit1bb}  $v_{l,\mathrm{res}}\leftarrow v - a_{l, e}\bar{y_{l,e}}(\theta_{l, e}) -  \sum_{k \neq l, k \leq i} a_{k, e} y_{l,e}(\theta_{k,
e})$
\STATE\label{stit2bb}   $a_{l,e}(\tau) \leftarrow a\big(\tau, \theta_{l,e}, v_{l,\mathrm{res}}\big) / c_1$
\STATE\label{stit3bb}   $\theta_{l,e}(\tau) \leftarrow  \theta_{l,e}(\tau)+\frac{1}{2}\delta \theta \big(\tau, \theta_{l,e},
v_{l,\mathrm{res}}\big)$
\STATE{$c_{l, (k, j), e} \leftarrow c_{k, j}\big(\theta_{l, e}, v_{l, \mathrm{res}}\big)$} \label{c est 2}
\STATE{$y_{l, e}(\cdot) \leftarrow \cos(\cdot) + \sum_{k=2}^{k_{\max}}
 \bigl(c_{l, (k, c), e} \cos(k\cdot) + c_{l, (k, s), e} \sin(k\cdot)\bigr)$}\label{y est 2}

\ENDFOR
\UNTIL\label{refloop1endbb}{$\sup_{l, \tau}\bigl|\delta \theta \big(\tau, \theta_{l,e},
v_{l, \mathrm{res}}\big)      
\bigr| <  \epsilon_1$}

\STATE\label{stpobb}  $v^{(i+1)}  \leftarrow  v - \sum_{j \leq i} a_{j, e} y_{i,e}(\theta_{j, e})$
\STATE   $i \leftarrow i + 1$
\ENDWHILE\label{al2st1ebb}
\STATE $m\leftarrow i-1$
\REPEAT
\FOR{$i$ in $\{1, \dots, m\}\footnotemark$}
\STATE  $v_{i,\mathrm{res}}\leftarrow v - a_{i, e}\bar{y}_{i, e}(\theta_{i, e}) -  \sum_{j \neq i} a_{j, e} y_{j, e}(\theta_{j, e})$
\STATE   $a_{i,e}(\tau) \leftarrow a\big(\tau, \theta_{i,e}, v_{i, \text{res}}\big)$
\STATE   $\theta_{i,e}(\tau) \leftarrow  \theta_{i,e}(\tau)+\frac{1}{2}\delta \theta\big(\tau, \theta_{i,e}, v_{i,\mathrm{res}}\big)$
\STATE {$c_{i, (k, j), e} \leftarrow c_{k, j}\big(\theta_{i, e}, v - \sum_{j \neq i} a_{j, e} y_{j, e}(\theta_{j, e})\big)$}\label{c est 3}
\STATE {$y_{i, e}(\cdot) \leftarrow \cos(\cdot) + \sum_{k=2}^{k_{\max}}
\bigl(c_{i, (k, c), e} \cos(k\cdot) + c_{i, (k, s), e} \sin(k\cdot)\bigr)$}\label{y est 3}
\ENDFOR
\UNTIL {$\sup_{i, \tau}\bigl|\delta \theta \big(\tau, \theta_{i,e},
v_{i,\mathrm{res}}\big)
\bigr| <  \epsilon_2$}
\label{al2stfr1ebb}
\STATE Return the modes $v_{i, e}(t) \leftarrow a_{i, e}(t) y(\theta_{i, e}(t))$ for $i = 1, ..., m$
\end{algorithmic}
\end{algorithm}
\footnotetext{\label{dummy2} This repeat loop,  used to refine the estimates, is optional.  Also,
 all statements in Algorithms with dummy variable $\tau$   imply a loop over all values of $\tau$ in the mesh $\T$. }

Except for the steps discussed in Section \ref{waveform kmd sec},
Algorithm \ref{unkiteralgn} is identical to Algorithm \ref{iteralgn}.  As illustrated in
Figure \ref{figunknownsame}, we first identify the lowest frequency of the cosine component of each mode  (lines \ref{theta est 1} and \ref{y est 1} in Algorithm \ref{unkiteralgn}).  Next, from lines \ref{refloop1bb} to \ref{refloop1endbb}, we execute a similar refinement loop as in Algorithm \ref{iteralgn} with the addition of an application of micro-local waveform KMD on lines \ref{c est 2} and \ref{y est 2} to estimate base waveforms.   Finally, once each mode has been identified, we again apply waveform estimation in lines \ref{c est 3}-\ref{y est 3} (after nearly eliminating other modes and reducing interference in overtones for higher accuracies).

\subsection{Numerical experiments}\label{secnumunwav}

To illustrate this learning of the base waveform of each mode, we take
 $v(t) = \sum_{i = 1}^3 a_i(t) y_i(\theta_i(t))$,  where the lowest frequency mode $a_1(t) y_1(\theta_1(t))$ has the (unknown)
 triangle waveform $y_{1}$
  of   Figure \ref{waves}.
  We determine the waveforms $y_{i}, i = 2, 3$, randomly by setting $c_{i, (k, j)}$
 to be zero with probability $1/2$ or to be a random  sample from $\N(0,  1/k^4)$ with probability $1/2$,
  for $k \in \{2, \ldots, 7\}$ and $j \in \{c, s\}$.
  The waveforms $y_1, y_2, y_3$ thus obtained are illustrated in Figure \ref{y123}.  The modes $v_1, v_2, v_3$, their amplitudes and instantaneous frequencies are shown in Figure \ref{wavelearn plots}.

\begin{table*}[h]
\centering
\begin{tabular}{ |p{3.00cm}||p{1.85cm}|p{1.85cm}|p{1.85cm}|p{1.85cm}|p{1.85cm}|  }
\hline
 Mode & $\frac{\|v_{i, e} - v_i\|_{L^2}}{\|v_i\|_{L^2}}$ & $\frac{\|v_{i, e} - v_i\|_{L^\infty}}{\|v_i\|_{L^\infty}}$ & $\frac{\|a_{i, e} - a_i\|_{L^2}}{\|a_i\|_{L^2}}$ & $\|\theta_{i, e} - \theta_i\|_{L^2}$ &  $\frac{\|y_{i, e} - y_i\|_{L^2}}{\|y_i\|_{L^2}}$\\
 \hline
 $i=1$ & $6.31 \times 10^{-3}$ & $2.39 \times 10^{-2}$ & $9.69 \times 10^{-5}$ & $1.41 \times 10^{-5}$ & $6.32 \times 10^{-3}$ \\
 $i=2$ & $3.83 \times 10^{-4}$ & $1.08 \times 10^{-3}$ & $5.75 \times 10^{-5}$ & $1.16 \times 10^{-4}$ & $3.76 \times 10^{-4}$\\
 $i=3$ & $3.94 \times 10^{-4}$ & $1.46 \times 10^{-3}$ & $9.53 \times 10^{-5}$ & $6.77 \times 10^{-5}$ & $3.80 \times 10^{-4}$ \\
 \hline
\end{tabular}
\caption{Signal component recovery errors over $[-1, 1]$ when the base waveforms are unknown}\label{tableunwav}
\end{table*}

We use the same mesh and the same value of  $\alpha$ values as  in Section \ref{tri wave examp}.  The main source of error for the recovery of the first mode's base waveform stems from the fact that a triangle wave has an infinite number of overtones, while in our implementation, we estimate only the first 15 overtones. Indeed,
  the   $L^{2}$ recovery error of approximating the first $16$ tones of the  triangle wave is $3.57 \times 10^{-4}$,
while the full recovery errors  are presented in Table \ref{tableunwav}.
  We omitted the plots of the $y_{i, e}$ as they are visually indistinguishable from those of the $y_i$.  Note that errors are only slightly improved away from the borders as the majority of it is accounted for by the waveform recovery error.

\section{Crossing frequencies, vanishing modes, and noise}

The algorithm introduced in this section addresses the following generalization of the mode recovery Problem \ref{pb3}, allowing for
crossing frequencies,
 vanishing modes and noise. The purpose of the $\delta,\epsilon$-condition in Problem \ref{recov general} is to prevent a long
overlap of the instantaneous frequencies of distinct modes.

 \begin{Problem}\label{recov general}
For $m\in \mathbb{N}^*$, let $a_1,\ldots,a_m$ be piecewise smooth  functions on $[-1,1]$, and  let $\theta_1,\ldots,\theta_m$ be
strictly increasing  functions on $[-1,1]$ such that, for  $\epsilon>0$
 and $\delta \in [0,1)$, the length of $t$
with $\dot{\theta_i}(t)/\dot{\theta}_j(t) \in [1-\epsilon, 1+\epsilon]$ is less than $\delta$. 
Assume that $m$ and the $a_i, \theta_i$ are unknown, and the square-integrable $2\pi$-periodic
 base waveform $y$ is known.  
 Given the observation
$v(t)=\sum_{i=1}^m a_i(t)y\big(\theta_i(t)\big) + v_\sigma(t)$ (for $t\in [-1,1]$),
where $v_\sigma$ is a realization of  white noise
 with variance $\sigma^2$, recover the modes
$v_i(t):=a_i(t)y\big(\theta_i(t)\big)$.
\end{Problem}
\begin{figure}[h]
	\begin{center}
			\includegraphics[width=\textwidth]{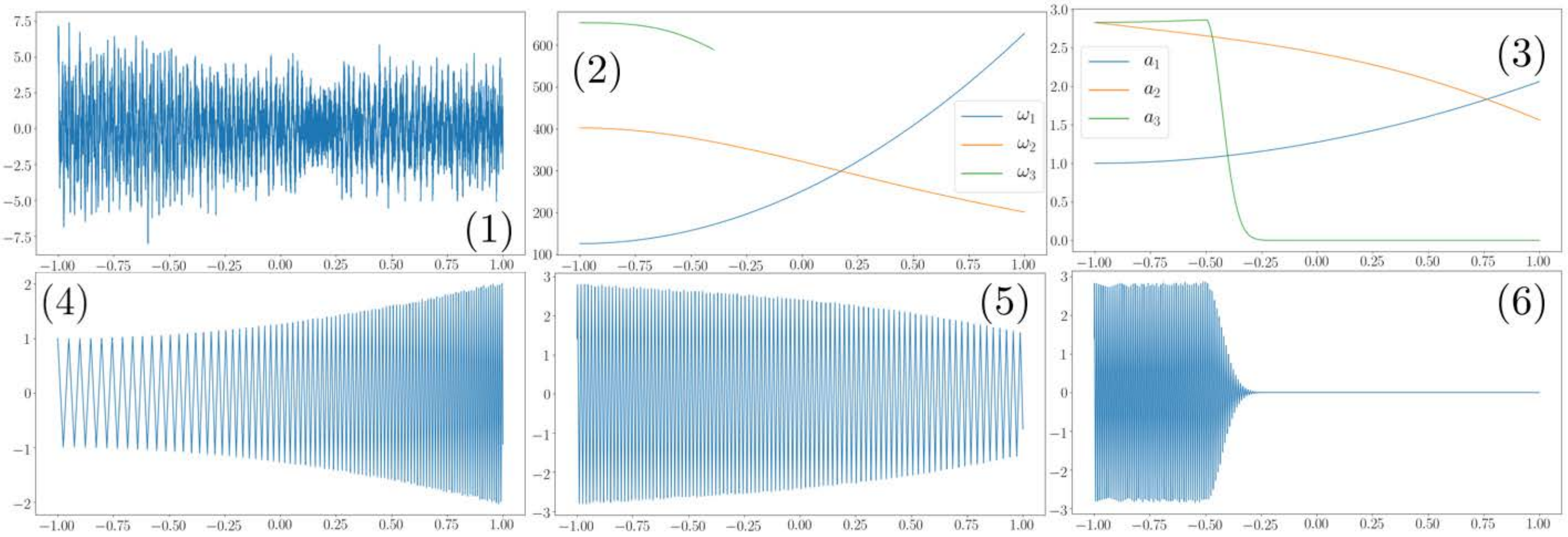}
		\caption{(1) Signal $v$  (2)  Instantaneous frequencies
 $\omega_i := \dot{\theta}_i$ (3) Amplitudes $a_i$  (4, 5, 6) Modes
$v_1$, $v_2$, $v_3$.}
\label{seg fig}
	\end{center}
\end{figure}
We will use the following two examples to illustrate our algorithm, in particular
 the identification of the lowest frequency $\omega_\low(\tau)$,
 at each time $\tau$,  and  the process of obtaining estimates of  modes.
\begin{Example} \label{seg ex}
Consider the problem of recovering the modes of the signal $v = v_1 + v_2 + v_3 + v_\sigma$ shown in
 Figure
\ref{seg fig}.  Each mode has a triangular base waveform.  In this example $v_3$ has the highest frequency and its amplitude vanishes over  $t > -0.25$. The frequencies of $v_1$
and $v_2$, cross  around $t = 0.25$.   $v_\sigma\sim \cN(0,\sigma^2 \delta(t-s))$ is  white noise with standard deviation $\sigma =
0.5$.  While the signal-to-noise ratio is $\Var(v_1 + v_2 + v_3) / \Var(v_\sigma)= 13.1$,
 the SNR ratio against each of the  modes $\Var(v_i) / \Var(v_\sigma)$, 
$i=1,2,3,$ is $2.7$, $7.7$, and $10.7$ respectively.
\end{Example}
\begin{figure}[h]
	\begin{center}
			\includegraphics[width=\textwidth]{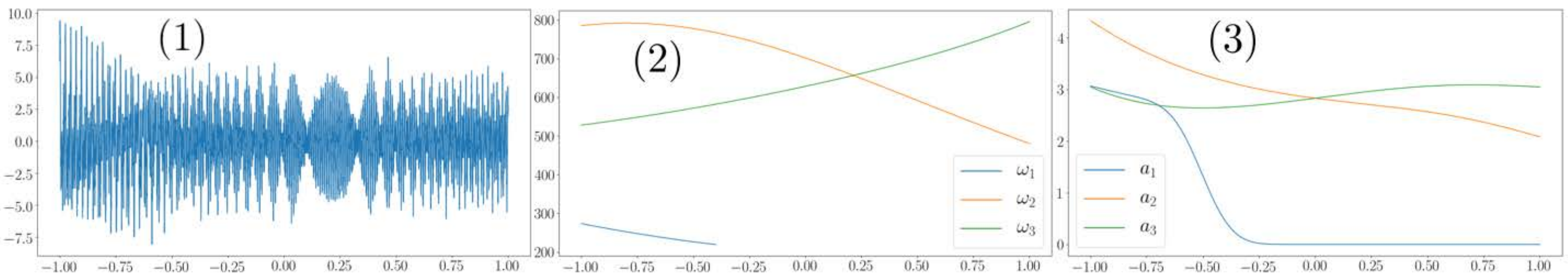}
		\caption{(1) Signal $v$ 
 (2)  Instantaneous frequencies $\omega_i := \dot{\theta}_i$ (3) Amplitudes $a_i$.}
\label{seg fig2}
	\end{center}
\end{figure}
\begin{Example} \label{seg ex2}
Consider the signal $v = v_1 + v_2 + v_3 + v_\sigma$ shown in
 Figure
\ref{seg fig2}.  Each mode has a triangular base waveform. In this example, the vanishing mode, $v_1$, has the
 lowest  frequency over $ t \lesssim  -0.25$ but then  its amplitude vanishes over  $t \gtrsim -0.25$. The frequencies of $v_2$
and $v_3$, cross  around $t = 0.25$.   $v_\sigma\sim \cN(0,\sigma^2 \delta(t-s))$ is  white noise with standard deviation $\sigma =
0.5$.
\end{Example}

 Examples \ref{seg ex} and \ref{seg ex2} of Problem \ref{recov general} cannot directly be solved with
Algorithm \ref{iteralgn}  
  (where the mode with the  lowest frequency is iteratively identified and peeled off) because the lowest observed
instantaneous frequency may no longer be associated with the same mode at different times in $[-1, 1]$ (due to vanishing amplitudes
and crossing frequencies).  Indeed, as can be seen in Figure \ref{seg fig}.2, the mode $v_1$ will have lowest instantaneous frequency at times prior to the intersection, i.e. over $t \lesssim 0.25$, while the lowest frequency is associated with $v_2$ over $t \gtrsim 0.25$.  Further, in Example \ref{seg ex2} which has modes with frequencies illustrated in Figure \ref{seg fig2}.2, Figure \ref{seg fig2}.3  shows that
 the amplitude of the  mode $v_{1}$  vanishes for $t \gtrsim -0.5$ and therefore will not contribute to a lowest frequency estimation in that interval.  Figure \ref{seg fig2}.2 implies  that $v_{1}$ will appear to have the lowest instantaneous frequency for $t \lesssim -0.5$, $v_{2}$ will appear to for  $t \gtrsim 0.25$, and $v_3$ otherwise.

The algorithms introduced in this section will address these challenges by first estimating the lowest frequency mode at each point of time in $[-1,1]$ and dividing the domain into intervals with continuous instantaneous frequency and $\dot{\theta}_\text{low} \approx \omega_\low$ in Algorithm \ref{mode ident alg}.  Divisions to $[-1, 1]$ can be caused by either a mode vanishing or a frequency intersection.  The portions of modes corresponding to these resulting intervals with identified instantaneous frequencies are called \textit{mode fragments}.  Next, Algorithm \ref{mode ext} extends the domain of these fragments to the maximal domain such that the instantaneous frequency is continuous and $\dot{\theta}_\text{low} \approx \omega_\low$, thus determining  what are called \textit{mode segments}.  The difference between fragments and segments is elaborated in the discussion of Figure \ref{low freq}.  Furthermore, in Algorithm \ref{mode process}, the segments that are judged to be an artifact of noise or a mode intersection are removed. After segments are grouped by the judgment of the user of the algorithm into 
 which belong to the same mode, they
 are then joined via interpolation to create estimates of full modes.  Finally, in Algorithm \ref{segalgn}, mode estimates are refined as in the final refinement loop in Algorithm \ref{iteralgn}. 
 
\subsection{Identifying modes and segments}\label{subsecseg}
Algorithm \ref{mode ident alg}, which follows,  presents the main module $m_\text{mode}(v, \mathcal{V}, \mathcal{V}_\text{seg})$  composing Algorithm
\ref{segalgn}. The input of this module
 is the original signal $v$, a set of (estimated) \textit{modes} 
$\mathcal{V} := \{v_{i, e}:[-1,1]\rightarrow \R\}$, and a set $\mathcal{V}_\text{seg}:=\{v^{i, e}:\Td_{i,e} \rightarrow \R\}$
 of  (estimated) \textit{segments} $v^{i, e}$, where
each  mode is defined in terms of its amplitude $a_{i,e}$ and  phase $\theta_{i,e}$
 as $v_{i, e}(t):=a_{i,e}(t) y(\theta_{i,e}(t))$, and each   segment 
is defined in terms of its  amplitude $a^{i,e}$ and  phase $\theta^{i,e}$ 
 as the function  $v^{i, e}(t):=a^{i,e}(t) y(\theta^{i,e}(t))$ on its closed interval domain
$\Td_{i,e}$. In Algorithm \ref{mode ident alg} we  consider a uniform mesh $\T \subset [-1,1]$ with mesh spacing
$\delta t$ and define a \textit{mesh interval} $[a,b]:=\{t \in \T: a \leq t \leq b\}$, using the same notation
 for
 a mesh interval as a regular closed interval. In particular, both the modes and  segments $v_{i,e}, v^{i,e}$
contain, as data, their amplitudes $a_{i,e}, a^{i,e}$ and phase functions $\theta_{i,e}, \theta^{i,e}$,
  while the  segments additionally contain as data their domain $\Td_{i,e}$. Moreover, their
 frequencies
 $\omega_{i,e}, \omega^{i,e}$ can also be directly extracted 
since they are a function of their phase functions. 
 The output of this module is an updated set of modes $\mathcal{V}^\text{out}$ and 
 segments $\mathcal{V}^\text{out}_\text{seg}$.  The first step of this module (lines \ref{peel start} to \ref{peel end} of Algorithm
 \ref{mode ident alg}) is to compute, for  each time $\tau \in [-1,1]$, 
the residual
\begin{equation}\label{eqkhhhhukhbff0}
    v_\tau := v - \sum_{v_{i, e} \in \mathcal{V}} v_{i, e} - \sum_{v^{i, e} \in \mathcal{V}_\text{seg}:\tau \in  \Td_{i,e}} v_\tau^{i, e}\,
\end{equation}
  of the original signal after peeling  off the  modes and  \textit{localized  segments}, where
the localized  segment 
\begin{equation}\label{sub mode peel}
    v^{i, e}_\tau(t) := a^{i, e}(\tau) e^{-\big(\frac{\omega^{i, e}(\tau)(t - \tau)}{\alpha}\big)^2}
 y\bigl((t - \tau) \omega^{i,
    e}(\tau) + \theta^{i, e}(\tau)\bigr), \quad  t \in [-1,1], \, \tau \in  \Td_{i,e},
\end{equation}
 defined from the amplitude, phase and frequency of segment  $v^{i, e}$, is well-defined
on the whole domain $[-1,1]$  when $\tau\in \Td_{i,e}$. Extending $v^{i, e}_\tau$ so that it is defined as the zero function for $ \tau \notin  \Td_{i,e}$, \eqref{eqkhhhhukhbff0} appears more simply as
\begin{equation}\label{eqkhhhhukhbff}
    v_\tau := v - \sum_{\mathcal{V}} v_{i, e} - \sum_{\mathcal{V}_\text{seg}} v_\tau^{i, e}\,.
\end{equation}

  Note that unlike previous sections where the  function
 $\theta_{0}$, common throughout many iterations,  would be determining the width parameter $\dot{\theta}_{0}(\tau)$ in the exponential in \eqref{sub mode peel},  here the latest frequency estimate $\omega^{i,e}$ is used. The peeling \eqref{eqkhhhhukhbff}
  of the modes, as well as the
 segments,
  off of  the signal $v$ is
 to identify other segments with higher instantaneous frequencies.

Next, in line \ref{omegaebeflow} of Algorithm
 \ref{mode ident alg}, we compute the lowest instantaneous frequency $\omega_\low(\tau, v_\tau)$ of
  $v_\tau$ as in  \eqref{omegalowe}, where $A_{\text{low}}$ is determined either by the user or a set of rules,
  e.g.~we identify $\omega_\low(\tau, v_\tau)$  as the lowest
 frequency local maxima of the energy $\mathcal{S}(\tau, \cdot,
v_\tau)$ that is greater than a set threshold $\epsilon_0$ (in our implementations, we set this
threshold as a fixed fraction of $\max_{\tau, \omega} \mathcal{S}(\tau, \omega, v)$).
If no energies are detected above this given threshold in $\mathcal{S}(\tau,\cdot,v_\tau)$  we set $\omega_{\low}(\tau,
v_\tau)=\emptyset$. We  use the abbreviation  $\omega_{\low}(\tau)
$ for $\omega_{\low}(\tau,
v_\tau)$.  Figure \ref{low freq}.2 shows $\omega_\low(\tau)$ derived from $\S$ (Figure \ref{low freq}.1) in Example \ref{seg ex2}.  
\begin{algorithm}[h!]
\caption{Lowest frequency  segment identification}\label{mode ident alg}
\begin{algorithmic}[1]
\STATE \textbf{function} $m_\text{mode}(v, \mathcal{V}, \mathcal{V}_\text{seg})$
\STATE \quad \textbf{for} $v^{i,e}$ in $\mathcal{V}_\text{seg}$ \textbf{do} \label{peel start}
\STATE \quad \quad $v^{i, e}_\tau(t) \leftarrow a^{i, e}(\tau) e^{-\big(\frac{\omega^{i, e}(\tau)(t - \tau)}{\alpha}\big)^2} y((t
- \tau) \omega^{i, e}(\tau) + \theta^{i, e}(\tau))$
\STATE \quad \textbf{end for}
\STATE \quad $v_\tau \leftarrow v - \sum_{ \mathcal{V}} v_{i, e} - \sum_{ \mathcal{V}_\text{seg}}
v_\tau^{i, e}$ \label{peel end}
\STATE \quad Get $\omega_\low(\tau, v_{\tau})$ as in \eqref{omegalowe} and abbreviate it as
$\omega_\low(\tau)$  \label{omegaebeflow}
\STATE \quad  \textbf{if} $\omega_\low(\tau)\not=\emptyset$
\STATE \quad \quad $a_\low(\tau) \leftarrow a(\tau, (\cdot-\tau)\omega_\low(\tau), v_\tau)$ \label{alow}
\STATE \quad \quad $\theta_\low(\tau) \leftarrow \delta\theta(\tau, (\cdot-\tau) \omega_\low(\tau), v_\tau)$ \label{thetalow}
\STATE \quad  \textbf{end if}
\STATE \quad Set $\mathcal{T}$ to be the regular time mesh with spacing $\delta t$
\STATE \quad $\mathcal{T}  \leftarrow  \mathcal{T}  \cap   \{\tau|\omega_\low(\tau)\not=\emptyset\}$
\STATE \quad \textbf{if} $\mathcal{T}  =\emptyset$ \textbf{then}
\STATE \quad \quad $\mathcal{V}_\text{seg} \leftarrow \emptyset$
\STATE \quad \quad \textbf{return} $\mathcal{V}, \mathcal{V}_\text{seg}$ and goto line \ref{gegbdhdb}
\STATE \quad \textbf{end if}
\STATE \quad $\mathcal{T}_\text{cut} \leftarrow \{[\min(\mathcal{T}), \max(\mathcal{T})]\}$
 (Initialize the set of mesh intervals $\mathcal{T}_\text{cut}$)
 \label{Tcut def}
\STATE \quad \textbf{for} successive $\tau_1, \tau_2$ ($\tau_2 - \tau_1 = \delta t$)  in $\mathcal{T}$  \textbf{do} \label{Tcut loop
start}
\STATE \quad \quad \textbf{if} $\Bigg|\log\bigg(\frac{\omega_\low(\tau_2)}{\omega_\low(\tau_1)}\bigg)\Bigg| > \epsilon_1 \text{ or }
\Bigg|\log\bigg(\frac{(\theta_\low(\tau_2) - \theta_\low(\tau_1))(\tau_2 - \tau_1)^{-1}}{\omega_\low(\tau_1)}\bigg)\Bigg| >
\epsilon_2$ \textbf{then}
\STATE \quad \quad \quad \textbf{if} $[\tau_1, \tau_2] \subset [t_1, t_2] \in \mathcal{T}_\text{cut}$ \textbf{then}
\STATE \quad \quad \quad \quad $\mathcal{T}_\text{cut} \leftarrow (\mathcal{T}_\text{cut} \smallsetminus \{[t_1, t_2]\}) \cup
\{[t_1, \tau_1], [\tau_2, t_2]\}$
\STATE \quad \quad \quad \textbf{end if}
\STATE \quad \quad \textbf{end if}
\STATE \quad \textbf{end for}\label{Tcut loop end}

\STATE \quad $v_\low \leftarrow a_\low y(\theta_\low)$

\STATE \quad \textbf{for} $[t_1, t_2]$ in $\mathcal{T}_\text{cut}$ \textbf{do}\label{Tcutsmall loop start}
\STATE \quad \quad $v_{\text{seg}, [t'_1, t'_2]}, t'_1, t'_2 \leftarrow$ MODE\_EXTEND$(v, v_\low \restrict{[t_1, t_2]},\S(\cdot, \cdot, v_\tau))$
\STATE \quad \quad \textbf{if} $ \int_{t'_{1}}^{t'_{2}}{ \omega_\low(\tau)d\tau}> \epsilon_3$
  \textbf{then}
\STATE \quad \quad \quad $\mathcal{V}_\text{seg} \leftarrow \mathcal{V}_\text{seg} \cup \{v_{\text{seg}, [t'_1, t'_2]}\}$
\STATE \quad \quad \textbf{end if}
\STATE \quad \textbf{end for}\label{Tcut small loop end}

\STATE\label{groupdiscardstep} \quad  $\mathcal{V}^\text{out}, \mathcal{V}^\text{out}_\text{seg} \leftarrow$
MODE\_PROCESS$(\mathcal{V},\mathcal{V}_\text{seg}, \S(\cdot, \cdot, v_\tau))$

\STATE \quad \textbf{return} $\mathcal{V}^\text{out}, \mathcal{V}^\text{out}_\text{seg}$
\STATE \textbf{end function}\label{gegbdhdb}
\end{algorithmic}
\end{algorithm}
Then,  using the micro-local KMD approach of
Section \ref{secmicrolocalkmd} with (the maximum polynomial degree) $d$ set to $0$,
 lines \ref{alow} and \ref{thetalow} of Algorithm \ref{mode ident alg} compute an amplitude
\begin{equation}\label{amp def omega}
    a_\low(\tau) := a(\tau, (\cdot-\tau)\omega_\low(\tau) ,v)
\end{equation}
and  phase
\begin{equation}\label{theta def omega}
    \theta_\low(\tau) := \delta\theta(\tau, (\cdot-\tau)\omega_\low(\tau), v)\, 
\end{equation}
at $t = \tau$,
 using \eqref{eienuriig} applied to the  locally estimated phase function $(\cdot-\tau)\omega_\low(\tau)$ determined by                                  
 the estimated instantaneous frequency
$\omega_\low(\tau)$. 
 The approximation \eqref{theta def omega} is justified since
   this estimated phase function $(\cdot-\tau)\omega_\low(\tau)$ vanishes at $t=\tau$,
   so that the discussion below \eqref{eienuriig} demonstrates that the updated estimated phase  $
 0+\delta\theta(\tau,(\cdot-\tau)\omega_\low(\tau) ,v)=\delta\theta(\tau, (\cdot-\tau)\omega_\low(\tau),v)$
 is an estimate of the
instantaneous phase at $t = \tau$ and  frequency $\omega =\omega_\low(\tau)$. Then
$a_\low(\tau) y(\theta_\low(\tau))$ is an estimate, at $t = \tau$, of the mode having the lowest frequency.
 If
$\omega_\low(\tau) = \emptyset$, we leave $a_\low$ and $\theta_\low$
undefined.

\begin{figure}[h]
	\begin{center}
			\includegraphics[width=\textwidth]{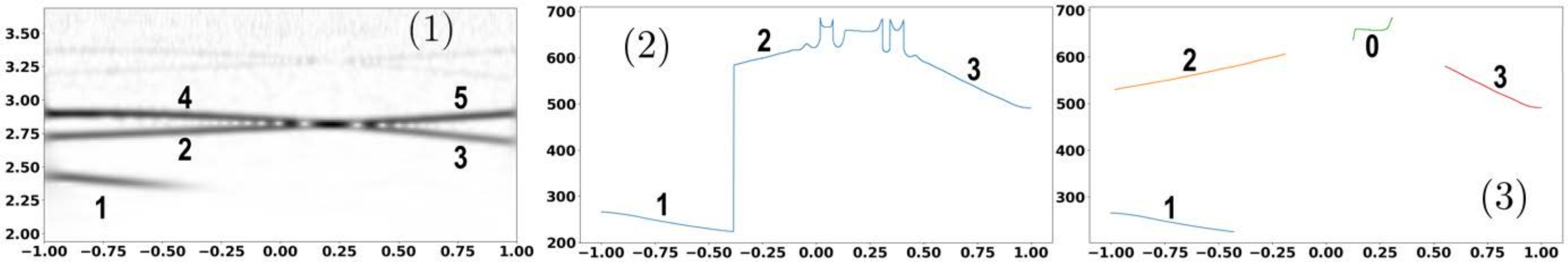}
		\caption{The identification of the first mode segments in Example \ref{seg ex2} is shown.
The scale of the vertical axis is $\log_{10}(\omega)$ in sub-figure (1)
 and $\omega$ in sub-figures (2) and (3)  Segments are labeled in  (1).  (1) Energy $\S(\cdot, \cdot, v)$ (2) the identified lowest frequency  at each time $t$ with consistent segment numbering (3) identified mode segments including an artifact of the intersection, labeled as segment 0.} 
\label{low freq}
	\end{center}
\end{figure}

\begin{algorithm}[h!]
\caption{Mode fragment extension}\label{mode ext}
\begin{algorithmic}[1]
\STATE \textbf{function} $\text{MODE\_EXTEND}(v, v_\text{seg}, \S(\cdot, \cdot, v_\tau))$
\STATE \quad smooth $\leftarrow$ True
\STATE \quad $\tau_1 \leftarrow t_1$
\STATE \quad \textbf{while} smooth is True
\textbf{do} \label{loop ext 1}
\STATE \quad \quad $\theta_1 \leftarrow \theta_\text{seg} (\tau_1)$
\STATE \quad \quad $\omega_1 \leftarrow \dot{\theta}_\text{seg}(\tau_1)$
\STATE \quad \quad $\tau_2 \leftarrow \tau_1 - dt$
\STATE \quad \quad $\omega_2 \leftarrow \operatorname{argmax}_{\omega\in[(1-\varepsilon) \omega_1, (1+\varepsilon) \omega_2]} \S(\tau_2, \omega, v_\tau)$ \label{maxsq modeext1}
\STATE \quad \quad $\theta_2 \leftarrow \delta\theta(\tau_2, (\cdot-\tau_2)\omega_2, v_\tau)$
\STATE \quad \quad \textbf{if} $\bigg|\log\Big(\frac{\omega_2}{\omega_1}\Big)\bigg| > \epsilon_1 \text{ or }
\bigg|\log\Big(\frac{(\theta_2 - \theta_1)(\tau_2 - \tau_1)^{-1}}{\omega_1}\Big)\Bigg| >
\epsilon_2$ \textbf{then}\label{cut modeext1}
\STATE \quad \quad \quad smooth $\leftarrow$ False
\STATE \quad \quad \textbf{else}
\STATE \quad \quad \quad $a_2 \leftarrow a(\tau_2, (\cdot-\tau_2)\omega_2, v_\tau)$
\STATE \quad \quad \quad  $v_\text{seg}(\tau_2) \leftarrow a_2 y(\theta_2)$
\STATE \quad \quad \quad $t_1, \tau_1 \leftarrow \tau_2$
\STATE \quad \quad \textbf{end if}
\STATE \quad \textbf{end while}
\STATE \quad $\tau_1 \leftarrow t_2$
\STATE \quad \textbf{while} smooth is True
\textbf{do} \label{loop ext 2}
\STATE \quad \quad $\theta_1 \leftarrow \theta_\text{seg} (\tau_1)$
\STATE \quad \quad $\omega_1 \leftarrow \dot{\theta}_\text{seg}(\tau_1)$
\STATE \quad \quad $\tau_2 \leftarrow \tau_1 + dt$
\STATE \quad \quad $\omega_2 \leftarrow \operatorname{argmax}_{\omega\in[(1-\varepsilon) \omega_1, (1+\varepsilon) \omega_2]} \S(\tau_2, \omega, v_\tau)$\label{maxsq modeext2}
\STATE \quad \quad $\theta_2 \leftarrow \delta\theta(\tau_2, (\cdot-\tau_2)\omega_2, v_\tau)$
\STATE \quad \quad \textbf{if} $\bigg|\log\Big(\frac{\omega_2}{\omega_1}\Big)\bigg| > \epsilon_1 \text{ or }
\bigg|\log\Big(\frac{(\theta_2 - \theta_1)(\tau_2 - \tau_1)^{-1}}{\omega_1}\Big)\Bigg| >
\epsilon_2$ \textbf{then}\label{cut modeext2}
\STATE \quad \quad \quad smooth $\leftarrow$ False
\STATE \quad \quad \textbf{else}
\STATE \quad \quad \quad $a_2 \leftarrow a(\tau_2, (\cdot-\tau_2)\omega_2, v_\tau)$
\STATE \quad \quad \quad  $v_\text{seg}(\tau_2) \leftarrow a_2 y(\theta_2)$
\STATE \quad \quad \quad $t_2, \tau_1 \leftarrow \tau_2$
\STATE \quad \quad \textbf{end if}
\STATE \quad \textbf{end while}

\STATE \quad \textbf{return} $v_\text{seg}, t_1, t_2$
\STATE \textbf{end function}
\end{algorithmic}
\end{algorithm}

Next, let us describe how we use the values of $(\tau, \omega_\low(\tau))$ to determine the interval domains for   segments.   Writing $\mathcal{T}_\text{cut}$ for the set of
interval domains
 of these segments,  $\mathcal{T}_\text{cut}$ is initially set, in
line \ref{Tcut def}, to contain
  the single element $\mathcal{T}$,  that is, the entire time mesh $\mathcal{T}$.  We split an element of $\mathcal{T}_\text{cut}$ whenever
  $\omega_\low$ is not continuous or
$\dot{\theta}_\low$ and $\omega_\low$ are not approximately equal, as follows. If our identified instantaneous frequency around
$t=\tau$ matches a single mode, we expect neither condition to be satisfied, i.e. we expect both $\omega_\low$ to be continuous and
$\dot{\theta}_\low \approx \omega_\low$.    In our discrete implementation (lines \ref{Tcut loop start} to \ref{Tcut loop end}), we
introduce a cut between two successive points,  $\tau_1$ and $\tau_2$, of the time mesh $\T$,
  if
\begin{equation}\label{disc cut}
    \Bigg|\log\bigg(\frac{\omega_\low(\tau_2)}{\omega_\low(\tau_1)}\bigg)\Bigg|>\epsilon_1\,\, \text{ or }
\,\,\,\,
    \Bigg|\log\bigg(\frac{(\theta_\low(\tau_2) - \theta_\low(\tau_1))(\tau_2 -
    \tau_1)^{-1}}{\omega_\low(\tau_1)}\bigg)\Bigg|>\epsilon_2\, ,
\end{equation}
where $\epsilon_1$ and $\epsilon_2$ are pre-set thresholds.  Each potential mode segment is then identified as $v_\low\restrict{[t_1, t_2]}$ for some $t_{1} < t_{2}, \, t_1, t_2 \in \T$. 

Note that in Figure \ref{low freq}.2, the continuous stretch of $\omega_\low$ labeled by $2$ does not correspond to the  full mode segment labeled by $2$ in Figure \ref{low freq}.1, but a fragment of it. This is because the lowest frequency mode, $v_1$, is identified by $\omega_\low(t)$ over $t \lesssim -0.5$.  We designate this partially identified mode segment as a mode fragment.  Such fragments are extended to fully identified segments (as in $2$ on Figure \ref{low freq}.3) with the MODE\_EXTEND module, with pseudo-code shown in Algorithm \ref{mode ext}.  This  MODE\_EXTEND module iteratively extends the support, $[t_1, t_2]$, by applying, in lines \ref{maxsq modeext1} and \ref{maxsq modeext2}, 
a max-squeezing to identify  instantaneous frequencies
 at neighboring mesh points to the left and right of the  interval $[t_1, t_2]$.  The process is stopped 
if it is detected, in lines \ref{cut modeext1} and \ref{cut modeext2}, 
that the extension is discontinuous in phase according to \eqref{disc cut}.  This sub-module returns (maximally continuous) full mode segments.  Furthermore, to remove segments that may be generated by noise or are mode intersections, in lines \ref{Tcutsmall loop start} to \ref{Tcut small loop end} of Algorithm  \ref{mode ident alg},
 segments such that
\begin{equation}
\label{noisecond}
\int_{t_{1}}^{t_{2}}{ \omega_\low(\tau)d\tau}\leq  \epsilon_3 \,
\end{equation}
where   $\epsilon_3$ is a threshold, are removed.
  In our implementation, we take  $\epsilon_3:=20 \pi$, corresponding to 10 full periods. Note that
  Figure \ref{noise artifacts}.2 shows those segments deemed noise at level $\epsilon_3:=20 \pi$  but which are not deemed noise at level  $3 \pi$, in the step after all three modes have been estimated in Example \ref{seg ex}.
Consequently, it appears that the noise level $\epsilon_3:=20 \pi$ successfully removes most noise artifacts.  Note that the mode segments in Figure \ref{noise artifacts}.2 are short and have quickly varying frequencies compared to those of full modes.

\begin{figure}[h]
	\begin{center}
			\includegraphics[width=\textwidth]{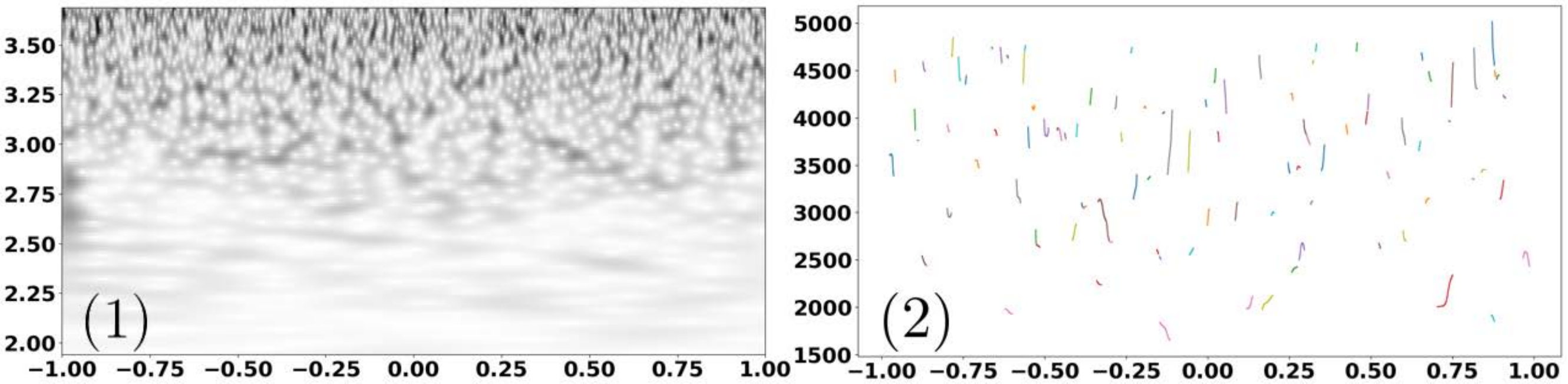}
		\caption{(1) Energy $\S(\cdot, \cdot, v - v_{1,e} - v_{2,e} - v_{3,e})$ (2) identified mode segments ($\mathcal{V}_\text{seg}$ obtained after the loop in Algorithm \ref{mode ident alg} on line \ref{Tcut small loop end}).} 
\label{noise artifacts}
	\end{center}
\end{figure}

\begin{algorithm}[h!]
\caption{Raw  segment processing}\label{mode process}
\begin{algorithmic}[1]
\STATE \textbf{function} $\text{MODE\_PROCESS}(\mathcal{V}, \mathcal{V}_\text{seg}, \S(\cdot, \cdot, v_\tau))$
\STATE \quad $\mathcal{V}_\text{group} \leftarrow \emptyset$ \label{V group}
\STATE \quad \textbf{for} $v^{i,e}$ in $\mathcal{V}_\text{seg}$ \textbf{do} \label{loop group}
\STATE \quad \quad \textbf{if} $v^{i, e}$ corresponds to a mode intersection or noise
\textbf{then}\label{disregard conds}
\STATE \quad \quad \quad $\mathcal{V}_\text{seg} \leftarrow \mathcal{V}_\text{seg} \smallsetminus \{v^{i,e}\}$\label{disregard}
\STATE \quad \quad \textbf{else}\label{group cond}
\STATE \quad \quad \quad \textbf{for} $\mathcal{V}_{\text{group}, j}$ in $(\mathcal{V}_{\text{group}, j'})_{j'}$ \textbf{do} \label{group check
loop}
\STATE \quad \quad \quad \quad \textbf{if} $v^{i,e}$ corresponds to the same mode as $\mathcal{V}_{\text{group}, j}$ \textbf{then}
\STATE \quad \quad \quad \quad \quad $\mathcal{V}_{\text{group}, j} \leftarrow \mathcal{V}_{\text{group}, j} \cup \{v^{i,e}\}$ \label{group enter}
\STATE \quad \quad \quad \quad \quad break for loop
\STATE \quad \quad \quad \quad \textbf{end if}
\STATE \quad \quad \quad \textbf{end for} \label{group check loop end}
\STATE \quad \quad \quad \textbf{if} $v^{i,e}$ not added to any  mode block \textbf{then}
\STATE \quad \quad \quad \quad $(\mathcal{V}_{\text{group}, j'})_{j'}  \leftarrow (\mathcal{V}_{\text{group}, j'})_{j'}
 \cup \{\{v^{i,e}\}\}$\label{add V
group}
\STATE \quad \quad \quad \textbf{end if}\label{group assorted end}
\STATE \quad \quad \textbf{end if}\label{group cond end}
\STATE \quad \textbf{end for}\label{loop group end}

\STATE \quad \textbf{for} $\mathcal{V}_{\text{group}, j}$ in $(\mathcal{V}_{\text{group}, j'})_{j'}$ do \label{full mode check}
\STATE \quad \quad \textbf{if} $\mathcal{V}_{\text{group}, j}$  is complete
\textbf{then} \label{mode join}
\STATE \quad \quad \quad Transform the   segments in  $ \mathcal{V}_{\text{group}, j}$ into a mode
$v_{j,e}$ \label{join group}
\STATE \quad \quad \quad $\mathcal{V} \leftarrow \mathcal{V} \cup \{v_{j,e}\}$ \label{add to modes}
\STATE \quad \quad \quad $\mathcal{V}_\text{seg} \leftarrow \mathcal{V}_\text{seg} \smallsetminus \mathcal{V}_{\text{group},
j}$\label{remove group from seg}
\STATE \quad \quad \textbf{end if}\label{mode join end}
\STATE \quad \textbf{end for}\label{full mode check end}

\STATE \quad \textbf{return} $\mathcal{V}, \mathcal{V}_\text{seg}$
\STATE \textbf{end function}
\end{algorithmic}
\end{algorithm}

Next, line \ref{groupdiscardstep} of Algorithm  \ref{mode ident alg} applies the function MODE\_PROCESS,  Algorithm \ref{mode process},
 to $\mathcal{V}$ and $\mathcal{V}_\text{seg}$, the sets  of modes and  segments,
 as well as the energy $\S(\cdot, \cdot, v_\tau)$, to produce the updated sets 
$\mathcal{V}^{out}$ and $\mathcal{V}^{out}_\text{seg}$. 
This function utilizes a partition
 of a  set
$\mathcal{V}_\text{group}$,  initialized to be empty,  into a set of partition blocks $\bigl(\mathcal{V}_{\text{group}, j}\bigr)_{j}$,
where 
 $\mathcal{V}_{\text{group}, j} \subset \mathcal{V}_\text{group},\, \forall j$. 
 The partition blocks 
  consist of  segments 
that have been identified as corresponding to 
the same mode, indexed locally by $j$.  
 Each   segment in $\mathcal{V}_\text{seg}$ will either be discarded or placed into a partition block.
When a partition block is \textit{complete} it
 will be turned into a  mode in $\mathcal{V}^\text{out}$   
by interpolating instantaneous frequencies and amplitudes in the (small) missing sections of $\T$ and the 
elements of the partition block removed from $\mathcal{V}_\text{group}$ and $\mathcal{V}_{\text{seg}}$.  All partition blocks that are not complete
 will be passed-on to the next iteration.  These selection steps
 depend on the prior information about the modes composing
the signal and may be based on (a) user input  and/or (b) a set of pre-defined rules.  Further details and rationale on the options to
discard,  place  segments into partition blocks, and determine the completeness of a block, will be discussed in the following paragraphs.
The first loop in Algorithm \ref{mode process},  lines \ref{loop group} to \ref{loop group end}, takes each 
 segment $v^{i,e}$ in $\mathcal{V}_\text{seg}$, and either discards it, adds it to a  partition block  in
$\mathcal{V}_\text{group}$, or creates a new partition block with it.  
 On line \ref{disregard conds}, we specify that a  segment is to be discarded (i.e.~removed from the
set of  segments $\mathcal{V}_\text{seg}$) whenever it corresponds to a mode intersection or noise, where
we identify a mode intersection
  whenever two modes' instantaneous frequencies match at any particular time.  This
can be seen in Figure \ref{seg group}.1 where the energies for the higher two frequency modes on $t \gtrsim -.25$
meet in frequency at time $t \approx
0.25$, as well as Figure \ref{low freq}.1, where the lower two frequency modes  on $t \gtrsim -.25$ also meet around $t \approx 0.25$.  Moreover, segment $0$ in Figure \ref{low freq}.3 corresponds an artifact of this mode intersection.  In these two examples, it has been observed selecting $\epsilon_3$ large enough leads to no identified noise artifacts.  However, identified segments with these similar characteristics as those in Figure \ref{noise artifacts}.2, i.e. short with rapidly varying frequency, are discarded, especially if there is a prior knowledge of noise in the signal.  
   
\begin{figure}[h]
	\begin{center}
			\includegraphics[width=\textwidth]{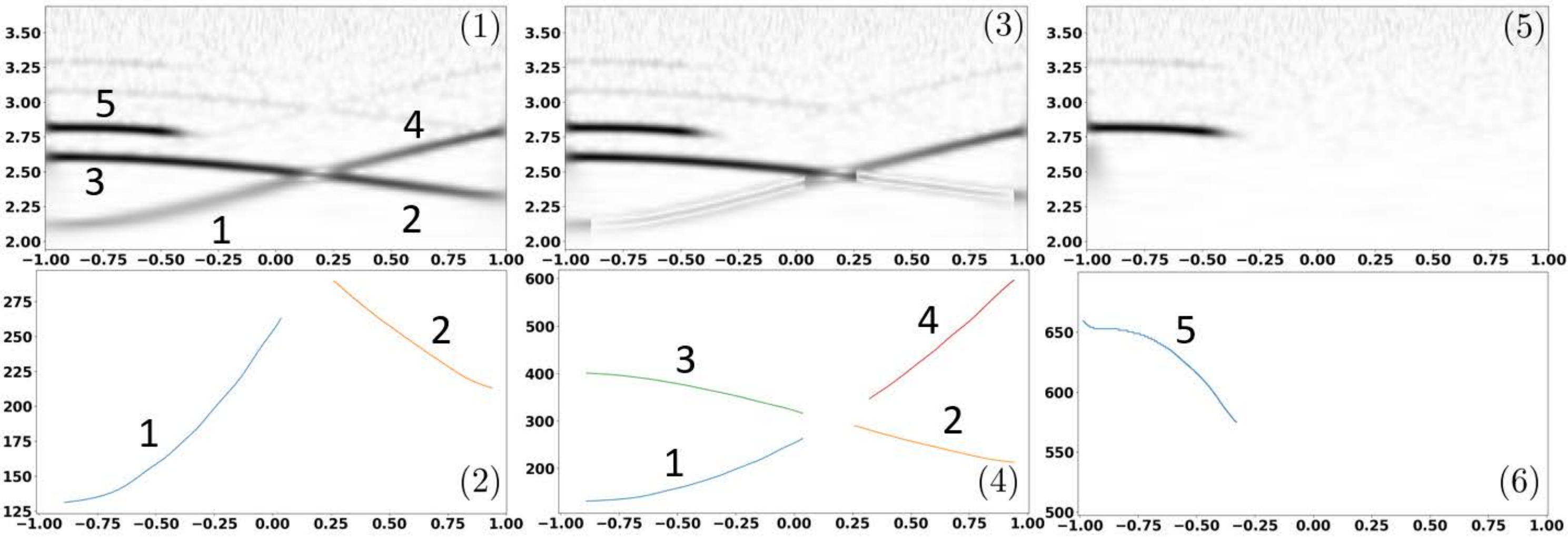}
		\caption{ 
The scale of the vertical axis  is $\log_{10}(\omega)$ in the top row of sub-figures (1,3,5)
 and $\omega$ in  bottom row of sub-figures (2,4,6). 
  Segments are labeled in  (1).  (1, 2) Energy $\S(\cdot, \cdot, v)$ and the identified lowest frequency  segments (3, 4) First updated energy $\S(\cdot, \cdot, v -
v^{1, e} - v^{2, e})$ and its identified lowest frequency  segments (5, 6) Second updated energy $\S(\cdot, \cdot, v - v_{1, e} - v_{2, e})$ and its  identified lowest frequency  segments, where $v_{1,e}$ results from joining mode segments 1 and 4, while $v_{2, e}$ is generated from joining segments 3 and 2.} 
\label{seg group}
	\end{center}
\end{figure}
All segments $v^{i,e}$ that are not discarded are iteratively put into existing partition blocks
 in lines \ref{group check loop}-\ref{group check loop
end} of Algorithm \ref{mode process}, or used to create a  new partition block in  line \ref{add V group}, which we denote by  
 $\{\{
v^{i,e}\}\}$. 
For example, in Figure \ref{seg group}.2, we place segment $1$ into its own partition block on line \ref{add V group} by
default since when $\mathcal{V}_\text{group}$ is empty, the loop from lines \ref{group check loop}-\ref{group check loop end} is not
executed.  Then we do not place segment $2$ in the partition block 
 with segment $1$, but again place it in its own partition block on line \ref{add V group} with the observation they belong to different modes (based on the max-squeezed energy $\S$ in Figure \ref{seg group}.1).  The end result of this iteration is segments $1$ and $2$ placed into separate partition blocks. In the next iteration shown in \ref{seg group}.4, we construct two partition blocks, one consisting of $\{\{v^{1,e}, v^{4, e}\}\}$ and the other $\{\{v^{2,e}, v^{3, e}\}\}$.  In the following iteration, illustrated in Figure \ref{seg group}.6, we again place segment $5$ into its own partition block on line \ref{add V group} by default.  The next iteration is the last since no segments which violate \eqref{noisecond} are observed.
Both blocks are then designated as {\em complete modes}, that is
 correspond to a  mode at all time $[-1, 1]$,  and are used to construct $v_{1, e}$ and $v_{2, e}$.
 This determination can
 be based on (a) user input  and/or (b) a set of pre-defined rules.
  Observing $\S$ at the third stage in Figure \ref{seg group}.5, we designate it as complete.

The final loop of Algorithm \ref{mode process} on lines \ref{full mode check}-\ref{full mode check end} begins by checking whether the block is complete. 
 For a block  deemed complete,  in line \ref{join group}, their segments are combined to create an estimate of their corresponding mode by interpolating the 
 amplitude and phase to fill the gaps and extrapolation by zero to the boundary. 
Then, in line  \ref{add to modes}, this estimated mode  is added to $\mathcal{V}$ and,  in line 
\ref{remove group from seg}, its generating segments removed from
$\mathcal{V}_\text{seg}$.  Finally,  the segments of the incomplete blocks constitute the output
$\mathcal{V}_\text{seg}$  of 
Algorithm \ref{mode process}.

In the implementation corresponding to Figure \ref{seg group}.2, each block consisting of segments $1$ and $2$ respectively are both determined to not be complete, and hence are passed to the next iteration as members of $\mathcal{V}_\text{seg}$ to the next iteration.  In Figure \ref{seg group}.4, the block consisting of  segments $1$ and $4$ and the block consisting of
 segments $2$ and $3$,  are deemed complete since  each block  appears to 
contain different portions of the same  mode (with missing portions corresponding  to the intersection between
the corresponding  modes around $t \approx 0.25$), and consequently  their
 segments are therefore designated to be turned   into 
 modes $v_{1,e}$ from segments $1$ and $4$ and $v_{2,e}$ from $2$ and $3$.  Finally in Figure \ref{seg group}.6, the block consisting of only segment $5$
 is determined to be complete and in line \ref{join group} is extrapolated by zero
 to produce its corresponding mode.  In  Example \ref{seg ex2}, shown in Figure \ref{low freq}.3, we place segments $1$, $2$, and $3$ in separate blocks (and disregard $0$), but only designate the block containing segment $1$ as complete. The output of Algorithm \ref{mode process}, and hence Algorithm \ref{mode ident alg}, are the updated list of
 modes and
segments.

\subsection{The segmented micro-local KMD algorithm}
The segmented iterated micro-local algorithm identifies full modes in the setting of Problem \ref{recov general} and is presented in Algorithm \ref{segalgn}.  Except for the call of the function  $m_\text{mode}$, Algorithm \ref{mode ident alg}, Algorithm \ref{segalgn} is similar to Algorithm \ref{iteralgn}.  It is initialized by $\mathcal{V} =\emptyset$ and  $\mathcal{V}_\text{seg}  =\emptyset$, and the main iteration between lines \ref{al2st1seg} and \ref{al2st1eseg}  identifies the modes or segments with lowest instantaneous frequency and then provides refined estimates for the amplitude and the phase of each mode $v_{i},i\in \{1,\ldots,m\}$ of the signal $v$.  We first apply $m_\text{mode}$ to  identify  segments to be passed-on to the next iteration and mode-segments to be combined into  modes.  This set of recognized  modes $\mathcal{V}^\text{out}$ will be refined in the loop between lines \ref{stit0seg} to \ref{stit4seg} by iteratively applying the micro-local KMD steps of Section \ref{secmicrolocalkmd} on the base frequency of each mode (these steps correspond to the final optimization loop, i.e.~lines \ref{al2stfr1} to \ref{al2stfr1e} in Algorithm \ref{iteralgn}).  The loop is terminated when no additions are made to $\mathcal{V}$ or $\mathcal{V}_\text{seg}$.  

\begin{algorithm}[h!]
\caption{Segmented iterated micro-local KMD.}\label{segalgn}
\begin{algorithmic}[1]
\STATE $\{\mathcal{V}, \mathcal{V}_\text{seg}\} \leftarrow \{\emptyset, \emptyset\}$
\WHILE{true}\label{al2st1seg}
\STATE $\{\mathcal{V}^\text{out}, \mathcal{V}^\text{out}_\text{seg}\} \leftarrow m_\text{mode}(v, \mathcal{V},
\mathcal{V}_\text{seg})$ \label{call mmode}
\IF {$\mathcal{V}^\text{out}_\text{seg}=\emptyset$ and $|\mathcal{V}^\text{out}| = |\mathcal{V}|$
}
\STATE {break loop}
\ENDIF
\IF{$|\mathcal{V}^\text{out}| > |\mathcal{V}|$ } \label{ieieiei}
\REPEAT  \label{stit0seg}
\FOR{$v_{i, e}$ in $\mathcal{V}^\text{out}$}
\STATE\label{stit1seg}  $v_{i,\mathrm{res}}\leftarrow v - a_{i, e}\bar{y}(\theta_{i, e}) -  \sum_{j \neq i} a_{j, e} y(\theta_{j,
e})$
\STATE\label{stit2seg}   $a_{i,e}(\tau) \leftarrow a\big(\tau, \theta_{i,e}, v_{i,\mathrm{res}}\big) / c_1$
\STATE\label{stit3seg}   $\theta_{i,e}(\tau) \leftarrow  \theta_{i,e}(\tau)+\frac{1}{2}\delta \theta \big(\tau, \theta_{i,e},
v_{i,\mathrm{res}}\big)$
\ENDFOR
\UNTIL  {$\sup_{i, \tau} |\delta \theta \big(\tau, \theta_{i,e},
v_{i,\mathrm{res}}\big) |< \epsilon_1$}\label{stit4seg}
\ENDIF
\STATE\label{stposeg}  $\{\mathcal{V}, \mathcal{V}_\text{seg}\} \leftarrow \{\mathcal{V}^\text{out},
\mathcal{V}^\text{out}_\text{seg}\}$
\ENDWHILE\label{al2st1eseg}
\STATE Return the modes $v_{i, e}(t) \leftarrow a_{i, e}(t) y(\theta_{i, e}(t))$ for $i = 1, ..., m$
\end{algorithmic}
\end{algorithm}

\subsection{Numerical experiments}\label{seg examp}

\begin{figure}[h]
	\begin{center}
			\includegraphics[width=\textwidth]{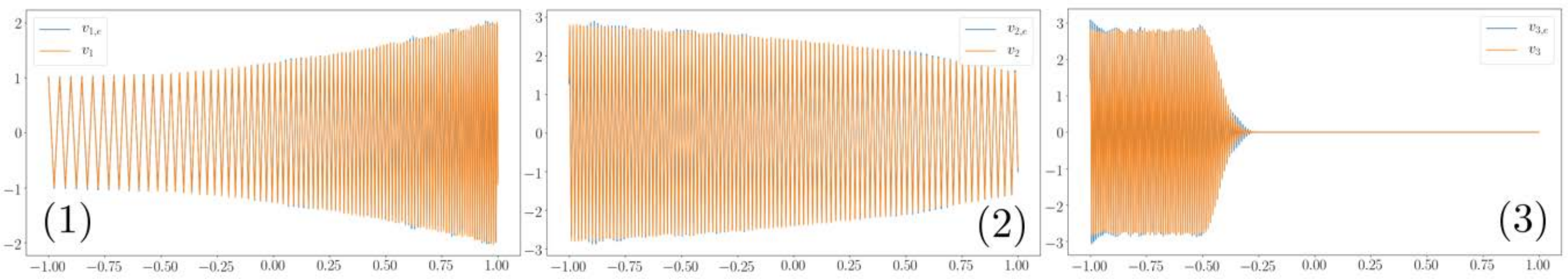}
		\caption{(1) $v_{1,e}$ and $v_1$ (2) $v_{2,e}$ and $v_2$ (3) $v_{3,e}$ and $v_3$. See footnote \ref{redblue}}
\label{seg recov}
	\end{center}
\end{figure}

Figure \ref{seg recov} and Table \ref{seg results} show the accuracy of Algorithm \ref{segalgn} in recovering the modes of the
signal
described in Example \ref{seg ex}, the results for Example \ref{seg ex2} appearing essentially the same, and thereby quantify its robustness to noise, vanishing amplitudes, and crossing frequencies.
We again take the mesh spanning $[-1, 1]$ spaced at intervals of size $\frac{1}{5000}$ and
aim to recover  each  mode $v_i$ on the whole interval $[-1, 1]$.  We kept $\alpha = 25$ constant in our implementation.
The amplitudes and frequencies of the  modes composing $v$ are shown in Figure \ref{seg fig}.    The recovery errors of the modes are
found to be consistently on the order of $10^{-2}$.  Note that in the noise-free setting with identical modes, the recovery error is on the order of $10^{-3}$ implying the noise is mainly responsible for the errors shown in Table \ref{seg results}.

\begin{table*}[h]
\centering
\begin{tabular}{ |p{3.25cm}||p{1.85cm}|p{1.85cm}|p{1.85cm}|p{1.85cm}|  }
\hline
 Mode & $\frac{\|v_{i, e} - v_i\|_{L^2}}{\|v_i\|_{L^2}}$ & $\frac{\|v_{i, e} - v_i\|_{L^\infty}}{\|v_i\|_{L^\infty}}$ &
 $\frac{\|a_{i, e} - a_i\|_{L^2}}{\|a_i\|_{L^2}}$ & $\|\theta_{i, e} - \theta_i\|_{L^2}$ \\
 \hline
 $i=1$ & $3.17 \times 10^{-2}$ & $6.99 \times 10^{-2}$ & $2.24 \times 10^{-2}$ & $1.99 \times 10^{-2}$ \\
  $i=2$ & $2.49 \times 10^{-2}$ & $7.09 \times 10^{-2}$ & $1.64 \times 10^{-2}$ & $1.81 \times 10^{-2}$\\
  $i=3$ & $3.52 \times 10^{-2}$ & $9.52 \times 10^{-2}$ & $3.13 \times 10^{-2}$ & $2.02 \times 10^{-2}$ \\
 \hline
\end{tabular}
\caption{Signal component recovery errors in Example \ref{seg ex}.  Note that the error in phase for mode $i=3$ was calculated over $[-1,-\frac{1}{3}]$ since the phase of a zero signal is undefined.}
\label{seg results}
\end{table*}

\section{Proofs}
\label{sec_Proofs}
\subsection{Proof of Lemma  \ref{lem_basic}}

We first establish that
$ \Psi(v)=\Phi^{+}v,$
where  the  Moore-Penrose inverse $\Phi^{+}$  is defined by
$ \Phi^{+}:=\Phi^{T}\bigl(\Phi \Phi^{T} \bigr)^{-1},  $ where $\Phi^{T}$ is the Hilbert space adjoint of $\Phi$.
To that end, let $w^{*}$ be the solution of \eqref{eqmindsobfirstdeb}.
Since $\Phi:\B \rightarrow V$ is surjective it follows that
$\Phi:\Ker^{\perp}(\Phi) \rightarrow V$ is a bijection and therefore
\[\{w: \Phi w=v\}=w_{0}+\Ker(\Phi)\]
for a unique $w_{0}\in \Ker^{\perp}(\Phi)$. Therefore, setting
$w':=w-w_{0}$ we find that
$(w')^{*}:=w^{*}-w_{0}$ is a solution of
\begin{equation}\label{eqmindsobfirstdeb2}
\begin{cases}
\text{Minimize }\|w'+w_{0}\|_\B\\
\text{Subject to }w'\in \B\text{ and }\Phi w'=0\,,
\end{cases}
\end{equation}
so that by the projection theorem we have
$(w')^{*}=P_{\Ker(\Phi)}(-w_{0})$
where  $P_{\Ker(\Phi)}$ is the orthogonal projection onto
$\Ker(\Phi)$.
Therefore
$ w^{*}=w_{0}+(w')^{*}=w_{0}-P_{\Ker(\Phi)}(w_{0})=P_{\Ker^{\perp}(\Phi)}w_{0}$, so that we obtain
\[  w^{*}=P_{\Ker^{\perp}(\Phi)}w_{0}.\]

Since $\Phi$ is surjective and continuous it follows from the closed range theorem, see e.g.~Yosida
\cite[p.~208]{yosida1995functional} that $\Img(\Phi^{T})=\Ker^{\perp}(\Phi)$ and
$\Ker(\Phi^{T}) =\emptyset$, which implies that
$\Phi \Phi^{T}:V \rightarrow V$ is invertible, so that the Moore-Penrose inverse
$\Phi^{+}:V \rightarrow \B$ of $\Phi$,  is well-defined by
\[ \Phi^{+}:=\Phi^{T}\bigl(\Phi \Phi^{T} \bigr)^{-1}\, . \]
It follows   that
$P_{\Ker^{\perp}(\Phi)}=\Phi^{+}\Phi$
and
$\Phi\Phi^{+}=I_{V}\, $
so  that
\[
w^{*}= P_{\Ker^{\perp}(\Phi)}w_{0}
= \Phi^{+}\Phi w_{0}
= \Phi^{+}v,
\]
that is, we obtain the second assertion
$w^{*}= \Phi^{+}v$.

For the first assertion, suppose that $\Ker{\Phi}=\emptyset$. Since it is surjective, it follows that $\Phi$ is a bijection.
 Then, the unique solution to the minmax problem
is the only feasible one  $w^{*}=\Phi^{-1}v= \Phi^{+}v$.
When $\Ker{\Phi}\neq \emptyset$, observe that
since all $u$ which satisfy
$\Phi u=v$ have the representation  $u=w_{0}+ u'$ for fixed $w_{0} \in \Ker^{\perp}(\Phi)$
 and some $u' \in \Ker(\Phi)$, it follows that
the inner maximum satisfies
\begin{eqnarray*}
 \max_{u\in \B\mid \Phi u=v}\frac{\|u-w\|_\B}{\|u\|_\B}&=&
 \max_{u'\in \Ker(\Phi)}\frac{\|u'+w_{0}-w\|_\B}{\|u'+w_{0}\|_\B}\\
&=&
\max_{u'\in \Ker(\Phi)}
 \max_{ t \in \R}\frac{\|tu'+w_{0}-w\|_\B}{\|tu'+w_{0}\|_\B}\\
&\geq&
 1
\end{eqnarray*}
On the other hand, for
$w:=\Phi^{+}v$, we have
\begin{eqnarray*}
 \max_{u\in \B\mid \Phi u=v}\frac{\|u-w\|_\B}{\|u\|_\B}&=&
\max_{u\in \B\mid \Phi u=v}\frac{\|u-\Phi^{+}v\|_\B}{\|u\|_\B}\\
&=&\max_{u\in \B\mid \Phi u=v}\frac{\|u-\Phi^{+}\Phi u\|_\B}{\|u\|_\B}\\
&=&\max_{u\in \B\mid \Phi u=v}\frac{\|u-P_{\Ker^{\perp}(\Phi)}u\|_\B}{\|u\|_\B}\\
&\leq& 1,
\end{eqnarray*}
which implies that
$w:=\Phi^{+}v$ is a minmax solution.
To see that it is  the unique  optimal solution,  observe that we have just established that
\begin{equation}
\label{eoiejiiirni}
\max_{u\in \B\mid \Phi u=v}\frac{\|u-\Psi(v)\|_\B}{\|u\|_\B}=1
\end{equation}
for any optimal $\Psi:V \rightarrow \B$.
  It then follows that
\[\max_{u\in \B}\frac{\|u-\Psi(\Phi u)\|_\B}{\|u\|_\B}=1\]
which implies that the map
$I-\Psi\circ \Phi:\B \rightarrow  \B$ is a contraction. Moreover,
by selecting  $u \in \Ker(\Phi)$ tending to $0$, it follows
from \eqref{eoiejiiirni} that $\Psi(0)=0$. Since, by definition,
   $\Phi \circ \Psi=I_{V}$, we have
\begin{eqnarray*}
(I-\Psi\circ \Phi)^{2}(u)&=& (I-\Psi\circ \Phi)(u-\Psi\circ \Phi u)\\
&=& u-\Psi\circ \Phi u -\Psi\circ \Phi (u-\Psi\circ \Phi u)\\
&=& u-\Psi\circ \Phi u -\Psi\bigl(\Phi u-\Phi\circ\Psi\circ \Phi u\bigr)\\
&=& u-\Psi\circ \Phi u -\Psi\bigl(\Phi u- \Phi u\bigr)\\
&=& u-\Psi\circ \Phi u -\Psi\bigl(0\bigr)\\
&=& u-\Psi\circ \Phi u
\end{eqnarray*}
so that
the map $I-\Psi\circ \Phi$ is a projection.
Since
$\Phi(u-\Psi\circ \Phi u)=\Phi u-\Phi \circ \Psi\circ \Phi u=0$ it follows that
$\Img(I-\Psi\circ \Phi) \subset \Ker(\Phi)$, but since
 for $b \in  \Ker(\Phi)$, we have
$(I-\Psi\circ \Phi)(b) =b-\Psi\circ \Phi b =b$,  we obtain the equality
$\Img(I-\Psi\circ \Phi) = \Ker(\Phi)$.

To show that a projection of this form is necessarily linear, let us demonstrate that
$\Img(\Psi\circ \Phi) = \Ker^{\perp}(\Phi)$. To that end, use  the decomposition
$\B=\Ker(\Phi)\oplus \Ker^{\perp}(\Phi)$ to write
$u= u'+u''$  with
$u' \in \Ker(\Phi)$ and $u'' \in \Ker^{\perp}(\Phi)$ and write the contractive condition
$\|u-\Psi\circ \Phi u\|^{2} \leq \|u\|^{2}$ as
\[\|u'+u''-\Psi\circ \Phi(u'+u'')\|^{2} \leq \|u'+u''\|^{2},\]
which using the linearity of $\Phi$ and $u' \in \Ker(\Phi)$ we obtain
\[\|u'+u''-\Psi\circ \Phi u''\|^{2} \leq \|u'+u''\|^{2},\]
Suppose that  $\Psi\circ \Phi u''=v'+v''$ with $v'\in \Ker(\Phi)$ nontrivial.
Then, selecting $u'=tv'$, with $t \in \R$, we obtain
\[\|(t-1)v'+u''- v''|^{2} \leq \|tv'+u''\|^{2}\]
which amounts to
\[(t-1)^{2} \|v'\|^{2} +\|u''- v''|^{2} \leq t^{2}\|v'\|^{2} +\|u''\|^{2}\]
and therefore
\[ (1-2t) \|v'\|^{2} +\|u''- v''|^{2} \leq \|u''\|^{2},\]
which  provides a contradiction for $t$ large enough negative.
Consequently, $v'=0$ and  $\Img(\Psi\circ \Phi) \subset \Ker^{\perp}(\Phi)$.  Since
$I=\Psi\circ \Phi +(I-\Psi\circ \Phi)$ with
$\Img(\Psi\circ \Phi) \subset \Ker^{\perp}(\Phi) $ and $\Img(I-\Psi\circ \Phi) \subset \Ker(\Phi)$ it follows that
 $\Img(\Psi\circ \Phi)=\Ker^{\perp}(\Phi)$.  Since $\Psi\circ \Phi$ is a projection it follows that
\[ \Psi\circ \Phi u''=u'',\quad  u'' \in \Ker^{\perp}(\Phi)\,. \]

Consequently, for two elements $u_{1}= u'_{1}+u''_{1}$ and $u_{2}= u'_{2}+u''_{2}$ with
$u'_{i} \in \Ker(\Phi)$ and $u''_{i} \in \Ker^{\perp}(\Phi)$  for $i=1,2$ we have
\begin{eqnarray*}
\bigl(I-\Psi\circ \Phi\bigr)(u_{1}+u_{2})&=& u_{1}+u_{2}-\Psi\circ \Phi(u_{1}+u_{2})\\
&=& u'_{1}+u'_{2}+u''_{1}+u''_{2}-\Psi\circ \Phi(u''_{1}+u''_{2})\\
&=& u'_{1}+u'_{2}\\
&=& u'_{1}+u''_{1}-\Psi\circ \Phi u''_{1}+u'_{2}+u''_{2}-\Psi\circ \Phi u''_{2}\\
&=&  \bigl(I-\Psi\circ \Phi\bigr)(u_{1}) +  \bigl(I-\Psi\circ \Phi\bigr)(u_{2})\, ,
\end{eqnarray*}
and  similarly, for $t \in \R$,
\begin{eqnarray*}
\bigl(I-\Psi\circ \Phi\bigr)(t u_{1})&=t\bigl(I-\Psi\circ \Phi\bigr)( u_{1}),
\end{eqnarray*}
so we conclude that
 $I-\Psi\circ \Phi$ is linear.

Since according to Rao \cite[Rem.~9, p.~51]{rao2011foundations},  a contractive linear projection
 on a
 Hilbert space is an orthogonal projection, it follows that
  the map  $I-\Psi\circ \Phi$ is an orthogonal projection,
 and therefore
$\Psi\circ \Phi=P_{\Ker^{\perp}(\Phi)} $.
 Since $\Phi^{+}$ is the  Moore-Penrose inverse, it follows that $P_{\Ker^{\perp}(\Phi)} =\Phi^{+}\Phi$
so that
 $\Psi\circ \Phi=\Phi^{+}\Phi$, and therefore the assertion
$\Psi=\Phi^{+}$ follows  by right multiplication by
$\Psi$ using the identity $\Phi\circ \Psi=I_{V}$.

\subsection{Proof of Lemma \ref{lem_S}}
\label{sec_lemS}
Let us write
$\Phi:\B \rightarrow V$ as
\[\Phi u=\sum_{i \in \I}{e_{i}u_{i}},\quad u=(u_{i} \in V_{i})_{i\in \I},\]
where we now include the subspace injections
$e_{i}:V_{i} \rightarrow V$ in its description.
Let
$\bar{e}_{i}:V_{i} \rightarrow \B$ denote the component injection
$\bar{e}_{i}v_{i}:=(0,\ldots,0, v_{i},0,\ldots, 0)$ and let
$\bar{e}^{T}_{i}:\B \rightarrow V_{i}$ denote the component projection.
Using this notation, the norm \eqref{eqjehdjhdbdd} on $\B$ becomes
\begin{equation}\label{eqjehdjhdbdd2}
\|u\|_{\B}^2:=\sum_{i\in \I} \|\bar{e}^{T}_{i}u\|_{V_i}^2, \qquad u \in \B\, ,
\end{equation}
with inner product
\[\<u_{1},u_{2}\>_{\B}:=\sum_{i\in \I} \<\bar{e}^{T}_{i}u_{1}, \bar{e}^{T}_{i}u_{2}\>_{V_i}, \qquad u_{1},u_{2}  \in \B\, .\]
Clearly,
$\bar{e}^{T}_{j}\bar{e}_{i}=0, i\neq j$ and $\bar{e}^{T}_{i}\bar{e}_{i}=I_{V_{i}}$, so that
\begin{eqnarray*}
\langle \bar{e}^{T}_{i}u, v_i \rangle_{V_{i}}&=&
\langle \bar{e}^{T}_{i}u, \bar{e}^{T}_{i}\bar{e}_{i}v_i \rangle_{V_{i}}\\
&=&\sum_{j \in \I}{\langle \bar{e}^{T}_{j}u, \bar{e}^{T}_{j}\bar{e}_{i}v_i \rangle_{V_{i}}}\\
&=&\langle u, \bar{e}_{i}v_i \rangle_{\B},
\end{eqnarray*}
implies that $\bar{e}^{T}_{i}$ is indeed the adjoint of $\bar{e}_{i}$.
Consequently we obtain
\[\Phi=\sum_{i \in \I}{e_{i}\bar{e}^{T}_{i}}\]
and therefore its Hilbert space adjoint $\Phi^{T}:V \rightarrow \B$ is
\[\Phi^{T}=\sum_{i \in \I}{\bar{e}_{i}e^{T}_{i}},\]
where
 $e^{T}_{i}:V \rightarrow V_{i}$ is the Hilbert space adjoint
of $e_{i}$. To compute it, use the Riesz isomorphism
\[\iota:V\rightarrow  V^{*}\]
 and the usual duality relationships to obtain
\[ e^{T}_{i}= Q_{i} e^{*}_{i} \iota\, ,\]
where $e^{*}_{i}: V^{*} \rightarrow V^{*}_{i}$ is the dual adjoint projection.
Consequently we obtain
\[\Phi \Phi^{T}= \sum_{j \in \I}{e_{j}\bar{e}^{T}_{j}} \sum_{i \in \I}{\bar{e}_{i}e^{T}_{i}}
= \sum_{i,j \in \I}{e_{j}\bar{e}^{T}_{j}\bar{e}_{i}e^{T}_{i}}
=\sum_{i \in \I}{e_{i}e^{T}_{i}}
=\sum_{i \in \I}{e_{i}Q_{i} e^{*}_{i}}\iota \]
and therefore defining
\[ S:=\sum_{i \in \I}{e_{i}Q_{i} e^{*}_{i}}\]
it follows that
\[\Phi \Phi^{T}=S\iota.\]
Since $\Phi \Phi^{T}$ and $\iota$ are invertible,   $S$ is invertible.
The invertibility of $S$ implies both assertions regarding norms and their duality follows in a straightforward way from the definition of the dual norm. For the Hilbert space version see, e.g.,
\cite[Prop.~11.4]{OwhScobook2018}.

\subsection{Proof of Theorem \ref{thmkjhgjhgyuy}}
We use the notations and results in the proof of Lemma \ref{lem_S}.
The assumption $V=\sum_{i}{V_{i}}$ implies that the information map
$\Phi:\B \rightarrow V$ defined by
\[\Phi u=\sum_{i \in \I}{u_{i}},\quad u=(u_{i} \in V_{i})_{i\in \I},\]
is surjective.
Consequently, Lemma \ref{lem_basic} asserts that the  minimizer of \eqref{eqmindsobfirstdeb} is
$w^{*}=\Psi(v):= \Phi^{+}v,$
where the Moore-Penrose inverse
$ \Phi^{+}:=\Phi^{T}(\Phi \Phi^{T})^{-1}$
of $\Phi$ is well defined, with
$\Phi^{T}:V \rightarrow \B$ being the Hilbert space adjoint to
 $\Phi:\B \rightarrow V$.
The proof  of Lemma \ref{lem_S}   obtained
$ \Phi \Phi^{T} =S\iota$
where $ S:=\sum_{i \in \I}{e_{i}Q_{i} e^{*}_{i}}\, $
 and $\iota:V \rightarrow V^{*}$ is the Riesz isomorphism,
$ e^{T}_{i}= Q_{i} e^{*}_{i}\iota \, ,$
where $e^{T}_{i}:V \rightarrow V_{i}$ is the Hilbert space adjoint
of $e_{i}$ and $e^{*}_{i}:V^{*} \rightarrow V_{i}^{*}$ is its dual space adjoint, and
$\Phi^{T}=\sum_{i \in \I}{\bar{e}_{i}e^{T}_{i}},$
where $\bar{e}_{i}:V_{i} \rightarrow \B$ denotes the component injection
$\bar{e}_{i}v_{i}:=(0,\ldots,0, v_{i},0,\ldots, 0)$.

Therefore, since $(\Phi \Phi^{T})^{-1}=\iota^{-1}S^{-1}$, we obtain
$\Phi^{+}=\sum_{i \in \I}{\bar{e}_{i}Q_{i}e^{*}_{i}\iota}\iota^{-1} S^{-1} \, ,$
which amounts to
\begin{equation}
\label{Phi-}
\Phi^{+}=\sum_{i \in \I}{\bar{e}_{i}Q_{i}e^{*}_{i}} S^{-1} \, ,
\end{equation}
or in coordinates
\[ (\Phi^{+}v)_{i}= Q_{i}e^{*}_{i}S^{-1}v, \quad i \in \I,\]
establishing the first assertion.
The second follows from the general property
$\Phi\Phi^{+}=\Phi\Phi^{T}(\Phi\Phi^{T})^{-1}=I$
of the Moore-Penrose inverse.
The first isometry assertion  follows from
\[\|\Phi^{+}v\|^{2}_{\B}=\sum_{i\in \I}{\|(\Phi^{+}v)_{i}\|^{2}_{V_{i}}}
=\sum_{i\in \I}{\| Q_{i}e^{*}_{i}S^{-1}v\|^{2}_{V_{i}}}
=\sum_{i\in \I}{[Q^{-1}_{i}Q_{i} e^{*}_{i}S^{-1}v,  Q_{i}e^{*}_{i}S^{-1}v]}
=\sum_{i\in \I}{[ e^{*}_{i}S^{-1}v,  Q_{i}e^{*}_{i}S^{-1}v]}\]
\[
=\sum_{i\in \I}{[ S^{-1}v,  e_{i} Q_{i}e^{*}_{i}S^{-1}v]}
=[ S^{-1}v,\sum_{i\in \I}  e_{i} Q_{i}e^{*}_{i}S^{-1}v]
=[ S^{-1}v,SS^{-1}v]
=[ S^{-1}v,v]
=\| v\|^{2}_{S^{-1}}
\]
for  $v \in V$.

For the second, write
$\Phi=\sum_{i\in \I}{e_{i}\bar{e}^{T}_{i}}$ and consider its dual space adjoint
 $\Phi:V^{*} \rightarrow \B^{*}$ defined by
\[\Phi^{*}=\sum_{i\in \I}{\bar{e}^{T,*}_{i}e^{*}_{i}}\,.\]
A straightforward calculation shows that
$\bar{e}^{T,*}_{i}:V_{i}^{*} \rightarrow \B^{*}$ is the component injection into
the product $\B^{*}=\prod_{i\in \I}{V^{*}_{i}}$. Consequently, we obtain
\[ \bar{e}^{T}_{i}Q \bar{e}^{T,*}_{j}=\delta_{i,j}Q_{j}, \quad i, j \in \I,\]
so that
\[\Phi Q \Phi^{*}= \sum_{i\in \I}{e_{i}\bar{e}^{T}_{i}}Q \sum_{j\in \I}{\bar{e}^{T,*}_{j}e^{*}_{j}}
= \sum_{i,j\in \I}{e_{i}\bar{e}^{T}_{i}Q \bar{e}^{T,*}_{j}e^{*}_{j}}
= \sum_{i\in \I}{e_{i}Q_{i} e^{*}_{i}}
= S,\]
and since, for $\phi \in V^{*}$,
\[\|\Phi^{*}\phi \|_{\B}^{2}=\langle \Phi^{*}\phi,\Phi^{*}\phi\rangle_{\B^{*}}=[ \Phi^{*}\phi,Q\Phi^{*}\phi]=[ \phi,\Phi Q\Phi^{*}\phi]=[ \phi,S\phi]
= \| \phi\|^{2}_{S},\]
it follows that $\Phi^{*}$ is an isometry.

\subsection{Proof of Theorem \ref{thm_eieiei}}
Use the Riesz isomorphism between
$V$ and $V^{*}$ to represent the dual space adjoint $\Phi^{*}:V^{*} \rightarrow \B^{*}$ of
$\Phi:\B \rightarrow V$ as  $\Phi^{*}:V \rightarrow \B^{*}$.
It follows from the definition of the Hilbert space adjoint
$\Phi^{T}:V \rightarrow \B$ that
\[ [\Phi^{*}v,b]=\langle v, \Phi b \rangle= \langle\Phi^{T}v,b\rangle_{\B}.\]
Since
 $Q:\B^{*} \rightarrow \B$
\eqref{Qform} defines the $\B$ inner product through
\[ \langle b_{1}, b_{2} \rangle_{\B}=[Q^{-1}b_{1},b_{2}], \quad b_{1}, b_{2} \in \B,\]
it follows that
$[\Phi^{*}v,b] = \langle Q\Phi^{*}v,b \rangle_{\B}$ and therefore
$\langle Q\Phi^{*}v,b \rangle_{\B}=\langle\Phi^{T}v,b\rangle_{\B}, v\in V, b \in \B$, so we conclude that
\[ \Phi^{T}=Q\Phi^{*}\, .\]
Since Theorem \ref{thmkjhgjhgyuy} demonstrated that $\Psi$ is the Moore-Penrose inverse $\Phi^{+}$
 which implies that $\Psi\circ\Phi$ is the orthogonal projection onto $\Img(\Phi^{T})$ it follows that
$\Psi\circ \Phi u \in \Img(\Phi^{T})$. However, the identity
$ \Phi^{T}=Q\Phi^{*}$ implies that $\Img(\Phi^{T})=Q\Img(\Phi^{*})$ so that
we obtain the first part
\[\|u-\Psi(\Phi u)\|_{\B}=\inf_{\phi \in V^*} \|u-Q \Phi^*(\phi) \|_{\B}\]
of the assertion. The second half follows from the definition
\eqref{eqjehdjhdbdd} of $\|\cdot\|_{\B}$.

\subsection{Proof of Proposition  \ref{propeqejdkejdde}}
Restating the assertion using  the injections $e_{i}:V_{i} \rightarrow V$, our objective is
to establish that
\[E(i)=\Var\big([\phi,e_{i}\xi_i]\big)=\Var\big(\<e_{i}\xi_i,v\>_{S^{-1}}\big)\,.\]
Since $[\phi,e_{i}\xi_i]=[e_{i}^{*}\phi,\xi_i]$, it follows that
$[\phi,e_{i}\xi_i] \sim \mathcal{N}(0,[e_{i}^{*}\phi,Q_{i}e_{i}^{*}\phi])$ so that
$\Var\big([\phi,e_{i}\xi_i]\big)=[e_{i}^{*}\phi,Q_{i}e_{i}^{*}\phi]$, which using
$\phi=S^{-1}v$ becomes
\[\Var\big([\phi,e_{i}\xi_i]\big)=[S^{-1}v,e_{i}Q_{i}e_{i}^{*}S^{-1}v]\, .\]
On the other hand, the definitions \eqref{ieieieihhgggg} of $E(i)$,  \eqref{eqkkejddh} of $\|\cdot\|_{V_{i}}$, and Theorem \ref{thmkjhgjhgyuy} imply that
\[ E(i):=\|\Psi_i(v)\|_{V_i}^2
=[Q^{-1}_{i}\Psi_i(v), \Psi_i(v)]\]
\[
=[Q^{-1}_{i}Q_{i}e_{i}^{*}S^{-1}v,Q_{i}e_{i}^{*}S^{-1}v]
=[e_{i}^{*}S^{-1}v,Q_{i}e_{i}^{*}S^{-1}v]
=[S^{-1}v,e_{i}Q_{i}e_{i}^{*}S^{-1}v],
\]
so that we conclude the first part
$E(i)=\Var\big([\phi,e_{i}\xi_i]\big)$ of the assertion.
Since $[\phi,e_{i}\xi_i]=[S^{-1}v, e_{i}\xi_i]=\langle v, e_{i}\xi_i \rangle_{S^{-1}}$ we obtain the second.

\subsection{Proof of Theorem \ref{thmakdjhgjhgyuy}}
Fix $1 \leq k < r \leq q$. To apply Theorem \ref{thmkjhgjhgyuy}, we select
$\B:=\B^{(k)}$ and $V:=\B^{(r)}$ and endow them with the external direct sum vector space structure
of products of vector spaces.
Since the information operator $\Phi^{(r,k)}:\B^{(k)}  \rightarrow \B^{(r)}$ defined in
 \eqref{def_Phi} is diagonal with components
$\Phi^{(r,k)}_{j}:\B^{(k)}_{j}  \rightarrow V^{(r)}_{j}, j \in \I^{(r)}$
 and the norm on
$\B^{(k)}=\prod_{i\in \I^{(r)}}\B^{(k)}_{i}$ is the product norm
$\|u\|^{2}_{\prod_{i\in \I^{(r)}}\B^{(k)}_{i}}=\sum_{i\in \I^{(r)}}{\|u_{i}\|^{2}_{\B^{(k)}_{i}}}, \,
u=(u_{i})_{i \in \I^{(r)}}\,$,
it follows from  the variational characterization of Lemma \ref{lem_basic}, the diagonal nature of the information map $\Phi^{(r,k)}$ and the product metric structure  on $\B^{(k)}$
that the optimal recovery solution $\Psi^{(k,r)}$
 is the diagonal operator with components the optimal solution operators
corresponding to the component information maps $\Phi^{(r,k)}_{j}:\B^{(k)}_{j}  \rightarrow V^{(r)}_{j}, j \in \I^{(r)}$.
Since each
component
\eqref{Phi_j} of the observation operator is
\[\Phi^{(r,k)}_{j}(u):=\sum_{i\in j^{(k)} } u_i, \qquad  u\in \B^{(k)}_j, \]
it follows that
the appropriate subspaces of $V^{(r)}_{j}$ are
  \[V^{(k)}_{i} \subset  V^{(r)}_{j}, \qquad i\in j^{(k)}\, .\]
Moreover, Condition \ref{condpart}   and the semigroup nature of the hierarchy of subspace embeddings  implies that
\[ e_{j,i}^{(k+2,k)}= \sum_{l \in j^{(k+1)}}e_{j,l}^{(k+2,k+1)}  e_{l,i}^{(k+1,k)},\quad
  \, i \in j^{(k)}, \,
\]
where the sum, despite its appearance, is over one term, and by induction  we can establish that
  assumption \eqref{equyuyg76g6}
  implies that
\begin{equation}\label{equyuyg76g6aaa}
Q^{(r)}_j=\sum_{i\in j^{(k)}}  e_{j,i}^{(r,k)} Q_i^{(k)} e_{i,j}^{(k,r)},\quad
j\in \I^{(r)}.
\end{equation}

 Utilizing the  adjoint  $e_{i,j}^{(k,r)}\,:\,V_j^{(r),*} \rightarrow V_i^{(k),*} $
\eqref{ekr}   to the subspace embedding $e_{j,i}^{(r,k)}\,:\,V_i^{(k)}\rightarrow V_j^{(r)}$,
it now follows from Theorem
\ref{thmkjhgjhgyuy} and \eqref{equyuyg76g6aaa}  that these component optimal solution maps
$\Psi^{(k,r)}_{j}:V^{(r)}_{j}\rightarrow \B^{(k)}_{j} $  are those assumed in the theorem in
\eqref{Psi_ij} and \eqref{Psi_j} as
\begin{equation}
\label{eienbfhbvpoioiiu}
\Psi^{(k,r)}_{j}(v_{j}):=\bigl(Q_i^{(k)} e_{i,j}^{(k,r)} Q^{(r),-1}_j v_j\bigr)_{i\in j^{(k)}},
\qquad
   v_j\in V^{(r)}_j\, .
\end{equation}
The first three assertions for each   component $j$   then follow from
Theorem \ref{thmkjhgjhgyuy}, thus establishing the   first three assertions in full.

For the semigroup assertions,  Condition \ref{condpart} implies that,
 for
$k < r < s$ and $l\in \I^{(s)}$, there is a one to one relationship between
$
\{j \in l^{(r)}, i \in j^{(k)}\}$ and $\{i \in l^{(k)}\}$.  Consequently,
 the definition \eqref{def_Phi} of $\Phi^{(r,k)}$
implies
\[
\Phi^{(s,r)}\circ \Phi^{(r,k)}(u)= \Bigl(\sum_{j \in l^{(r)}}{\bigl(\sum_{i \in j^{(k)}}{u_{i}}\bigr))}\Bigr)_{l \in \I^{(s)}}
= \Bigl(\sum_{i \in l^{(k)}}{u_{i}}\Bigr)_{l \in \I^{(s)}}
=\Phi^{(s,k)}(u)\,,
\]
establishing the fourth assertion $\Phi^{(s,k)}=\Phi^{(s,r)}\circ \Phi^{(r,k)}$.

For the fifth,  the definition \eqref{def_Psi} of $\Psi^{(k,r)}$
 implies that
\begin{eqnarray*}
\Psi^{(k,r)}\circ \Psi^{(r,s)}(v)&=&
\big(Q_i^{(k)} e_{i,j}^{(k,r)} Q^{(r),-1}_j \Psi^{(r,s)}_{j}(v)\big)_{i\in j^{(k)}} \\
&=&
\big(Q_i^{(k)} e_{i,j}^{(k,r)} Q^{(r),-1}_jQ_j^{(r)} e_{j,l}^{(r,s)} Q^{(s),-1}_l v_l \big)_{i\in j^{(k)}}\\
&=&
\big(Q_i^{(k)} e_{i,j}^{(k,r)}  e_{j,l}^{(r,s)} Q^{(s),-1}_l v_l \big)_{i\in j^{(k)}}\\
&=&
\big(Q_i^{(k)} e_{i,l}^{(k,s)}   Q^{(s),-1}_l v_l \big)_{i\in l^{(k)}}\\
&=& \Psi^{(k,s)}(v)\, ,
\end{eqnarray*}
establishing
$\Psi^{(k,s)}=\Psi^{(k,r)}\circ \Psi^{(r,s)}$.

The last assertion follows directly from the second and the fifth.

\subsection{Proof of Theorem  \ref{thm_uurirn}}
Since $\xi^{(k)}:\B^{(k),*} \rightarrow \mathbf{H}$ is an isometry to a Gaussian space of real variables
we can abuse notation and write
$\xi^{(k)}(b^{*}) =[b^{*},\xi^{(k)}]$ which emphasizes the interpretation of $\xi^{(k)}$ as
a weak $\B^{(k)}$-valued random variable.
Since, by Theorem \ref{thmakdjhgjhgyuy},
\begin{equation}
\label{Phi_iso}
\Phi^{(k,1),*}: (\B^{(k),*}, \|\cdot\|_{\B^{(k),*}}) \rightarrow  (\B^{(1),*}, \|\cdot\|_{\B^{(1),*}})
\,\, \text{is an isometry}
\end{equation}
and
 $\xi^{(1)}:\B^{(1),*} \rightarrow \mathbf{H}$ is an isometry, it follows that
\[\Phi^{(k,1)}\xi^{(1)}:=\xi^{(1)}\circ \Phi^{(k,1),*}:\B^{(k),*} \rightarrow \mathbf{H}\]
 is an isometry, and therefore a Gaussian field
on $\B^{(k)}$.
Since Gaussian fields transform like Gaussian measures with respect to continuous linear transformations,
we obtain that
$\xi^{(1)} \sim \mathcal{N}(0,Q^{1})$ implies that
\[ \Phi^{(k,1)}\xi^{(1)} \sim \mathcal{N}(0, \Phi^{(k,1)}Q^{1} \Phi^{(k,1),*}),\]
but the isometric nature \eqref{Phi_iso} of
  $ \Phi^{(k,1),*}$ implies that
\[ \Phi^{(k,1)} Q^{(1)}\Phi^{(k,1),*}= Q^{(k)},\]
so we conclude that
\[\Phi^{(k,1)}\xi^{(1)} \sim \mathcal{N}(0, Q^{k})\]
thus establishing the assertion that
$\xi^{(k)}$ is distributed as $\Phi^{(k,1)}\xi^{(1)}$.

The conditional expectation
$\E\bigl[\xi^{(k)}\mid \Phi^{(r,k)}(\xi^{(k)})\big] $ is uniquely characterized by its field  of conditional expectations
$\E\bigl[[b^{*},\xi^{(k)}]\mid \Phi^{(r,k)}(\xi^{(k)})\bigr], b^{*} \in \B^{(k),*}  $,
which, because of the linearity of conditional expectation of Gaussian random variables, appears
as
\[\E\bigl[[b^{*},\xi^{(k)}]\mid \Phi^{(r,k)}(\xi^{(k)})\bigr]=[A_{b^{*}},\Phi^{(r,k)}(\xi^{(k)})]\]
for some $A_{b^{*}} \in V^{*}$. Furthermore, the Gaussian conditioning also implies that the dependence
of $A_{b^{*}}$ on $b^{*}$ is linear so we write $A_{b^{*}}=Ab^{*}$ for some $A:\B^{*} \rightarrow V^{*}$, thereby obtaining
\begin{equation}
\label{eiehuifif}
\E\bigl[[b^{*},\xi^{(k)}]\mid \Phi^{(r,k)}(\xi^{(k)})\bigr]=[Ab^{*},\Phi^{(r,k)}(\xi^{(k)})], \quad b^{*} \in  \B^{(k),*}\, .
\end{equation}
Using the well-known fact, see e.g.~Dudley \cite[Thm.~10.2.9]{Dudley:2002},
 that the conditional expectation of a square integrable random variable
 on a probability space $(\Omega, \Sigma',P)$  with respect to a sub-$\sigma$-algebra $\Sigma' \subset \Sigma$ is the orthogonal projection  onto the closed subspace $L^{2}(\Omega, \Sigma',P) \subset L^{2}(\Omega, \Sigma,P)$, it follows that the conditional expectation satisfies
\[\E\bigl[\bigl([b^{*},\xi^{(k)}]-[Ab^{*},\Phi^{(r,k)}(\xi^{(k)}\bigr)[v^{*},\Phi^{(r,k)}(\xi^{(k)}]\bigr] =0,
 \quad b^{*} \in  \B^{(k),*}, v^{*} \in V^{(k),*}\, .\]
Rewriting this as
\[\E\bigl[\bigl([b^{*},\xi^{(k)}]-[\Phi^{(r,k),*}Ab^{*},\xi^{(k)}]\bigr)[\Phi^{(r,k),*}v^{*},\xi^{(k)}]\bigr] =0,  \quad b^{*} \in  \B^{(k),*}, v^{*} \in V^{(k),*}\, ,\]
we obtain
\begin{eqnarray*}
[b^{*},Q^{(k)}\Phi^{(r,k),*}v^{*}]&=&[\Phi^{(r,k),*}Ab^{*},Q^{(k)}\Phi^{(r,k),*}v^{*}]\\
&=&[b^{*},A^{*}\Phi^{(r,k)}Q^{(k)}\Phi^{(r,k),*}v^{*}]\\
\end{eqnarray*}
for all $b^{*} \in \B^{(k),*}$ and $v^{*} \in V^{(k),*}$, and so conclude that
\[ A^{*}\Phi^{(r,k)}Q^{(k)}\Phi^{(r,k),*}v^{*}=Q^{(k)}\Phi^{(r,k),*}v^{*},\quad b^{*} \in  \B^{(k),*}, v^{*} \in V^{(k),*},\]
which implies that
\begin{equation}
\label{eorngotgmi}
  A^{*}\Phi^{(r,k)}b=b,  \quad b \in \Img(Q^{(k)}\Phi^{(r,k),*})\, .
\end{equation}
Since
\begin{eqnarray*}
\langle \Phi^{(r,k),T}b^{(r)},b^{(k)}\rangle_{\B^{(k)}}&=&
\langle b^{(r)},\Phi^{(r,k)}b^{(k)}\rangle_{\B^{(k)}}\\
&=& [Q^{(k),-1}b^{(r)},\Phi^{(r,k)}b^{(k)} ]\\
&=& [\Phi^{(r,k),*}Q^{(k),-1}b^{(r)},b^{(k)} ]\\
&=& [Q^{(r),-1}Q^{(r)}\Phi^{(r,k),*}Q^{(k),-1}b^{(r)},b^{(k)} ]\\
&=& \langle Q^{(r)}\Phi^{(r,k),*}Q^{(k),-1}b^{(r)},b^{(k)} \rangle_{\B^{(r)}},
\end{eqnarray*}
we conclude that
\[  \Phi^{(r,k),T}=Q^{(r)}\Phi^{(r,k),*}Q^{(k),-1}\, , \]
and therefore
\[ \Img(Q^{(r)}\Phi^{(r,k),*})=\Img(\Phi^{(r,k),T})\, .\]
Consequently, \eqref{eorngotgmi} now reads
\begin{equation}
\label{eorngotgmi2}
  A^{*}\Phi^{(r,k)}b=b,  \quad b \in \Img(\Phi^{(r,k),T})\, .
\end{equation}
Since clearly
\[  A^{*}\Phi^{(r,k)}b=0,  \quad b \in \Ker(\Phi^{(r,k)})\]
it follows that
\[ A^{*}\Phi^{(r,k)}=P_{\Img(\Phi^{(r,k),T})}\]
Since  $P_{\Img(\Phi^{(r,k),T})}=(\Phi^{(r,k)})^{+}\Phi^{(r,k)}$, the identity
$\Phi^{(r,k)}(\Phi^{(r,k)})^{+}=I$ establishes that
\[ A^{*}=(\Phi^{(r,k)})^{+}\]
Since \eqref{eiehuifif} implies that
\[
\E\bigl[[b^{*},\xi^{(k)}]\mid \Phi^{(r,k)}(\xi^{(k)})\bigr]=[b^{*},A^{*}\Phi^{(r,k)}(\xi^{(k)})], \quad b^{*} \in  \B^{(k),*}\, ,
\]
which  in turn implies that
\[
\E\bigl[\xi^{(k)}\mid \Phi^{(r,k)}(\xi^{(k)})\bigr]=A^{*}\Phi^{(r,k)}(\xi^{(k)})\, ,
\]
we obtain
\[
\E\bigl[\xi^{(k)}\mid \Phi^{(r,k)}(\xi^{(k)})\bigr]=(\Phi^{(r,k)})^{+}\Phi^{(r,k)}(\xi^{(k)})\, .
\]
Since Theorem \ref{thmkjhgjhgyuy}  established that the optimal solution operator $\Psi^{(k,r)}$ corresponding to the information map $\Phi^{(r,k)}$ was the Moore-Penrose inverse $\Psi^{(k,r)}=(\Phi^{(r,k)})^{+}$
we obtain
\begin{equation}
\label{eiejeueinnnf}
\E\bigl[\xi^{(k)}\mid \Phi^{(r,k)}(\xi^{(k)})\bigr]=\Psi^{(k,r)}\circ\Phi^{(r,k)}(\xi^{(k)})\,,
\end{equation}
so that
\[
\E\bigl[\xi^{(k)}\mid \Phi^{(r,k)}(\xi^{(k)})=v\bigr]=\Psi^{(k,r)}(v)\,,
\]
thus establishing the final assertion.
To establish the martingale property,  let us define $\hat{\xi}^{(1)}:=\xi^{(1)}$ and
\[ \hat{\xi}^{(k)}:=\E\bigl[\xi^{(1)}\mid \Phi^{(k,1)}(\xi^{(1)})\bigr], \quad k =2,\ldots \, .\]
as a sequence of Gaussian fields  all on the same space $\B^{(1)}$.
\eqref{eiejeueinnnf}  implies that
\begin{equation}
\label{eieinhufgutbh}
\hat{\xi}^{(k)}=\Psi^{(1,k)}\circ\Phi^{(k,1)}(\xi^{(1)}),
\end{equation}
so that the identities    $\Phi^{(r,1)}=\Phi^{(r,k)}\circ \Phi^{(k,1)}$ and
$\Phi^{(k,1)}\circ \Psi^{(1,k)}=I_{\B^{(1)}}$ from Theorem \ref{thmakdjhgjhgyuy} imply that
\begin{eqnarray*}
\E[\hat{\xi}^{(k)}| \Phi^{(r,1)}(\hat{\xi}^{(k)})]&=&\E[\Psi^{(1,k)}\circ\Phi^{(k,1)}(\hat{\xi}^{(1)})|
\Phi^{(r,1)}\circ \Psi^{(1,k)}\circ\Phi^{(k,1)}(\hat{\xi}^{(1)})] \\
&=&\E[\Psi^{(1,k)}\circ\Phi^{(k,1)}(\hat{\xi}^{(1)})|
\Phi^{(r,k)}\circ \Phi^{(k,1)}\circ \Psi^{(1,k)}\circ\Phi^{(k,1)}(\hat{\xi}^{(1)})] \\
&=&\E[\Psi^{(1,k)}\circ\Phi^{(k,1)}(\hat{\xi}^{(1)})|
\Phi^{(r,k)}\circ \Phi^{(k,1)}(\hat{\xi}^{(1)})] \\
&=&\E[\Psi^{(1,k)}\circ\Phi^{(k,1)}(\hat{\xi}^{(1)})|
\Phi^{(r,1)}(\hat{\xi}^{(1)})] \\
&=&\Psi^{(1,k)}\circ\Phi^{(k,1)}\E[\hat{\xi}^{(1)}|
\Phi^{(r,1)}(\hat{\xi}^{(1)})] \\
&=&\Psi^{(1,k)}\circ\Phi^{(k,1)} \hat{\xi}^{(r)}\\
&=&\Psi^{(1,k)}\circ\Phi^{(k,1)}\Psi^{(1,r)}\circ\Phi^{(r,1)}  \hat{\xi}^{(1)}\\
&=&\Psi^{(1,k)}\circ\Phi^{(k,1)}\Psi^{(1,k)}\Psi^{(k,r)}\circ\Phi^{(r,1)}  \hat{\xi}^{(1)}\\
&=&\Psi^{(1,k)}\circ\Psi^{(k,r)}\circ\Phi^{(r,1)}  \hat{\xi}^{(1)}\\
&=&\Psi^{(1,r)}\circ\Phi^{(r,1)}  \hat{\xi}^{(1)}\\
&=&  \hat{\xi}^{(r)},
\end{eqnarray*}
that is
$\hat{\xi}^{(k)}$ is a reverse martingale.

\subsection{ Proof of Theorem \ref{thm_Iden}}
Let us simplify for the moment and define a scaled wavelet
\begin{equation}\label{eqkjdkjdjedhjks3}
\bar{\chi}_{\tau,\omega,\theta}(t):= \omega^{\frac{1-\beta}{2}}
\cos\bigl(\omega (t-\tau)+\theta\bigr)e^{-\frac{ \omega^2 (t-\tau)^2}{\alpha^2}}, \qquad t \in \R\,,
\end{equation}
 so that at $\beta=0$ we have
\begin{equation}
\label{scaling}
\chi_{\tau,\omega,\theta}= \Bigl(\frac{2}{\pi^{3}\alpha^{2}}\Bigr)^\frac{1}{4}\bar{\chi}_{\tau,\omega,\theta}\, .
\end{equation}
Since
\begin{eqnarray*}
K(s,t)&:=&\int_{-\pi}^{\pi} \int_{\R_{+}} \int_{\R} \bar{\chi}_{\tau,\omega,\theta}(s) \bar{\chi}_{\tau,\omega,\theta}(t)  d\tau \,d\omega\, d\theta\\
&=&\int_{-\pi}^{\pi} \int_{\R_{+}} \int_{\R} \cos\bigl(\omega (s-\tau)+\theta\bigr)e^{-\frac{ \omega^2 (s-\tau)^2}{\alpha^2}}\cos\bigl(\omega (t-\tau)+\theta\bigr)e^{-\frac{ \omega^2 (t-\tau)^2}{\alpha^2}}  d\tau \,\omega^{1-\beta}d\omega\, d\theta\\
&=&\int_{-\pi}^{\pi} \int_{\R_{+}} \int_{\R} \cos\bigl(\omega (s-\tau)+\theta\bigr)\cos\bigl(\omega (t-\tau)+\theta\bigr)e^{-\frac{ \omega^2 (s-\tau)^2}{\alpha^2}}e^{-\frac{ \omega^2 (t-\tau)^2}{\alpha^2}}  d\tau \,\omega^{1-\beta}d\omega\, d\theta\, ,
\end{eqnarray*}
 the trigonometric identity
\begin{eqnarray*}
&&\cos\bigl(\omega (s-\tau)+\theta\bigr)\cos\bigl(\omega (t-\tau)+\theta\bigr)\\
&=&\Bigl(\cos\bigl(\omega (s-\tau)\bigr)\cos\theta-\sin\bigl(\omega (s-\tau)\bigr)\sin\theta\Bigr) \Bigl(\cos\bigl(\omega (t-\tau)\bigr)\cos\theta-\sin\bigl(\omega (t-\tau)\bigr)\sin\theta \Bigr)\\
\end{eqnarray*}
and the integral identities
$ \int_{-\pi}^{\pi}\cos^{2}\theta d\theta =\int_{-\pi}^{\pi}\sin^{2}\theta d\theta =\pi$ and
$ \int_{-\pi}^{\pi}\cos\theta \sin \theta d\theta =0$
imply that
\[K(s,t)=\pi \int_{\R_{+}} \int_{\R} \Bigl(\cos\bigl(\omega (s-\tau)\bigr)\cos\bigl(\omega (t-\tau)\bigr)
+\sin\bigl(\omega (s-\tau)\bigr)\sin\bigl(\omega (t-\tau)\Bigr)e^{-\frac{ \omega^2 (s-\tau)^2}{\alpha^2}}e^{-\frac{ \omega^2 (t-\tau)^2}{\alpha^2}}  d\tau \,\omega^{1-\beta}d\omega\,
\]
so  that the cosine subtraction formula
 implies
\[K(s,t)=\pi \int_{\R_{+}} \int_{\R} \cos\bigl(\omega (s-t)\bigr)
e^{-\frac{ \omega^2 (s-\tau)^2}{\alpha^2}}e^{-\frac{ \omega^2 (t-\tau)^2}{\alpha^2}}  d\tau \,\omega^{1-\beta}d\omega,
\]
which amounts to
\begin{equation}
K(s,t)=\pi \Re \int_{\R_{+}} \int_{\R} e^{i\omega (s-t)}
e^{-\frac{ \omega^2 (s-\tau)^2}{\alpha^2}}e^{-\frac{ \omega^2 (t-\tau)^2}{\alpha^2}}  d\tau \,\omega^{1-\beta}d\omega\, .
\end{equation}
Using the identity
\[ e^{-\frac{ \omega^2 |s-\tau|^2}{\alpha^2}} e^{-\frac{ \omega^2 |t-\tau|^2}{\alpha^2}}
= e^{-\frac{ \omega^2}{\alpha^{2} }\bigl(2\tau^{2}-2(s+t)\tau\bigr)} e^{-\frac{ \omega^2}{\alpha^{2} }
\bigl(s^{2}+t^{2}\bigr)}
\]
and the integral identity
\begin{equation}
\label{gaussianint}
\int e^{-a\tau^2-2b\tau}d\tau=\sqrt{\frac{\pi}{a}}e^{\frac{b^2}{a}},  \qquad a > 0,\,  b \in \mathbb{C},
\end{equation}
with the choice $a:=\frac{2\omega^{2}}{\alpha^{2}}$ and
$b:=- \frac{\omega^{2}}{\alpha^{2}}(s+t)$,
so that
$b^{2}/a=  \frac{\omega^{2}}{2\alpha^{2}}(s+t)^{2}$, we can evaluate
  the integral
\[ \int{e^{-\frac{ \omega^2}{\alpha^{2} }\bigl(2\tau^{2}-2(s+t)\tau\bigr)}d\tau}=\frac{\alpha}{\omega}
\sqrt{\frac{\pi}{2}}
e^{ \frac{\omega^{2}}{2\alpha^{2}}(s+t)^{2}}\, .
\]
Consequently,
\begin{eqnarray*}
K(s,t)&=&\pi \Re \int{e^{i\omega (s-t)}e^{-\frac{ \omega^2 |s-\tau|^2}{\alpha^2}} e^{-\frac{ \omega^2 |t-\tau|^2}{\alpha^2}}d\tau\omega^{1-\beta}d\omega} \\
&=&
\pi\Re\int{ e^{i\omega (s-t)}e^{-\frac{ \omega^2}{\alpha^{2} }
\bigl(s^{2}+t^{2}\bigr)}
e^{-\frac{ \omega^2}{\alpha^{2} }\bigl(2\tau^{2}-2(s+t)\tau\bigr)}d\tau\omega^{1-\beta}d\omega} \\
&=&
\pi\Re\int{ e^{i\omega (s-t)}e^{-\frac{ \omega^2}{\alpha^{2} }
\bigl(s^{2}+t^{2}\bigr)}\Bigl(
\int{
e^{-\frac{ \omega^2}{\alpha^{2} }\bigl(2\tau^{2}-2(s+t)\tau \bigr)}d\tau }\Bigr)\omega^{1-\beta}d\omega}
\\
&=&
\alpha
\sqrt{\frac{\pi^{3}}{2}}\Re\int{ e^{i\omega (s-t)}e^{-\frac{ \omega^2}{\alpha^{2} }
\bigl(s^{2}+t^{2}\bigr)} e^{ \frac{\omega^{2}}{2\alpha^{2}}(s+t)^{2}}
\omega^{-\beta} d\omega}
\\
&=&
\alpha
\sqrt{\frac{\pi^{3}}{2}}\Re\int{ e^{i\omega (s-t)}
 e^{- \frac{\omega^{2}}{2\alpha^{2}}(s-t)^{2}}
\omega^{-\beta} d\omega}
\\
&=&
\alpha
\sqrt{\frac{\pi^{3}}{2}}\int{ \cos(\omega (s-t))
 e^{- \frac{\omega^{2}}{2\alpha^{2}}(s-t)^{2}}
\omega^{-\beta} d\omega},
\end{eqnarray*}
that is,
\begin{equation}
K(s,t)=
\alpha
\sqrt{\frac{\pi^{3}}{2}}\int{ \cos(\omega (s-t))
 e^{- \frac{\omega^{2}}{2\alpha^{2}}(s-t)^{2}}
\omega^{-\beta} d\omega}.
\end{equation}

Utilizing the
integral identity
\begin{equation}
\label{id_Grad}
\int_{0}^{\infty}{x^{\mu-1}e^{-p^{2}x^{2}}\cos(ax)dx}=
\frac{1}{2}p^{-\mu}\Gamma(\frac{\mu}{2})e^{-\frac{a^{2}}{4p^{2}}} {}_{1}F_{1}\Bigl(-\frac{\mu}{2}+\frac{1}{2},\frac{1}{2};\frac{a^{2}}{4p^{2}}\Bigr)
, \quad a >0, \mu >0,
\end{equation}
from Gradshteyn and Ryzhik \cite[3.952:8]{Gradshteyn2000},
 with
$ \frac{a^{2}}{4p^{2}}=\frac{\alpha^{2}}{2}$,
$p^{2}=\frac{|s-t|^{2}}{2\alpha^{2}}$, $a:=|s-t|$ and $\mu:=1-\beta$,  we obtain
\begin{eqnarray*}
K(s,t)&=&
\alpha
\sqrt{\frac{\pi^{3}}{2}} \frac{1}{2}(\sqrt{2}\alpha)^{1-\beta}|s-t|^{\beta-1}\Gamma(\frac{1-\beta}{2})e^{-\frac{\alpha^{2}}{2}} {}_{1}F_{1}\Bigl(\frac{\beta}{2},\frac{1}{2};\frac{\alpha^{2}}{2}\Bigr).
\end{eqnarray*}
Consequently, reintroducing the scaling \eqref{scaling}
obtains $K_{u}(s,t)=\Bigl(\frac{2}{\pi^{3}\alpha^{2}}\Bigr)^\frac{1}{2}K(s,t)$ when $\beta=0$. To indicate the dependence on $\beta$, we define
\begin{equation}
\label{eoemiiiir}
K_{\beta}(s,t)=
 \frac{1}{2}(\sqrt{2}\alpha)^{1-\beta}|s-t|^{\beta-1}\Gamma(\frac{1-\beta}{2})e^{-\frac{\alpha^{2}}{2}} {}_{1}F_{1}\Bigl(\frac{\beta}{2},\frac{1}{2};\frac{\alpha^{2}}{2}\Bigr),
\end{equation}
so that $ K_{u}=K_{0}$.
For fixed $\alpha$, at the limit
$\beta=0$,
 we have, recalling that $\Gamma(\frac{1}{2})=\sqrt{\pi}$,
\begin{eqnarray*}
K_{0}(s,t)&=&
\frac{\sqrt{2\pi}}{2}\alpha
 |s-t|^{-1}
e^{-\frac{\alpha^{2}}{2}} {}_{1}F_{1}\Bigl(0,\frac{1}{2};\frac{\alpha^{2}}{2}\Bigr)
\end{eqnarray*}
and since
${}_{1}F_{1}\bigl(0, \frac{1}{2};\frac{\alpha^{2}}{2} \bigr)=1$
we obtain
\begin{eqnarray*}
K_{0}(s,t)&=& \frac{\sqrt{2\pi}}{2}\alpha
 |s-t|^{-1}
e^{-\frac{\alpha^{2}}{2}} \, .
\end{eqnarray*}

The scaling constant $H(\beta)$ defined in the theorem satisfies
\[H(\beta):=\frac{1}{2}(\sqrt{2}\alpha)^{1-\beta}\Gamma(1-\frac{\beta}{2})e^{-\frac{\alpha^{2}}{2}} {}_{1}F_{1}\Bigl(\frac{\beta}{2},\frac{1}{2};\frac{\alpha^{2}}{2}\Bigr) \bar{H}(\beta)
\]
with
\begin{equation}
\label{Hconst}
 \bar{H}(\beta):=2^{\beta}\pi^{\frac{1}{2}}\frac{\Gamma(\frac{\beta}{2})}{\Gamma(1-\frac{\beta}{2})}\, ,
\end{equation}
so that, by \eqref{eoemiiiir} we have
\[ \frac{1}{H(\beta)}K_{\beta}(s,t)= \frac{|s-t|^{\beta-1}}{\bar{H}(\beta)}\, .\]
Therefore, if we let
$\mathcal{K}_{\beta}$ denote the integral operator
\[ \bigl(\mathcal{K}_{\beta}f\bigr)(s):=\frac{1}{H(\beta)}\int_{\R}{K_{\beta}(s,t)f(t)dt}\]
associated to the kernel $K_{\beta}$ scaled by $H(\beta)$,
it follows  that
\[ \bigl(\mathcal{K}_{\beta}f\bigr)(s):\frac{1}{\bar{H}(\beta)}\int_{\R}{|s-t|^{\beta-1}f(t)dt},\]
namely that it is a scaled version of the integral operator
$f \mapsto \int_{\R}{|s-t|^{\beta-1}f(t)dt}$  corresponding to the Riesz potential
$|s-t|^{\beta-1}$.
   Consequently,  according to Helgason
\cite[Lem.~5.4 \& Prop.~5.5]{helgason1999radon}, this scaling of the Riesz potential
by $\bar{H}(\beta)$ implies  the assertions of the theorem.

\subsection{Proof of Lemma \ref{lem_unknown}}
The outer most integral in the definition \eqref{K_unknown} of $K_{\beta}$  is
\begin{eqnarray*}
\int_{-\pi}^{\pi}{ y\bigl(\omega(s-\tau)+\theta\bigr) y^{*}\bigl(\omega(t-\tau)+\theta \bigr)d\theta }&=&
\int_{-\pi}^{\pi}{ \sum_{-N}^{N}{c_{n}e^{i n(\omega(s-\tau)+\theta)}} \sum_{-N}^{N}{c^{*}_{m}e^{-im(\omega(t-\tau)+\theta)}} d\theta}\\
&=& \sum_{n=-N}^{N} \sum_{m=-N}^{N}{e^{in\omega(s-\tau)}e^{-im(\omega(t-\tau)} c_{n} c^{*}_{m}
\int_{-\pi}^{\pi}{ e^{i(n-m)\theta} d\theta }}\\
&=& 2\pi\sum_{n=-N}^{N} {e^{in\omega(s-\tau)}e^{-in\omega(t-\tau)} |c_{n}|^{2}
 }\\
&=& 2\pi\sum_{n=-N}^{N} {e^{in\omega (s-t)} |c_{n}|^{2}
 },
\end{eqnarray*}
so that
\[ K_{\beta}(s,t)=2\pi\sum_{n=-N}^{N}{K_{n}(s,t)|c_{n}|^{2}},\]
where
\begin{eqnarray*}
K_{n}(s,t)&=&\Re \int{e^{in\omega (s-t)}e^{-\frac{ \omega^2 |s-\tau|^2}{\alpha^2}} e^{-\frac{ \omega^2 |t-\tau|^2}{\alpha^2}}d\tau\omega^{1-\beta}d\omega} \\
&=&
\Re\int{ e^{in\omega (s-t)}e^{-\frac{ \omega^2}{\alpha^{2} }
\bigl(s^{2}+t^{2}\bigr)}
e^{-\frac{ \omega^2}{\alpha^{2} }\bigl(2\tau^{2}-2(s+t)\tau\bigr)}d\tau\omega^{1-\beta}d\omega} \\
&=&
\Re\int{ e^{in\omega (s-t)}e^{-\frac{ \omega^2}{\alpha^{2} }
\bigl(s^{2}+t^{2}\bigr)}\Bigl(
\int{
e^{-\frac{ \omega^2}{\alpha^{2} }\bigl(2\tau^{2}-2(s+t)\tau\bigr)}d\tau}\Bigr)\omega^{1-\beta}d\omega}
\\
&=&
\alpha
\sqrt{\frac{\pi}{2}}\Re\int{ e^{in\omega (s-t)}e^{-\frac{ \omega^2}{\alpha^{2} }
\bigl(s^{2}+t^{2}\bigr)} e^{ \frac{\omega^{2}}{2\alpha^{2}}(s+t)^{2}}
\omega^{-\beta} d\omega}
\\
&=&
\alpha
\sqrt{\frac{\pi}{2}}\Re\int{ e^{in\omega (s-t)}
 e^{- \frac{\omega^{2}}{2\alpha^{2}}(s-t)^{2}}
\omega^{-\beta} d\omega}
\\
&=&
\alpha
\sqrt{\frac{\pi}{2}}\int{ \cos(n\omega (s-t))
 e^{- \frac{\omega^{2}}{2\alpha^{2}}(s-t)^{2}}
\omega^{-\beta} d\omega}\, .
\end{eqnarray*}

Consequently, using the integral identity \eqref{id_Grad}
with $a=|n||s-t|,\mu=1-\beta, p^{2}=\frac{|s-t|^{2}}{2\alpha^{2}}$, and therefore
$\frac{a^{2}}{4p^{2}}=\frac{|n|\alpha^{2}}{2}$  and $p=\frac{|s-t|}{\sqrt{2}\alpha}$
we conclude that
\[K_{n}(s,t)=   \frac{\alpha\sqrt{\pi}}{2\sqrt{2}}(\sqrt{2}\alpha)^{1-\beta}|s-t|^{\beta-1}\Gamma(\frac{1-\beta}{2})
e^{-\frac{|n|\alpha^{2}}{2}} {}_{1}F_{1}\Bigl(\frac{\beta}{2};\frac{1}{2};\frac{|n|\alpha^{2}}{2}\Bigr), \]
which does not appear to have a nice dependency on $n$, except for $\beta=0$, where
$  {}_{1}F_{1}\Bigl(0;\frac{1}{2};\frac{|n|\alpha^{2}}{2}\Bigr)=1$
and $\Gamma(\frac{1}{2})=\sqrt{\pi}$,
so that
\[K_{n}(s,t)=   \frac{1}{2}\alpha^{2}\pi
e^{-\frac{|n|\alpha^{2}}{2}}|s-t|^{-1}  \]
and therefore
\[ K_{0}(s,t)=\alpha^{2}\pi^{2}\|y\|^{2}
|s-t|^{-1},
\]
when written in terms of the norm
$\|y\|^{2}:=\sum_{n=-N}^{N}{e^{-\frac{|n|\alpha^{2}}{2}}|c_{n}|^{2}}$.

\subsection{Proof of Lemma \ref{lem_norm}}
For $\gamma >0$,
let us evaluate the function
\begin{equation}
\label{eoeinuiri} \phi(s):=\sum_{n=-\infty}^{\infty}{e^{-|n|\gamma}e^{ins}}
\end{equation}
with Fourier coefficients $\hat{\phi}(n)=e^{-|n|\gamma}$.
Since
\begin{eqnarray*}
 \phi(s)&=&\sum_{n=-\infty}^{\infty}{e^{-|n|\gamma}e^{ins}}\\
&=&\sum_{n=1}^{\infty}{e^{-n\gamma}e^{ins}}+1+\sum_{n=-\infty}^{-1}{e^{n\gamma}e^{ins}}\\
&=&\sum_{n=1}^{\infty}{e^{-n\gamma}e^{ins}}+1+\sum_{n=1}^{\infty}{e^{-n\gamma}e^{-ins}}\\
&=& 1+2\sum_{n=1}^{\infty}{e^{-n\gamma}\cos(ns)},
\end{eqnarray*}
the identity
\begin{equation}
\label{eiemnioojhhuuu}
1+2\sum_{n=1}^{\infty}{e^{-n\gamma}\cos n s}=\frac{\sinh(\gamma)}{\cosh(\gamma)-\cos(s)}
\end{equation}
of Gradshteyn and Ryzhik \cite[1.461:2]{Gradshteyn2000}
implies that
\[ \phi(s)=\frac{\sinh(\gamma)}{\cosh(\gamma)-\cos(s)}\, .\]
Consequently, with the choice $\gamma:=\frac{\alpha^2}{4}$ in \eqref{eoeinuiri}, that is, for
\[ \phi(s):=\sum_{n=-\infty}^{\infty}{e^{-|n|\frac{\alpha^2}{4}}e^{ins}},\]
 we find that
\begin{equation}
\label{irmuufni}
\phi(s)=\frac{\sinh(\frac{\alpha^{2}}{4})}{\cosh(\frac{\alpha^{2}}{4})-\cos(s)}\, .
\end{equation}

We will need two basic facts about the Fourier transform of $2\pi$-periodic functions, see
e.g.~Katznelson \cite[Sec.~I]{katznelson2004introduction}.
If we denote the Fourier transform by
$ \hat{f}(n):=\frac{1}{2\pi}\int_{-\pi}^{\pi}{f(s)e^{- i n s}},  \forall  n,$
the convolution theorem states that for periodic functions  $f,g \in L^{1}[-\pi,\pi]$ that
the convolution
$ \bigl(f\star g\bigr)(s):=\frac{1}{2\pi}\int_{-\pi}^{\pi}{f(s-t)g(t)dt}$
is a well defined periodic function in $L^{1}[-\pi,\pi]$ and that
$  \hat{\bigl(f\star g\bigr)}(n)=\hat{f}(n)\hat{g}(n),  \forall n\, .$
Moreover,  for square integrable $2\pi$-periodic functions in $L^{2}[-\pi,\pi]$,
the Parseval identity is
$\sum_{n=-\infty}^{\infty}{|\hat{f}(n)|^{2}}=\frac{1}{2\pi}\int_{0}^{2\pi}{|f(s)|^{2}}\, .$

Consequently, observing that $c_{n}=0, n < -N, n > N$, the Parseval identity and the convolution formula imply that
\begin{eqnarray*}
\|y\|^{2}&=&\sum_{n=-N}^{N}{e^{-\frac{|n|\alpha^{2}}{2}}|c_{n}|^{2}}\\
&=&\|\bigl(e^{-\frac{|n|\alpha^{2}}{4}}c_{n}\bigr)_{n=-\infty}^{\infty}\|^{2}_{\ell^{2}}\\
&=&\|\bigl(\hat{\phi}\hat{y}\bigr)_{n=-\infty}^{\infty}\|^{2}_{\ell^{2}}\\
&=&\|\bigl(\hat{\phi \star y}\bigr)_{n=-\infty}^{\infty}\|^{2}_{\ell^{2}}\\
&=&\|\phi \star y\|^{2}_{L^{2}[-\pi,\pi]}\\
&=& \int{|\phi \star y|^{2}}\\
&=& \int{\Bigl|\int{\phi(s-t)y(t) dt}\Bigr|^{2}ds}\\
&=& \int{\Bigl(\int{\phi(s-t)y(t)dt}\int{\phi(s-t')y^{*}(t')dt'}\Bigr)ds}\\
&=& \int{\int{\phi(s-t)y(t)\phi(s-t')y^{*}(t')dt dt'}ds}\\
&=& \int{G(t,t')y(t)y^{*}(t')dtdt'}\,,
\end{eqnarray*}
that is,
\[ \|y\|^{2}=\int{G(t,t')y(t)y^{*}(t')dtdt'}\]
where
\begin{equation}
\label{oemogigmntit}
 G(t,t'):=\int{\phi(s-t)\phi(s-t')ds}
\end{equation}
with
\begin{equation}
\label{euenufufnu}
\phi(s)=\frac{\sinh(\frac{\alpha^{2}}{4})}{\cosh(\frac{\alpha^{2}}{4})-\cos(s)}\, .
\end{equation}
We can evaluate $G$
using  the identity \eqref{eiemnioojhhuuu} as follows: Since
\begin{eqnarray*}
G(t,t')&=&\int{\phi(s-t)\phi(s-t')ds}\\
&=&
\int{\Bigl(
1+2\sum_{n=1}^{\infty}{e^{-n\frac{\alpha^{2}}{4}}\cos n (s-t)\Bigr)\Bigl(
1+2\sum_{n'=1}^{\infty}{e^{-n' \frac{\alpha^{2}}{4}}}\cos n' (s-t')\Bigr)}ds},
\end{eqnarray*}
and, for each product, we have
\begin{eqnarray*}
&&\int{
\cos n (s-t)
\cos n' (s-t')ds}\\
&=&\int{
\bigl(\cos n s\cos nt-\sin n s\sin nt\bigr)\bigl(\cos n' s\cos n't'-\sin n' s\sin n't'\bigr)ds}\\
&=&\delta_{n,n'}\int{
\bigl(\cos n s\cos nt-\sin n s\sin nt\bigr)\bigl(\cos n s\cos nt'-\sin n s\sin nt'\bigr)ds}\, .
\end{eqnarray*}
Using the  $L^{2}$-orthogonality of the cosines and the sines and the identities
$\int{\cos^{2} ns}=\pi$ and $\int{\sin^{2} ns}=\pi$, we conclude that
\begin{eqnarray*}
\int{
\bigl(\cos n s\cos nt-\sin n s\sin nt\bigr)\bigl(\cos n s\cos nt'-\sin n s\sin nt'\bigr)ds}
&=& \pi \bigl(\cos nt \cos nt' +\sin nt \sin nt'\bigr)\\
&=& \pi \cos n(t- t')
\end{eqnarray*}
and therefore
\begin{equation}
\label{ienufufimjd}
\int{
\cos n (s-t)
\cos n' (s-t')ds}=\pi\delta_{n,n'} \cos n(t-t') \, .
\end{equation}
Consequently, we obtain
\begin{eqnarray*}
G(t,t')&=&
\int{\Bigl(
1+2\sum_{n=1}^{\infty}{e^{-n\frac{\alpha^{2}}{4}}\cos n (s-t)\Bigr)\Bigl(
1+2\sum_{n'=1}^{\infty}{e^{-n' \frac{\alpha^{2}}{4}}}\cos n' (s-t')\Bigr)}ds}\\
&=&\int{\Bigl(
1+4\sum_{n=1}^{\infty}{e^{-n\frac{\alpha^{2}}{2}}\cos n (s-t)\cos n (s-t')\Bigr)
}ds}\\
&=&
2\pi +4\pi \sum_{n=1}^{\infty}{e^{-n\frac{\alpha^{2}}{2}} \cos n(t-t')}
\end{eqnarray*}
and therefore, using the  identity \eqref{eiemnioojhhuuu} again, we conclude
\[G(t,t')= 2\pi
\frac{\sinh(\frac{\alpha^{2}}{2})}{\cosh(\frac{\alpha^{2}}{2})-\cos(t-t')}\, .
\]


\paragraph{Acknowledgments}
The authors gratefully acknowledge support by  the Air Force Office of Scientific Research under award number FA9550-18-1-0271 (Games for Computation and Learning).

\bibliographystyle{plain}
\bibliography{merged,RPS,extra,kmd}

\def\cprime{$'$} \def\cprime{$'$} \def\cprime{$'$} \def\cprime{$'$}
  \def\cprime{$'$}
\begin{thebibliography}{100}

\bibitem{adam2016scalable}
V.~Adam, J.~Hensman, and M.~Sahani.
\newblock Scalable transformed additive signal decomposition by non-conjugate
  {G}aussian process inference.
\newblock In {\em 2016 IEEE 26th International Workshop on Machine Learning for
  Signal Processing (MLSP)}, pages 1--6. IEEE, 2016.

\bibitem{alvarez2009sparse}
M.~Alvarez and N.~D. Lawrence.
\newblock Sparse convolved {G}aussian processes for multi-output regression.
\newblock In {\em Advances in Neural Information Processing Systems}, pages
  57--64, 2009.

\bibitem{alvarez2011computationally}
M.~A. {\'A}lvarez and N.~D. Lawrence.
\newblock Computationally efficient convolved multiple output {G}aussian
  processes.
\newblock {\em Journal of Machine Learning Research}, 12(May):1459--1500, 2011.

\bibitem{alvarez2012kernels}
M.~A. Alvarez, L.~Rosasco, and N.~D. Lawrence.
\newblock Kernels for vector-valued functions: a review.
\newblock {\em Foundations and Trends{\textregistered} in Machine Learning},
  4(3):195--266, 2012.

\bibitem{auger2013time}
F.~Auger, P.~Flandrin, Y.-T. Lin, S.~McLaughlin, S.~Meignen, T.~Oberlin, and
  H.-T. Wu.
\newblock Time-frequency reassignment and synchrosqueezing: An overview.
\newblock {\em IEEE Signal Processing Magazine}, 30(6):32--41, 2013.

\bibitem{barata2012moore}
J.~C.~A. Barata and M.~S. Hussein.
\newblock The {M}oore--{P}enrose pseudoinverse: A tutorial review of the
  theory.
\newblock {\em Brazilian Journal of Physics}, 42(1-2):146--165, 2012.

\bibitem{boashash1992estimating}
B.~Boashash.
\newblock Estimating and interpreting the instantaneous frequency of a signal.
  {I}. {F}undamentals.
\newblock {\em Proceedings of the IEEE}, 80(4):520--538, 1992.

\bibitem{boyd2011distributed}
S.~Boyd, N.~Parikh, E.~Chu, B.~Peleato, and J.~Eckstein.
\newblock Distributed optimization and statistical learning via the alternating
  direction method of multipliers.
\newblock {\em Foundations and Trends{\textregistered} in Machine learning},
  3(1):1--122, 2011.

\bibitem{boyle2005dependent}
P~Boyle and M.~Frean.
\newblock Dependent {G}aussian processes.
\newblock In {\em Advances in Neural Information Processing Systems}, pages
  217--224, 2005.

\bibitem{boyle2005multiple}
P.~Boyle and M.~Frean.
\newblock Multiple output {G}aussian process regression.
\newblock 2005.

\bibitem{costa2007noise}
M.~Costa, A.~A. Priplata, L.~A. Lipsitz, Z.~Wu, N.~E. Huang, A.~L. Goldberger,
  and C.-K. Peng.
\newblock Noise and poise: enhancement of postural complexity in the elderly
  with a stochastic-resonance--based therapy.
\newblock {\em EPL (Europhysics Letters)}, 77(6):68008, 2007.

\bibitem{coughlin200411}
K.~T. Coughlin and K.-K. Tung.
\newblock $11$-year solar cycle in the stratosphere extracted by the empirical
  mode decomposition method.
\newblock {\em Advances in Space Research}, 34(2):323--329, 2004.

\bibitem{cressie1990origins}
N.~Cressie.
\newblock The origins of {K}riging.
\newblock {\em Mathematical Geology}, 22(3):239--252, 1990.

\bibitem{csato2002gaussian}
L.~Csat{\'o}.
\newblock {\em Gaussian Processes: Iterative Sparse Approximations}.
\newblock PhD thesis, Aston University Birmingham, UK, 2002.

\bibitem{csato2002sparse}
L.~Csat{\'o} and M.~Opper.
\newblock Sparse on-line {G}aussian processes.
\newblock {\em Neural Computation}, 14(3):641--668, 2002.

\bibitem{csato2002tap}
L.~Csat{\'o}, M.~Opper, and O.~Winther.
\newblock {TAP} {G}ibbs free energy, belief propagation and sparsity.
\newblock In {\em Advances in Neural Information Processing Systems}, pages
  657--663, 2002.

\bibitem{cummings2004travelling}
D.~A.~T. Cummings, R.~A. Irizarry, N.~E. Huang, T.~P. Endy, A.~Nisalak,
  K.~Ungchusak, and D.~S. Burke.
\newblock Travelling waves in the occurrence of dengue haemorrhagic fever in
  {T}hailand.
\newblock {\em Nature}, 427(6972):344--347, 2004.

\bibitem{daubechies2011synchrosqueezed}
I.~Daubechies, J.~Lu, and H.-T. Wu.
\newblock Synchrosqueezed wavelet transforms: An empirical mode
  decomposition-like tool.
\newblock {\em Applied and Computational Harmonic Analysis}, 30(2):243--261,
  2011.

\bibitem{DaubechiesMaes}
I.~Daubechies and S.~Maes.
\newblock A nonlinear squeezing of the continuous wavelet transform based on
  auditory nerve models.
\newblock In A.~Aldroubi and M.~Unser, editors, {\em Wavelets in Medicine and
  Biology}, pages 527--546. World Scientific, 1996.

\bibitem{djemili2016application}
R.~Djemili, H.~Bourouba, and M.~C. Ammara~Korba.
\newblock Application of empirical mode decomposition and artificial neural
  network for the classification of normal and epileptic {EEG} signals.
\newblock {\em Biocybernetics and Biomedical Engineering}, 36(1):285--291,
  2016.

\bibitem{dragomiretskiy2014variational}
K.~Dragomiretskiy and D.~Zosso.
\newblock Variational mode decomposition.
\newblock {\em IEEE Transactions on Signal Processing}, 62(3):531--544, 2014.

\bibitem{Dudley:2002}
R.~M. Dudley.
\newblock {\em Real {A}nalysis and {P}robability}, volume~74 of {\em Cambridge
  Studies in Advanced Mathematics}.
\newblock Cambridge University Press, Cambridge, 2002.
\newblock Revised reprint of the 1989 original.

\bibitem{durrande2010additive}
N.~Durrande, D.~Ginsbourger, and O.~Roustant.
\newblock Additive kernels for {G}aussian process modeling.
\newblock {\em Annales de la Facult{\'e}e de Sciences de Toulouse}, page~17,
  2010.

\bibitem{durrande2012additive}
N.~Durrande, D.~Ginsbourger, and O.~Roustant.
\newblock Additive covariance kernels for high-dimensional {G}aussian process
  modeling.
\newblock In {\em Annales de la Facult{\'e} des sciences de Toulouse:
  Math{\'e}matiques}, volume~21, pages 481--499, 2012.

\bibitem{durrande2016detecting}
N.~Durrande, J.~Hensman, M.~Rattray, and N.~D. Lawrence.
\newblock Detecting periodicities with {G}aussian processes.
\newblock {\em PeerJ Computer Science}, 2:e50, 2016.

\bibitem{durrande2016gaussian}
N.~Durrande, J.~Hensman, M.~Rattray, and N.~D. Lawrence.
\newblock Gaussian process models for periodicity detection.
\newblock {\em PeerJ Computer Science}, 2016.

\bibitem{duvenaud2011additive}
D.~K. Duvenaud, H.~Nickisch, and C.~E. Rasmussen.
\newblock Additive {G}aussian processes.
\newblock In {\em Advances in Neural Information Processing Systems}, pages
  226--234, 2011.

\bibitem{engl1996regularization}
H.~W. Engl, M.~Hanke, and A.~Neubauer.
\newblock {\em Regularization of Inverse Problems}, volume 375.
\newblock Springer Science \& Business Media, 1996.

\bibitem{fan2015functional}
Y.~Fan, G.~M. James, and P.~Radchenko.
\newblock Functional additive regression.
\newblock {\em The Annals of Statistics}, 43(5):2296--2325, 2015.

\bibitem{feldman2006time}
M.~Feldman.
\newblock Time-varying vibration decomposition and analysis based on the
  {H}ilbert transform.
\newblock {\em Journal of Sound and Vibration}, 295(3-5):518--530, 2006.

\bibitem{flandrin2004empirical2}
P.~Flandrin and P.~Goncalves.
\newblock Empirical mode decompositions as data-driven wavelet-like expansions.
\newblock {\em International Journal of Wavelets, Multiresolution and
  Information Processing}, 2(04):477--496, 2004.

\bibitem{flandrin2005emd}
P.~Flandrin, P.~Gon{\c{c}}alves, and G.~Rilling.
\newblock {EMD} equivalent filter banks, from interpretation to applications.
\newblock In {\em Hilbert-{H}uang Transform and its Applications}, pages
  57--74. World Scientific, 2005.

\bibitem{flandrin2004empirical}
P.~Flandrin, G.~Rilling, and P.~Goncalves.
\newblock Empirical mode decomposition as a filter bank.
\newblock {\em IEEE Signal Processing Letters}, 11(2):112--114, 2004.

\bibitem{gabor1946theory}
D.~Gabor.
\newblock Theory of communication. part 1: The analysis of information.
\newblock {\em Journal of the Institution of Electrical Engineers-Part III:
  Radio and Communication Engineering}, 93(26):429--441, 1946.

\bibitem{gandin1963objective}
L.~S. Gandin.
\newblock Objective analysis of meteorological fields:
  Gidrometeorotogicheskoizeda- tel'stvo({GIMIZ}), {L}eningrad (translated by
  {I}srael {P}rogram for {S}cientific {T}ranslations, {J}erusalem, 1965, 238
  pp.).
\newblock 1963.

\bibitem{gilles2013empirical}
J.~Gilles.
\newblock Empirical wavelet transform.
\newblock {\em IEEE Transactions on Signal Processing}, 61(16):3999--4010,
  2013.

\bibitem{Gradshteyn2000}
I.~S. Gradshteyn and I.~M. Ryzhik.
\newblock {\em Table of Integrals, Series, and Products}.
\newblock Academic, 6th edition, 2000.

\bibitem{hamzi2020learning}
B.~Hamzi and H.~Owhadi.
\newblock Learning dynamical systems from data: a simple cross-validation
  perspective.
\newblock {\em arXiv:2007.05074}, 2020.

\bibitem{Hastie}
T.~Hastie and R.~Tibshirani.
\newblock Generalized additive models.
\newblock {\em Statistical Science}, 1(w3):297--310, 1986.

\bibitem{hastie1990generalized}
T.~J. Hastie and R.~J. Tibshirani.
\newblock {\em Generalized Additive Models}, volume~43.
\newblock CRC press, 1990.

\bibitem{helgason1999radon}
S.~Helgason.
\newblock {\em The Radon Transform}, volume~2.
\newblock Springer, 1999.

\bibitem{hensman2017variational}
J.~Hensman, N.~Durrande, and A.~Solin.
\newblock Variational {F}ourier features for {G}aussian processes.
\newblock {\em The Journal of Machine Learning Research}, 18(1):5537--5588,
  2017.

\bibitem{hensman2013gaussian}
J.~Hensman, N.~Fusi, and N.~D. Lawrence.
\newblock Gaussian processes for big data.
\newblock In {\em Uncertainty in Artificial Intelligence}, page 282. Citeseer,
  2013.

\bibitem{hensman2015scalable}
J.~Hensman, A.~G. Matthews, and Z.~Ghahramani.
\newblock Scalable variational {G}aussian process classification.
\newblock {\em Proceedings of Machine Learning Research}, 38:351--360, 2015.

\bibitem{hoffman2013stochastic}
M.~D. Hoffman, D.~M. Blei, C.~Wang, and J.~Paisley.
\newblock Stochastic variational inference.
\newblock {\em The Journal of Machine Learning Research}, 14(1):1303--1347,
  2013.

\bibitem{hou2011adaptive}
T.~Y. Hou and Z.~Shi.
\newblock Adaptive data analysis via sparse time-frequency representation.
\newblock {\em Advances in Adaptive Data Analysis}, 3(01n02):1--28, 2011.

\bibitem{hou2015sparse}
T.~Y. Hou, Z.~Shi, and P.~Tavallali.
\newblock Sparse time frequency representations and dynamical systems.
\newblock {\em Communications in Mathematical Sciences}, 13(3):673--694, 2015.

\bibitem{huang2009convergence}
C.~Huang, L.~Yang, and Y.~Wang.
\newblock Convergence of a convolution-filtering-based algorithm for empirical
  mode decomposition.
\newblock {\em Advances in Adaptive Data Analysis}, 1(04):561--571, 2009.

\bibitem{HilbertHuang}
N.~E. Huang.
\newblock Introduction to the {H}ilbert-{H}uang transform and its related
  mathematical problems.
\newblock In N.~E. Huang and S.~S.~P. Shen, editors, {\em {H}ilbert-{H}uang
  Transformation and it {A}pplications}, pages 1--26. World Scientific, 2014.

\bibitem{huang2014hilbert}
N.~E. Huang and S.~S.~P. Shen.
\newblock {\em Hilbert-Huang Transform and its Applications}, volume~16.
\newblock World Scientific, 2014.

\bibitem{huang1998empirical}
N.~E. Huang, Z.~Shen, S.~R. Long, M.~C. Wu, H.~H. Shih, Q.~Zheng, N.-C. Yen,
  C.~C. Tung, and H.~H. Liu.
\newblock The empirical mode decomposition and the {H}ilbert spectrum for
  nonlinear and non-stationary time series analysis.
\newblock {\em Proceedings of the Royal Society of London. Series A:
  Mathematical, Physical and Engineering Sciences}, 454(1971):903--995, 1998.

\bibitem{huang2008review}
N.~E. Huang and Z.~Wu.
\newblock A review on {H}ilbert-{H}uang transform: {M}ethod and its
  applications to geophysical studies.
\newblock {\em Reviews of Geophysics}, 46(2), 2008.

\bibitem{hutson2018ai}
M.~Hutson.
\newblock Has artificial intelligence become alchemy?
\newblock {\em Science}, 360(6388):861, 2018.

\bibitem{katznelson2004introduction}
Y.~Katznelson.
\newblock {\em An Introduction to Harmonic Analysis}.
\newblock Cambridge University Press, 2004.

\bibitem{kress1989linear}
R.~Kress.
\newblock {\em Linear Integral Equations}, volume~82.
\newblock Springer, 3rd edition, 1989.

\bibitem{herbrich2003fast}
N.~D. Lawrence, M.~Seeger, and R.~Herbrich.
\newblock Fast sparse {G}aussian process methods: The informative vector
  machine.
\newblock In {\em Advances in Neural Information Processing Systems}, pages
  625--632, 2003.

\bibitem{LeCun}
Y.~LeCun.
\newblock The deep learning- applied math connection.
\newblock In {\em SIAM Conference on Mathematics of Data Science (MDS20)},
  2020.
\newblock Streamd live on June 24, 2020,
  \url{https://www.youtube.com/watch?v=y9gutjsvc1c&feature=youtu.be&t=676}.

\bibitem{lecun2015deep}
Y.~LeCun, Y.~Bengio, and G.~Hinton.
\newblock Deep learning.
\newblock {\em Nature}, 521(7553):436--444, 2015.

\bibitem{li2012time}
C.~Li and M.~Liang.
\newblock Time--frequency signal analysis for gearbox fault diagnosis using a
  generalized synchrosqueezing transform.
\newblock {\em Mechanical Systems and Signal Processing}, 26:205--217, 2012.

\bibitem{lin2018wave}
C.-Y. Lin, L.~Su, and H.-T. Wu.
\newblock Wave-shape function analysis.
\newblock {\em Journal of Fourier Analysis and Applications}, 24(2):451--505,
  2018.

\bibitem{lin2009iterative}
L.~Lin, Y.~Wang, and H.~Zhou.
\newblock Iterative filtering as an alternative algorithm for empirical mode
  decomposition.
\newblock {\em Advances in Adaptive Data Analysis}, 1(04):543--560, 2009.

\bibitem{liutkus2011gaussian}
A.~Liutkus, R.~Badeau, and G.~Richard.
\newblock Gaussian processes for underdetermined source separation.
\newblock {\em IEEE Transactions on Signal Processing}, 59(7):3155--3167, 2011.

\bibitem{lukic2001stochastic}
M.~Luki{\'c} and J.~Beder.
\newblock Stochastic processes with sample paths in reproducing kernel
  {H}ilbert spaces.
\newblock {\em Transactions of the American Mathematical Society},
  353(10):3945--3969, 2001.

\bibitem{ma2017variational}
W.~Ma, S.~Yin, C.~Jiang, and Y.~Zhang.
\newblock Variational mode decomposition denoising combined with the
  {H}ausdorff distance.
\newblock {\em Review of Scientific Instruments}, 88(3):035109, 2017.

\bibitem{Maji2013EfficientCF}
S.~Maji, A.~C. Berg, and J.~Malik.
\newblock Efficient classification for additive kernel {SVM}s.
\newblock {\em IEEE Transactions on Pattern Analysis and Machine Intelligence},
  35:66--77, 2013.

\bibitem{matheron1963principles}
G.~Matheron.
\newblock Principles of geostatistics.
\newblock {\em Economic Geology}, 58(8):1246--1266, 1963.

\bibitem{matheron1963traite}
G.~Matheron.
\newblock {\em Trait{\'e} de G{\'e}ostatistique Appliqu{\'e}e. 2. Le Krigeage}.
\newblock Editions Technip, 1963.

\bibitem{mclachlan2019finite}
G.~J. McLachlan, S.~X. Lee, and S.~I. Rathnayake.
\newblock Finite mixture models.
\newblock {\em Annual Review of Statistics and its Application}, 6:355--378,
  2019.

\bibitem{melkumyan2011multi}
A.~Melkumyan and F.~Ramos.
\newblock Multi-kernel {G}aussian processes.
\newblock In {\em Twenty-second International Joint Conference on Artificial
  Intelligence}, 2011.

\bibitem{merton1973sociology}
R.~K. Merton.
\newblock {\em The Sociology of Science: Theoretical and Empirical
  Investigations}.
\newblock University of Chicago Press, 1973.

\bibitem{micchelli1977survey}
C.~A. Micchelli and T.~J. Rivlin.
\newblock A survey of optimal recovery.
\newblock In {\em Optimal Estimation in Approximation Theory}, pages 1--54.
  Springer, 1977.

\bibitem{oberlin2014fourier}
T.~Oberlin, S.~Meignen, and V.~Perrier.
\newblock The {F}ourier-based synchrosqueezing transform.
\newblock In {\em 2014 IEEE International Conference on Acoustics, Speech and
  Signal Processing (ICASSP)}, pages 315--319. IEEE, 2014.

\bibitem{owhadi2017multigrid}
H.~Owhadi.
\newblock Multigrid with rough coefficients and multiresolution operator
  decomposition from hierarchical information games.
\newblock {\em SIAM Review}, 59(1):99--149, 2017.

\bibitem{OwhScobook2018}
H.~Owhadi and C.~Scovel.
\newblock {\em Operator Adapted Wavelets, Fast Solvers, and Numerical
  Homogenization, from a game theoretic approach to numerical approximation and
  algorithm design}.
\newblock Cambridge Monographs on Applied and Computational Mathematics.
  Cambridge University Press, 2019.

\bibitem{OwhScoSchNotAMS2019}
H.~Owhadi, C.~Scovel, and F.~Sch\"{a}fer.
\newblock Statistical numerical approximation.
\newblock {\em Notices of the {AMS}}, 66(10):1608--1617, 2019.

\bibitem{park2008gaussian}
S.~Park and S.~Choi.
\newblock Gaussian processes for source separation.
\newblock In {\em 2008 IEEE International Conference on Acoustics, Speech and
  Signal Processing}, pages 1909--1912. IEEE, 2008.

\bibitem{patel2016probabilistic}
A.~B. Patel, M.~T. Nguyen, and R.~Baraniuk.
\newblock A probabilistic framework for deep learning.
\newblock In {\em Advances in Neural Information Processing Systems}, pages
  2558--2566, 2016.

\bibitem{plate1999accuracy}
T.~A. Plate.
\newblock Accuracy versus interpretability in flexible modeling: Implementing a
  tradeoff using {G}aussian process models.
\newblock {\em Behaviormetrika}, 26(1):29--50, 1999.

\bibitem{preoctiuc2013temporal}
D.~Preo{\c{t}}iuc-Pietro and T.~Cohn.
\newblock A temporal model of text periodicities using {G}aussian processes.
\newblock In {\em Proceedings of the 2013 Conference on Empirical Methods in
  Natural Language Processing}, pages 977--988, 2013.

\bibitem{quinonero2004learning}
J.~Qui{\~n}onero-Candela.
\newblock {\em Learning with Uncertainty: {G}aussian Processes and Relevance
  Vector Machines}.
\newblock PhD thesis, Technical University of Denmark Lyngby, Denmark, 2004.

\bibitem{quinonero2005unifying}
J.~Qui{\~n}onero-Candela and C.~E. Rasmussen.
\newblock A unifying view of sparse approximate {G}aussian process regression.
\newblock {\em Journal of Machine Learning Research}, 6(Dec):1939--1959, 2005.

\bibitem{raissi2017machine}
M.~Raissi, P.~Perdikaris, and G.~E. Karniadakis.
\newblock Machine learning of linear differential equations using {G}aussian
  processes.
\newblock {\em Journal of Computational Physics}, 348:683--693, 2017.

\bibitem{rao2011foundations}
M.~M. Rao.
\newblock {\em Foundations of Stochastic Analysis}.
\newblock Academic Press, 1981.

\bibitem{rasmussen2003gaussian}
C.~E. Rasmussen.
\newblock Gaussian processes in machine learning.
\newblock In {\em Summer School on Machine Learning}, pages 63--71. Springer,
  2003.

\bibitem{williams2006gaussian}
C.~E. Rasmussen and C.~K.~I. Williams.
\newblock {\em Gaussian Processes for Machine Learning}, volume~2.
\newblock MIT press Cambridge, MA, 2006.

\bibitem{rilling2007one}
G.~Rilling and P.~Flandrin.
\newblock One or two frequencies? {T}he empirical mode decomposition answers.
\newblock {\em IEEE Transactions on Signal Processing}, 56(1):85--95, 2007.

\bibitem{rilling2003empirical}
G.~Rilling, P.~Flandrin, and P.~Goncalves.
\newblock On empirical mode decomposition and its algorithms.
\newblock In {\em IEEE-EURASIP Workshop on Nonlinear Signal and Image
  Processing}, volume~3, pages 8--11. NSIP-03, Grado (I), 2003.

\bibitem{schafer2017compression}
F.~Sch{\"a}fer, T.~J. Sullivan, and H.~Owhadi.
\newblock Compression, inversion, and approximate {PCA} of dense kernel
  matrices at near-linear computational complexity.
\newblock {\em arXiv:1706.02205}, 2017.

\bibitem{schwaighofer2003transductive}
A.~Schwaighofer and V.~Tresp.
\newblock Transductive and inductive methods for approximate {G}aussian process
  regression.
\newblock In {\em Advances in Neural Information Processing Systems}, pages
  977--984, 2003.

\bibitem{seeger2003bayesian}
M.~Seeger.
\newblock Bayesian {G}aussian process models: {PAC}-{B}ayesian generalisation
  error bounds and sparse approximations.
\newblock Technical report, University of Edinburgh, 2003.

\bibitem{seeger2003fast}
M.~Seeger, C.~K.~I. Williams, and N.~D. Lawrence.
\newblock Fast forward selection to speed up sparse {G}aussian process
  regression.
\newblock In {\em Proceedings of the Ninth International Workshop on Artificial
  Intelligence and Statistics}, 2003.

\bibitem{smola2001sparse}
A.~J. Smola and P.~L. Bartlett.
\newblock Sparse greedy {G}aussian process regression.
\newblock In {\em Advances in Neural Information Processing Systems}, pages
  619--625, 2001.

\bibitem{snelson2006sparse}
E.~Snelson and Z.~Ghahramani.
\newblock Sparse {G}aussian processes using pseudo-inputs.
\newblock In {\em Advances in Neural Information Processing Systems}, pages
  1257--1264, 2006.

\bibitem{neto2004assessment}
E.~P. Souza~Neto, M.~A. Custaud, J.~C. Cejka, P.~Abry, J.~Frutoso, C.~Gharib,
  and P.~Flandrin.
\newblock Assessment of cardiovascular autonomic control by the empirical mode
  decomposition.
\newblock {\em Methods of Information in Medicine}, 43(01):60--65, 2004.

\bibitem{steinwart2008support}
I.~Steinwart and A.~Christmann.
\newblock {\em Support Vector Machines}.
\newblock Springer Science \& Business Media, 2008.

\bibitem{stigler1980stigler}
S.~M. Stigler.
\newblock Stigler's law of eponymy.
\newblock {\em Transactions of the New York Academy of Sciences}, 39(1 Series
  II):147--157, 1980.

\bibitem{stone1985additive}
C.~J. Stone.
\newblock Additive regression and other nonparametric models.
\newblock {\em The annals of Statistics}, pages 689--705, 1985.

\bibitem{thakur2015synchrosqueezing}
G.~Thakur.
\newblock The synchrosqueezing transform for instantaneous spectral analysis.
\newblock In {\em Excursions in Harmonic Analysis, Volume 4}, pages 397--406.
  Springer, 2015.

\bibitem{thakur2013synchrosqueezing}
G.~Thakur, E.~Brevdo, N.~S. Fu{\v{c}}kar, and H.-T. Wu.
\newblock The synchrosqueezing algorithm for time-varying spectral analysis:
  {R}obustness properties and new paleoclimate applications.
\newblock {\em Signal Processing}, 93(5):1079--1094, 2013.

\bibitem{titsias2009variational}
M.~Titsias.
\newblock Variational learning of inducing variables in sparse {G}aussian
  processes.
\newblock In {\em Artificial Intelligence and Statistics}, pages 567--574,
  2009.

\bibitem{titsias2011spike}
M.~K. Titsias and M.~L{\'a}zaro-Gredilla.
\newblock Spike and slab variational inference for multi-task and multiple
  kernel learning.
\newblock In {\em Advances in Neural Information Processing Systems}, pages
  2339--2347, 2011.

\bibitem{torres2011complete}
M.~E. Torres, M.~A. Colominas, G.~Schlotthauer, and P.~Flandrin.
\newblock A complete ensemble empirical mode decomposition with adaptive noise.
\newblock In {\em 2011 IEEE International Conference on Acoustics, Speech and
  Signal Processing (ICASSP)}, pages 4144--4147. IEEE, 2011.

\bibitem{tresp2000bayesian}
V.~Tresp.
\newblock A {B}ayesian committee machine.
\newblock {\em Neural computation}, 12(11):2719--2741, 2000.

\bibitem{wang2013matching}
S.~Wang, X.~Chen, G.~Cai, B.~Chen, X.~Li, and Z.~He.
\newblock Matching demodulation transform and synchrosqueezing in
  time-frequency analysis.
\newblock {\em IEEE Transactions on Signal Processing}, 62(1):69--84, 2013.

\bibitem{williams2001using}
C.~K.~I. Williams and M.~Seeger.
\newblock Using the {N}ystr{\"o}m method to speed up kernel machines.
\newblock In {\em Advances in Neural Information Processing Systems}, pages
  682--688, 2001.

\bibitem{williams1996gaussian}
K.~I. Williams, C and C.~E. Rasmussen.
\newblock Gaussian processes for regression.
\newblock In {\em Advances in Neural Information Processing Systems}, pages
  514--520, 1996.

\bibitem{wu2004study}
Z.~Wu and N.~E. Huang.
\newblock A study of the characteristics of white noise using the empirical
  mode decomposition method.
\newblock {\em Proceedings of the Royal Society of London. Series A:
  Mathematical, Physical and Engineering Sciences}, 460(2046):1597--1611, 2004.

\bibitem{wu2009ensemble}
Z.~Wu and N.~E. Huang.
\newblock Ensemble empirical mode decomposition: a noise-assisted data analysis
  method.
\newblock {\em Advances in Adaptive Data Analysis}, 1(01):1--41, 2009.

\bibitem{wu2001impact}
Z.~Wu, E.~K. Schnieder, Z.-Z. Hu, and L.~Cao.
\newblock {\em The Impact of Global Warming on ENSO Variability in Climate
  Records}, volume 110.
\newblock Center for Ocean-Land-Atmosphere Studies Calverton, 2001.

\bibitem{yee2015vector}
T.~W. Yee.
\newblock {\em Vector Generalized Linear and Additive Models: with an
  Implementation in R}.
\newblock Springer, 2015.

\bibitem{yee1996vector}
T.~W. Yee and C.~J. Wild.
\newblock Vector generalized additive models.
\newblock {\em Journal of the Royal Statistical Society: Series B
  (Methodological)}, 58(3):481--493, 1996.

\bibitem{Yoo_thesis}
G.~R. Yoo.
\newblock {\em Learning Patterns with Kernels and Learning Kernels from
  Patterns}.
\newblock PhD thesis, California Institute of Technology, 2020.

\bibitem{yosida1995functional}
K.~Yosida.
\newblock {\em Functional Analysis}.
\newblock Springer-Verlag, Berlin, 5th edition, 1978.

\bibitem{yu2005learning}
K.~Yu, V.~Tresp, and A.~Schwaighofer.
\newblock Learning {G}aussian processes from multiple tasks.
\newblock In {\em Proceedings of the 22nd International Conference on Machine
  Learning}, pages 1012--1019, 2005.

\end{thebibliography}

\end{document}